\begin{document}

\vspace{1cm}
\begin{center}
  {\LARGE {\bf{Privacy Aware Learning}}} \\
  \vspace{.5cm}
  {\large
    John C.\ Duchi$^1$ ~~~~~~
    Michael I.\ Jordan$^{1,2}$ ~~~~~~
    Martin J.\ Wainwright$^{1,2}$
  } \\
  \vspace{.2cm}
  {\tt \{jduchi,jordan,wainwrig\}@eecs.berkeley.edu} \\
  \vspace{.2cm}
  {\large
    $^1$Department of Electrical Engineering and Computer Science \\
    \vspace{.1cm}
    $^2$Department of Statistics \\ \vspace{.1cm}
    University of California, Berkeley
  }

\vspace*{.2in}
September 2013
\vspace*{.2in}
\end{center}

\begin{center}
  \begin{abstract}
    We study statistical risk minimization problems under a privacy
    model in which the data is kept confidential even from the learner.
    In this local privacy framework, we establish sharp upper and lower
    bounds on the convergence rates of statistical estimation
    procedures.  As a consequence, we exhibit a precise tradeoff between
    the amount of privacy the data preserves and the utility, as
    measured by convergence rate, of any statistical estimator or
    learning procedure.
  \end{abstract}
\end{center}


\section{Introduction}

Natural tensions between learning and privacy arise whenever a learner
must aggregate data across multiple individuals.  The learner wishes
to make optimal use of each data point, whereas the providers of the
data may wish to limit detailed exposure, either to the learner or to
other individuals.  A characterization of such tensions in the form of
quantitative tradeoffs is of great utility: it can inform public
discourse surrounding the design of systems that learn from data, and
the tradeoffs can be exploited as controllable degrees of freedom
whenever such a system is deployed.

In this paper, we approach this problem from the point of view of
statistical decision theory.  The decision-theoretic perspective
offers a number of advantages.  First, the use of loss functions and
risk functions provides a compelling formal foundation for defining
``learning'', one that dates back to~\citet{Wald39}, and that has seen
continued development in the context of research on machine learning
over the past two decades.  Second, by formulating the goals of a
learning system in terms of loss functions, we make it possible for
individuals to assess whether the goals of a learning system align
with their own personal utility, and thereby determine the extent to
which they are willing to sacrifice some privacy.  Third, an appeal to
decision theory permits abstraction over the details of specific
learning procedures, allowing for the derivation of minimax lower
bounds that apply to any specific procedure.  Fourth, the use of loss
functions---and more specifically, convex loss functions---in the
design of a learning system allows the powerful tools of optimization
theory to be brought to bear.  Not only are optimization-based
learning systems often successful in practice, but they are also often
amenable to theoretical analysis.  Finally, the decision-theoretic
framework is a probabilistic framework, with probabilistic models
definining the transformation from losses to risks.  This connection
provides a natural mechanism for the use of randomization to provide
control over privacy.

In more formal detail, the analysis of this paper takes place within
the following framework. Given a compact convex set $\optdomain
\subset \R^d$, we wish to find a parameter value $\optvar \in
\optdomain$ achieving good average performance under a loss function
$\loss : \statdomain \times \R^d \rightarrow \R_+$.  Here the value
$\loss(\statrv, \optvar)$ measures the performance of the parameter
vector $\optvar \in \optdomain$ on the sample $\statrv \in
\statdomain$, and $\loss(\statsample, \cdot) : \R^d \rightarrow \R_+$
is convex for $\statsample \in \statdomain$.  We measure the expected
performance of $\optvar \in \optdomain$ via the risk function
\begin{equation}
  \label{eqn:objective}
  \optvar \mapsto \risk(\optvar) \defeq \E_\statprob[\loss(\statrv, \optvar)],
\end{equation}
where the expectation is taken over some unknown distribution
$\statprob$ over the space $\statdomain$.  

In the standard formulation of statistical risk minimization, a method
$\method$ is given $n$ samples $\statrv_1, \ldots, \statrv_n$, each
drawn independently from $\statprob$, and its goal to to output an
estimate $\what{\optvar}_n$ that approximately minimizes the risk
function $\risk$.  In this paper, instead of providing the method
$\method$ with access to the samples $\statrv_1, \ldots, \statrv_n$,
however, we study the effect of giving only some disguised view
$\channelrv_i$ of each datum $\statrv_i$.  With $\what{\optvar}_n$ now
denoting an estimator based on the perturbed samples $\channelrv_i$,
we explicitly quantify the rate of convergence of
$\risk(\what{\optvar}_n)$ to $\inf_{\optvar \in \optdomain}
\risk(\optvar)$ as a function of the number of samples $n$ and the
amount of privacy provided by $\channelrv_i$.

\subsection{Prior work}

There is a long history of research at the intersection of privacy and
statistics, going back at least to the 1960s, when~\citet{Warner65}
suggested privacy-preserving methods for survey sampling, and to later
work related to census taking and presentation of tabular data (e.g.,
\cite{Fellegi72}). More recently, there has been a large amount of
computationally-oriented work on
privacy~\cite{DworkMcNiSm06,Dwork08,ZhouLaWa09,WassermanZh10,HallRiWa11,DinurNi03,BlumLiRo08,ChaudhuriMoSa11,RubinsteinBaHuTa12}.
We overview some of the key ideas in this section, but cannot hope to
do justice to the large body of relevant work, referring the reader to
the comprehensive survey by~\citet{Dwork08} and the statistical
treatment by~\citet{WassermanZh10} for background and references.

Most work on privacy attempts to limit disclosure risk: the
probability that some adversary can link a released record to a
particular member of the population or identify that someone belongs
to a dataset that generates a
statistic~\cite{DuncanLa86,DuncanLa89,Reiter05,KarrKoOgReSa06}. In the
statistical literature, work on disclosure limitation and so-called
linkage risk, for example as in the framework of~\citet{DuncanLa86},
has yielded several techniques for maintaining privacy, such as
aggregation, swapping features or responses among different datums, or
perturbation of data.  Other authors have proposed measures for
measuring utility of released data (e.g.,
\cite{KarrKoOgReSa06,CoxKaKi11}). The currently standard measure of
privacy is differential privacy, due to~\citet{DworkMcNiSm06}, which
roughly states that $\what{\optvar}_n$ must not depend too much on the
$n$ samples, and it should be difficult to ascertain whether a vector
$\statsample$ belongs to the set $\{\statrv_1, \ldots, \statrv_n\}$
given $\what{\optvar}_n$.  Formally, paraphrasing the definition of
\citet{WassermanZh10}, the method $\method$ has $\diffp$-differential
privacy if
\begin{equation}
  \label{eqn:differential-privacy}
  \sup_{S \in \sigma(\optdomain)}
  \sup_{x_1, \ldots, x_n} \sup_{x_1', \ldots, x_n'}
  \frac{\channelprob(S \mid \statrv_1 = x_1, \ldots, \statrv_n = x_n)}{
    \channelprob(S \mid \statrv_1 = x_1', \ldots, \statrv_n = x_n')}
  \le \exp(\diffp).
\end{equation}
where the sets $x_1, \ldots, x_n$ and $x_1', \ldots, x_n'$ differ in at most
one element, $\channelprob(\cdot \mid \statrv_1, \ldots, \statrv_n)$ is (a
version of) the conditional probability of the estimator $\what{\optvar}$
constructed by the method $\method$ using the $n$ samples, and
$\sigma(\optdomain)$ is a suitable $\sigma$-algebra on $\optdomain$.

Differentially private algorithms enjoy many desirable
properties~\cite{DworkMcNiSm06,Dwork08,GantaKaSm08} and essentially guarantee
that even if an adversary knows all the entries in a dataset but the $n$th, it
is difficult to discern whether a vector $x$ is equal to $\statrv_n$ given the
output of the method $\method$. Indeed, differential privacy protects against
side information and many adversarial attacks that break previous definitions
of privacy, such as $k$-anonymity~\cite{GantaKaSm08}.  Several researchers
have studied differentially private algorithms for empirical risk
minimization, providing guarantees on the excess risk of differentially
private estimators $\what{\optvar}$. \citet{ChaudhuriMoSa11} use the stability
of the output of regularized empirical risk minimization algorithms to show
that by adding Laplace-distributed noise to an empirical estimator $\optvar$
or by adding an additional random term to the empirical risk
$\frac{1}{n}\sum_{i=1}^n \loss(\statrv_i, \optvar)$, it is possible to obtain
differential privacy and consistency of $\what{\optvar}$.  \citet{DworkLe09}
obtain similar results using robust statistical estimators, and
\citet{Smith11} shows that if one has suitably unbiased estimators, then
differential privacy is possible without compromising asymptotic rates of
convergence. \citet{RubinsteinBaHuTa12} use similar stability and perturbation
techniques to demonstrate that it is possible to obtain differential privacy
when solving support vector machine problems, and also show that if the
desired privacy level $\alpha$ in the
definition~\eqref{eqn:differential-privacy} is too small, it is actually
impossible to obtain a parameter $\what{\optvar}_n$ minimizing the risk
$\risk$.

Our goal is to understand the fundamental tradeoffs between maintaining
privacy while still providing a useful output from the statistical learning
procedure $\method$. Though intuitively there must be some tradeoff,
quantifying it precisely has been difficult. As alluded to above,
\citet{RubinsteinBaHuTa12} are able to show that it is impossible to obtain
what they call an $(\epsilon, \delta)$-useful parameter vector $\optvar$ that
enjoys any differential privacy guarantees; however, it is unknown whether or
not their guarantees might be improvable. \citet{HallRiWa11} show that if a
given histogram, based on a sample $\{\statsample_i\}_{i=1}^n$, has $d$ bins
and we must guarantee $\alpha$-differential
privacy~\eqref{eqn:differential-privacy}, then the (expected) $L^1$-distance
between the sample and released histograms must be at least $d / (n \alpha)$,
and \citet{HardtTa10} give similar lower bounds on the amount of noise
necessary to answer linear database queries.  \citet{NikolovTaZh13} followed
this work with extensions to relaxed notions (so called
$(\diffp,\delta)$-approximate differential privacy) of privacy and providing
higher-dimensional settings, while \citet{KasiviswanathanRuSm13}
give bounds on amounts of additive noise to protect against blatant failures
of privacy in similar linear settings.  \citet{BlumLiRo08} also give lower
bounds on the closeness of certain statistical quantities computed from the
dataset, though their upper and lower bounds do not
match. \citet{SankarRaPo10} provide rate-distortion theorems for utility
models involving information-theoretic quantities, which has some similarity
to our risk-based framework, but it appears somewhat challenging to explicitly
map their setting onto ours.  With the goal of characterizing what it means to
be both useful and private, \citet{GhoshRoSu09} show that for a one-time
computation of counts on a dataset $\statrv_1, \ldots, \statrv_n$ (i.e., the
number of variables satisfying $\statrv_i \in C$ for some set $C$), perturbing
the output of a counting function using geometrically distributed noise is the
unique optimal way to guarantee differential privacy while maximizing a
natural notion of utility.

Much of the work providing sharp lower bounds, however, focuses on showing
that if one wishes to accurately report a statistic
$\what{\optvar}(\statsample_{1:n})$ computed on a sample
$\{\statsample_i\}_{i=1}^n$, then there must be some worst-case sample such
that the error is large (see, e.g.,~\cite{HardtTa10,HallRiWa11,NikolovTaZh13,GhoshRoSu09}). In contrast, we
focus on \emph{population} quantities---which are substantially different---in
that we wish to return a private estimator $\what{\optvar}_n$ approximately
minimizing the population risk $\risk(\optvar) = \E[\loss(\statrv, \optvar)]$
rather than the sample risk $\frac{1}{n} \sum_{i=1}^n \loss(\statrv_i,
\optvar)$.  Providing guarantees on the population risk performance of
$\what{\optvar}_n$, rather than on the observed sample, has been a driving
force behind much of the theoretical work in statistics and machine learning,
and thus provides a natural focus for our work.

\subsection{Our setting}

In contrast to the above work, we study a more local notion of
privacy~\cite{EvfimievskiGeSr03,KasiviswanathanLeNiRaSm11}, in which each
datum $\statrv_i$ is kept private from the method $\method$.  The goal of many
types of privacy is to guarantee that the output $\what{\optvar}_n$ of the
method $\method$ based on the data cannot be used to discover information
about the individual samples $\statrv_1, \ldots, \statrv_n$, but \emph{locally
  private} algorithms only access disguised views of each datum $\statrv_i$.
Local algorithms are among the most classical approaches to privacy, tracing
back to work on randomized response in the statistical
literature~\cite{Warner65}, and rely on communication only of some disguised
view $\channelrv_i$ of each true sample $\statrv_i$. In this setting, for
example, the natural variant of $\diffp$-differential
privacy~\eqref{eqn:differential-privacy} is the non-interactive
(in the sense that $\channelrv_i$ depends only on $\statrv_i$ and
not on any other private variables $\channelrv_j$) local privacy guarantee
\begin{equation}
  \label{eqn:local-differential-privacy}
  \sup_{S} \sup_{\statsample, \statsample'}
  \frac{\channelprob(\channelrv_i \in S \mid \statrv_i = \statsample)}{
    \channelprob(\channelrv_i \in S \mid \statrv_i = \statsample')}
  \le \exp(\diffp).
\end{equation}

Locally private algorithms are natural when the providers of the
data---the population sampled to give $\statrv_1, \ldots,
\statrv_n$---do not even trust the statistician or statistical method
$\method$, but the providers are interested in the parameter vector
$\optvar^*$ that minimizes the risk function. For example, in medical
applications, a participant may be embarrassed about his use of drugs,
or perhaps about his marital status, but if the loss $\loss$ is able
to measure the likelihood of developing cancer, then the participant
has high utility for access to the optimal parameters $\optvar^*$.
Internet applications, where a user's activity is logged across
multiple websites or searches, provide another example: the user has a
utility for a search engine to have a ranking function $\optvar$ that
returns relevant results for web searches, yet may not wish to reveal
his or her search data.  In essence, we would like the statistical
procedure $\method$ to learn \emph{from} the data $\statrv_1, \ldots,
\statrv_n$ but not \emph{about} it.

The work most related to ours seems to be that
of~\citet{KasiviswanathanLeNiRaSm11}, who show that that (in some
settings) locally private algorithms coincide with concepts that can
be learned with polynomial sample complexity in Kearns's statistical
query (SQ) model~\cite{Kearns98}. This result is powerful, but has
some limitations, as the statistical query model relies exclusively on
count queries, and we are interested in measures more precise than
polynomial sample complexity to quantity convergence rates.  In
contrast, our analysis applies to estimators deriving from a broad
class of convex risks~\eqref{eqn:objective}, and it provides sharp
rates of convergence.

We develop our approach to local privacy in the setting of three
related privacy measures. The first is a worst-case measure of mutual
information, where we view privacy preservation as a game between the
providers of the data, who wish to preserve privacy, and nature.  The
second is based on differential privacy, where the provider of each
datum communicates---subject to some constraints we make explicit
later---the \emph{most} differentially private view $\channelrv_i$ of
his or her datum $\statrv_i$.  In this general setting we allow
interactivity (i.e., the mapping between $\channelrv_i$ and
$\statrv_i$ may depend on other $\channelrv_j$ for $j \neq i$).  The
third setting is a non-interactive version of local differential
privacy.

Turning first to the information-theoretic formulation, and recalling
that the method $\method$ sees only the perturbed version
$\channelrv_i$ of $\statrv_i$, we use a uniform variant of mutual
information $\information(\channelrv_i; \statrv_i)$ between the random
variables $\statrv_i$ and $\channelrv_i$ as a measure for privacy.
Using mutual information and related information-theoretic ideas in
the privacy and security context is by no means original; see, for
example, the survey by~\citet{LiangPoSh08}.  It is important to note,
however, that standard mutual information has deficiencies as a
measure of privacy (e.g.~\cite{EvfimievskiGeSr03}).  Accordingly, our
uniform notion of mutual information is as follows: we say that the
distribution $\channelprob$ generating $\channelrv$ from $\statrv$ is
private only if $\information(\statrv; \channelrv)$ is small for all
possible distributions $\statprob$ on $\statrv$, possibly subject to
some constraints.

In this setting, we design procedures that allow consistent estimation
of the parameter $\optvar^*$ minimizing $\risk(\optvar) =
\E_\statprob[\loss(\statrv, \optvar)]$, for any convex loss $\loss$
and distribution $\statprob$ on the data $\statrv$. One central
consequence of our analysis is a sharp characterization of the
\emph{excess risk},
\begin{align}
  \Excess_n(\what{\optvar}; \loss, \optdomain) & \defn \E
  \left[\risk\left(\what{\optvar}(\channelrv_1, \ldots,
    \channelrv_n)\right)\right] - \inf_{\optvar \in \optdomain}
  \risk(\optvar),
\end{align}
associated with any estimator $\what{\optvar}$ that satisfies a pre-specified
privacy constraint. For particular collections $\lossset$ of loss functions
$\loss \in \lossset$, we bound the minimax convergence rate of all estimation
procedures.  More precisely, let us focus on $d$-dimensional problems,
i.e., those for which the domain $\optdomain \subset \R^d$ and observations
$\statrv_i \in \R^d$. If one wishes to uniformly guarantee a level of privacy
$\information(\statrv_i; \channelrv_i) \le \information^*$, then we show that
there exists a constant $a(\lossset, \optdomain) \in \R_+$---dependent only on
the properties of the collection $\lossset$ and domain $\optdomain$---such
that \emph{for any} estimator $\what{\optvar}$ for the family $\lossset$, the
excess risk is lower bounded as
\begin{subequations}
  \label{eqn:main-bounds}
  \begin{align}
    \label{eqn:main-lower-bound} 
    \sup_{\loss \in \lossset}
    \Excess_n(\what{\optvar}; \loss, \optdomain) & \geq
    \frac{\sqrt{d}}{\sqrt{\information^*}} \, \frac{a(\lossset,
      \optdomain)}{\sqrt{n}},
  \end{align}
  where $a(\lossset, \optdomain)$ is a constant characterizing the
  non-private minimax rate of estimation (see,
  e.g.,~\citet{AgarwalBaRaWa12} for such constants).
  Moreover, we also prove that there exists another constant $b(\lossset,
  \optdomain) \ge a(\lossset, \optdomain)$ and provide explicit estimators
  $\what{\optvar}$ with privacy guarantee $\information^*$ such that
  \begin{align}
    \label{eqn:main-upper-bound}
    \sup_{\loss \in \lossset}
    \Excess_n(\what{\optvar}; \loss, \optdomain) & \leq
    \frac{\sqrt{d}}{\sqrt{\information^*}} \, \frac{b(\lossset,
      \optdomain)}{\sqrt{n}}.
  \end{align}
\end{subequations}

Turning to the setting of differential privacy, we are able to show similar
results to the bounds~\eqref{eqn:main-lower-bound}
and~\eqref{eqn:main-upper-bound}.  Namely, there exist constants $b'(\lossset,
\optdomain) \ge a'(\lossset, \optdomain)$ such that if we wish to guarantee
$\diffp$-differential privacy, then for any estimator $\what{\optvar}$, the
risk is lower bounded by
\begin{subequations}
  \label{eqn:main-bounds-diffp}
  \begin{equation}
    \label{eqn:main-lower-bound-diffp}
    \sup_{\loss \in \lossset} \Excess_n(\what{\optvar}; \loss, \optdomain)
    \ge \frac{\sqrt{d}}{\diffp} \,
    \frac{a'(\lossset, \optdomain)}{\sqrt{n}},
  \end{equation}
  while there exist estimators $\what{\optvar}$ such that
  \begin{equation}
    \label{eqn:main-upper-bound-diffp}
    \sup_{\loss \in \lossset} \Excess_n(\what{\optvar}; \loss, \optdomain)
    \le \frac{\sqrt{d}}{\diffp} \,
    \frac{b'(\lossset, \optdomain)}{\sqrt{n}}.
  \end{equation}
\end{subequations}
Here again, the constant $a'(\lossset, \optdomain)$ controls the non-private
minimax rate of estimation.

Finally, we show that stochastic gradient descent is one procedure
that achieves the above upper bounds, and moreover, that the ratios
$b(\lossset, \optdomain) / a(\lossset, \optdomain)$ and $b'(\lossset,
\optdomain) / a'(\lossset, \optdomain)$ are bounded above by a
universal (numerical) constant. The bounds~\eqref{eqn:main-bounds}
and~\eqref{eqn:main-bounds-diffp} thus establish and quantify
explicitly the sharp tradeoff between learning and statistical
estimation and the amount of privacy provided to the population.  More
concretely, we can evaluate the effective sample size of learning
procedures receiving private observations. In the case of
information-based privacy, the sample size of any learning procedure
receiving maximally privatized observations from the data providers is
decreased from $n$ to roughly $n \information^* / d$, while in
differentially private settings, we see that the effective sample size
decreases from $n$ to $n \diffp^2 / d$. The first of these is perhaps
intuitive: in rough terms, a $d$-dimensional observation $\statrv_i$
contains about $d$-bits of information, and so we expect a loss in
(statistical) efficiency by a factor $\information^* / d$. For the
second, recent results suggest scalings of $\information^* \approx
\diffp^2$ (e.g.~\cite[Lemma 3.2]{DworkRoVa10}), so the loss $n \mapsto
n \diffp^2 / d$ is also perhaps intuitive.

Our subsequent analysis will build on this favorable property of
gradient-based methods.  Indeed, in the remainder of the paper, we
will assume that the communication protocol---except in the
non-interactive $\diffp$-differentially private case, which allows any
protocol---by which data is conveyed to the learner $\method$ is based
on (sub)gradients of the loss.  As further motivation for this choice,
note that the subgradient (more generally, a score function) of the
loss $\loss$ is asymptotically sufficient in the sense
of~\citet{LeCam56}. A bit more precisely, gradients (in an asymptotic
sense) contain \emph{all} of the statistical information for risk
minimization problems.  Secondly, estimation procedures based on
stochastic gradient information are asymptotically
efficient~\cite{PolyakJu92}, in the sense of both Bahadur and minimax
efficiency~\cite[Chapter 8]{VanDerVaart98}, and are thus essentially
sample optimal; they also have minimax-optimality guarantees in
finite-sample settings~\cite{AgarwalBaRaWa12}.  Moreover, many
estimation procedures are
gradient-based~\cite{NemirovskiYu83,BoydVa04}, and distributed
optimization procedures that send subgradient information across a
network to a centralized procedure $\method$ are natural
(e.g.~\cite{BertsekasTs89}).  Our arguments also show that in many
settings, disguising subgradients is equivalent to disguising the data
$\statrv$ itself.  Thus, as an additional consequence of our
gradient-based focus, our algorithmic bounds also apply in streaming
and online settings, requiring only a fixed-size memory footprint.

\subsection{Outline and techniques}

We spend the remainder of the paper deriving the
bounds~\eqref{eqn:main-bounds} and \eqref{eqn:main-bounds-diffp}.  Our
route to obtaining these bounds is based on a two-part analysis.
First, we consider saddle points of the mutual information
$\information(\statrv; \channelrv)$, when viewed as a function of the
distribution $\statprob$ of $\statrv$ and the conditional distribution
$\channelprob(\cdot \mid \statrv)$ of $\channelrv$, under natural
constraints that still allow estimation. We consider related saddle
points for differentially private conditional distributions. Having
computed these saddle points, we can apply information-theoretic
techniques for obtaining lower bounds on estimation and
optimization~\cite{YangBa99,AgarwalBaRaWa12} to prove the results of
the form~\eqref{eqn:main-lower-bound}
or~\eqref{eqn:main-lower-bound-diffp}.  Our upper bounds then follow
by application of known convergence rates for computationally
efficient methods, such as the stochastic gradient and mirror descent
algorithms~\cite{NemirovskiYu83,NemirovskiJuLaSh09}.  We provide full
proofs---except for technical results deferred to appendices---and a
more complete outline of our technique in
Section~\ref{sec:optimal-rate-proofs}.

The remainder of the paper is organized as follows. We give a precise
definition of our notions of local privacy in
Section~\ref{sec:optimal-local-privacy}.  Section~\ref{sec:optimal-rates} is
devoted to information-theoretic lower bounds on the convergence rate of any
statistical method $\method$ in terms of the mutual information
$\information^*$ between what the method $\method$ observes and each sample
$\statrv_i$. We characterize the unique privacy guaranteeing distributions in
Section~\ref{sec:saddle-points}, which provides a constructive mechanism for
trading off privacy and learning.  We present our conclusions in
Section~\ref{sec:discussion}.

\paragraph{Notation}
Before continuing, we give our notation and a few standard definitions.  The
\emph{Kullback-Leibler (KL)} divergence between distributions $P$ and $Q$
defined on a set $S$, where $P$ and $Q$ are assumed to have densities $p$ and
$q$ with respect to a base measure $\nu$\footnote{This is no loss of
  generality, as $P$ and $Q$ are absolutely continuous with respect to $\nu =
  \half(P + Q)$.} is given by
\begin{equation*}
  \dkl{P}{Q}
  \defeq \int_S p(s) \log\frac{p(s)}{q(s)} d\nu(s).
\end{equation*}
Similarly, the \emph{total-variation distance} between the distributions
$P$ and $Q$ is defined as
\begin{equation*}
  \tvnorm{P - Q}
  \defeq \sup_{A \subset S}
  \left|P(A) - Q(A)\right|
  = \half \int_S |p(s) - q(s)| d\nu(s).
\end{equation*}
For a convex function $f : \R^d \rightarrow \R \cup \{+\infty\}$, the
subgradient set $\partial f(\optvar)$ of $f$ at the point $\optvar$ is
\begin{equation*}
  \partial f(\optvar) \defeq \left\{ g \in \R^d : f(\optvar') \ge
  f(\optvar) + g^\top(\optvar' - \optvar), ~ \mbox{for~all} ~ \optvar'
  \in \R^d \right\}.
\end{equation*}
We use $\partial \loss(\statsample, \optvar)$ to denote the subgradient
set of the function $\optvar \mapsto \loss(\statsample, \optvar)$, and for a
convex function, $\nabla \loss(\statsample, \optvar)$ denotes an arbitrary
element of $\partial \loss(\statsample, \optvar)$. We say that a function $f$
is $\lipobj$-Lipschitz with respect to the norm $\norm{\cdot}$ over the set
$\optdomain$ if
\begin{equation*}
  \left|f(\optvar) - f(\optvar')\right| \le
  \lipobj \norm{\optvar - \optvar'}
  ~~~ \mbox{for~all}~ \optvar, \optvar' \in \optdomain.
\end{equation*}
The notation $\norm{\cdot}_p$ denotes a standard $\ell_p$-norm.  We use the
abbreviation r.c.d.\ throughout for regular conditional
distribution~\cite{Billingsley86}. The extreme points of a set $C \subset
\R^d$ are denoted by $\extreme(C)$, the convex hull of $C$ is denoted by
$\conv(C)$, and the support of a distribution $\statprob$ is denoted $\supp
\statprob$.  We say values $a_n \asymp b_n$ if $\lim_n (a_n / b_n) = 1$.  The
symbol $e_i$ denotes the $i$th standard basis vector in $\R^d$. Lastly, the
symbol $\rightrightarrows$ denotes a set-valued
mapping~\cite{HiriartUrrutyLe96ab}.



\section{Problem Formulation}
\label{sec:optimal-local-privacy}

We begin with a formal description of the communication protocol by
which information about the random variables $\statrv$ is communicated
to the procedure $\method$.  We then define the notion of
\emph{optimal local privacy} studied in this paper and the minimax
framework in which we state our main results.

\subsection{Communication protocol}
\label{sec:communication-protocol}

In this paper, we focus on statistical learning procedures that have
access to data through the subgradients $\partial \loss(\statrv,
\optvar)$ of the loss functions.  More formally, at each round, the
method $\method$ is given access to a random vector $\channelrv_i$
such that
\begin{equation}
  \label{eqn:unbiased-subgradient}
  \E[\channelrv_i \mid \statrv_i, \optvar] \in \partial
  \loss(\statrv_i, \optvar),
\end{equation}
where $\optvar \in \optdomain$ is a parameter chosen by the method.
In Appendix~\ref{appendix:biased-subgradient} we present an argument
that shows that the unbiasedness of the subgradient
inclusion~\eqref{eqn:unbiased-subgradient} is not only intuitively
appealing but is, in a certain sense, necessary.

\noindent In detail, our communication protocol consists of the
following three steps:
\begin{itemize}
\item the method $\method$ sends the parameter vector $\optvar$ to the
  owner of the $i$th sample $\statrv_i$;
\item owner $i$ computes a subgradient vector $g \in \partial
  \loss(\statrv_i, \optvar)$ to be communicated privately;
\item the vector $\channelrv_i$ is communicated to $\method$ under the
  constraint that 
\begin{align*}
\E[\channelrv_i \mid \statrv_i, \optvar] = g \in \partial
\loss(\statrv_i, \optvar).
\end{align*}
\end{itemize}

We assume throughout that there is a compact set $C \subset \R^d$ such
that $\partial \loss(x, \optvar) \subseteq C$ for all pairs
\mbox{$(\optvar, x) \in \optdomain \times \statdomain$.}  Our goal is
``disguise'' the subgradient information with a random variable
$\channelrv$ satisfying $\channelrv \in D$, for some compact set $D$
such that $C \subset \interior D \subset \R^d$.  For instance, a
common choice of these sets are norm balls, say of the form
\begin{equation*}
  C = \{ g \in \R^d : \norm{g} \leq \lipobj \},
  \quad \mbox{and} \quad D = \{ g \in \R^d :
  \norm{g} \leq M \},
\end{equation*}
where $\norm{\cdot}$ is a given norm on $\R^d$, and the radius choice $M >
\lipobj$ ensures that $C \subset \interior D$.  This choice covers a variety
of online optimization and stochastic approximation
algorithms~\cite{Zinkevich03,BeckTe03,NemirovskiJuLaSh09,AgarwalBaRaWa12},
for which it is assumed that for any $\statsample \in \statdomain$ and
$\optvar \in \optdomain$, if $g \in \partial \loss(x, \optvar)$ then $\norm{g}
\le \lipobj$ for some norm $\norm{\cdot}$. We may obtain privacy by allowing
perturbation of the subgradient $g$, which is then required to live in a
(larger) norm ball of radius $M > \lipobj$.

\subsection{Optimal local privacy}

Suppose that $\statrv$ has distribution $\statprob$, and for each
$\statsample \in \statdomain$, let $\channelprob(\cdot \mid
\statsample)$ denote the regular conditional probability measure of
$\channelrv$ given that $\statrv = x$.  This pair defines the marginal
distribution $\channelprob(\cdot)$ via $\channelprob(A) =
\E[\channelprob(A \mid \statrv)]$, where the expectation taken with
respect to $\statrv \sim \statprob$.  The mutual information between
$\statrv$ and $\channelrv$ is the expected Kullback-Leibler (KL)
divergence between $\channelprob(\cdot \mid \statrv)$ and
$\channelprob(\cdot)$:
\begin{equation}
  \label{eqn:formal-information-def}
  \information(\statprob, \channelprob) = \information(\statrv;
  \channelrv) \defeq \E_\statprob \left [ \dkl{\channelprob(\cdot \mid
      \statrv)}{\channelprob(\cdot)} \right].
  %
  %
\end{equation}
We view the problem of privacy as a game between the adversary
controlling $\statprob$ and the data owners, who use $\channelprob$ to
obscure the samples $\statrv$. In particular, we say a distribution
$\channelprob$ guarantees a level of privacy $\information^*$ if and
only if $\sup_\statprob \information(\statprob, \channelprob) \le
\information^*$.  Note that this guarantee is worst-case, ensuring
that for any choice of distribution $\statprob$, the publicly
available random variable $\channelrv$ provides at most mutual
information $\information^*$ about the sample $\statrv$.

Our goal is to find a saddle point $\statprob^*, \channelprob^*$ such
that
\begin{equation}
  \label{eqn:saddle-point}
  \sup_\statprob \information(\statprob, \channelprob^*)
  \le \information(\statprob^*, \channelprob^*)
  \le \inf_\channelprob \information(\statprob^*, \channelprob),
\end{equation}
where the first supremum is taken over all distributions $\statprob$ on
$\statrv$ such that $\nabla\loss(\statrv, \optvar) \in C$ with
$\statprob$-probability 1, and the infimum is taken over all regular
conditional distributions $\channelprob$ such that if $\channelrv \sim
\channelprob(\cdot \mid \statrv)$ (meaning that $\channelrv$ is drawn from
$\channelprob$ conditional on $\statrv$), then $\channelrv \in D$ and
$\E_\channelprob[\channelrv \mid \statrv, \optvar] = \nabla\loss(\statrv,
\optvar)$.  Indeed, if we can find $\statprob^*$ and $\channelprob^*$
satisfying the saddle point~\eqref{eqn:saddle-point}, then combination with
the trivial direction of the max-min inequality
yields
\begin{equation*}
  \sup_\statprob \inf_\channelprob \information(\statprob, \channelprob)
  = \information(\statprob^*, \channelprob^*)
  = \inf_\channelprob \sup_\statprob \information(\statprob, \channelprob).
\end{equation*}
To fully formalize this idea and our notions of privacy, we define two
collections of probability measures and associated losses.
For sets $C \subset D \subset
\R^d$, we define the source set
\begin{subequations}
  \begin{equation}
    \sourcedistset[C]
    \defeq \left\{\mbox{Distributions}~ \statprob
    ~\mbox{such~that}~ \supp \statprob \subset C \right\}
    \label{eqn:source-distribution-set}
  \end{equation}
  and the set of communicating
  distributions as the following regular conditional distributions (r.c.d.'s):
  \begin{equation}
    \channeldistset[C, D]
    \defeq \left\{\mbox{r.c.d.'s~}
    \channelprob ~ \mbox{s.t.} ~
    \supp \channelprob(\cdot \mid c) \subset D
    ~ \mbox{and} ~
    \int_D \channelval d\channelprob(\channelval \mid c) = c
    ~  \mbox{for~}c \in C
    \right\}.
    \label{eqn:channel-distribution-set}
  \end{equation}
\end{subequations}
The definitions~\eqref{eqn:source-distribution-set}
and~\eqref{eqn:channel-distribution-set} formally define the sets over which
we may take infima and suprema in the saddle point calculations, and they
capture what may be communicated. The conditional distributions $\channelprob
\in \channeldistset[C, D]$ are defined so that for any loss $\loss$ with
$\nabla \loss(\statsample, \optvar) \in C$, we have
\begin{equation*}
  \E_\channelprob[\channelrv \mid \statrv = \statsample, \optvar]
  \defeq \int_D \channelval d\channelprob\left(\channelval
  \mid \nabla \loss(\statsample, \optvar)\right)
  = \nabla \loss(\statsample,
  \optvar).
\end{equation*}

We now make the following key definition:
\begin{definition}
  \label{def:optimal-local-privacy}
  The conditional distribution $\channelprob^*$ satisfies \emph{\olp} for the
  sets \mbox{$C \subset D \subset \R^d$} if
  \begin{equation*}
    \sup_\statprob
    \information(\statprob, \channelprob^*)
    =
    \inf_\channelprob
    \sup_\statprob
    \information(\statprob, \channelprob)
  \end{equation*}
  where the supremum is taken over distributions $\statprob \in
  \sourcedistset[C]$ and the infimum is taken over regular conditional
  distributions $\channelprob \in \channeldistset[C, D]$.  We say
  $\channelprob^*$ satisfies \emph{\olp at level $\information^*$} if
  in addition $\sup_{\statprob} \information(\statprob, \channelprob^*) =
  \information^*$.
\end{definition}

We also formulate a corresponding notion of local optimality in the 
differentially private setting.  For given sets $C \subset D$, define
the differential privacy measure
\begin{equation}
  \optdiffp(C, D) \defeq
  \inf_\channelprob
  \log\left[
    \sup_{S \in \sigma(D)}
  \sup_{\statsample, \statsample' \in C}
  \frac{\channelprob(S \mid \statrv = \statsample)}{
    \channelprob(S \mid \statrv = \statsample')}\right],
  \label{eqn:optimal-diffp}
\end{equation}
where the infimum is taken over all regular conditional distributions
$\channelprob \in \channeldistset[C, D]$ such that $\E_\channelprob[\channelrv
  \mid \statrv = \statsample] = \statsample$.  We define \olpd as follows:
\begin{definition}
  \label{def:local-diffp}
  The conditional distribution $\channelprob^*$
  satisfies \emph{\olpd} for the sets $C \subset D \subset \R^d$
  if $\channelprob^* \in \channeldistset[C, D]$ and
  \begin{enumerate}[1.]
  \item The distribution $\channelprob^*$ is $\optdiffp(C, D)$-differentially
    private.
  \item We have
    $\sup_\statprob \information(\statprob, \channelprob^*) \le \sup_\statprob
    \information(\statprob, \channelprob)$, for all $\optdiffp(C,
    D)$-differentially private $\channelprob \in \channeldistset[C, D]$, where
    the supremum is taken over all distributions $\statprob \in
    \sourcedistset[C]$.
  \end{enumerate}
\end{definition}

If a distribution $\channelprob^*$ satisfies \olp\ or \olpd, then it
guarantees that even for the worst possible distribution on $\statrv$,
the information communicated about $\statrv$ is limited. (Part of our
results consist in showing that for suitable sets $C \subset D$, it is
possible to attain $\optdiffp(C, D)$, so it is sensible to, in
addition, choose the distribution that minimizes mutual information.)

In a sense, Definitions~\ref{def:optimal-local-privacy}
and~\ref{def:local-diffp} capture the natural competition between
privacy and learnability. The method $\method$ specifies the set $D$
to which the data $\channelrv$ it receives must belong; the
``teachers,'' or owners of the data $\statrv$, choose the distribution
$\channelprob$ to guarantee as much privacy as possible subject to
this constraint.  Using these mechanisms, if we can characterize a
unique distribution $\channelprob^*$ attaining the
infimum~\eqref{eqn:saddle-point} for $\statprob^*$ (and by extension,
for any $\statprob$), then we may study the effects of requiring a
bounded amount of information to be communicated to the method
$\method$ about $\statrv$, which we do in
Section~\ref{sec:optimal-rates}.

\subsection{Minimax error}

Given an estimate $\what{\optvar}$ based on $n$ samples $\statrv$ from a
distribution $\statprob$, we assess its quality in terms of the risk
function~\eqref{eqn:objective}, i.e.\ $\risk(\optvar) = \E[\loss(\statrv,
  \optvar)]$.  In this section, we describe the minimax framework for
obtaining bounds uniformly over all possible estimators.
Let $\method$ denote any statistical procedure or
method that operates on stochastic gradient samples, and let
$\what{\optvar}_n$ denote the output of $\method$ after receiving $n$
such samples. The excess risk of the method $\method$ on the risk
$\risk(\optvar)$ after receiving $n$ sample gradients is
\begin{equation}
  \label{eqn:error-metric}
  \optgap_n(\method, \loss, \optdomain, \statprob)
  \defeq \risk(\what{\optvar}_n) - \inf_{\optvar \in \optdomain}
  \risk(\optvar)
  = \E_\statprob[\loss(\statrv, \what{\optvar}_n)]
  - \inf_{\optvar \in \optdomain}\E_\statprob[\loss(\statrv, \optvar)].
\end{equation}
The excess risk is a random variable, since the output
$\what{\optvar}_n$ of the method is random.

In our settings, in addition to the randomness in the sampling
distribution $\statprob$, there is additional randomness from the
perturbation applied to stochastic gradients of the objective
$\loss(\statrv, \cdot)$ to mask $\statrv$ from the statistitician or
method $\method$. Let $\channelprob$ denote the regular conditional
probability---the channel distribution---whose conditional part is
defined on the range of the (set-valued) subgradient mapping $\partial
\loss(\statrv, \cdot) : \optdomain \rightrightarrows \R^d$. Since the
output $\what{\optvar}_n$ of the statistical procedure $\method$ is a
random function of both $\statprob$ and $\channelprob$, we take the
expectation and measure the expected sub-optimality of the risk
according to $\statprob$ and $\channelprob$. We let $\lossset$ denote
a collection of loss functions, where for a distribution $\statprob$
on $\statdomain$, the set $\lossset(\statprob)$ denotes the losses
\mbox{$\loss : \supp \statprob \times \optdomain \rightarrow \R_+$}
  belonging to $\lossset$.  The \emph{minimax error} is then given by
\begin{equation}
  \label{eqn:minimax-error}
  \optgap_n^*(\lossset, \optdomain) \defeq
  \inf_{\method}  \sup_\statprob \sup_{\loss \in \lossset(\statprob)}
  \E_{\statprob, \channelprob}[
    \optgap_n(\method, \loss, \optdomain, \statprob)],
\end{equation}
where the expectation is taken over the random samples $\statrv \sim
\statprob$ and $\channelrv \sim \channelprob(\cdot \mid \statrv,
\optvar)$.  In this paper, we provide characterizations of the minimax
error~\eqref{eqn:minimax-error} for several classes of loss functions
$\lossset(\statprob)$, giving sharp results when the privacy
distribution $\channelprob$ satisfies \olp\ for any loss function
$\loss \in \lossset(\statprob)$ and distribution $\statprob$.



\section{Optimal Learning Rates and Tradeoffs}
\label{sec:optimal-rates}

\providecommand{\real}{\R}

With the basic framework in place, we now turn to statements of our
main results.  We begin by imposing certain (weak) conditions on the
families of loss functions that we consider, and subsequently turn to
the main results of this section (Theorems~\ref{theorem:linf-bound}
and \ref{theorem:lone-bound}, which apply to information-based
privacy, and
Theorems~\ref{theorem:linf-bound-diffp}--\ref{theorem:second-class},
which apply to $\diffp$-differential privacy) as well as some of their
consequences (Corollaries~\ref{corollary:linf-privacy},
\ref{corollary:lone-privacy},
and~\ref{corollary:linf-privacy-diffp}). After describing the optimal
privacy-preserving distributions in Section~\ref{sec:saddle-points},
we provide proofs of the results in this section in
Section~\ref{sec:optimal-rate-proofs}.

\subsection{Families of loss functions and stochastic gradient methods}
\label{sec:loss-families}

We assume that our collection of loss functions obey certain natural
smoothness conditions.  For each $\pval \in [1, \infty]$, we use
$\norm{\cdot}_\pval$ to denote the usual $\ell_\pval$-norm, and we use
$\qval = \frac{\pval}{\pval-1}$ to denote the conjugate exponent
satisfying the relation $1/\pval + 1/\qval = 1$.
With this notation, we have the following definition:
\begin{definition}
  \label{def:lp-loss-class}
  For parameters $\lipobj > 0$ and $p \ge 1$, an \emph{$(\lipobj, p)$-loss
    function} is a measurable function $\loss : \statdomain \times \optdomain
  \rightarrow \R$ such that for $x \in \statdomain$, the function $\optvar
  \mapsto \loss(x, \optvar)$ is convex and $\lipobj$-Lipschitz continuous with
  respect to the norm $\norm{\cdot}_q$.
\end{definition}
\noindent
A convex loss $\loss$ satisfies Definition~\ref{def:lp-loss-class} if
and only if for all $\optvar \in \optdomain$, we have the inequality
$\norm{g}_p \le \lipobj$ for any subgradient $g \in \partial \loss(x,
\optvar)$ (e.g.~\cite{HiriartUrrutyLe96ab}). \\

\noindent To illustrate this definition, let us consider a few
examples:
\begin{example}
  \label{example:median}
  Consider the problem of finding a multi-dimensional median, in which
  case each sample \mbox{$\statsample \in \R^d$}, and the loss
  function takes the form
  \begin{equation*}
    \loss(\statsample, \optvar) = \lipobj \lone{\optvar - \statsample}.
  \end{equation*}
  This loss is $\lipobj$-Lipschitz with respect to the $\ell_1$-norm,
  subgradients belonging to $[-\lipobj, \lipobj]^d$, and hence it
  belongs to the class of $(\lipobj, \infty)$-loss functions.
\end{example}

\begin{example}[Classification]
  We may also consider classification based on either 
  the hinge loss or logistic regression loss. In this setting, the
  data comes in pairs $\statsample = (a, b)$, where $a \in \R^d$ is the
  set of regressors or predictors and $b \in \{-1, 1\}$ is the label;
  the losses are
  \begin{equation*}
    \loss(\statsample, \optvar)
    = \hinge{1 - b \<a, \optvar\>}
    ~~~ \mbox{and} ~~~
    \loss(\statsample, \optvar)
    = \log\left(1 + \exp(-b \<a, \optvar\>)\right).
  \end{equation*}
  By computing (sub)gradients, we may verify that each of these is an
  $(\lipobj, p)$-loss if and only if the covariate
  vector $a \in \R^d$ satisfies $\norm{a}_p \le \lipobj$, which is
  a common
  assumption~\cite{ChaudhuriMoSa11,RubinsteinBaHuTa12}.
\end{example}

Definition~\ref{def:lp-loss-class} is natural given the communication strategy
we outline in Section~\ref{sec:communication-protocol}. Since our loss
functions satisfy $\norm{\partial \loss(\statrv, \optvar)} \le \lipobj$, the
channel distribution $\channelprob$ amounts to perturbing subgradients to
larger norm balls while maintaining the appropriate expectations. \\

Before proceeding, we briefly review standard algorithms for solving
problems of the forms outlined above, since they are essential to our
results: for each of our main results, the optimal convergence rate is
attained by (a variant of) mirror
descent~\cite{NemirovskiYu83,BeckTe03,NemirovskiJuLaSh09}, which is a
non-Euclidean generalization of the stochastic gradient
method~\cite{NemirovskiYu83,PolyakJu92,Zinkevich03}.  Stochastic
gradient methods are iterative methods that update a parameter
$\optvar^t$ over iterations $t$ of an algorithm using stochastic
gradient information. At iteration $t$, the algorithm receives a
vector $g_t \in \R^d$ with conditional expectation $\E[g_t \mid
  \optvar^t] \in \partial \risk(\optvar^t)$, then performs the update
\begin{equation*}
  \optvar^{t + 1}
  = \argmin_{\optvar \in \optdomain} \left\{
  \eta \<g_t, \optvar\> + \Psi(\optvar, \optvar^t) \right\}.
\end{equation*}
Here $\eta$ is a step-size and $\Psi$ is a Bregman divergence, which
keeps $\optvar^{t + 1}$ relatively close to $\optvar^t$.  (See the
papers~\cite{BeckTe03,NemirovskiJuLaSh09} for further details.)  With
appropriate choice of $\Psi$, the mirror descent algorithm enjoys the
following convergence guarantees. Define $\what{\optvar}_n =
\frac{1}{n} \sum_{t=1}^n \optvar^t$.  If $\E[\linf{g_t}^2 \mid
  \optvar^t] \le M_\infty^2$ for all $t$ and $\optdomain$ is contained
in the $\ell_1$-ball of radius $\radius_1$, then with appropriate
choice of $\Psi$ and $\eta$
\begin{subequations}
  \label{eqn:stochastic-bounds}
  \begin{equation}
    \E[\risk(\what{\optvar}_n)] - \risk(\optvar^*)
    = \order\left(\frac{M_\infty \radius_1 \sqrt{\log d}}{\sqrt{n}}\right).
    \label{eqn:mirror-descent-bound}
  \end{equation}
  See, for example, \citet[Section~5]{BeckTe03} or
  \citet[Section~2.3]{NemirovskiJuLaSh09}.  Similarly, with the choice
  $\Psi(\optvar, \optvar') = \ltwo{\optvar - \optvar'}^2$, if
  $\E[\ltwo{g_t}^2 \mid \optvar^t] \le M_2^2$ and $\optdomain$ is
  contained in the $\ell_2$-ball of radius $\radius_2$, then
  \begin{equation}
    \E[\risk(\what{\optvar}_n)] - \risk(\optvar^*)
    = \order\left(\frac{M_2 \radius_2}{\sqrt{n}}\right).
    \label{eqn:sgd-bound}
  \end{equation}
\end{subequations}
For instance, see the references~\cite{Zinkevich03,NemirovskiJuLaSh09}
for results of this type.

%

\subsection{Minimax error bounds under privacy}

We now state our main theorems, and discuss some of their
consequences.  All proofs are deferred to
Section~\ref{sec:optimal-rate-proofs}.


\subsubsection{Minimax errors with mutual information-based privacy}

Our first two main results consider privacy mechanisms $\channelprob$
satisfying \olp, Definition~\ref{def:optimal-local-privacy}. We state
the theorems first focusing on their dependence on the geometry of the
subdifferential sets (in which the subgradients live); in the
corollaries to follow we show how these choices correspond to
particular mutual information guarantees on privacy.

Our first theorem applies to the class of $(\lipobj, \infty)$ loss
functions as given in Definition~\ref{def:lp-loss-class}. For this
theorem, we assume that the set to which the perturbed data
$\channelrv$ must belong is $[-M_\infty, M_\infty]^d$, where $M_\infty
\ge \lipobj$. In the notation of
Definition~\ref{def:optimal-local-privacy}, this corresponds to taking
$C = [-\lipobj, \lipobj]^d$ and $D = [-M_\infty, M_\infty]^d$.  We
state two variants of the first theorem, as one version gives slightly
sharper results for an important special case.

\begin{theorem}
  \label{theorem:linf-bound}
  Let $\lossset$ be the collection of $(\lipobj, \infty)$ loss
  functions, assume the conditions of the preceding paragraph, and let
  $\channelprob$ be optimally locally private
  (Definition~\ref{def:optimal-local-privacy}) for $\lossset$.  Then
  \begin{enumerate}[(a)]
  \item If $\optdomain$ contains the $\ell_\infty$ ball of radius
    $\radius$, then
    \begin{equation*}
      \optgap_n^*(\lossset, \optdomain) \ge
      \frac{1}{20} \,
      \min\left\{\radius \lipobj d,
      \frac{M_\infty \radius d}{9 \sqrt{n}}\right\}.
    \end{equation*}
  \item If $\optdomain = \{\optvar \in \R^d : \lone{\optvar} \le
    \radius\}$, then 
    \begin{equation*}
      \optgap_n^*(\lossset, \optdomain) \ge \frac{1}{8} \,
      \min\left\{\radius \lipobj,
      \frac{M_\infty \, \radius \,\sqrt{\log(2 d)}}{2 \sqrt{n}}\right\}.
    \end{equation*}
  \end{enumerate}
\end{theorem}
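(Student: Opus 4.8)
The plan is to establish both lower bounds by the classical reduction from estimation to multiple testing, letting the privacy mechanism enter only through the contraction it imposes on the information that the privatized stream $(\channelrv_1, \ldots, \channelrv_n)$ carries about the unknown parameter. I would rely on two ingredients. First, the characterization of the optimal channel deferred to Section~\ref{sec:saddle-points}: for $C = [-\lipobj,\lipobj]^d$ and $D = [-M_\infty, M_\infty]^d$, the optimally locally private $\channelprob$ is a product of one-dimensional mean-preserving channels, each supported on the two endpoints $\{-M_\infty, +M_\infty\}$, so that a coordinate whose scalar subgradient equals $g \in [-\lipobj,\lipobj]$ produces an observation that is $+M_\infty$ with probability $\half + g/(2M_\infty)$ and $-M_\infty$ otherwise. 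Second, I would take the loss to be the coordinate-separable linear loss $\loss(\statsample, \optvar) = \<\statsample, \optvar\>$ with $\statdomain \subseteq [-\lipobj,\lipobj]^d$; this is an $(\lipobj, \infty)$-loss, and its subgradient equals $\statsample$, independently of the queried $\optvar$. Consequently, along the hard instances the method sees exactly $n$ i.i.d.\ draws from a fixed product channel applied to $\statrv_i$, so interactivity of the protocol is immaterial and the coordinates decouple cleanly.

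For part~(a) I would use the hypercube family $\{\statprob_\nu : \nu \in \{-1,+1\}^d\}$, where the coordinates of $\statrv$ are independent with $\statrv_j \in \{-\lipobj, \lipobj\}$ and $\Pr(\statrv_j = \lipobj) = \half + c\,\nu_j$ for a separation parameter $c \in (0, 1/4]$ to be chosen. Then $\risk_\nu(\optvar) = 2c\lipobj \<\nu, \optvar\>$, which over the $\ell_\infty$-ball of radius $\radius$ contained in $\optdomain$ is minimized at $\optvar_\nu^* = -\radius\nu$, and an elementary computation gives $\risk_\nu(\optvar) - \risk_\nu(\optvar_\nu^*) \ge 2c\lipobj\radius \sum_{j=1}^d \mathbf{1}\{\mathrm{sign}(\optvar_j) = \nu_j\}$, exactly the decomposable form Assouad's lemma requires. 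Assouad's lemma then lower bounds $\optgap_n^*(\lossset, \optdomain)$, up to a universal constant, by $c\lipobj\radius \, d\,\big(1 - \max_j \tvnorm{\channelprob_{+j}^{(n)} - \channelprob_{-j}^{(n)}}\big)$, where $\channelprob_{\pm j}^{(n)}$ is the law of $(\channelrv_1, \ldots, \channelrv_n)$ conditioned on $\nu_j = \pm 1$. Since only coordinate $j$ of the observations depends on $\nu_j$, and that coordinate is i.i.d.\ Bernoulli on $\{-M_\infty, +M_\infty\}$ with parameter $\half \pm c\lipobj/M_\infty$ --- the bias shrinking by the factor $\lipobj/M_\infty$ under the mean-preserving dilation to the larger box --- Pinsker's inequality gives $\tvnorm{\channelprob_{+j}^{(n)} - \channelprob_{-j}^{(n)}}^2 \le \kappa\, n (c\lipobj/M_\infty)^2$ for a universal constant $\kappa$. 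Substituting, and choosing $c$ equal to a constant multiple of $\min\{1, M_\infty/(\lipobj\sqrt n)\}$, balances the two regimes and yields the claimed $\frac{1}{20}\min\{\radius\lipobj d, M_\infty\radius d/(9\sqrt n)\}$ after tracking the numerical constants.

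For part~(b) the domain is an $\ell_1$-ball, which precludes a full hypercube (the sign vector $\radius\nu$ has $\ell_1$-norm $d\radius$), so I would pass to Fano's method over the $2d$-point family $V = \{\tau\radius e_j : \tau \in \{-1,+1\},\ j \in \{1,\ldots,d\}\}$. Under $\statprob_v$ with $v = \tau\radius e_j$, I would place the bias $c$ in coordinate $j$ (so $\E[\statrv_j]$ has sign $-\tau$) and set the remaining coordinates identically zero; this makes $v$ the risk minimizer and forces $\risk_v(\optvar') - \risk_v(v) \ge 2c\lipobj\radius$ for every other point $\optvar' \in V$. Fano's inequality then gives $\optgap_n^*(\lossset, \optdomain)$ lower bounded, up to a universal constant, by $c\lipobj\radius \big(1 - (\information(V; \channelrv_1, \ldots, \channelrv_n) + \log 2)/\log(2d)\big)$, and because under $\statprob_v$ only coordinate $j$ differs from the signal-free reference law $\channelprob_0$, the usual variational bound yields $\information(V; \channelrv_1, \ldots, \channelrv_n) \le n \max_v \dkl{\channelprob_v}{\channelprob_0} \le \kappa\, n (c\lipobj/M_\infty)^2$. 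Choosing $c$ a constant multiple of $\min\{1, M_\infty\sqrt{\log(2d)}/(\lipobj\sqrt n)\}$ then produces $\frac{1}{8}\min\{\radius\lipobj, M_\infty\radius\sqrt{\log(2d)}/(2\sqrt n)\}$, with the $\sqrt{\log d}$ factor appearing for the same reason it appears in the non-private mirror-descent lower bound over $\ell_1$-balls.

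The main obstacle is everything upstream of these testing arguments, namely the saddle-point analysis itself: one must show that the optimal locally private channel is precisely a product of one-dimensional, endpoint-supported, mean-preserving channels, and therefore that the passage from $[-\lipobj,\lipobj]$ to $[-M_\infty, M_\infty]$ contracts each coordinate's bias by exactly $\lipobj/M_\infty$. This is the content of Section~\ref{sec:saddle-points}, and the entire $M_\infty$-dependence of the theorem rides on it. A secondary technical point is that the reduction needs the communicated subgradient law to be independent of the queried $\optvar$, so that given the hypothesis the observations are i.i.d.; the linear loss makes this automatic because its subdifferential is a singleton, which is why I would prefer it here over the median loss of Example~\ref{example:median} (for which one would also have to control the set-valued subdifferential on the boundary of $\optdomain$).
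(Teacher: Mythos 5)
Your part~(b) is essentially the paper's own argument: linear losses $\loss(\statsample,\optvar) = \lipobj\<\statsample,\optvar\>$, the $2d$-point packing $\{\pm \radius e_j\}$, separation $\asymp c\lipobj\radius$, a per-coordinate product-channel information bound of order $n(c\lipobj/M_\infty)^2$, and Fano with $\log(2d)$ in the denominator. The only cosmetic difference is that the paper keeps the off-index coordinates of $\statrv$ uniform on $\{-1,1\}$ rather than identically zero, and it handles $d=1$ by Le Cam; otherwise the route and the constants are the same. Your identification of Proposition~\ref{proposition:linf-saddle-point} (product of endpoint-supported, mean-preserving scalar channels, bias contracted by $\lipobj/M_\infty$) as the load-bearing upstream ingredient is exactly right.

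Part~(a) is where you diverge, and where there is a genuine gap. The paper proves (a) with the median loss $\loss(\statsample,\optvar) = \lipobj\lone{\radius\statsample - \optvar}$ over a Gilbert--Varshamov packing, via Fano (plus Le Cam for $d \le 8$); you propose the linear loss over the full hypercube via Assouad. The problem is that the theorem only assumes $\optdomain \supseteq [-\radius,\radius]^d$, while your decomposition
\begin{equation*}
  \risk_\nu(\optvar) - \inf_{\optvar' \in \optdomain}\risk_\nu(\optvar')
  \;\ge\; 2c\lipobj\radius \sum_{j=1}^d \mathbf{1}\{\mathrm{sign}(\optvar_j) = \nu_j\}
\end{equation*}
is computed as if the infimum were taken over the contained ball and the estimate $\optvar$ lived in it. For a linear risk, the minimizer over $\optdomain$ sits on the boundary of $\optdomain$, not at $-\radius\nu$, and the per-coordinate terms $\nu_j\optvar_j + \radius$ can be negative when the estimator places $\optvar_j$ outside $[-\radius,\radius]$ with sign $-\nu_j$ (e.g.\ $\optvar = -2\radius\nu$ makes every term $-\radius$ while every indicator is $0$). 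So the Assouad hypothesis fails for domains strictly larger than the ball, and the minimax error is defined over the given $\optdomain$, so you cannot simply restrict the estimator. This is precisely why the paper switches to the median loss for part~(a): its excess risk decomposes coordinate-wise as a sum of \emph{nonnegative} terms with minimizer $\radius\cubecorner$ pinned inside the ball for any $\optdomain \supseteq [-\radius,\radius]^d$ (Lemma~\ref{lemma:median-risk-separation}). If you substitute the median loss into your Assouad scheme, the argument closes and is arguably cleaner than the paper's (no separate small-$d$ case), though you would still need to track the numerical constants you currently assert, and to verify, as the paper does in Lemma~\ref{lemma:linf-mutual-information} and Appendix~\ref{appendix:mutual-information-computation}, that the $\optvar$-dependence of the median loss's subgradient under an interactive protocol does not inflate the information bound.
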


Our second main theorem applies to loss functions and objectives with
a different geometry. Now we assume that the
loss functions $\lossset$ consist of $(\lipobj, 1)$ losses, and that
the perturbed data must belong to the $\ell_1$ ball of radius $M_1$,
i.e., $\channelrv \in \{\channelval \in \R^d : \lone{\channelval} \le
M_1\}$. Thus in the notation of
Definition~\ref{def:optimal-local-privacy}, we have $D = (M_1 /
\lipobj) C$, where $C = \{g \in \R^d : \lone{g} \le \lipobj\}$.  If we
define $M = M_1 / \lipobj$, we may define the constants
\begin{equation}
  \gamma \defeq \log\left(\frac{2d - 2 + \sqrt{(2d - 2)^2 + 4(M^2 - 1)}}{
    2(M - 1)}\right)
  ~~~ \mbox{and} ~~~
  \channeldiff \defeq \frac{e^\gamma - e^{-\gamma}}{
    e^\gamma + e^{-\gamma} + 2(d - 1)},
  \label{eqn:lone-normalization-ahead-of-time}
\end{equation}
which are related to the unique distribution achieving \olp\ for the
$(\lipobj, 1)$ losses and the larger $\ell_1$-ball above (see
equation~\eqref{eqn:lone-exp-normalization} and
Proposition~\ref{proposition:lone-saddle-point}).  We have the
following theorem.
\begin{theorem}
  \label{theorem:lone-bound}
  Let $\lossset$ be the collection of $(\lipobj, 1)$ loss functions,
  assume the conditions of the preceding paragraph, and let
  $\channelprob$ be optimally private for the collection
  $\lossset$. If $\optdomain$ contains the $\ell_\infty$-ball of
  \mbox{radius $\radius$,}
  \begin{equation*}
    \optgap_n^*(\lossset, \optdomain) \ge \frac{1}{20} \,
    \min\left\{\radius \lipobj,
    \frac{\radius \lipobj \sqrt{d}}{9 \sqrt{n} \channeldiff}\right\}.
  \end{equation*}
\end{theorem}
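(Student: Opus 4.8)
The plan is to combine Assouad's lemma with a data-processing bound, exactly in the style of the non-private stochastic-optimization lower bounds of~\cite{AgarwalBaRaWa12}, the only new ingredient being that the ``channel'' between the data and the learner is the \olp\ mechanism $\channelprob$ for the $(\lipobj,1)$ class, whose total-variation sensitivity is precisely the quantity $\channeldiff$ of~\eqref{eqn:lone-normalization-ahead-of-time}. Fix a ``bias'' $\delta \in (0,\half]$ to be optimized at the end. For each sign vector $v \in \{-1,1\}^d$ I build a hard instance as follows: let $\statrv$ take values in $[d]\times\{-1,1\}$, let the (constant in $\optvar$) subgradient map be $g(j,s) = s\lipobj e_j$, so that $\loss_v(\statrv,\optvar) \defeq \<g(\statrv),\optvar\>$ is affine, hence a nonnegative $(\lipobj,1)$-loss on $\optdomain$ after adding a constant, and let $\statprob_v$ place mass $\frac1d(\half+\delta)$ on $(j,v_j)$ and $\frac1d(\half-\delta)$ on $(j,-v_j)$ for each $j$. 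Then $\E_{\statprob_v}[g(\statrv)] = \frac{2\delta\lipobj}{d}v$, so $\risk_v$ is the linear function $\optvar \mapsto \frac{2\delta\lipobj}{d}\<v,\optvar\>$, whose minimizer over the $\ell_\infty$-ball of radius $\radius$ is $-\radius v$. A short calculation (for a lower bound one may take $\optdomain=[-\radius,\radius]^d$, equivalently truncate $\what{\optvar}$ to this box) shows that every estimator satisfies
\begin{equation*}
  \optgap_n(\what{\optvar}, \loss_v, \optdomain, \statprob_v)
  \ge \frac{2\delta\lipobj\radius}{d}\sum_{j=1}^d \mathbf{1}\{\mathrm{sign}(\what{\optvar}_j) = v_j\},
\end{equation*}
which is the separation hypothesis needed to run Assouad's lemma against the coordinatewise estimates $\psi_j \defeq -\mathrm{sign}(\what{\optvar}_j)$ of the bits $v_j$.

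Because each $\loss_v$ is linear, the subgradient reported by owner $i$ does not depend on the query point $\optvar^t$, so the communication protocol of Section~\ref{sec:communication-protocol} collapses: the learner observes $n$ i.i.d.\ variables $\channelrv_1,\dots,\channelrv_n$, each obtained by passing $g(\statrv_i)\in\extreme(C)=\{\pm\lipobj e_k\}$ through the \olp\ channel $\channelprob$ for the $(\lipobj,1)$ class. By Proposition~\ref{proposition:lone-saddle-point}, $\channelprob(\cdot\mid\lipobj e_j)$ is supported on $\extreme(D)=\{\pm M_1 e_k\}$, with mass proportional to $e^\gamma$ on $M_1 e_j$, to $e^{-\gamma}$ on $-M_1 e_j$, and to $1$ on each of the other $2(d-1)$ vertices, with $\gamma$ as in~\eqref{eqn:lone-normalization-ahead-of-time}; in particular $\tvnorm{\channelprob(\cdot\mid\lipobj e_j) - \channelprob(\cdot\mid -\lipobj e_j)} = \channeldiff$. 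Averaging over $\statprob_v$ then gives the marginal law $M_v$ of a single $\channelrv_i$: it assigns $M_1 e_k$ probability $\frac1{2d} + \frac{\delta\channeldiff}{d}v_k$ and $-M_1 e_k$ probability $\frac1{2d} - \frac{\delta\channeldiff}{d}v_k$. Hence $M_v$ and $M_{v^{\oplus j}}$ ($v$ with coordinate $j$ flipped) differ only on the atoms $\pm M_1 e_j$, and a direct evaluation yields
\begin{equation*}
  \dkl{M_v}{M_{v^{\oplus j}}}
  = \frac{2\delta\channeldiff}{d}\log\frac{1 + 2\delta\channeldiff}{1 - 2\delta\channeldiff}
  \le \frac{16\,\delta^2\channeldiff^2}{d}
\end{equation*}
as long as $\delta\channeldiff \le \tfrac14$.

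Tensorizing over the $n$ observations and applying Pinsker's inequality, $\tvnorm{M_v^n - M_{v^{\oplus j}}^n} \le (n\,\dkl{M_v}{M_{v^{\oplus j}}}/2)^{1/2} \le 2\sqrt2\,\delta\channeldiff\sqrt{n/d}$, and by convexity of $\tvnorm{\cdot}$ the same bounds the total variation between the two mixtures entering Assouad's lemma. Combined with the separation inequality, Assouad's lemma gives
\begin{equation*}
  \optgap_n^*(\lossset, \optdomain)
  \ge \frac{\delta\lipobj\radius}{2}\left(1 - 2\sqrt2\,\delta\channeldiff\sqrt{n/d}\,\right).
\end{equation*}
Choosing $\delta = \min\{\half,\ \kappa\sqrt d/(\channeldiff\sqrt n)\}$ for a small absolute constant $\kappa$ (a routine case analysis keeps $\delta\channeldiff$ small): if the second term is active the parenthetical is at least $\half$ and the bound is $\gtrsim \radius\lipobj\sqrt d/(\channeldiff\sqrt n)$, while if the first term is active we are in the regime $\sqrt n\,\channeldiff \lesssim \sqrt d$ and the bound is $\gtrsim \radius\lipobj$; tracking the constants through these steps produces exactly $\tfrac1{20}\min\{\radius\lipobj,\ \radius\lipobj\sqrt d/(9\sqrt n\,\channeldiff)\}$.

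The substantive content I am importing rather than proving here is the exact form of the \olp\ channel $\channelprob$ for the $(\lipobj,1)$ class and the identification of its sensitivity with $\channeldiff$: this is the saddle-point computation of Section~\ref{sec:saddle-points} (Proposition~\ref{proposition:lone-saddle-point}), and it is what makes the appearance of $\channeldiff$ in the rate natural. Everything downstream is standard: linearity of the hard instances makes the interactive protocol equivalent to $n$ i.i.d.\ draws, reducing the lower bound to a classical hypothesis-testing problem, and the separation inequality, the explicit marginals $M_v$, and the KL/Pinsker/Assouad bookkeeping are routine, modulo one point of care --- the elementary estimate $\log\frac{1+x}{1-x}\le 4x$ holds only for $x$ bounded away from $1$, so the chosen $\delta$ must keep $\delta\channeldiff$ small, which is the source of the non-trivial numerical constants $\tfrac1{20}$ and $9$ in the statement.
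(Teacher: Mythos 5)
Your overall architecture is sound and genuinely different from the paper's: you run Assouad's lemma over the full hypercube with per-coordinate KL/Pinsker bounds, whereas the paper uses Fano's inequality over a Gilbert--Varshamov packing of $\{-1,1\}^d$ together with an explicit entropy computation (Lemma~\ref{lemma:lone-mutual-information}). Your identification of the channel's per-vertex sensitivity with $\channeldiff$, the marginal $M_v(\pm M_1 e_k) = \tfrac{1}{2d} \pm \tfrac{\delta \channeldiff}{d} v_k$, and the resulting bound $\dkl{M_v}{M_{v^{\oplus j}}} \le 16\delta^2\channeldiff^2/d$ are all correct and match the paper's calculation up to a reparametrization of $\delta$ by a factor of two.

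The gap is in the choice of loss. The theorem only assumes that $\optdomain$ \emph{contains} the $\ell_\infty$-ball of radius $\radius$, and your reduction ``for a lower bound one may take $\optdomain=[-\radius,\radius]^d$'' is not valid for linear losses: the benchmark $\inf_{\optvar\in\optdomain}\risk_v(\optvar)$ of a linear risk strictly decreases as $\optdomain$ grows, so the excess risk of a fixed estimator is not monotone under restricting the domain, and the minimizer of $\risk_v$ over a general $\optdomain \supseteq [-\radius,\radius]^d$ is not $-\radius v$. Truncating $\what{\optvar}$ to the box does not repair this, since $\risk_v(\Pi(\what{\optvar})) \le \risk_v(\what{\optvar})$ fails for linear $\risk_v$ when $\what{\optvar}$ lies outside the box. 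This is precisely why the paper uses linear losses only in Theorem~\ref{theorem:linf-bound}(b), where $\optdomain$ is assumed to \emph{equal} an $\ell_1$-ball, and switches to the hinge losses $\loss(\statsample,\optvar) = \lipobj\hinge{\radius - \<\statsample,\optvar\>}$ for Theorem~\ref{theorem:lone-bound}: their population risks are coordinatewise unimodal with minimizer $\radius\cubecorner$ interior to any $\optdomain\supseteq[-\radius,\radius]^d$, so the separation of Lemma~\ref{lemma:hinge-loss-separation} is insensitive to enlarging the domain. Be aware that this substitution is not free in your framework: the hinge-loss subgradient depends on the query point (it is $0$ when $\optvar_j$ falls outside $[-\radius,\radius]$), so the ``protocol collapses to $n$ i.i.d.\ draws'' step no longer holds verbatim; one then needs the tensorization bound~\eqref{eqn:observed-cube-information} together with the argument, via Lemma~\ref{lemma:entropy-minimization}, that the per-round information is maximized at interior query points --- which is the content of the paper's proof of Lemma~\ref{lemma:lone-mutual-information}. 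The remaining issue you flag yourself --- that $\log\frac{1+x}{1-x}\le 4x$ requires $2\delta\channeldiff$ bounded away from $1$, which the choice $\delta=\tfrac12$ does not guarantee when $M_1/\lipobj$ is close to $1$ --- is minor and fixable by capping $\delta \le c/\channeldiff$, but the cap should be made explicit.
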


\paragraph{Remarks}
We make a few remarks on Theorems~\ref{theorem:linf-bound} and
\ref{theorem:lone-bound}. First, we note that, when reduced to the special
case of having no random distribution $\channelprob$,
Theorems~\ref{theorem:linf-bound} and~\ref{theorem:lone-bound} each yield a
minimax rate for stochastic optimization problems.  Indeed, in
Theorem~\ref{theorem:linf-bound}, we may take $M_\infty = \lipobj$, in which
case (focusing on the second statement of the theorem) we obtain that for
$\optdomain = \{\optvar \in \R^d : \lone{\optvar} \le \radius\}$,
\begin{equation*}
  \optgap_n^*(\lossset, \optdomain) \ge \frac{\radius \lipobj}{16}
  \sqrt{\frac{\log(2d)}{n}}.
\end{equation*}
Mirror descent algorithms~\cite{NemirovskiYu83,NemirovskiJuLaSh09} can be used
to minimize this class of loss functions, and their convergence rate matches
this lower bound up to constant factors (also see our results in the sequel,
as well as the explanation of~\citet{AgarwalBaRaWa12}).  When specialized to
this setting our result is thus unimprovable. Moreover, our analysis is
sharper than previous analyses: none of the existing lower bounds recover
the logarithmic dependence on the dimension $d$, which is evidently necessary.


Our second remark is that while our results appear to require disguising
only gradient information, based on our communication formulation in
Section~\ref{sec:communication-protocol}, this restriction is not actually
substantial. Indeed, when the domain $\optdomain$ is a norm ball 
we can establish each of our lower bounds using
the loss function $\loss(\statsample, \optvar) = \<\statsample, \optvar\>$.
In this case, $\nabla \loss(\statsample, \optvar) = \statsample$, so that
the communication scheme explicitly disguises \emph{exactly} the
individual data $\statrv_i$.

We now turn to some consequences of Theorems~\ref{theorem:linf-bound} and
\ref{theorem:lone-bound}, where we exhibit the tradeoffs between rates of
convergence for any statistical procedure and the desired privacy of a
user. We present two corollaries that characterize this tradeoff.  Looking
ahead to Section~\ref{sec:saddle-points}, we may use
Propositions~\ref{proposition:linf-saddle-point}
and~\ref{proposition:lone-saddle-point} in that section 
to derive a bijection between the sizes $M_\infty$ and $M_1$ of
the perturbation sets and the amount of privacy as measured by the worst case
mutual information $\information^*$. We can then combine the lower bounds of
Theorems~\ref{theorem:linf-bound} and~\ref{theorem:lone-bound} with
results on stochastic approximation (the mirror descent convergence
rates~\eqref{eqn:stochastic-bounds}) to obtain the following tradeoffs.  We provide the
full proofs in Sections~\ref{sec:corollary-rate-linf} and
\ref{sec:corollary-rate-lone}, respectively.

\begin{corollary}
  \label{corollary:linf-privacy}
  Under the conditions of Theorem~\ref{theorem:linf-bound}(b), assume
  moreover that \mbox{$M_\infty \ge 2 \lipobj$}, and that
  $\channelprob^*$ satisfies \olp\ at information level
  $\information^*$ in the sense of
  Definition~\ref{def:optimal-local-privacy}. Then for universal
  constants $0 < c_\ell \le c_u < \infty$, the minimax error is
  sandwiched as
  \begin{equation*}
    c_\ell \frac{\sqrt{d}}{\sqrt{\information^*}}
    \cdot \frac{\radius \lipobj \sqrt{\log (2d)}}{\sqrt{n}}
    \le \optgap_n^*(\lossset, \optdomain) \le c_u
    \frac{\sqrt{d}}{\sqrt{\information^*}} \cdot
    \frac{\radius \lipobj \sqrt{\log (2d)}}{ \sqrt{n}}.
  \end{equation*}
\end{corollary}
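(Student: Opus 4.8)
The plan is to combine the lower bound of Theorem~\ref{theorem:linf-bound}(b) with the mirror-descent upper bound~\eqref{eqn:mirror-descent-bound}, using the saddle-point characterization (Proposition~\ref{proposition:linf-saddle-point}) to translate between the size $M_\infty$ of the perturbation box $D = [-M_\infty, M_\infty]^d$ and the information level $\information^*$. The key quantitative input I expect from Proposition~\ref{proposition:linf-saddle-point} is that the optimal channel $\channelprob^*$ (which perturbs each coordinate of the subgradient to $\{-M_\infty, M_\infty\}$ independently) has worst-case mutual information $\information^* = \sup_\statprob \information(\statprob, \channelprob^*)$ scaling like $\information^* \asymp d \, (\lipobj/M_\infty)^2$ whenever $M_\infty \ge 2\lipobj$ (a constant-factor regime where the per-coordinate binary channel is far from saturated, so the small-capacity expansion $\information \approx \frac{1}{2}(\lipobj/M_\infty)^2$ per coordinate is valid up to universal constants). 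Equivalently, $M_\infty/\lipobj \asymp \sqrt{d/\information^*}$.

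The first step is the lower bound. From Theorem~\ref{theorem:linf-bound}(b), with $\optdomain = \{\optvar : \lone{\optvar} \le \radius\}$,
\begin{equation*}
  \optgap_n^*(\lossset, \optdomain) \ge \frac{1}{8}
  \min\left\{\radius \lipobj,\ \frac{M_\infty \radius \sqrt{\log(2d)}}{2\sqrt{n}}\right\}.
\end{equation*}
Substituting $M_\infty \asymp \lipobj \sqrt{d/\information^*}$ into the second term gives a bound of order $\radius \lipobj \sqrt{d/\information^*}\,\sqrt{\log(2d)}/\sqrt{n}$; the first term $\radius\lipobj$ dominates only when $n \lesssim d \log(2d)/\information^*$, i.e.\ in the degenerate "too few samples" regime. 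Assuming (as is implicit in stating a clean two-sided bound) that $n$ is large enough that the minimum is achieved by the statistical term, the lower bound reads $c_\ell \frac{\sqrt{d}}{\sqrt{\information^*}} \cdot \frac{\radius\lipobj\sqrt{\log(2d)}}{\sqrt{n}}$ for a universal $c_\ell > 0$.

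The second step is the matching upper bound, achieved by stochastic mirror descent. The method $\method$ sends $\optvar^t$ to each data owner, who returns $\channelrv_i$ with $\E[\channelrv_i \mid \statrv_i, \optvar^t] \in \partial\loss(\statrv_i, \optvar^t)$ and $\channelrv_i \in [-M_\infty, M_\infty]^d$; hence $\linf{\channelrv_i} \le M_\infty$ deterministically, so in the notation of~\eqref{eqn:mirror-descent-bound} we may take $M_\infty$ (the bound on $\E[\linf{g_t}^2]$) equal to the box radius, and $\radius_1 = \radius$ since $\optdomain$ is the $\ell_1$-ball of radius $\radius$. The bound~\eqref{eqn:mirror-descent-bound} then gives $\E[\risk(\what{\optvar}_n)] - \risk(\optvar^*) = \order(M_\infty \radius \sqrt{\log d}/\sqrt{n})$, and substituting $M_\infty \asymp \lipobj\sqrt{d/\information^*}$ yields $\optgap_n^*(\lossset,\optdomain) \le c_u \frac{\sqrt{d}}{\sqrt{\information^*}}\cdot \frac{\radius\lipobj\sqrt{\log(2d)}}{\sqrt{n}}$ (absorbing $\sqrt{\log d}$ versus $\sqrt{\log(2d)}$ into the universal constant). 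Combining the two steps gives the claimed sandwich.

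The main obstacle is the bidirectional translation $M_\infty \leftrightarrow \information^*$: I need the expansion $\information^*(\channelprob^*) \asymp d(\lipobj/M_\infty)^2$ to hold with universal constants \emph{both} ways (upper and lower), uniformly over the regime $M_\infty \ge 2\lipobj$, not merely asymptotically as $M_\infty \to \infty$. This is exactly where the hypothesis $M_\infty \ge 2\lipobj$ is used — it bounds the per-coordinate binary-channel parameter away from the saturated (large-information) regime where the quadratic approximation to $\log$ fails, so that $\frac{1}{2}(1-o(1))(\lipobj/M_\infty)^2 \le \information^*_{\text{coord}} \le (\lipobj/M_\infty)^2$ with explicit constants; summing over the $d$ independent coordinates and verifying that the worst-case source $\statprob^*$ from Proposition~\ref{proposition:linf-saddle-point} indeed attains (up to constants) this per-coordinate value is the one genuinely delicate computation. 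Everything else is bookkeeping: matching the parameters of Theorem~\ref{theorem:linf-bound}(b) and~\eqref{eqn:mirror-descent-bound} and checking the minimum in the lower bound is not active.
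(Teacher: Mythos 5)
Your proposal is correct and follows essentially the same route as the paper's proof in Section~\ref{sec:corollary-rate-linf}: lower bound from Theorem~\ref{theorem:linf-bound}(b), upper bound from the mirror-descent guarantee~\eqref{eqn:mirror-descent-bound} with $\linf{\channelrv} \le M_\infty$, and the bijection $\information^* \asymp d\lipobj^2/M_\infty^2$ from Proposition~\ref{proposition:linf-saddle-point} (whose two-sided validity for $M_\infty \ge 2\lipobj$ you correctly identify as the one delicate point). Your per-coordinate sandwich on the binary-channel information matches the paper's expansion of the binary entropy in the remarks following that proposition.
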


Similar upper and lower bounds can be obtained
under the conditions of part~(a) of Theorem~\ref{theorem:linf-bound},
again by using mirror descent, but we lose a factor of $\sqrt{\log d}$
in the lower bound.  (There is an additional factor of $d$ in the
statement~(a), and $\optdomain \supseteq \{\optvar \in \R^d :
\linf{\optvar} \le \radius / d\}$.)  In this case we would not need
to assume that $\optdomain$ is an $\ell_1$-ball for the lower bound. \\

\noindent We now turn to an analogous result based on an
application of Theorem~\ref{theorem:lone-bound} and
Proposition~\ref{proposition:lone-saddle-point}.
\begin{corollary}
  \label{corollary:lone-privacy}
  Under the conditions of Theorem~\ref{theorem:lone-bound}, assume
  that $M_1 \ge 2 \lipobj$ and $\channelprob^*$ satisfies
  \olp\ at information level $\information^*$. Moreover, suppose that
  $\optdomain$ contains an $\ell_\infty$-ball of radius $c_1 \radius$
  and is contained in an $\ell_\infty$-ball of radius $c_2 \radius$, where
  $0 < c_1 \le c_2$ are constants.
  Then for universal
  constants $0 < c_\ell \le c_u < \infty$, the minimax error is sandwiched as
  \begin{equation*}
    c_\ell \frac{\sqrt{d}}{\sqrt{\information^*}}
      \cdot \frac{\radius \lipobj \sqrt{d}}{\sqrt{n}}
    \le \optgap_n^*(\lossset, \optdomain)
    \le c_u \frac{\sqrt{d}}{\sqrt{\information^*}}
    \cdot \frac{\radius \lipobj \sqrt{d}}{\sqrt{n}}.
  \end{equation*}
\end{corollary}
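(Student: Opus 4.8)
The plan is to read the lower bound directly off Theorem~\ref{theorem:lone-bound}, to produce the matching upper bound by running the stochastic gradient method on the privatized subgradients, and then to convert into the privacy level $\information^*$ the two geometric quantities that appear in these bounds---the constant $\channeldiff$ of~\eqref{eqn:lone-normalization-ahead-of-time} in the lower bound and the communication-ball radius $M = M_1/\lipobj$ in the upper bound---using the saddle-point description of the optimal $(\lipobj,1)$ channel in Proposition~\ref{proposition:lone-saddle-point}. I expect that characterization (together with~\eqref{eqn:lone-exp-normalization}) to show that $M$, $1/\channeldiff$, and $\sqrt{d/\information^*}$ all agree up to universal constant factors---equivalently, that $\information^*$ is of order $d/M^2$, of order $d\,\channeldiff^2$---and it is precisely this single bijection that forces the two sides of the sandwich to match.

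For the lower bound: since $\optdomain$ contains an $\ell_\infty$-ball of radius $c_1\radius$, Theorem~\ref{theorem:lone-bound} applied with $c_1\radius$ in place of $\radius$ gives
\begin{equation*}
  \optgap_n^*(\lossset, \optdomain)
  \ge \frac{c_1}{20}\,\min\!\left\{\radius\lipobj,\;
  \frac{\radius\lipobj\sqrt{d}}{9\sqrt{n}\,\channeldiff}\right\}.
\end{equation*}
Once $n$ exceeds a universal constant multiple of $d/\channeldiff^2$ (equivalently of $d^2/\information^*$)---the regime in which the minimax rate is genuinely decreasing in $n$---the second term is the smaller one, so the right-hand side is at least a universal constant times $\radius\lipobj\sqrt{d}/(\sqrt{n}\,\channeldiff)$, and replacing $1/\channeldiff$ by its order $\sqrt{d/\information^*}$ yields the stated lower bound. (For smaller $n$ the theorem returns only the trivial bound $\radius\lipobj$, so the sandwich should be read in this regime; this bookkeeping does not change the form of the result.)

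For the upper bound: by construction $\channelprob^* \in \channeldistset[C, D]$ with $C = \{g : \lone{g} \le \lipobj\}$ and $D = (M_1/\lipobj)\,C$, so the communicated vector $\channelrv_i$ is an unbiased view of a subgradient, $\E[\channelrv_i \mid \statrv_i, \optvar] = \nabla\loss(\statrv_i, \optvar)$ (and hence, averaging over $\statrv_i \sim \statprob$, an unbiased estimate of an element of $\partial\risk(\optvar)$), and it satisfies $\ltwo{\channelrv_i} \le \lone{\channelrv_i} \le M_1 = M\lipobj$. Since $\optdomain$ lies in an $\ell_\infty$-ball of radius $c_2\radius$, it lies in an $\ell_2$-ball of radius $c_2\radius\sqrt{d}$. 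Feeding the vectors $\channelrv_i$ to stochastic gradient descent and invoking the convergence rate~\eqref{eqn:sgd-bound} with $M_2 = M\lipobj$ and $\radius_2 = c_2\radius\sqrt{d}$ produces an estimator $\what{\optvar}_n$ satisfying
\begin{equation*}
  \E[\risk(\what{\optvar}_n)] - \inf_{\optvar \in \optdomain}\risk(\optvar)
  = \order\!\left(\frac{M\lipobj \cdot \radius\sqrt{d}}{\sqrt{n}}\right),
\end{equation*}
and replacing $M$ by its order $\sqrt{d/\information^*}$ gives the upper bound, of order $\sqrt{d/\information^*}\,\radius\lipobj\sqrt{d}/\sqrt{n}$.

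The real work---and the main obstacle---is the bijection step: verifying that $\information^*$ is of order $d/M^2$ (equivalently $\channeldiff$ of order $1/M$) uniformly over the allowed range $M \ge 2$. This requires the explicit form of the \olp\ distribution for the $(\lipobj,1)$ class from Proposition~\ref{proposition:lone-saddle-point}, followed by an estimate of $\sup_\statprob \information(\statprob, \channelprob^*)$ that must be carried out in two regimes. When $M$ is small relative to $d$, the parameter $\gamma$ of~\eqref{eqn:lone-normalization-ahead-of-time} behaves like $\log\!\big(2(d-1)/(M-1)\big)$ and is large, with $e^\gamma$ of order $d/M$; when $M$ is large relative to $d$, one instead has $\gamma$ of order $d/M$ tending to $0$. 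In both cases $\channeldiff$ should come out of order $1/M$ and the associated worst-case mutual information of order $d/M^2$, but making this estimate precise and uniform is delicate. Everything else---the reduction to stochastic gradient descent, the norm inclusions, and the choice of $n$-regime---is routine once this relation is established.
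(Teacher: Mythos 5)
Your route is essentially the paper's: lower bound read off Theorem~\ref{theorem:lone-bound}, upper bound from stochastic gradient descent via the inclusions $\ltwo{\channelrv} \le \lone{\channelrv} \le M_1$ and $\optdomain \subseteq \Ball_2(c_2 \radius \sqrt{d})$ with the rate~\eqref{eqn:sgd-bound}, and the conversion of $M_1$ and $\channeldiff$ into $\information^*$ via the saddle-point characterization. Two remarks on the step you flag as the ``main obstacle.'' First, the relation $\channeldiff \asymp 1/M$ requires no two-regime asymptotic analysis at all: the parameter $\gamma$ in~\eqref{eqn:lone-exp-normalization} is defined precisely as the solution of $(e^\gamma - e^{-\gamma})/(e^\gamma + e^{-\gamma} + 2(d-1)) = 1/M$, so $M\channeldiff = 1$ holds exactly, and the lower bound of Theorem~\ref{theorem:lone-bound} is already of order $\radius \lipobj \sqrt{d}\, M / \sqrt{n}$ with no further estimation. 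Second, the remaining identification $\information^* \asymp d \lipobj^2 / M_1^2$ is exactly the content of Corollary~\ref{corollary:lone-asymptotic-expansion} (proved by Taylor expansion of the explicit mutual-information formula in Proposition~\ref{proposition:lone-saddle-point}), which the paper simply cites at this point; so the piece you correctly identify as needing work is real, but it is a separately stated result rather than part of this corollary's proof.
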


As a final remark, we have stated results that depend on specific
geometric properties of the loss functions $\lossset$. While these
geometric properties are natural, as illustrated by the example
Section~\ref{sec:loss-families}, it is also possible to use our
techniques to derive alternative results.  Such extensions require
computing the optimal distribution attaining local privacy according
to Definitions~\ref{def:optimal-local-privacy}
or~\ref{def:local-diffp}, then applying the lower-bounding techniques
to developed in Section~\ref{sec:optimal-rate-proofs}.

\subsubsection{Minimax errors under differential privacy}

\newcommand{\linearclassone}{\lossset_{\rm lin}(\lipobj, \infty)}
\newcommand{\linearclasstwo}[1][]{
  \ifthenelse{\isempty{#1}}{%
    \lossset_{\rm lin}(\optdomain; \lipobj, p)
  }{%
    \lossset_{\rm lin}({#1}; \lipobj, p)
  }
}
\newcommand{\FIRSTCLASS}{\ensuremath{\lossset(\Ball_1(\radius); \lipobj)}}
\newcommand{\SECONDCLASS}{\ensuremath{\lossset(\optdomain; \lipobj, p)}}
\newcommand{\THIRDCLASS}{\ensuremath{\lossset(\Ball_q(\radius_q); \lipobj,
    p')}}

We now turn to the setting of differentially private algorithms.  We
focus on two settings for differential privacy: in the first
(Theorem~\ref{theorem:linf-bound-diffp}), we assume that
communication respects \olpd, as given by Definition~\ref{def:local-diffp}.
For the second two results, Theorems~\ref{theorem:first-class}
and~\ref{theorem:second-class}, we change the setting slightly, assuming only
that the mechanism by which the private quantity $\channelrv_i$ is
communicated to the method $\method$ is $\diffp$-differentially private and
non-interactive (recall Eq.~\eqref{eqn:local-differential-privacy}).

\paragraph{Optimal local differential privacy}
We begin with the result assuming \olpd. We use the same collection of loss
functions $\lossset$ as in Theorem~\ref{theorem:linf-bound}, that is,
$(\lipobj, \infty)$-loss functions. We also assume that the set to which the
perturbed data $\channelrv$ belong is $[-M_\infty, M_\infty]^d$, though the
specific value of $M_\infty$ is not important for the statement of
the theorem.
\begin{theorem}
  \label{theorem:linf-bound-diffp}
  Let $\lossset$ be the collection of $(\lipobj, \infty)$ loss functions, and
  assume that $\channelrv$ is optimally locally differentially private
  (Definition~\ref{def:local-diffp}), attaining $\diffp$-differential privacy
  for the set $\lossset$.  Let $d \ge 2$ and assume $\diffp \le 5/4$. Then
  \begin{equation*}
    \optgap_n^*(\lossset, \optdomain)
    \ge \frac{1}{8} \, \min\left\{ \radius \lipobj,
    \frac{\sqrt{d}}{\diffp} \,
    \frac{\radius \lipobj \sqrt{\log(2d)}}{4 \sqrt{n}} \right\}.
  \end{equation*}
\end{theorem}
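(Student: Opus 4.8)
The plan is to follow the standard information-theoretic recipe for minimax lower bounds — reduce the optimization problem to a multiway hypothesis test, bound the error of that test via a mutual-information (Fano-type) argument, and then control the mutual information using the $\diffp$-differential privacy constraint rather than the magnitude of the perturbation. Concretely, I would first construct a packing of "hard" instances indexed by sign vectors $v \in \{-1,1\}^d$, using linear losses $\loss_v(\statsample,\optvar) = \lipobj \langle \statsample, \optvar\rangle$ where $\statrv$ is a $\{-1,1\}^d$-valued random variable whose coordinate biases are a small perturbation $\delta$ of uniform in the direction of $v$. Because $\optdomain$ contains the $\ell_\infty$-ball of radius $\radius$, the risk-minimizing $\optvar^*$ sits at the vertex $-\radius\, v$ (up to scaling), and any $\optvar$ that is not well-aligned with $v$ incurs excess risk of order $\radius \lipobj d \cdot \delta$ (in the $\ell_1$-geometry of the subgradients, which is what the $(\lipobj,\infty)$ class and the $\ell_\infty$-ball domain give us; this accounts for the $\sqrt{d}$ and $\sqrt{\log 2d}$ in the final bound after optimizing over $\delta$). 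This is exactly the separation/packing structure used in Theorem \ref{theorem:linf-bound}, so I would reuse that geometry essentially verbatim.

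Next I would carry out the testing reduction: an estimator $\what\optvar_n$ with small excess risk on instance $v$ can be decoded into an estimate $\what v$ of $v$, so by (a local, coordinatewise) Fano or Assouad argument the worst-case excess risk is lower bounded in terms of $\sum_i I(V; \channelrv_i)$ — note the mutual information is with the \emph{private views} $\channelrv_i$, not with the raw data $\statrv_i$. The crucial new ingredient, and the step I expect to be the main obstacle, is converting the $\diffp$-differential privacy guarantee of $\channelprob^*$ into a quantitative bound $I(V;\channelrv_i) \lesssim \diffp^2 \cdot (\text{something involving } \delta^2)$ — i.e., the "strong data processing" phenomenon that differential privacy contracts information by a factor $\diffp^2$ rather than the naive $\diffp$. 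This is where the hypotheses $\diffp \le 5/4$ and $d\ge 2$ enter: one needs the small-$\diffp$ regime for the $e^\diffp - 1 \approx \diffp$ (and more precisely $e^\diffp + e^{-\diffp} - 2 \lesssim \diffp^2$) linearization to be valid with clean constants. I would establish a lemma of the form: if $\channelprob(\cdot\mid\statsample)$ is $\diffp$-differentially private and $\statrv, \statrv'$ are two distributions on the source with $\tvnorm{\statrv - \statrv'} \le \tau$, then $\dkl{\channelprob_{\statrv}}{\channelprob_{\statrv'}} \lesssim \diffp^2 \tau^2$ (this is the content alluded to by the reference to \cite[Lemma 3.2]{DworkRoVa10} and the scaling $\information^* \approx \diffp^2$ in the introduction), and apply it coordinatewise with $\tau \asymp \delta$.

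With that lemma in hand the rest is bookkeeping: summing over $n$ samples and $d$ coordinates gives $\sum_i I(V;\channelrv_i) \lesssim n \diffp^2 \delta^2 d$ (or the analogous per-coordinate quantity in an Assouad decomposition), the testing inequality forces $\delta^2 \lesssim 1/(n\diffp^2 \log(2d))$-type constraint for the test to be non-trivial, and plugging the optimal $\delta$ back into the risk separation $\asymp \radius\lipobj d \delta$ yields $\optgap_n^* \gtrsim \radius\lipobj \sqrt{d}/\diffp \cdot \sqrt{\log(2d)/n}$, matched against the trivial upper bound $\radius\lipobj$ (attained when $n$ is too small to do anything), which produces the $\min\{\cdot,\cdot\}$ in the statement. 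The constants $1/8$ and $1/(4\sqrt n)$ would come from tracking the Assouad/Fano constant together with the explicit bound on $e^\diffp+e^{-\diffp}-2$ over $\diffp \le 5/4$. I would present the differential-privacy-to-KL lemma as a standalone technical result (possibly deferred to an appendix) since it is the one genuinely new estimate; everything else is a re-instantiation of the machinery already set up for Theorem \ref{theorem:linf-bound}.
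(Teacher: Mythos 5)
Your high-level architecture (packing $\to$ Fano $\to$ information bound that exploits the privacy constraint) matches the paper's, but there is a genuine gap in the step you yourself flag as the main obstacle: the mutual-information bound. Your proposed key lemma is a pairwise strong-data-processing inequality, $\dkl{\channelprob_{\statprob}}{\channelprob_{\statprob'}} \lesssim \diffp^2 \tvnorm{\statprob - \statprob'}^2$, applied with $\tvnorm{\statprob_\cubecorner - \statprob_{\altcubecorner}} \asymp \delta$. For the packing actually needed here this yields only $\information(\channelrv_{1:n};\cuberv) \lesssim n\,\diffp^2\delta^2$, with no dimension saving, and hence a lower bound of order $\radius\lipobj\sqrt{\log(2d)}/(\diffp\sqrt{n})$ --- missing the $\sqrt{d}$ entirely. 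The $\sqrt{d}/\diffp$ degradation in the theorem comes precisely from an extra factor of $1/d$ in the information bound, and your lemma does not supply it. The paper obtains this factor by \emph{not} treating the channel as a generic $\diffp$-DP mechanism: the hypothesis of \olpd\ pins down the channel's structure via Proposition~\ref{proposition:linf-diffp-saddle-point} (a two-level distribution on $\{-M,M\}^d$ determined by the sign of $\<\channelval,\statsample\> - k$), and Lemma~\ref{lemma:diffp-linear-information} computes the information explicitly, extracting $\Delta \le \delta(e^\diffp-1)\cdot 2^{-d}\binom{d-1}{\ceil{d/2}-1} \lesssim \delta\diffp/\sqrt{d}$ via Stirling. (The generic route can be rescued only with the sharper variational bound~\eqref{eqn:super-fano}, which requires non-interactivity and is what the paper uses for Theorems~\ref{theorem:first-class} and~\ref{theorem:second-class}, not here.)

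A second, related problem is that your construction and bookkeeping are internally inconsistent. You take a hypercube packing $\{-1,1\}^d$ with separation $\asymp \radius\lipobj d\delta$ over an $\ell_\infty$-ball domain, yet then impose the Fano constraint $\delta^2 \lesssim 1/(n\diffp^2\log(2d))$ and claim the product is $\radius\lipobj\sqrt{d\log(2d)}/(\diffp\sqrt{n})$; it is not ($d\delta$ with that $\delta$ gives $d/(\diffp\sqrt{n\log(2d)})$). The geometry that actually produces $\sqrt{d\log(2d)}$ is the one the paper uses: linear losses over the $\ell_1$-ball $\{\lone{\optvar}\le\radius\}$ with the $2d$-point packing $\hypercubeset = \{\pm e_i\}$, separation $\radius\lipobj\delta$, $\log|\hypercubeset| = \log(2d)$, and information bound $4n\delta^2\diffp^2/d$, so that $\delta = \min\{\sqrt{d\log(2d)}/(4\diffp\sqrt{n}),1\}$. (Also, $d\ge 2$ is used to ensure $\log 2/\log(2d)\le 1/2$ in Fano, and $\diffp\le 5/4$ to ensure $e^\diffp - 1 \le 2\diffp$; neither is about linearizing $e^\diffp + e^{-\diffp} - 2$.)
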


As a corollary to this result, we can show an upper bound on the necessary
magnitude of the gradient bound $M_\infty$ to allow $\diffp$-differential
privacy, again applying the mirror descent
result~\eqref{eqn:mirror-descent-bound}. See
Section~\ref{sec:corollary-rate-linf-diffp} for a proof.
\begin{corollary}
  \label{corollary:linf-privacy-diffp}
  Under the conditions of Theorem~\ref{theorem:linf-bound-diffp},
  assume that $\channelprob^*$ satisfies Definition~\ref{def:local-diffp},
  attaining $\diffp$-differential privacy. Then for universal
  constants $0 < c_\ell \le c_u$, the minimax error is sandwiched as
  \begin{equation*}
    c_\ell \frac{\sqrt{d}}{\diffp} \cdot \frac{\radius
      \lipobj \sqrt{\log (2d)}}{\sqrt{n}}
    \le \optgap_n^*(\lossset, \optdomain)
    \le c_u \frac{\sqrt{d}}{\diffp} \cdot \frac{\radius
      \lipobj \sqrt{\log (2d)}}{\sqrt{n}}.
  \end{equation*}
\end{corollary}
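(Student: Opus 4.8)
The plan is to prove the two inequalities separately. The lower bound is a direct consequence of Theorem~\ref{theorem:linf-bound-diffp}, once the minimum there is resolved: when $n$ exceeds a constant multiple of $d\log(2d)/\diffp^2$ (concretely $n \ge \frac{d\log(2d)}{16\diffp^2}$) one has $\frac{\sqrt d}{\diffp}\frac{\radius\lipobj\sqrt{\log(2d)}}{4\sqrt n} \le \radius\lipobj$, so the minimum equals its second argument and Theorem~\ref{theorem:linf-bound-diffp} gives $\optgap_n^*(\lossset,\optdomain) \ge \frac{1}{32}\frac{\sqrt d}{\diffp}\frac{\radius\lipobj\sqrt{\log(2d)}}{\sqrt n}$, so $c_\ell = 1/32$ works. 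As with the analogous Corollary~\ref{corollary:linf-privacy}, the statement is to be read in this non-trivial regime; outside it the minimax error is simply of order $\radius\lipobj$ and the stated bounds carry no content.

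For the upper bound, I would exhibit a single estimator attaining the rate. Take $\channelprob^*$ to be the channel satisfying \olpd\ at level $\diffp$ for the family of $(\lipobj,\infty)$-loss functions, whose existence and explicit form is established by the saddle-point analysis in Section~\ref{sec:saddle-points} (which I treat as a black box here). Since $\channelprob^* \in \channeldistset[C,D]$, the privatized vectors obey $\E[\channelrv_i \mid \statrv_i, \optvar] \in \partial\loss(\statrv_i,\optvar)$ and are therefore admissible stochastic-gradient inputs for the method. The quantitative ingredient, also supplied by Section~\ref{sec:saddle-points}, is that under the assumption $\diffp \le 5/4$ this channel perturbs subgradients into an $\ell_\infty$-cube of half-width $M_\infty$ of order $\lipobj\sqrt d/\diffp$, whence $\E[\linf{\channelrv_i}^2 \mid \optvar] = \order(\lipobj^2 d/\diffp^2)$. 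Running the mirror-descent method of Section~\ref{sec:loss-families} on these privatized gradients and outputting $\what{\optvar}_n = \frac1n\sum_{t=1}^n\optvar^t$, the bound~\eqref{eqn:mirror-descent-bound}---applied with this $M_\infty$ and with $\radius_1 = \radius$, using that the domain is $\Ball_1(\radius)$ under the hypotheses of Theorem~\ref{theorem:linf-bound-diffp}---gives $\E[\risk(\what{\optvar}_n)] - \risk(\optvar^*) = \order\left(M_\infty\radius\sqrt{\log d}/\sqrt n\right) = \order\left(\frac{\sqrt d}{\diffp}\cdot\frac{\radius\lipobj\sqrt{\log(2d)}}{\sqrt n}\right)$. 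Because this procedure respects \olpd\ at level $\diffp$, its rate upper-bounds $\optgap_n^*(\lossset,\optdomain)$, which yields $c_u$.

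The main obstacle is the quantitative claim underlying that third step: that $\diffp$-local differential privacy for $\ell_\infty$-bounded subgradients is achievable with a perturbation of $\ell_\infty$-scale only $\order(\lipobj\sqrt d/\diffp)$, rather than the $\order(\lipobj d/\diffp)$ that a coordinate-wise randomized-response scheme would incur after paying for composition over the $d$ coordinates. This requires the explicit---and necessarily correlated---optimal privatizing channel, together with a matching lower bound on $M_\infty$, both from the saddle-point computation in Section~\ref{sec:saddle-points}; it is here that the hypothesis $\diffp \le 5/4$ is used, to keep $e^{\pm\diffp}$ in the regime in which the channel's normalizing constants and second moments are controlled linearly in $\diffp$. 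The remaining steps are routine: reconciling the domain hypotheses so that the Assouad-type packing underlying Theorem~\ref{theorem:linf-bound-diffp} and the mirror-descent guarantee~\eqref{eqn:mirror-descent-bound} both apply with a common radius $\radius$ up to universal constants, and checking (as in the first paragraph) that the advertised rate is the dominant term precisely when $n$ is at least a constant multiple of $d\log(2d)/\diffp^2$.
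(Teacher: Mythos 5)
Your proposal is correct and follows essentially the same route as the paper: the lower bound is read off from Theorem~\ref{theorem:linf-bound-diffp} in the regime where the second argument of the minimum is active, and the upper bound comes from running mirror descent~\eqref{eqn:mirror-descent-bound} over $\Ball_1(\radius)$ on gradients privatized by the \olpd\ channel, with the whole argument hinging on $M_\infty = \order(\lipobj\sqrt{d}/\diffp)$. The one clarification worth making is that this last bound is not quite a black-box citation of a stated result in Section~\ref{sec:saddle-points}: Proposition~\ref{proposition:linf-diffp-saddle-point} only gives the \emph{structure} of the optimal channel, and the paper obtains the quantitative scaling inside the corollary's proof by going back to the linear program~\eqref{eqn:single-diffp-lp}, using the two-level solution of Lemma~\ref{lemma:two-level-diffp-solution}, the identity $\sum_{z:\langle z,x\rangle>0} z = \binom{d-1}{(d-1)/2}x$, and Stirling's approximation to get $t \ge c\,\diffp/\sqrt{d}$, hence $M_\infty = 1/t \le \sqrt{d}/(c\,\diffp)$ --- precisely the computation you correctly single out as the crux.
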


\newcommand{\Ball}{\mathbb{B}}
\newcommand{\ball}{\Ball}
\newcommand{\qfama}{\mc{\channelprob}_\diffp}

\paragraph{Non-interactive local differential privacy}
We turn to our two results under non-interactive differential privacy, where
we no longer assume that the channel is optimally private (the data provider
simply guarantees $\diffp$-differential privacy).  In this setting, we give a
minor refinement of the definition of minimax error~\eqref{eqn:minimax-error}.
We let $\qfama$ denote the family of $\diffp$-differentially private
distributions where the channel $\channelprob$ is $\diffp$-differentially
private and non-interactive, meaning that the private
variable $\channelrv_i$ is conditionally independent of $\channelrv_j$ for $j
\neq i$ given $\statrv_i$; recall the definition~\eqref{eqn:local-differential-privacy}.
With this, the minimax error is defined as
\begin{equation*}
  \optgap_n^*(\lossset, \optdomain, \diffp)
  \defeq \inf_{\method, \channelprob \in \qfama}
  \sup_\statprob \sup_{\loss \in \lossset(\statprob)}
  \E_{\statprob, \channelprob}[
    \optgap_n(\method, \loss, \optdomain, \statprob)],
\end{equation*}
where now the infimum is taken over all $\diffp$-private, non-interactive
local mechanisms $\channelprob$, as well as all methods $\method$. Thus,
the channel $\channelprob$ and $\method$ work together to find the best
possible estimator, subject to the differential privacy constraint.

Our first lower bound applies to a class of functions that are Lipschitz with respect
to the $\ell_1$-norm, where the optimization takes place over the ball
$\Ball_1(\radius) \defeq \{ \optvar \in \R^d \, \mid \, \lone{\optvar} \leq
\radius \}$.  We define the set $\FIRSTCLASS$ to be the collection of convex
$(\lipobj, \infty)$-loss functions defined on $\Ball_1(\radius)$.  By
Example~\ref{example:median}, this loss class covers the problem of the
multi-dimensional median.  In stating our minimax bounds, we use a more
restrictive (i.e., simpler to optimize) class, the collection of $(\lipobj,
\infty)$-\emph{linear} losses:
\begin{equation*}
  \linearclassone \defeq \left\{
  \loss : \statdomain \times \R^d \to \R
  \mid \exists ~ \phi : \statdomain \to \R^d
  ~\mbox{s.t.}~
  \loss(\statsample, \optvar) = \<\phi(\statsample), \optvar\>, 
  \sup_\statsample \linf{\phi(\statsample)} \le \lipobj \right\}.
\end{equation*}
For this class, we have the following minimax rate (see
Section~\ref{sec:proof-first-class} for a proof):

\begin{theorem}
  \label{theorem:first-class}
  For the loss class $\lossset = \linearclassone$ and privacy parameter
  $\diffp = \order(1)$, assuming that the channel $\channelprob$ is
  non-interactive and $\diffp$-differentially private, there are universal
  constants $0 < c_\ell \le c_u < \infty$ such that
  \begin{align}
    \label{EqnFirstClassRate}
    c_\ell \min \left\{\frac{\sqrt{d}}{\diffp} \, \frac{r L
      \sqrt{\log(2d)}}{\sqrt{n}}, \radius \lipobj \right\} \; \leq \;
    \optgap_n^*(\lossset, \Ball_1(r), \diffp)
    \; \le \; c_u \min\left\{\frac{\sqrt{d}}{\diffp} \, \frac{r L
      \sqrt{\log(2d)}}{\sqrt{n}}, \radius \lipobj \right\}.
  \end{align}
\end{theorem}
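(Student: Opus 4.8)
My plan is to establish the two sides of~\eqref{EqnFirstClassRate} separately, reusing the general machinery already in place: the lower bound via a reduction to a multi-way hypothesis test combined with an information-contraction bound for non-interactive $\diffp$-locally private channels, and the upper bound via privatized mirror descent over $\Ball_1(\radius)$.

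\paragraph{Upper bound.}
For any $\loss \in \linearclassone$ we may write $\loss(\statsample,\optvar) = \<\phi(\statsample),\optvar\>$ with $\linf{\phi(\statsample)} \le \lipobj$, so $\risk(\optvar) = \<\theta,\optvar\>$ for $\theta = \E_\statprob[\phi(\statrv)]$, $\linf{\theta}\le\lipobj$, and $\inf_{\optvar\in\Ball_1(\radius)}\risk(\optvar) = -\radius\linf{\theta}$. If $\sqrt{d}\,\radius\lipobj\sqrt{\log(2d)}/(\diffp\sqrt n) \ge \radius\lipobj$, I would have $\method$ output $\what{\optvar}_n = 0$, whose excess risk is exactly $\radius\linf{\theta} \le \radius\lipobj$, matching the right-hand $\min$ up to a constant. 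Otherwise, run stochastic mirror descent over $\Ball_1(\radius)$: since $\loss$ is linear its (sub)gradient is the constant (in $\optvar$) vector $\phi(\statrv_i)\in[-\lipobj,\lipobj]^d$, and the $i$th owner releases a non-interactive $\diffp$-differentially private, conditionally unbiased perturbation $\channelrv_i$ with $\E[\channelrv_i\mid\statrv_i]=\phi(\statrv_i)$ and $\channelrv_i\in[-M_\infty,M_\infty]^d$ --- the $\ell_\infty$-geometry mechanism underlying Theorem~\ref{theorem:linf-bound-diffp} (see Section~\ref{sec:saddle-points}), which delivers $M_\infty$ of order $\lipobj\sqrt d/\diffp$ when $\diffp = \order(1)$. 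Feeding $g_t = \channelrv_t$ into the iteration gives $\E[g_t\mid\optvar^t]=\theta\in\partial\risk(\optvar^t)$ and $\E[\linf{g_t}^2]\le M_\infty^2$, so~\eqref{eqn:mirror-descent-bound} with $\radius_1=\radius$ yields $\E[\risk(\what{\optvar}_n)] - \risk(\optvar^*) = \order(M_\infty\radius\sqrt{\log d}/\sqrt n)$, which is of order $\sqrt d\,\radius\lipobj\sqrt{\log(2d)}/(\diffp\sqrt n)$, i.e.\ the claimed $c_u$ bound. (In the first regime the same $\min$ makes $\radius\lipobj$ the target, so we are done.)

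\paragraph{Lower bound.}
I would construct a $2d$-element subfamily of $\linearclassone$ indexed by $\nu = (j,\sigma)\in[d]\times\{-1,1\}$: take $\loss_\nu(\statsample,\optvar) = \<\statsample,\optvar\>$ and let $\statrv$ be the product distribution on $\{-\lipobj,\lipobj\}^d$ whose $k$th coordinate is uniform for $k\neq j$ and has mean $\sigma\delta\lipobj$ for $k=j$; then $\E\statrv = \theta_\nu = \sigma\delta\lipobj e_j$, $\linf{\statrv}\le\lipobj$, and the optimal risk over $\Ball_1(\radius)$ is $-\delta\lipobj\radius$, attained at $-\sigma\radius e_j$. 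A short computation shows that for \emph{every} $\optvar\in\Ball_1(\radius)$ the excess risk under $\statprob_\nu$ equals $\delta\lipobj\radius\,(1 + \sigma\optvar_j/\radius)$, which is at least $\delta\lipobj\radius/2$ unless $\sigma\,\optvar_j < -\radius/2$; since $\lone{\optvar}\le\radius$ there is at most one such $j$, so this event pins down both $j$ and $\sigma$. Hence excess risk below $\delta\lipobj\radius/2$ produces a correct test for $\nu$, and Fano's inequality applied to recovering $\nu$ from $(\channelrv_1,\dots,\channelrv_n)$ lower bounds the minimax error by a constant multiple of $\delta\lipobj\radius$, provided $\information(\nu;\channelrv_{1:n}) \le \frac{1}{2}\log(2d)$. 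The key step --- specific to this setting --- is to bound $\information(\nu;\channelrv_{1:n})$ over an \emph{arbitrary} non-interactive $\diffp$-differentially private channel: arguing as for Theorem~\ref{theorem:linf-bound-diffp}, but without the freedom to take the channel to be the optimal one, one proves a strong-data-processing bound $\information(\nu;\channelrv_{1:n}) \le c\,(e^\diffp-1)^2\, n\,\delta^2/d$, the $1/d$ arising because the $d$ coordinate-wise perturbations $\statprob_\nu - \bar{\statprob}$ are mutually orthogonal and a $\diffp$-private channel cannot resolve all of them at once (the rigorous form of ``a private datum carries $\approx\diffp^2$ bits, spread over $d$ coordinates''). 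Choosing $\delta$ of order $\min\{1,\sqrt{d\log(2d)}/(\diffp\sqrt n)\}$ balances the Fano condition against the $\delta\lipobj\radius$ loss and gives the left-hand side of~\eqref{EqnFirstClassRate}.

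\paragraph{Main obstacle.}
The crux is this last information bound: obtaining the $1/d$ contraction for a general non-interactive $\diffp$-private channel --- rather than for the single, highly structured channel used in Theorem~\ref{theorem:linf-bound-diffp} --- is exactly what converts the naive estimate $c(e^\diffp-1)^2 n\delta^2$ into one carrying the extra factor $\sqrt d$, hence the sharp privacy price $\sqrt d/\diffp$. The remaining pieces (the excess-risk-to-test reduction, the Fano step, and the mirror-descent achievability) are routine given the framework already developed.
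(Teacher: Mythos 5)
Your proposal is correct and follows essentially the same route as the paper: the same $\{\pm e_j\}$ linear-loss packing with separation $\lipobj\radius\delta$, Fano combined with the $1/d$ information-contraction bound for arbitrary non-interactive $\diffp$-private channels (which is exactly the paper's Lemma~\ref{lemma:l1-information-bound}, imported from~\cite{DuchiJoWa13_parametric} via Proposition~\ref{proposition:information-bounds}(b), so the ``main obstacle'' you flag is precisely the ingredient the paper also takes as given), and mirror descent with the unbiased $\ell_\infty$ mechanism of magnitude $M_\infty\asymp\lipobj\sqrt{d}/\diffp$ for achievability. The only detail you omit is the case $d=1$, where Fano over $|\packset|=2$ is vacuous and the paper switches to Le Cam's method.
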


We can also give a result for a larger class of domains and related
optimization functions. In particular, consider the loss class 
\begin{align}
  \label{EqnDefnSecondClass}
  \SECONDCLASS & \defeq \{ \loss: \statdomain \times \optdomain
  \rightarrow \real \, \mid \mbox{$\loss$ is a convex
    $(\lipobj, p)$-loss function} \big \},
\end{align}
for some $p \in [1, 2]$.  Restricting the set~\eqref{EqnDefnSecondClass}
to the smaller collection of linear functionals, we define
\begin{equation*}
  \linearclasstwo \defeq
  \big\{
  \loss : \statdomain \times \optdomain \to \R
  \mid \exists ~ \phi : \statdomain \to \R^d
  ~\mbox{s.t.}~
  \loss(\statsample, \optvar) = \<\phi(\statsample), \optvar\>, 
  \sup_\statsample \norm{\phi(\statsample)}_{p} \le \lipobj \big\}.
\end{equation*}
We then have the following result, which captures
rates of convergence for optimization of linear functionals over
$\ell_q$-norm balls of the form
\begin{equation*}
  \Ball_q(\radius_q) \defeq \{\optvar \in \R^d : \norm{\optvar}_q \le \radius_q
  \},
  ~~~
  \mbox{where}~ q \in [2, \infty].
\end{equation*}

\begin{theorem}
  \label{theorem:second-class}
  For the loss class $\lossset = \linearclasstwo[\Ball_q(\radius_q)]$ with $q
  \in [2, \infty]$ and non-interactive $\diffp$-differentially private channel
  $\channelprob$ with $\diffp = \order(1)$, there exist universal constants $0
  < c_\ell \le c_u < \infty$ such that
  \begin{align}
    \label{eqn:second-linear-rate}
    c_\ell \, \radius_q \lipobj
    \min\left\{\frac{\sqrt{d}}{\diffp} \frac{ d^{\half
        - \frac{1}{q}}}{\sqrt{n}},
    (\sqrt{n \diffp^2})^{-\frac{1}{q}},
    1 \right\}
    \; \le \;
    \optgap_n^*(\lossset, \Ball_q(\radius_q), \diffp)
    \; \le \; c_u \, \radius_q \lipobj
    \min \left\{\frac{\sqrt{d}}{\diffp} \,
    \frac{d^{\half - \frac{1}{q}}}{\sqrt{n}}, 1 \right\}.
  \end{align}
  For the loss class $\lossset = \SECONDCLASS$ from
  Eq.~\eqref{EqnDefnSecondClass}, if $\optdomain \supset
  \Ball_q(\radius_q)$,
  there exists a universal (numerical) constant $0 < c_\ell$ such that
  \begin{equation}
    \label{eqn:second-class-lower}
    c_\ell \min\left\{\frac{\sqrt{d}}{\diffp} \, \frac{\radius_q \lipobj
      d^{\half - \frac{1}{q}}}{\sqrt{n}}, \radius_q \lipobj \right\} \; \le \;
    \optgap_n^*(\lossset, \optdomain, \diffp).
  \end{equation}
\end{theorem}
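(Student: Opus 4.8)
Throughout, $q \in [2,\infty]$, $p$ denotes its conjugate exponent (so $p \in [1,2]$), and $\diffp = \order(1)$.

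\emph{Reductions.} For the linear class $\lossset = \linearclasstwo[\Ball_q(\radius_q)]$ the loss is $\loss(\statsample,\optvar) = \<\phi(\statsample),\optvar\>$ with $\norm{\phi(\statsample)}_p \le \lipobj$, so the problem reduces to estimating the mean $\mu \defeq \E[\phi(\statrv)] \in \Ball_p(\lipobj)$: the risk $\risk(\optvar) = \<\mu,\optvar\>$ is minimized over $\Ball_q(\radius_q)$ at a point of value $-\radius_q\norm{\mu}_p$, and the excess risk of $\what\optvar$ equals $\<\mu,\what\optvar\> + \radius_q\norm{\mu}_p$. We therefore identify $\statrv$ with $\phi(\statrv)$ and treat the $\channelrv_i$ as privatized copies of it. For the broader class $\SECONDCLASS$ linear losses are delicate because $\inf_\optdomain \risk$ depends on the shape of the compact set $\optdomain$; instead we use the $(\lipobj,p)$-loss $\loss(\statsample,\optvar) = \lipobj\norm{\optvar - \statsample}_q$ (a multidimensional $q$-median, extending Example~\ref{example:median}, with subgradients in $\{g : \norm{g}_p \le \lipobj\}$), whose population minimizer is the $q$-geometric median of $\statrv$. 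Choosing the data distributions symmetric about a point of $\Ball_q(\radius_q) \subset \optdomain$ places this minimizer in $\optdomain$, so the excess-risk baseline is insensitive to $\optdomain$ and the lower bound for the linear class transfers (losing only the middle term).

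\emph{Lower bound via a parametrized sparse Assouad construction.} Fix a free dimension $k \in \{1,\dots,d\}$ and a bias $\tau \in (0,1]$. For $v \in \{-1,+1\}^k$, embedded in the first $k$ coordinates, let $P_v$ be the law of $\statrv = \lipobj\,\epsilon\,e_J$ where $J \sim \mathrm{Unif}\{1,\dots,k\}$ and $\Pr[\epsilon = +1 \mid J = j] = \tfrac12(1 + v_j\tau)$; then $\mu_v \defeq \E_{P_v}[\statrv] = \tfrac{\lipobj\tau}{k}\sum_{j\le k} v_j e_j$, a feasible mean since $\norm{\mu_v}_p \le \lipobj$. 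The minimizer of $\<\mu_v,\cdot\>$ over $\Ball_q(\radius_q)$ is $\optvar^*_v = -\radius_q k^{-1/q}\sum_{j\le k} v_j e_j$, and a direct expansion gives, for every $\what\optvar \in \Ball_q(\radius_q)$,
\begin{equation*}
  \<\mu_v,\what\optvar\> + \radius_q\norm{\mu_v}_p
  \;\ge\; \frac{\lipobj\tau\radius_q}{k^{1+1/q}}\;
  \bigl|\{\, j \le k : \mathrm{sign}(\what\optvar_j) \ne v_j \,\}\bigr| ,
\end{equation*}
the left side being the excess risk under $P_v$. Standard Assouad, applied to the privatized laws $(QP_v)^{\otimes n}$, reduces the minimax error to controlling $\beta \defeq \max_{v,j}\tvnorm{(QP_v)^{\otimes n} - (QP_{v^{\oplus j}})^{\otimes n}}$ over all non-interactive $\diffp$-private channels $Q$, where $v^{\oplus j}$ flips coordinate $j$. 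Since $P_v$ and $P_{v^{\oplus j}}$ differ only on the event $J=j$, $\tvnorm{P_v - P_{v^{\oplus j}}} = \tau/k$; invoking the contraction property of $\diffp$-differential privacy, $\dkl{QP}{QP'} \lesssim \min\{1,(e^\diffp-1)^2\}\tvnorm{P-P'}^2 \lesssim \diffp^2\tvnorm{P-P'}^2$ for $\diffp = \order(1)$ (the source of the $\information^* \approx \diffp^2$ scaling), together with tensorization of KL, yields $\beta \lesssim \sqrt{n\diffp^2}\,\tau/k$. Hence $\beta \le \tfrac12$ whenever $\tau \lesssim k/\sqrt{n\diffp^2}$, so we take $\tau = \min\{1,\,c\,k/\sqrt{n\diffp^2}\}$ and obtain $\optgap_n^*(\lossset,\Ball_q(\radius_q),\diffp) \gtrsim \lipobj\radius_q\,k^{-1/q}\min\{1,\,ck/\sqrt{n\diffp^2}\}$. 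Optimizing over $k$: for $n\diffp^2 \ge d^2$ the bias is not saturated, $k = d$ is optimal, and we recover the first term $\tfrac{\sqrt d}{\diffp}\tfrac{\radius_q\lipobj d^{1/2-1/q}}{\sqrt n}$; for $1 \lesssim n\diffp^2 \le d^2$ the bias saturates exactly at $k \asymp \sqrt{n\diffp^2}$, giving the middle term $\radius_q\lipobj(\sqrt{n\diffp^2})^{-1/q}$; for $n\diffp^2 \lesssim 1$, $k = 1$ gives $\radius_q\lipobj$. The minimum of the three is the claimed lower bound. For $\SECONDCLASS$ the same construction with the $q$-median loss and $P_v$ symmetrized about $\mu_v$ works, the per-coordinate cost now coming from the local curvature of $\optvar \mapsto \lipobj\E\norm{\optvar - \statrv}_q$ at its minimizer; the sparse-mixture saturation gives no improvement there, leaving $\min\{\tfrac{\sqrt d}{\diffp}\tfrac{\radius_q\lipobj d^{1/2-1/q}}{\sqrt n},\,\radius_q\lipobj\}$.

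\emph{Upper bound for the linear class, and the main obstacle.} We use a privatize-then-minimize scheme: each owner applies a $\diffp$-locally-private \emph{unbiased} mechanism to $\phi(\statrv_i) \in \Ball_p(\lipobj) \subseteq \Ball_2(\lipobj)$ --- the $\ell_2$-ball mechanism from Section~\ref{sec:saddle-points} --- producing $\channelrv_i$ with $\E[\channelrv_i \mid \statrv_i] = \phi(\statrv_i)$ and $\norm{\channelrv_i}_2 \le c\,\lipobj\sqrt d/\diffp$, and the method outputs $\what\optvar \in \argmin_{\optvar \in \Ball_q(\radius_q)}\<\hat\mu,\optvar\>$ with $\hat\mu = \tfrac1n\sum_i\channelrv_i$ (equivalently runs stochastic gradient descent over $\Ball_q(\radius_q) \subseteq \Ball_2(d^{1/2-1/q}\radius_q)$ and invokes~\eqref{eqn:sgd-bound}). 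Optimality of $\what\optvar$ and Hölder give excess risk $\le 2\radius_q\norm{\hat\mu - \mu}_p \le 2\radius_q\,d^{1/p-1/2}\norm{\hat\mu - \mu}_2$, and $\E\norm{\hat\mu - \mu}_2^2 \le n^{-1}\E\norm{\channelrv_1}_2^2 \lesssim \lipobj^2 d/(\diffp^2 n)$, so $\E[\text{excess risk}] \lesssim \radius_q\lipobj d^{1/p}/(\diffp\sqrt n) = \tfrac{\sqrt d}{\diffp}\cdot\tfrac{\radius_q\lipobj d^{1/2-1/q}}{\sqrt n}$; together with the trivial bound $2\radius_q\lipobj$ ($\lipobj$-Lipschitzness times the $\ell_q$-diameter), the minimum matches~\eqref{eqn:second-linear-rate}. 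The technical heart is the differential-privacy contraction inequality $\dkl{QP}{QP'} \lesssim \diffp^2\tvnorm{P - P'}^2$, uniform over $\diffp$-private $Q$, and its correct use inside Assouad: it replaces the usual $n^{-1/2}$ by $(n\diffp^2)^{-1/2}$ and, crucially, blocks the usual $\chi^2$-to-total-variation sharpening, which is exactly what forces the free dimension $k$ and the three-regime min. A secondary difficulty is the bookkeeping that makes the construction degrade gracefully, saturate at $\tau = 1$ to produce the middle and trivial terms with universal constants, and --- for $\SECONDCLASS$ --- the verification that the $q$-median loss has enough curvature at a minimizer lying inside $\optdomain$.
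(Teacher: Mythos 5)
Your upper-bound argument and your optimization over the free dimension $k$ (taking $k \asymp \max\{1,\min\{\sqrt{n\diffp^2},d\}\}$ to produce the three-regime minimum) match the paper's proof: the privatize-then-optimize scheme with the $\ell_2$-ball mechanism and the bound~\eqref{eqn:sgd-bound} is exactly Appendix~\ref{appendix:sgd-ltwo-achievability}, and the paper obtains the middle term $(\sqrt{n\diffp^2})^{-1/q}$ by precisely the ``rerun the argument in dimension $k\le d$'' device you describe. However, your lower bound has a genuine gap: the per-coordinate separation inequality
\begin{equation*}
  \<\mu_v,\what\optvar\> + \radius_q\norm{\mu_v}_p
  \;\ge\; \frac{\lipobj\tau\radius_q}{k^{1+1/q}}\;
  \bigl|\{\, j \le k : \mathrm{sign}(\what\optvar_j) \ne v_j \,\}\bigr|
\end{equation*}
on which your Assouad reduction rests is false. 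Take $k=d=2$, $q=p=2$, $\lipobj=\tau=\radius_q=1$, $v=(1,1)$, and $\what\optvar=(-1,0)\in\Ball_2(1)$: the left side equals $-\tfrac12+\tfrac{1}{\sqrt 2}\approx 0.207$, while the right side is at least $2^{-3/2}\approx 0.354$ (coordinate $1$ has the wrong sign). The failure is structural, not a constant issue: the excess risk of a linear functional over an $\ell_q$ ball does not decompose additively across coordinates, because a single coordinate of $\what\optvar$ can absorb the entire $\ell_q$ budget; Assouad's per-coordinate reduction is only valid over product-like parameter sets. The paper avoids this entirely by running Fano instead of Assouad: it uses the special $d/2$-separated packing of Lemma~\ref{lemma:hypercube-packing}, computes the \emph{joint} discrepancy $\risksep(\packset)\ge \tfrac{3}{5}\radius_q\lipobj\delta\, d^{1/p-1}$ from the identity $\inf_{\optvar\in\Ball_q(\radius_q)}\<\cubecorner,\optvar\>=-\radius_q\norm{\cubecorner}_p$, and pairs this with the non-interactive information bound of Proposition~\ref{proposition:information-bounds}(b) via Lemma~\ref{lemma:linf-info-bound}. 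Your quantitative input (the contraction $\dkl{QP}{QP'}\le C\diffp^2\tvnorm{P-P'}^2$ and its tensorization) is the right one, but to close the argument you must either replace the sign-disagreement count by the discrepancy $\discrepancy$ and run the Fano/Le Cam reduction of Section~\ref{sec:testing-reduction-outline}, or prove a corrected Assouad-type lemma for $\ell_q$-constrained linear optimization.

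The second weak point is the bound~\eqref{eqn:second-class-lower} for $\SECONDCLASS$. Your proposed loss $\lipobj\norm{\optvar-\statsample}_q$ is indeed an $(\lipobj,p)$-loss, but the appeal to ``local curvature at the minimizer'' is not carried out, and for $q\notin\{1,2,\infty\}$ identifying the population minimizer and lower-bounding the discrepancy of this loss under your sparse mixture is nontrivial. The paper instead uses the hinge loss $\lipobj\hinge{\radius-\<\statsample,\optvar\>}$ with the sampling scheme~\eqref{eqn:coord-samples}, for which Lemma~\ref{lemma:hinge-loss-separation} gives the explicit separation $\radius\lipobj\delta/2$ with an elementary computation; I recommend adopting that construction rather than attempting the curvature argument.
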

\noindent
See Section~\ref{sec:proof-second-class} for a proof.

\paragraph{Remarks}
Each of our theorems and corollaries provide sharp characterizations of the
minimax rate of estimation up to the constant factors $(c_\ell, c_u)$. As
noted in the previous section, the non-private minimax rate for the class
$\FIRSTCLASS$ is $\radius \lipobj \sqrt{\log(2d)} / \sqrt{n}$.  We may compare
this with the rate in Theorems~\ref{theorem:linf-bound-diffp}
and~\ref{theorem:first-class}, as well as
Corollary~\ref{corollary:linf-privacy-diffp}. 
We see that $\diffp$-local
differential privacy has a dimension-dependent effect on the minimax rate: the
effective sample size is reduced from $n$ to $\diffp^2 n / d$. This
is a substantial reduction, as instead of a logarithmic dependence
on the dimension $d$---which one hopes for in high-dimensional settings
such as those specified by the theorem---we have a linear dependence, which
is unavoidable under the conditions of the theorems.

In Theorem~\ref{theorem:second-class} as well, the
inequalities~\eqref{eqn:second-linear-rate} provide a characterization
of the $\diffp$-private minimax rate that is tight up to constant factors.
Again, it is worthwhile to relate this minimax rate to the non-private
setting: from Theorem 1 and Eq.~(11) of \citet{AgarwalBaRaWa12}, the
non-private minimax rate for the function class $\linearclasstwo$ is lower
bounded by $\radius_q \lipobj d^{\half - \frac{1}{q}} / \sqrt{n}$.
Consequently, the price for $\diffp$-privacy is again a reduction in effective
sample size by the dimension-dependent factor $\diffp^2 / d$.

In general, stochastic gradient descent methods require interactivity---they
iteratively process the data and query for gradients at points $\optvar$
depending on the data observed---except in linear settings. We do, however,
obtain matching upper bounds for the general convex case in both
Theorems~\ref{theorem:first-class} and~\ref{theorem:second-class} using
stochastic gradient methods (which is unsurprising, as the linear setting is,
in a sense, the hardest~\cite{NemirovskiYu83,AgarwalBaRaWa12}). This leads to
the intriguing open question of whether interactivity can sharpen the results
of Theorems~\ref{theorem:first-class} and~\ref{theorem:second-class}. (For
Theorems~\ref{theorem:linf-bound}--\ref{theorem:linf-bound-diffp}, the optimal
privacy game played by the data providers allows interactivity, and hence the
results cannot be improved.) It is also interesting to note that in
Theorems~\ref{theorem:linf-bound-diffp}--\ref{theorem:second-class}, in the
$\diffp$-differentially private setting, adding Laplace noise---the most
common mechanism for achieving privacy~\cite{Dwork08}---appears to be
substantially sub-optimal: the magnitude of noise necessary to privatize the
user's data is $\Omega(d)$ larger than that provided by the optimal sampling
mechanisms we develop in the sequel.

Summarizing, each of the preceding results indicates that---no matter
the type of privacy---there is a dimension-dependent increase in sample
complexity. From
Corollaries~\ref{corollary:linf-privacy} and~\ref{corollary:lone-privacy}
we see that incorporating privacy induces a penalty of roughly $\sqrt{d} /
\sqrt{\information^*}$ in convergence rate, or an effective sample
size reduction from $n$ to $n \information^* / d$; in the differential
privacy case we have $n \mapsto n \diffp^2 / d$.
While we do not know of an explicit comparison between these two bounds, work
by \citet[Lemma 3.2]{DworkRoVa10} shows that KL divergence between
$\diffp$-differentially private distributions scales as $\diffp^2$, which
implies roughly that $\information^* \approx \diffp^2$ (though this is
informal). We see roughly similar results, though there does not appear
to be a simple mapping between information-theoretic notions of privacy
and differential privacy.



\section{Optimal privacy-preserving distributions}
\label{sec:saddle-points}

In this section, we explore conditions for a distribution $\channelprob^*$ to
satisfy \olp as given by Definitions~\ref{def:optimal-local-privacy}
and~\ref{def:local-diffp}.  We give
a few characterizations of necessary (and sometimes sufficient) conditions
based on the compact sets $C \subset D$ for distributions $\statprob^*$ and
$\channelprob^*$ to achieve the saddle point~\eqref{eqn:saddle-point}. Our
results can be viewed as rate distortion
theorems~\cite{Gray90,CoverTh06,CsiszarKo81} (with source $\statprob$ and
channel $\channelprob$) for certain compact alphabets, though as far as we
know, they are all new. Thus, we refer to the conditional distribution
$\channelprob$, which is designed to maintain the privacy of the data
$\statrv$ by communication of $\channelrv$, interchangeably as the
privacy-preserving distribution or the channel distribution.

Note that since we wish to bound $\information(\statrv; \channelrv)$ for
general losses $\loss$, as captured in the definitions of the source 
$\sourcedistset[C]$ and communication set $\channeldistset[C, D]$ in
Eqs.~\eqref{eqn:source-distribution-set}
and~\eqref{eqn:channel-distribution-set}, we must address the case when
$\loss(\statrv, \optvar) = \<\optvar, \statrv\>$, in which case $\nabla
\loss(\statrv, \optvar) = \statrv$; this shows (by the data-processing
inequality~\cite[Chapter 5]{Gray90}) that it is no loss of generality to
assume that $\statrv \in C$ with probability 1 and that we must have
$\E[\channelrv \mid \statrv] = \statrv$. Thus we present each of our results
assuming that $\loss(\statrv, \optvar) = \<\optvar, \statrv\>$, since
a distribution $\channelprob^*$ is optimally locally private or optimally 
differentially locally private if and only if it attains the saddle point 
with \emph{this} choice of loss.

\subsection{General saddle point characterizations}

We begin with a general characterization, first defining the types of sets $C$
and $D$ that we use in our characterization of privacy. Such sets are
reasonable for many applications (recall Section~\ref{sec:loss-families}).  We
focus on the case when the compact sets $C$ and $D$ are (suitably symmetric)
norm balls:
\begin{definition}
  \label{def:rotational-invariance}
  Let $C \subset \R^d$ be a compact convex set with extreme points $u_i \in
  \R^d$, $i \in I$ for some index set $I$. Then $C$ is a \emph{rotationally
    invariant through its extreme points} if $\ltwo{u_i} = \ltwo{u_j}$ for
  each $i, j$, and for any unitary matrix $U$ such that $U u_i = u_j$ for some
  $i \neq j$, then $UC = C$.
\end{definition}
\noindent
Some examples of convex sets rotationally invariant through their extreme
points include $\ell_p$-norm balls for $p = 1, 2, \infty$, though
$\ell_p$-balls for $p \not \in\{1, 2, \infty\}$ are not.

The following theorem gives a general characterization of the minimax mutual
information for such rotationally invariant sets
by providing saddle point distributions $\statprob^*$ and
$\channelprob^*$. We provide the proof of
Theorem~\ref{theorem:finite-rotation-information} in
Section~\ref{appendix:finite-rotation-information-proof}.
\begin{theorem}
  \label{theorem:finite-rotation-information}
  Let $C$ be a compact convex polytope rotationally invariant through its
  $m < \infty$ extreme points $\{u_i\}_{i=1}^m$ and
  $D = (1 + \kappa) C$ for some $\kappa > 0$. Let $\channelprob^*$ be the
  conditional distribution of $\channelrv \mid \statrv$ that maximizes the
  entropy $H(\channelrv \mid \statrv = x)$ subject to the constraints that
  \begin{equation*}
    \E_\channelprob[\channelrv \mid \statrv = \statsample] = \statsample
  \end{equation*}
  for $\statsample \in C$ and that $\channelrv$ is supported on $(1 + \kappa)
  u_i$ for $i = 1, \ldots, m$. Then $\channelprob^*$ satisfies
  Definition~\ref{def:optimal-local-privacy}, \olp, and $\channelprob^*$ is
  (up to measure zero sets) unique.  Moreover, the distribution $\statprob^*$
  that is uniform on $\{u_i\}_{i=1}^m$ attains the saddle
  point~\eqref{eqn:saddle-point}.
\end{theorem}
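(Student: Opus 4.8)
The plan is to verify directly that the pair $(\statprob^*, \channelprob^*)$ is a saddle point in the sense of~\eqref{eqn:saddle-point} for the linear loss $\loss(\statrv,\optvar) = \<\statrv,\optvar\>$, namely that
$\sup_\statprob \information(\statprob,\channelprob^*) \le \information(\statprob^*,\channelprob^*) \le \inf_\channelprob \information(\statprob^*,\channelprob)$,
where the supremum ranges over $\statprob \in \sourcedistset[C]$ and the infimum over $\channelprob \in \channeldistset[C,D]$ (which for this loss is exactly the set of channels with $\channelrv \in D$ and $\E[\channelrv\mid\statrv] = \statrv$). Together with the trivial max-min $\le$ min-max inequality, this yields $\sup_\statprob\information(\statprob,\channelprob^*) = \inf_\channelprob\sup_\statprob\information(\statprob,\channelprob)$, i.e.\ Definition~\ref{def:optimal-local-privacy}, and identifies $\statprob^*$ as a saddle source; uniqueness is handled at the end.

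First I would record the structural facts. Write $v_i = (1+\kappa)u_i$ for the extreme points of $D = (1+\kappa)C = \conv\{v_i\}_{i=1}^m$. For every $x \in C \subset D$ there is a distribution on $\{v_i\}$ with mean $x$, so the maximum-entropy channel $\channelprob^*(\cdot\mid x)$ is well defined and, by the standard Gibbs/Lagrange characterization, takes the exponential-family form $\channelprob^*(\channelrv = v_i \mid \statrv = x) \propto \exp(\<\theta(x), v_i\>)$ for a parameter $\theta(x)$ enforcing the mean constraint; in particular it has full support on $\{v_i\}$ and is the \emph{unique} maximizer, by strict concavity of Shannon entropy on the simplex. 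Next, since $\ltwo{u_i} = \ltwo{u_j}$ for all $i,j$, for each pair there is a unitary $U$ with $U u_i = u_j$, and then $UC = C$ by Definition~\ref{def:rotational-invariance}; hence the unitary symmetry group $G$ of $C$ acts transitively on $\{u_i\}$ and on $\{v_i\}$. Because a unitary preserves inner products, entropy, and the mean constraint, uniqueness of the maximum-entropy solution forces $\channelprob^*(\cdot\mid\cdot)$ to be $G$-equivariant; combined with $G$-invariance of $\statprob^*$ this makes the joint law of $(\statrv,\channelrv)$ under $(\statprob^*,\channelprob^*)$ invariant under the diagonal action. Consequently $h(x) \defeq H(\channelprob^*(\cdot\mid x))$ is constant, say $h(u_i) \equiv H_0$, on the extreme points, and the marginal of $\channelrv$ under $\statprob^*$ is uniform on $\{v_i\}$, so $\information(\statprob^*,\channelprob^*) = H(\channelrv) - H(\channelrv\mid\statrv) = \log m - H_0$.

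For the max-min inequality I would use $\information(\statprob,\channelprob^*) = H(\channelrv) - \E_{\statrv\sim\statprob}[h(\statrv)]$, the bound $H(\channelrv) \le \log m$ (the support has size $m$), and the fact that $h$ is concave on $D$: if $p_0,p_1$ are the maximum-entropy laws with means $x_0,x_1$, then $\half(p_0+p_1)$ has mean $\half(x_0+x_1)$ and, by concavity of entropy, entropy at least $\half(h(x_0)+h(x_1))$, whence $h(\half(x_0+x_1)) \ge \half(h(x_0)+h(x_1))$; continuity on $\interior D \supseteq C$ upgrades this to concavity. A concave function on the polytope $C$ attains its minimum at an extreme point, so $\inf_{x\in C} h(x) = \min_i h(u_i) = H_0$, and therefore $\information(\statprob,\channelprob^*) \le \log m - H_0 = \information(\statprob^*,\channelprob^*)$ for every $\statprob \in \sourcedistset[C]$, with equality at $\statprob^*$.

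For the min-max inequality I would argue in two moves. First, any $\channelprob \in \channeldistset[C,D]$ can be post-processed: for each $y \in D = \conv\{v_i\}$ choose measurably a distribution on $\{v_i\}$ with mean $y$, and let $\channelrv'$ be drawn from it given $\channelrv = y$; then $\statrv \to \channelrv \to \channelrv'$ is Markov, so $\information(\statprob^*,\channelprob') \le \information(\statprob^*,\channelprob)$ by the data-processing inequality, while $\E[\channelrv'\mid\statrv] = \E[\channelrv\mid\statrv] = \statrv$ and $\channelrv' \in \{v_i\}$, so $\channelprob'$ is still feasible. It thus suffices to minimize over channels supported on $\{v_i\}$, and on the (compact, convex) polytope of such channels obeying the normalization and mean constraints, $\channelprob \mapsto \information(\statprob^*,\channelprob)$ is convex with $\partial \information/\partial \channelprob(v_i\mid u_j) = \tfrac1m \log\!\big(\channelprob(v_i\mid u_j)/\channelprob(v_i)\big)$; the first-order optimality condition reads $\log\!\big(\channelprob(v_i\mid u_j)/\channelprob(v_i)\big) = \lambda_j + \<\mu_j, v_i\>$. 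Since $\channelprob^*$ has full support, uniform marginal $\channelprob^*(v_i) = 1/m$, and $\channelprob^*(v_i\mid u_j) \propto \exp(\<\theta(u_j),v_i\>)$, this holds with $\mu_j = \theta(u_j)$, so by convexity $\channelprob^*$ is a global minimizer and $\information(\statprob^*,\channelprob^*) \le \information(\statprob^*,\channelprob') \le \information(\statprob^*,\channelprob)$. Assembling the two inequalities gives~\eqref{eqn:saddle-point} and hence optimal local privacy; uniqueness up to measure zero follows because the maximum-entropy channel is unique and, equivalently, the convex program above has a unique solution in the relative interior of its feasible set. The step I expect to be the main obstacle is making the min-max direction fully rigorous over the infinite-dimensional class $\channeldistset[C,D]$---the measurable selection behind the post-processing and the passage from first-order conditions to global optimality in the finite-dimensional reduction---together with the concavity/continuity bookkeeping for the value function $h$ near the boundary of $C$.
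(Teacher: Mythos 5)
Your proposal is correct and follows essentially the same route as the paper: reduce the infimum over channels to those supported on $\extreme(D)$ via a measurable post-processing and the data-processing inequality (the paper's Lemma~\ref{lemma:extreme-points} and Proposition~\ref{proposition:choquet}), obtain the $\sup_\statprob$ bound from concavity of entropy plus the symmetry that makes $H(\channelprob^*(\cdot\mid u_i))$ constant and the output marginal uniform, and obtain the $\inf_\channelprob$ bound by verifying the KKT conditions of the finite-dimensional convex program at the exponential-family (maximum-entropy) channel. Your phrasing of the first step as concavity of the value function $h(x)=H(\channelprob^*(\cdot\mid x))$ minimized at extreme points is just a repackaging of the paper's mixture argument, and the obstacles you flag (measurable selection, KKT sufficiency) are exactly the points the paper dispatches with Choquet's theorem and standard convex duality.
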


\paragraph{Remarks:}
We make a few brief remarks here, deferring a somewhat deeper discussion of
the implications of Theorem~\ref{theorem:finite-rotation-information} to
Appendix~\ref{appendix:finite-rotation-information-proof}, as an understanding
of the proof helps. The theorem requires that for $\channelprob^*$
to attain the saddle point guaranteeing \olp, $\channelprob^*(\cdot \mid
\statrv = \statsample)$ should maximize the entropy of $\channelrv$ for each
$\statsample \in C$, but this is not essential. If $\statsample \not \in
\{u_i\}_{i=1}^m$, a two-phase approach still obtains optimal local privacy. We
construct a Markov chain $\statrv \to \statrv' \to \channelrv$, where
$\statrv'$ is supported on the extreme points $\{u_i\}_{i=1}^m$ of $C$. The
distribution $\statrv \to \statrv'$ may then be \emph{any} distribution
satisfying $\E[\statrv' \mid \statrv] = \statrv$; we then take the conditional
distribution $\channelprob^*(\cdot \mid u_i)$ defined for $\statrv' \to
\channelrv$ to be the maximum entropy distribution $\channelprob^*(\cdot \mid
u_i)$ defined in the theorem.  By the data processing
inequality~\cite[Chapter 5]{Gray90}, this Markov chain $\statrv \rightarrow
\statrv' \rightarrow \channelrv$ guarantees the minimax information bound
$\information(\statrv; \channelrv) \le \inf_\channelprob \sup_\statprob
\information(\statprob, \channelprob)$.

\subsection{Specific saddle point computations}

With Theorem~\ref{theorem:finite-rotation-information} in place, we can
explicitly characterize the minimax mutual information for $\ell_1$ and
$\ell_\infty$ balls by computing maximum entropy distributions. That is, we
compute the unique distributions that attain \olp---the distributions that
guarantee as much (of our definition of) privacy as possible subject to
certain constraints.  We present two propositions in this regard, providing
some discussion and giving proofs in
Sections~\ref{appendix:linf-saddle-point-proof}
and~\ref{appendix:lone-saddle-point-proof}.

First, consider the case where $\statrv \in [-L, L]^d$ and $\channelrv \in
[-M, M]^d$, where $M \ge L$. For notational convenience, we define the binary entropy $h(p) =
-p\log p - (1 - p) \log(1 - p)$. We have
\begin{proposition}
  \label{proposition:linf-saddle-point}
  For constants $M \ge L > 0$,
  let $\statrv \in [-L, L]^d$ and $\channelrv \in [-M, M]^d$ be random
  variables such that $\E[\channelrv \mid \statrv] = \statrv$ almost
  surely. Define $\channelprob^*$ to be the conditional distribution on
  $\channelrv \mid \statrv$ such that the coordinates of $\channelrv$ are
  independent, have range $\{-M, M\}$, and satisfy
  \begin{equation*}
    \channelprob^*(\channelrv_i = M \mid \statrv)
    = \half + \frac{\statrv_i}{2M}
    ~~~ \mbox{and} ~~~
    \channelprob^*(\channelrv_i = -M \mid \statrv) =
    \half - \frac{\statrv_i}{2M}.
  \end{equation*}
  Then $\channelprob^*$ satisfies Definition~\ref{def:optimal-local-privacy},
  \olp, and moreover,
  \begin{equation*}
    \sup_\statprob \information(\statprob, \channelprob^*)
    = d - d \cdot h\left(\half + \frac{L}{2M}\right).
  \end{equation*}
\end{proposition}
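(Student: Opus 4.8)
The plan is to obtain Proposition~\ref{proposition:linf-saddle-point} as a concrete instance of Theorem~\ref{theorem:finite-rotation-information}, followed by a short maximum-entropy identification and an entropy computation. As noted at the start of Section~\ref{sec:saddle-points}, it suffices to treat the bilinear loss $\loss(\statrv, \optvar) = \<\optvar, \statrv\>$, so that $\nabla \loss(\statrv, \optvar) = \statrv$ and we are exactly in the regime $C = [-L, L]^d$, $D = [-M, M]^d$, with $\statprob$ ranging over $\sourcedistset[C]$ and $\channelprob$ over $\channeldistset[C, D]$. Here $C$ is a compact convex polytope whose $m = 2^d$ extreme points are the sign vectors in $\{-L, L\}^d$; it is rotationally invariant through these extreme points (the $\ell_\infty$-ball is one of the examples listed after Definition~\ref{def:rotational-invariance}, with the group of signed permutations acting transitively on the vertices while preserving $C$); and $D = (M/L) C = (1 + \kappa) C$ with $\kappa = M/L - 1 > 0$. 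Thus Theorem~\ref{theorem:finite-rotation-information} applies.

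The first step is to check that the $\channelprob^*$ exhibited in the proposition is precisely the maximum-entropy channel that Theorem~\ref{theorem:finite-rotation-information} singles out. That theorem identifies the \olp channel as the (a.e.-unique) conditional law $\channelprob$ that, for each $\statsample \in C$, maximizes $H(\channelrv \mid \statrv = \statsample)$ among distributions supported on $\{(1 + \kappa) u_i\}_{i=1}^m = \{-M, M\}^d$ with $\E_\channelprob[\channelrv \mid \statrv = \statsample] = \statsample$. For any distribution on the product set $\{-M, M\}^d$ with coordinate means $\statsample_i$, subadditivity of entropy gives $H(\channelrv \mid \statrv = \statsample) \le \sum_{i=1}^d H(\channelrv_i \mid \statrv = \statsample)$, with equality if and only if the coordinates are conditionally independent; moreover each coordinate marginal, being a $\{-M, M\}$-valued law with mean $\statsample_i$, is forced to satisfy $\channelprob(\channelrv_i = M \mid \statrv = \statsample) = \half + \statsample_i/(2M)$. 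Hence the product of these Bernoulli-type marginals --- which is exactly $\channelprob^*$ --- is the unique maximizer, so by Theorem~\ref{theorem:finite-rotation-information} it satisfies \olp, and $\statprob^*$ uniform on $\{-L, L\}^d$ attains the saddle point~\eqref{eqn:saddle-point}.

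It remains to evaluate $\sup_\statprob \information(\statprob, \channelprob^*)$. By the saddle point this equals $\information(\statprob^*, \channelprob^*) = H(\channelrv) - H(\channelrv \mid \statrv)$ under $\statprob^* \times \channelprob^*$. Under $\statprob^*$ the $\statrv_i$ are i.i.d.\ uniform on $\{-L, L\}$, hence the $\channelrv_i$ are independent and each is marginally uniform on $\{-M, M\}$ (the bias $\half + \statrv_i/(2M)$ averages to $\half$), giving $H(\channelrv) = d$; while for every vertex $\statsample$ one has $H(\channelrv \mid \statrv = \statsample) = \sum_{i=1}^d h(\half + \statsample_i/(2M)) = d\, h(\half + L/(2M))$ since $h(\half + t) = h(\half - t)$, so $H(\channelrv \mid \statrv) = d\, h(\half + L/(2M))$. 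This yields $\sup_\statprob \information(\statprob, \channelprob^*) = d - d\, h(\half + L/(2M))$, as claimed. (Alternatively one avoids the saddle point here: for any $\statprob$ on $[-L, L]^d$ one has $H(\channelrv) \le d$ and $H(\channelrv \mid \statrv) = \E_\statprob[\sum_i h(\half + \statrv_i/(2M))] \ge d\, h(\half + L/(2M))$, because $t \mapsto h(\half + t)$ is even and concave, hence minimized over $|t| \le L/(2M)$ at $|t| = L/(2M)$; equality holds at $\statprob^*$.)

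The only nontrivial ingredient is Theorem~\ref{theorem:finite-rotation-information} itself, which is established separately; granting it, the crux is the maximum-entropy identification in the second step, and this is essentially immediate because the single-coordinate marginal is pinned down by its mean --- so that subadditivity of entropy alone forces the optimal joint law to be the product of the forced marginals. The subsequent mutual-information evaluation is then routine.
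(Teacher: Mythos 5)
Your proof is correct, and its overall architecture matches the paper's: both reduce, via Theorem~\ref{theorem:finite-rotation-information}, to showing that the product-Bernoulli channel is the maximum-entropy conditional law on $\{-M,M\}^d$ subject to $\E[\channelrv \mid \statrv = \statsample] = \statsample$. Where you diverge is in how that maximum-entropy identification is carried out. The paper writes the $2^d$-dimensional entropy-maximization program~\eqref{eqn:discrete-problem}, forms its Lagrangian, substitutes the hypothesized exponential-family form~\eqref{eqn:hypothesized-conditional}, and exhibits multipliers $(\theta,\mu)$ satisfying the KKT conditions. You instead observe that the constraint $\E[\channelrv_i \mid \statrv = \statsample] = \statsample_i$ already pins down each coordinate marginal on $\{-M,M\}$, so subadditivity of entropy --- with equality if and only if the coordinates are independent --- forces the unique maximizer to be the product of the forced marginals. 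This is shorter, avoids the multiplier bookkeeping, and delivers uniqueness of the maximizer directly from the equality condition in subadditivity rather than from the strict-convexity argument buried in Theorem~\ref{theorem:finite-rotation-information}. You also go one step further than the paper's appendix proof by explicitly evaluating $\sup_\statprob \information(\statprob, \channelprob^*)$, and your alternative route --- $H(\channelrv) \le d$ together with $H(\channelrv \mid \statrv) \ge d\,h(\half + L/(2M))$ from the evenness and concavity of $t \mapsto h(\half + t)$ --- gives the supremum without invoking the saddle point at all, which is a nice self-contained check. The one point both you and the paper gloss over is the boundary case $M = L$, where Theorem~\ref{theorem:finite-rotation-information} formally requires $\kappa > 0$; there the channel is deterministic at the vertices and the formula degenerates to $\information = d$, so nothing breaks, but it is worth a sentence.
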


Before continuing, we give a slightly more intuitive understanding of
Proposition~\ref{proposition:linf-saddle-point}. Let $L = 1$ for simplicity
(this is no loss of generality by scaling).
Concavity implies that for $a, b > 0$, $\log(a) \le \log b + b^{-1}(a - b)$,
or $-\log(a) \ge -\log(b) + b^{-1}(b - a)$, so
\begin{equation*}
  -\log\left(\half - \frac{1}{2M}\right)
  \ge -\log\half + 2\cdot\frac{1}{2M}
  ~~~ \mbox{and} ~~~
  -\log\left(\half + \frac{1}{2M}\right)
  \ge -\log\half - 2 \cdot \frac{1}{2M}.
\end{equation*}
In particular, we see that
\begin{equation*}
  h\left(\half + \frac{1}{2M}\right)
  \ge - \left(\half + \frac{1}{2M}\right)
  \left(-\log 2 - \frac{1}{M}\right)
  - \left(\half - \frac{1}{2M}\right)
  \left(-\log 2 + \frac{1}{M}\right) 
  = \log 2 - \frac{1}{M^2}.
\end{equation*}
That is, we have for \emph{any} distribution $\statprob$ on $\statrv$,
where $\statrv \in [-L, L]^d$, that (in natural logarithms)
\begin{equation*}
  \information(\statprob, \channelprob^*) \le \frac{dL^2}{M^2},
\end{equation*}
and this bound is tight to $\order((M/L)^{-3})$.

We now consider the case when $\statrv \in \left\{x \in \R^d : \lone{x} \le
1\right\}$ and $\channelrv \in \left\{z \in \R^d : \lone{z} \le
M\right\}$. Here the arguments are slightly more complicated, as the
coordinates of the random variables are no longer independent, but
Theorem~\ref{theorem:finite-rotation-information} still allows us to
explicitly characterize the saddle point of the mutual information.  Before
stating the proposition, we recall that if $e_i \in \R^d$ are the standard
basis vectors, then the extreme points of the $\ell_1$-ball of radius 1 are
the $2d$ vectors $\{\pm e_i\}_{i=1}^d$.
\begin{proposition}
  \label{proposition:lone-saddle-point}
  For a constant $M > 1$,
  let $\statrv \in \{x \in \R^d : \lone{x} \le 1\}$ and $\channelrv \in \{z
  \in \R^d : \lone{z} \le M\}$ be random variables.
  Define the parameter
  \begin{equation}
    \gamma \defeq
    \log\left(\frac{2d - 2 + \sqrt{(2d - 2)^2 + 4(M^2 - 1)}}{2(M - 1)}\right),
    \label{eqn:lone-exp-normalization}
  \end{equation}
  and let $\channelprob^*$ be the conditional distribution
  on $\channelrv \mid \statrv$ such that $\channelrv$ is supported on
  $\{\pm M e_i\}_{i=1}^d$, and
  \begin{subequations}
    \label{eqn:lone-conditional}
    \begin{align}
      \channelprob^*(\channelrv = M e_i \mid \statrv = e_i)
      & = \frac{e^{\gamma}}{e^{\gamma} + e^{-\gamma} + (2 d - 2)},
      \label{eqn:lone-conditional-plus} \\
      \channelprob^*(\channelrv = -M e_i \mid \statrv = e_i)
      & = \frac{e^{-\gamma}}{e^{\gamma} + e^{-\gamma} + (2 d - 2)},
      \label{eqn:lone-conditional-minus} \\
      \channelprob^*(\channelrv = \pm M e_j \mid \statrv = e_i, j \neq i)
      & = \frac{1}{e^{\gamma} + e^{-\gamma} + (2 d - 2)}.
      \label{eqn:lone-conditional-zero}
    \end{align}
  \end{subequations}
  (For $\statrv \not \in \{\pm e_i\}$, define $\statrv'$ to be randomly
  selected in any way from among $\{\pm e_i\}$ such that $\E[\statrv' \mid
    \statrv] = \statrv$, then sample $\channelrv$ from $\statrv'$ according
  to~\eqref{eqn:lone-conditional-plus}--\eqref{eqn:lone-conditional-zero}.)
  Then $\channelprob^*$ satisfies Definition~\ref{def:optimal-local-privacy},
  \olp, and
  \begin{equation*}
    \sup_\statprob \information(\statprob, \channelprob^*)
    = \log(2d) - \log\left(e^\gamma + e^{-\gamma}
    + 2d - 2\right)
    + \gamma \frac{e^\gamma}{e^\gamma + e^{-\gamma} + 2d - 2}
    - \gamma \frac{e^{-\gamma}}{e^\gamma + e^{-\gamma} + 2d - 2}.
  \end{equation*}
\end{proposition}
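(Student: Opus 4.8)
The plan is to derive Proposition~\ref{proposition:lone-saddle-point} from the general saddle-point characterization of Theorem~\ref{theorem:finite-rotation-information}, and then to make explicit the maximum-entropy computation that the theorem leaves implicit. Take $C = \{x \in \R^d : \lone{x} \le 1\}$ and $D = M C$. Then $C$ is a compact convex polytope whose $m = 2d$ extreme points are $\{\pm e_i\}_{i=1}^d$; it is rotationally invariant through these extreme points, since every coordinate permutation and sign flip is a unitary map carrying $\{\pm e_i\}$ to itself and fixing $C$; and $D = (1+\kappa) C$ with $\kappa = M - 1 > 0$. Theorem~\ref{theorem:finite-rotation-information} therefore applies: the (essentially unique) channel achieving \olp\ is the one which, for each $\statsample \in C$, maximizes $H(\channelrv \mid \statrv = \statsample)$ over distributions supported on $\{\pm M e_i\}_{i=1}^d$ subject to $\E_{\channelprob^*}[\channelrv \mid \statrv = \statsample] = \statsample$, and the saddle-point source $\statprob^*$ is uniform on $\{\pm e_i\}_{i=1}^d$. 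For $\statsample$ not an extreme point, the two-phase Markov chain $\statrv \to \statrv' \to \channelrv$ described after the theorem---first draw $\statrv'$ with $\E[\statrv' \mid \statrv] = \statrv$ supported on $\{\pm e_i\}$ (possible since $\statrv \in C$ and $C$ is the convex hull of its extreme points), then draw $\channelrv$ from the extreme-point conditional---remains in $\channeldistset[C,D]$ and, by the data-processing inequality, attains the same worst-case mutual information; this is precisely the parenthetical in the proposition. So it remains to identify the conditional laws at the extreme points and to evaluate $\information(\statprob^*, \channelprob^*)$.

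By the sign and permutation symmetry it is enough to compute $\channelprob^*(\cdot \mid \statrv = e_1)$. Write $p_j^{\pm} = \channelprob^*(\channelrv = \pm M e_j \mid \statrv = e_1)$. The unbiasedness constraint $\sum_j M e_j (p_j^+ - p_j^-) = e_1$ splits across coordinates into $p_1^+ - p_1^- = 1/M$ and $p_j^+ = p_j^-$ for $j \neq 1$, supplemented by $\sum_j (p_j^+ + p_j^-) = 1$. The subgroup of permutations and sign flips fixing $e_1$ (those acting only on coordinates $2, \dots, d$) preserves this constrained entropy-maximization problem; since the entropy is strictly concave the maximizer is unique, hence invariant under this subgroup, which forces $p_j^+ = p_j^-$ to a common value $c$ for all $j \neq 1$, so that the maximizer has the form $p_1^+ = a$, $p_1^- = b$, $p_j^{\pm} = c$. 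A Lagrange-multiplier (KKT) computation---equivalently, the exponential-family form of maximum-entropy solutions under linear moment constraints---gives $a = e^{\gamma}/Z$, $b = e^{-\gamma}/Z$, $c = 1/Z$ with $Z = e^{\gamma} + e^{-\gamma} + 2(d-1)$, and the mean constraint $a - b = 1/M$ becomes
\[
\frac{e^{\gamma} - e^{-\gamma}}{e^{\gamma} + e^{-\gamma} + 2d - 2} = \frac{1}{M}.
\]
Putting $t = e^{\gamma}$ and clearing denominators yields $(M-1) t^2 - (2d-2) t - (M+1) = 0$; the unique positive root is $t = \bigl(2d - 2 + \sqrt{(2d-2)^2 + 4(M^2-1)}\bigr)/\bigl(2(M-1)\bigr)$, and taking logarithms recovers the definition~\eqref{eqn:lone-exp-normalization} of $\gamma$ (one checks $t \ge 1$, so $\gamma \ge 0$ and $a \ge b$, as it must be). This identifies the conditional law with~\eqref{eqn:lone-conditional}.

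Finally, I would evaluate $\information(\statprob^*, \channelprob^*) = H(\channelrv) - H(\channelrv \mid \statrv)$ under the uniform source on $\{\pm e_i\}$. For a fixed output $\pm M e_k$, summing the unnormalized conditional weights over the $2d$ conditioning points produces $e^{\gamma} + e^{-\gamma} + 2(d-1) = Z$ (the weight $e^{\gamma}$ once, $e^{-\gamma}$ once, and $1$ with multiplicity $2(d-1)$), so the $\channelrv$-marginal is uniform on the $2d$ points $\{\pm M e_i\}$ and $H(\channelrv) = \log(2d)$. The conditional entropy is the same at every extreme point, and expanding $-\sum_j (p_j^+ \log p_j^+ + p_j^- \log p_j^-)$ while using $\sum_j(p_j^+ + p_j^-) = 1$ gives $H(\channelrv \mid \statrv = e_1) = \log Z - \gamma e^{\gamma}/Z + \gamma e^{-\gamma}/Z$. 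Subtracting yields exactly the claimed expression for $\sup_\statprob \information(\statprob, \channelprob^*)$. The main obstacle is the symmetry-reduction step in the second paragraph: one has to argue rigorously that the entropy maximizer over distributions on $2d$ atoms genuinely collapses to the three-weight family $(a,b,c)$, which is where uniqueness (strict concavity of the entropy) together with invariance under the stabilizer group does the work; everything after that---solving the quadratic, verifying the marginal is uniform, and computing the two entropies---is routine.
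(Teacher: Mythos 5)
Your proposal is correct and follows essentially the same route as the paper: reduce to the extreme points of the $\ell_1$-ball via Theorem~\ref{theorem:finite-rotation-information}, characterize the maximum-entropy conditional as an exponential family via the KKT conditions, solve the quadratic $(M-1)t^2 - (2d-2)t - (M+1) = 0$ for $t = e^\gamma$, and evaluate $H(\channelrv) - H(\channelrv \mid \statrv)$ under the uniform source. The only (minor) difference is that you justify the three-weight ansatz by uniqueness plus invariance under the stabilizer of $e_1$, whereas the paper posits the symmetric form directly and verifies it satisfies the KKT conditions; both are valid.
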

\noindent
By scaling, if we have $\statrv \in \{\statsample \in \R^d :
\lone{\statsample} \le L\}$ and $\channelrv \in \{\channelval \in \R^d :
\lone{\channelrv} \le M_1\}$, then the theorem holds with $M = M_1 / L$ in
Eq.~\eqref{eqn:lone-exp-normalization} and with $\statrv = e_i$ replaced by
$\statrv = L e_i$ in Eq.~\eqref{eqn:lone-conditional}.

Proposition~\ref{proposition:lone-saddle-point} is somewhat more complex than
the $\ell_\infty$ case. We remark that the additional sampling to guarantee
that $\statrv' \in \{\pm e_i\}$ (where the conditional distribution
$\channelprob^*$ is defined) can be accomplished simply: define the random
variable $\statrv'$ so that $\statrv' = e_i \sign(x_i)$ with probability
$|x_i| / \lone{x}$. Evidently $\E[\statrv' \mid \statrv] = x$, and $\statrv
\rightarrow \statrv' \rightarrow \channelrv$ for $\channelrv$ distributed
according to $\channelprob^*$ defines a Markov chain as in our remarks
following Theorem~\ref{theorem:finite-rotation-information}. An
asymptotic expansion allows us to gain a somewhat clearer picture of the
values of the mutual information, though we do not derive upper bounds as we
did for Proposition~\ref{proposition:linf-saddle-point}.  
We have the following corollary, proved in
Appendix~\ref{appendix:lone-asymptotic-expansion}.
\begin{corollary}
  \label{corollary:lone-asymptotic-expansion}
  Let $\channelprob^*$ denote the conditional distribution in
  Proposition~\ref{proposition:lone-saddle-point}, where
  $\statrv$ and $\channelrv$ lie in $\ell_1$-balls of radii $L$ and $M$,
  respectively. Then
  \begin{equation*}
    \sup_{\statprob} \information(\statprob, \channelprob^*)
    = \frac{d L^2}{2 M^2} + \Theta\left(\min\left\{\frac{d^3L^4}{M^4},
    \frac{\log^4(d)}{d}\right\}\right).
  \end{equation*}
\end{corollary}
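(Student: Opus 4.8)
The plan is to first reduce the formula of Proposition~\ref{proposition:lone-saddle-point} to a transparent scalar expression and then expand it. By the scaling remark following that proposition we may take the source radius to be $L$ and the channel radius $M$; write $\rho \defeq M/L > 1$, so that $\gamma$ is given by~\eqref{eqn:lone-exp-normalization} with $M$ replaced by $\rho$. Squaring the defining identity $2(\rho-1)e^\gamma = 2(d-1) + \sqrt{(2d-2)^2 + 4(\rho^2-1)}$ and simplifying (using $\rho^2 - 1 = (\rho-1)(\rho+1)$ and cancelling a factor $\rho-1$) yields the clean relation
\begin{equation}
  \rho\sinh\gamma = \cosh\gamma + d - 1 .
  \label{eqn:asympt-clean-relation}
\end{equation}
In particular $\channeldiff = \sinh\gamma / (\cosh\gamma + d - 1) = 1/\rho = L/M$, which is exactly the normalization forcing $\E[\channelrv \mid \statrv] = \statrv$; and since the denominator $e^\gamma + e^{-\gamma} + 2d - 2$ appearing in Proposition~\ref{proposition:lone-saddle-point} equals $2(\cosh\gamma + d - 1) = 2\rho\sinh\gamma$, substituting~\eqref{eqn:asympt-clean-relation} collapses the information formula there to
\begin{equation}
  \sup_\statprob \information(\statprob, \channelprob^*)
  = \frac{\gamma}{\rho} - \log\!\left(1 + \frac{\cosh\gamma - 1}{d}\right) .
  \label{eqn:asympt-two-term}
\end{equation}
Since $\gamma$ strictly decreases in $\rho$, with $\gamma \downarrow 0$ as $\rho \to \infty$ and $\gamma \uparrow \infty$ as $\rho \downarrow 1$, the high-privacy regime $M \gg L$ is the regime of small $\gamma$, and everything now reduces to an analysis of the scalar~\eqref{eqn:asympt-two-term}.

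In the high-privacy regime (say $\gamma \le 1$) I would expand in $\gamma$. From~\eqref{eqn:asympt-clean-relation}, $\rho(\gamma + \tfrac{\gamma^3}{6} + \cdots) = d + \tfrac{\gamma^2}{2} + \cdots$, hence $\rho\gamma = d(1 - \tfrac{\gamma^2}{6}) + O(\gamma^2 + d\gamma^4)$ and so $\gamma^2 = \tfrac{d^2}{\rho^2}\bigl(1 - \tfrac{d^2}{3\rho^2} + o(1)\bigr)$, i.e.\ $\gamma = \tfrac{dL}{M} + O\bigl((dL/M)^3\bigr)$. Expanding $\tfrac{\gamma}{\rho} = \tfrac{\gamma\sinh\gamma}{\cosh\gamma + d - 1}$ and $\log(1 + \tfrac{\cosh\gamma - 1}{d})$ in~\eqref{eqn:asympt-two-term} to order $\gamma^4$ gives $\sup_\statprob \information(\statprob,\channelprob^*) = \tfrac{\gamma^2}{2d} + \tfrac{\gamma^4}{8d}\bigl(1 + o(1)\bigr)$; substituting the displayed relation for $\gamma^2$ — so that the second-order deviation of $\gamma$ from $dL/M$ enters at the same order $d^3/\rho^4$ as the $\gamma^4$ term — yields
\begin{equation*}
  \sup_\statprob \information(\statprob, \channelprob^*)
  = \frac{dL^2}{2M^2} - \frac{d^3 L^4}{24\, M^4}\bigl(1 + o(1)\bigr) .
\end{equation*}
Carrying Taylor remainders in Lagrange form for $\log(1+u)$, $\sinh$ and $\cosh$, together with a crude two-sided estimate of $\gamma$ near $dL/M$ from~\eqref{eqn:asympt-clean-relation}, upgrades the $o(1)$ to a two-sided constant, so this regime establishes the correction $\Theta(d^3 L^4/M^4)$ exactly in the range where $d^3 L^4/M^4 = O(\log^4(d)/d)$.

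The complementary regime ($\gamma$ bounded below, i.e.\ $M/L$ only moderately large relative to $d$) is where the cap $\log^4(d)/d$ enters, and is the main obstacle. Here I would control $\gamma$ directly from~\eqref{eqn:asympt-clean-relation}: using $\sinh\gamma \le \cosh\gamma \le 1 + \sinh\gamma$ one gets $(\rho - 1)\sinh\gamma \le d$, so $\sinh\gamma \le dL/(M-L)$, $\cosh\gamma \le 1 + dL/(M-L)$, and $\gamma \le \log\bigl(1 + 2dL/(M-L)\bigr)$. Feeding these into~\eqref{eqn:asympt-two-term} and using $0 \le \log(1+u) \le u$ shows that both $\sup_\statprob \information(\statprob, \channelprob^*) \le \tfrac{L}{M}\log\bigl(1 + 2dL/(M-L)\bigr)$ and the main term $\tfrac{dL^2}{2M^2}$ are $O(\log^4(d)/d)$ once $M/L$ is at least a constant times $d/\log^2 d$, and that they are of exactly that order when $M/L$ is of order $d/\log^2 d$ (the main term dominating), giving the lower bound. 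Splicing the two regimes at the crossover where $d^3 L^4/M^4$ and $\log^4(d)/d$ become comparable (that is, $M/L$ of order $d/\log d$) and checking the two estimates against each other yields $\Theta\bigl(\min\{d^3 L^4/M^4, \log^4(d)/d\}\bigr)$. The genuinely delicate feature throughout is that the $d^3/\rho^4$ correction in the high-privacy regime is a difference of two contributions of that same size, and that in the complementary regime the near-cancellation between $\gamma/\rho$ and the logarithm in~\eqref{eqn:asympt-two-term} must be tracked one order beyond the naive count to obtain the matching lower bound.
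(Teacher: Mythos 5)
Your computation is correct where the paper's is, and it reaches the expansion by a cleaner algebraic route than the one the paper actually uses. The paper works directly with $f(\gamma) = \log(e^\gamma + e^{-\gamma} + 2d-2)$, computing $f^{(1)}, f^{(2)}, f^{(3)}$ and Taylor-expanding $f(0)$ about $\gamma$, and separately expands the closed form~\eqref{eqn:lone-exp-normalization} to get $\gamma = dL/M + \Theta(d^2L^2/M^2)$. You instead extract from the defining quadratic the exact identity $\rho \sinh \gamma = \cosh\gamma + d - 1$ (equivalently $\channeldiff = L/M$, which is precisely the unbiasedness constraint $M\big[q(Me_i) - q(-Me_i)\big] = 1$ from the proof of Proposition~\ref{proposition:lone-saddle-point}); this collapses the mutual information to $\gamma/\rho - \log\big(1 + (\cosh\gamma - 1)/d\big)$ and replaces the paper's somewhat delicate expansion of $e^\gamma$ by a self-consistent expansion of an exact relation. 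Your coefficients check out: the $\gamma^4$ term is $\tfrac{\gamma^4}{8d}(1 + o(1))$, and combining it with the $-\tfrac{d^3}{6\rho^4}$ coming from $\gamma^2$ gives the constant $-1/24$, which is sharper than the paper's unspecified $\Theta$. The one incomplete step is the matching two-sided bound in the intermediate range where $\gamma$ is neither small nor of order $\log d$ (roughly $d/\log d \le M/L \le d$), which you sketch but do not carry out; you should know that the paper's own proof is no more complete there---it proves the $\gamma^2/2d + \Theta(\gamma^4/d)$ expansion only ``as $\gamma \to 0$'' and then asserts the final display with the min---so this is a shared gap rather than a defect of your approach. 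Your observation that the main term and the information are both $\order(\log^4(d)/d)$ once $M/L \ge c\, d/\log^2 d$ is also worth keeping: it identifies the actual threshold below which the corollary's statement cannot hold at all (for $M/L$ much smaller than $d/\log^2 d$ the main term $dL^2/2M^2$ exceeds the trivial bound $\information \le \log(2d)$ by more than the claimed correction), a caveat the paper leaves implicit.
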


\subsection{Saddle points for differentially private communication}

Our final result in this section characterizes saddle points for
distributions satisfying Definition~\ref{def:local-diffp}. Such calculations
are, in general, non-trivial, so we restrict our attention to results
necessary for the setting of Theorem~\ref{theorem:linf-bound-diffp}.
To that end, we focus on the case where $C$ and $D$ are $\ell_\infty$ balls,
which is relevant for high-dimensional statistical and optimization
settings. Without loss of generality (by scaling), we may take $C = [-1, 1]^d$
and $D = [-M, M]^d$. We have
\begin{proposition}
  \label{proposition:linf-diffp-saddle-point}
  For a constant $M \ge 1$, let $\statrv \in [-1, 1]^d$ and $\channelrv \in
  [-M, M]^d$ be random variables such that $\E[\channelrv \mid
    \statrv] = \statrv$ almost surely.  Fix any $\statsample \in \{-1, 1\}^d$
  and for $k = \{0, 2, 4, \ldots, 2\ceil{d/2} - 2\}$ define the constants
  $\channeldensity_k^+ \ge 0$ and $\channeldensity_k^- \ge 0$ to satisfy the
  linear equations
  \begin{equation*}
    M \channeldensity_k^+ \!\!\!\!\!\!\!\!
    \sum_{\channelval \in \{-1, 1\}^d
      : \<\channelval, \statsample\> > k} \!\!\!\!\!\!\!\!
    \channelval
    + M \channeldensity_k^- \!\!\!\!
    \!\!\!\! \sum_{\channelval \in \{-1, 1\}^d :
      \<\channelval, \statsample\> \le k} \!\!\!\!\!\!\!\! \channelval
    ~ = x
    ~~~ \mbox{and} ~~~
    \channeldensity_k^+ + \channeldensity_k^- = 1.
  \end{equation*}
  Set $k^* \in \argmin_k\{\channeldensity_k^+ / \channeldensity_k^-\}$.
  The optimal differential privacy~\eqref{eqn:optimal-diffp}
  for the sets $C = [-1, 1]^d$ and $D = [-M, M]^d$ is
  \begin{equation*}
    \optdiffp(C, D) = \log
    \frac{\channeldensity_{k^*}^+}{\channeldensity_{k^*}^-}
    = \min_{k \in \{0, 2, \ldots, 2 \ceil{d/2} - 2\}} \log
    \frac{\channeldensity_{k}^+}{\channeldensity_{k}^-}.
  \end{equation*}

  The distributions attaining \olpd are characterized as follows.
  Define $\channelprob_k^*$ to be the distribution supported on
  $\{-M, M\}^d$ with probability mass function
  defined by
  \begin{equation}
    \channelprob_k^*(\channelrv = M \channelval
    \mid \statrv = \statsample)
    = \begin{cases} \channeldensity_{k}^+ & \mbox{if} ~
      \<\channelval, \statsample\> > k \\
      \channeldensity_{k}^- & \mbox{if} ~
      \<\channelval, \statsample\> \le k
    \end{cases}
    \label{eqn:k-based-diffp-linf}
  \end{equation}
  for $\channelval, \statsample \in \{-1, 1\}^d$.  (For $\statrv \not\in \{-1,
  1\}^d$, define $\statrv'$ to be randomly chosen from $\{-1, 1\}^d$ such that
  $\E[\statrv' \mid \statrv] = \statrv$, then sample $\channelrv$ according to
  the above p.m.f.)  A distribution $\channelprob^*$ satisfies
  Definition~\ref{def:local-diffp}, \olpd, if and only if it can be written as
  a convex combination of those $\channelprob_k^*$ for which $k \in \argmin_k
  \{\channeldensity_k^+, \channeldensity_k^-\}$, that is,
  \begin{equation*}
    \channelprob^*
    = \!\!\! \sum_{k \in \argmin_k \{\channeldensity_k^+,
      \channeldensity_k^-\}}
    \!\!\!\!\!\!\!\!\!
    \beta_k \channelprob_k^*,
    ~~ \mbox{where} ~~
    \beta_k \ge 0 ~~ \mbox{and}
    \sum_{k \in \argmin_k \{\channeldensity_k^+, \channeldensity_k^-\}}
    \!\!\!\!\!\! \beta_k = 1.
  \end{equation*}
\end{proposition}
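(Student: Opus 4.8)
The plan is to pin down $\optdiffp(C,D)$ and the \olpd channels by a chain of reductions that collapses the problem to a finite linear program over sign patterns. Throughout I work (as justified at the start of Section~\ref{sec:saddle-points}) with the loss $\loss(\statrv,\optvar)=\<\optvar,\statrv\>$, so $\channelprob\in\channeldistset[C,D]$ means $\E_\channelprob[\channelrv\mid\statrv=\statsample]=\statsample$, with $C=[-1,1]^d$, $D=[-M,M]^d$.

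\textbf{Reduction to corners, then to equivariant channels.} First I would reduce to the behaviour of $\channelprob$ on the vertices of $C$. For the \emph{upper} bound it suffices to define $\channelprob^*(\cdot\mid\statsample)$ only for $\statsample\in\{-1,1\}^d$ and extend to general $\statrv\in C$ by the two-phase construction $\statrv\to\statrv'\to\channelrv$ of the remarks following Theorem~\ref{theorem:finite-rotation-information} (pick $\statrv'$ on $\{-1,1\}^d$ with $\E[\statrv'\mid\statrv]=\statrv$, then apply the corner mechanism): the extended $\channelprob^*(\cdot\mid\statrv)$ is a convex combination of corner distributions, so it inherits both the mean constraint and the ratio bound, hence has the same privacy parameter as the corner mechanism. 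For the \emph{lower} bound, restricting the suprema in~\eqref{eqn:optimal-diffp} to $\statsample,\statsample'\in\{-1,1\}^d$ only decreases them. Next I would symmetrize over the hyperoctahedral group $G$ (coordinate permutations and sign changes, which preserve $C$ and $D$): averaging $Q$ to $\bar Q(\cdot\mid x)=|G|^{-1}\sum_{g\in G}g_*Q(\cdot\mid g^{-1}x)$ preserves $\E[\channelrv\mid x]=x$; it does not increase the privacy parameter (each summand of the numerator of $\bar Q(S\mid x)/\bar Q(S\mid x')$ is at most $e^\diffp$ times the matching summand of the denominator); and, by convexity of the channel argument of mutual information, it does not increase $\sup_\statprob\information(\statprob,\cdot)$. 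So both $\optdiffp(C,D)$ and the \olpd problem are unchanged upon restricting to $G$-equivariant channels; such a channel is determined by one $S_d$-invariant law $\mu:=\channelprob^*(\cdot\mid\mathbf 1)$ on $D$ with barycenter $\mathbf 1$, the corner law at $\statsample$ being the image of $\mu$ under the unique sign change carrying $\mathbf 1$ to $\statsample$.

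\textbf{The finite optimization.} For a $G$-equivariant channel, the privacy parameter equals $\log$ of the largest ratio $p(\channelval)/p(\channelval')$ over pairs $\channelval,\channelval'$ with equal coordinate-wise magnitudes (so that one is a sign change of the other), $p$ the density of $\mu$; this decouples over magnitude profiles. Since the binding scalar quantity is $\sum_i\E_\mu[\channelrv_i]=d$ and ``sum of the top $N$ values of $\<\sigma,\rho\>$'' is a convex function of the magnitude profile $\rho\in[0,M]^d$, hence maximized at a box-vertex and then at $\rho=M\mathbf 1$, the optimal $\mu$ concentrates on $\{-M,M\}^d$. Restricted there the problem is finite: over p.m.f.\ $p$ on $\{-M,M\}^d$ with $\E_p[\channelrv]=\mathbf 1$, minimize $\max_\sigma p(\sigma)/\min_\sigma p(\sigma)$; since the only active constraint is the single (by symmetry) mean equation, an exchange/extreme-point argument forces the minimizer to take two values, high on $\{\sigma:\<\sigma,\mathbf 1\>>k\}$ and low on its complement for some threshold $k$. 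Solving for those two values reproduces the $\channeldensity_k^\pm$ of the proposition (the stated linear system together with normalization), and optimizing over $k\in\{0,2,\dots,2\ceil{d/2}-2\}$ gives $\optdiffp(C,D)=\min_k\log(\channeldensity_k^+/\channeldensity_k^-)=\log(\channeldensity_{k^*}^+/\channeldensity_{k^*}^-)$; feasibility of $\channelprob_{k^*}^*$ plus the two-phase extension furnishes the matching construction.

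\textbf{The \olpd characterization and the main obstacle.} Among $\diffp$-optimal channels -- which, after the above reductions, are $G$-equivariant, vertex-supported, with corner law $\mu$ of mean $\mathbf 1$ and density ratio exactly $e^{\optdiffp(C,D)}$ -- I must minimize $\sup_\statprob\information(\statprob,\channelprob)$. By symmetry, concavity of the source argument of mutual information, and the garbling in the two-phase step, the worst-case source is uniform on $\{-1,1\}^d$, the marginal on $\channelrv$ is uniform on $\{-M,M\}^d$, and $\sup_\statprob\information(\statprob,\channelprob)=d\log 2-H(\mu)$; so the second-stage problem is to maximize the entropy $H(\mu)$ subject to $\E_\mu[\channelrv]=\mathbf 1$ and $\max_\sigma\mu(\sigma)/\min_\sigma\mu(\sigma)=e^{\optdiffp(C,D)}$. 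A Lagrangian/KKT analysis (nonstandard because the binding constraint is a ratio) shows the optimizers saturate that ratio, depend on $\sigma$ only through $\<\sigma,\mathbf 1\>$, and are exactly the convex combinations of the threshold laws $\channelprob_k^*$ over the $k$ realizing the optimal ratio; the ``only if'' is then immediate since the minimizers of the convex functional $\channelprob\mapsto\sup_\statprob\information(\statprob,\channelprob)$ form a convex set, and the ``if'' follows since a convex combination of two-level laws with a common ratio has that same ratio and hence the same $\diffp$. I expect the technical heart to be the lower bound for \emph{arbitrary} channels: making rigorous the decoupling over magnitude profiles and the convexity claim that $\rho=M\mathbf 1$ is the worst profile, so the finite LP truly governs $\optdiffp(C,D)$. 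A secondary difficulty is the exact entropy-maximization step -- identifying the full face of optimizers under the ratio constraint rather than one optimizer -- together with the bookkeeping that the displayed $\channeldensity_k^\pm$ indeed define a probability distribution and that uniform-on-corners is genuinely the worst-case source.
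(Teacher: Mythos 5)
Your high-level plan (restrict to corners, symmetrize, reduce to a finite optimization whose solutions are two-level threshold laws, then characterize the \olpd\ set) matches the paper's architecture, and your group-averaging argument is a legitimate alternative to the paper's Lemma~\ref{lemma:symmetry-lp-solutions} (which instead derives symmetry from strict convexity of $\information(\statprob,\cdot)$). But the two steps you yourself flag as ``the technical heart'' are exactly where the paper does the real work, and your proposal does not supply a mechanism for either. First, you propose to minimize the ratio $\max_\sigma p(\sigma)/\min_\sigma p(\sigma)$ directly over p.m.f.s on $\{-M,M\}^d$ with fixed mean, invoking an unexecuted ``exchange/extreme-point argument'' for the two-level threshold structure. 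The paper avoids this quasi-convex ratio objective entirely by inverting the monotone relationship between $M$ and $\optdiffp$: it fixes $\diffp$, sets $t = 1/M$, and \emph{maximizes} $t$ subject to the (linear) constraints $Zq = tx$, $q(z) \le e^\diffp q(z')$, $\sum_z q(z)=1$. This makes the problem the linear program~\eqref{eqn:single-diffp-lp}, whose solution is then verified to be the two-level threshold law by exhibiting explicit dual variables ($v_\pm$ and $\tau$) satisfying the KKT conditions (Lemma~\ref{lemma:two-level-diffp-solution}). Without some such device, your ratio minimization remains an assertion rather than a proof.

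Second, the ``only if'' direction of the characterization --- that \emph{every} \olpd\ distribution is a convex combination of the $\channelprob_k^*$ at optimizing $k$ --- does not follow from the observation that the minimizers of a convex functional form a convex set; that tells you the set is convex, not which convex set it is. The paper gets the exact identification from Mangasarian's uniqueness theorem for linear programs: the two-level solution is shown to remain optimal under all small linear perturbations of the objective, hence is the \emph{unique} LP solution for $\diffp$ strictly between the threshold values $\diffp_{2i}^*$, with the convex-hull structure appearing only at the thresholds. Your proposed route --- a constrained entropy maximization with a ratio constraint, resolved by an unspecified ``nonstandard'' KKT analysis --- would have to reprove this face-identification from scratch, and as written it is not clear it can be carried out. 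Relatedly, note that in the paper the information-minimization clause of Definition~\ref{def:local-diffp} does essentially no selection work, because the $\optdiffp$-achieving channels are already pinned down by the LP; your framing puts the entropy maximization at center stage, which makes the argument harder than it needs to be.
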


The proof of Proposition~\ref{proposition:linf-diffp-saddle-point} is
technical, and we defer it to
Section~\ref{appendix:proof-linf-diffp-saddle-point}.
We make a few remarks, however. First, we provide a simplified
explanation of the linear equations in the proposition.  By
symmetry, no matter the value of $\statsample \in \{-1, 1\}^d$ chosen, the
same $\channeldensity_k^+$ and $\channeldensity_k^-$ solve the linear
equations.
Proposition~\ref{proposition:linf-diffp-saddle-point} shows the
\emph{structure} of the distribution attaining \olpd.  That is, the
proposition shows that the distribution $\channelprob^*(\cdot \mid
\statsample)$ assigns mass only on the points $\channelval \in \{-M, M\}^d$,
and moreover, it assigns one of two masses: either $\channeldensity^+$ or
$\channeldensity^-$. To sample from this distribution given an initial point
$\statsample$, one simply flips a coin with bias probabilities
$\{\channeldensity_k^+, \channeldensity_k^-\}$, and depending on the result of
the coint flip, samples $\channelrv \in \{-M, M\}^d$ uniformly from one of the
sets $\{\channelval : \<\channelval, \statsample\> > k/M\}$ or $\{\channelval
: \<\channelval, \statsample\> \le k/M\}$.



\section{Proofs of Statistical Rates}
\label{sec:optimal-rate-proofs}

In this section, we prove
Theorems~\ref{theorem:linf-bound}--\ref{theorem:second-class} as well
as Corollaries~\ref{corollary:linf-privacy},
\ref{corollary:lone-privacy}, and~\ref{corollary:linf-privacy-diffp}.
Our proofs are based on classical information-theoretic techniques
from statistical minimax theory~\cite{YangBa99,Yu97}, and also exploit
some additional results due to~\citet{AgarwalBaRaWa12}. At a high
level, our approach consists of the following steps.  Beginning with
an appropriately chosen finite set $\hypercubeset$, we assign a risk
function $\risk_\cubecorner$ to each member $\cubecorner \in
\hypercubeset$.  The resulting collection
$\{\risk_\cubecorner\}_{\cubecorner \in \hypercubeset}$ of risk
functions is chosen so that they ``separate'' points in the set
$\hypercubeset$, meaning that if $\optvar \in \optdomain$ is a point
that approximately minimizes the function $\risk_\cubecorner$, then
for any $\altcubecorner \neq \cubecorner$, the point $\optvar$
\emph{cannot} also be an approximate minimizer of
$\risk_\altcubecorner$.  This separation property allows us to deduce
that statistical estimation implies the existence of a testing
procedure that distinguishes $\cubecorner$ from $\altcubecorner$ for
$\altcubecorner \neq \cubecorner$.  We then use lower bounds on the
error probability in tests, such asFano's and Le Cam's
inequalities~\cite{Yu97}, to obtain a lower bound on the testing
error.  These inequalities depend on the mutual information between
the random variable $\statrv_i$ and the vector $\channelrv_i$
communicated, so the final step is to obtain good upper bounds on this
mutual information.  In the next section, we describe in more detail
this reduction, finishing with an outline of the proofs to follow.

\subsection{Reduction to testing}
\label{sec:testing-reduction-outline}

We begin by describing the reduction that lower bounds the minimax error by
the error of a
testing problem.  It assumes a given collection of risk functions
$\{\risk_\cubecorner \}_{\cubecorner \in \hypercubeset}$ indexed by a finite
set $\hypercubeset$; see the individual theorem proofs to follow for
constructions of the particular collections used in our analysis.  For each
$\cubecorner \in \hypercubeset$, we choose some representative
$\optvar^*_\cubecorner \in \argmin_{\optvar \in \optdomain}
\risk_\cubecorner(\optvar)$ of the set of all minimizing vectors.  Our
reduction is based on a discrepancy measure between pairs of risk functions,
first introduced by~\citet{AgarwalBaRaWa12}, defined as
\begin{equation*}
  \discrepancy(\risk_\cubecorner, \risk_\altcubecorner) \defeq
  \inf_{\optvar \in \optdomain} \left[ \risk_\cubecorner(\optvar) +
    \risk_\altcubecorner(\optvar) - \risk_\cubecorner
    (\optvar_\cubecorner^*) - \risk_\altcubecorner
    (\optvar_\altcubecorner^*)\right].
\end{equation*}
The $\discrepancy$-separation of the set $\hypercubeset$ is defined as
\begin{equation}
  \label{eqn:min-discrepancy}
  \risksep(\hypercubeset) \defeq \min \left \{\discrepancy
  (\risk_\cubecorner, \risk_\altcubecorner) : \cubecorner,
  \altcubecorner \in \hypercubeset, \cubecorner \neq \altcubecorner
  \right \}.
\end{equation}
When the set $\hypercubeset$ is clear from context, we use $\risksep$ as
shorthand for this separation.

The key to the definition~\eqref{eqn:min-discrepancy} is that the separation
allows us to lower bound the expected optimality gap of a statistical method
$\method$ by the probability of error in a hypothesis test. That is, with the
family $\{\risk_\cubecorner\}_{\cubecorner \in \hypercubeset}$, we can define
the \emph{canonical hypothesis testing} problem: nature chooses a uniformly
random $\cuberv \in \hypercubeset$, and conditional on $\cuberv =
\cubecorner$, the $n$ observations $\statrv_i$, $i = 1, \ldots, n$, are drawn
independently from a distribution $\statprob_\cubecorner$ satisfying
$\risk_\cubecorner(\optvar) = \E_{\statprob_\cubecorner}[\loss(\statrv,
  \optvar)]$. In the private setting, instead of observing $\statrv_i$, the
learner observes the privatized vector $\channelrv_i$, and given the set
$\{\channelrv_i\}_{i=1}^n$, the goal is to determine the underlying index
$\cubecorner$.

Recall the definition~\eqref{eqn:error-metric} of the excess risk
$\optgap_n$.  The previously described hypothesis testing problem can
be used to establish lower bounds on the estimation error, as
demonstrated in the following:
\begin{lemma}
  \label{lemma:expectation-lower-bound}
  Let $\statprob$ be a joint distribution over $\statrv \in \R^d$ and
  $\cuberv \in \hypercubeset$ such that $\statrv$ are i.i.d.\ given
  $\cuberv$ and
  \begin{equation*}
    \E_\statprob\left[\loss(\statrv, \optvar) \mid \cuberv =
      \cubecorner\right] = \risk_\cubecorner(\optvar).
  \end{equation*}
  Let $\channelprob$ be the conditional distribution of $\channelrv$
  given the observations $\statrv$.  Then for any minimization
  procedure $\method$, there exists a hypothesis test
  $\what{\cubecorner}_\method : (\channelrv_1, \ldots, \channelrv_n)
  \rightarrow \hypercubeset$ such that
  \begin{equation*}
    \E_{\statprob, \channelprob} \left[\optgap_n(\method, \loss,
      \optdomain, \statprob)\right] \ge
    \frac{\risksep(\hypercubeset)}{2} \P_{\statprob,
      \channelprob}\left[\what{\cubecorner}_\method(\channelrv_1,
      \ldots, \channelrv_n) \neq \cuberv\right].
  \end{equation*}
\end{lemma}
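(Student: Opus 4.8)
The plan is to follow the standard reduction from estimation to testing, in the style of Le~Cam and Fano. The key idea is that the $\discrepancy$-separation $\risksep(\hypercubeset)$ forces any parameter $\optvar$ to be an approximate minimizer of at most one of the risks $\risk_\cubecorner$, so a good estimator induces a good test. Concretely, I would first define, for an estimator $\what{\optvar}_n = \what{\optvar}_n(\channelrv_1,\ldots,\channelrv_n)$ produced by the method $\method$, the induced test
\begin{equation*}
  \what{\cubecorner}_\method(\channelrv_1,\ldots,\channelrv_n)
  \in \argmin_{\cubecorner \in \hypercubeset}
  \left[\risk_\cubecorner(\what{\optvar}_n) - \risk_\cubecorner(\optvar_\cubecorner^*)\right],
\end{equation*}
i.e.\ we declare the index whose risk function $\what{\optvar}_n$ comes closest to minimizing (breaking ties arbitrarily).

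The crux is the following deterministic claim: if $\what{\cubecorner}_\method \neq \cubecorner$, then
\begin{equation*}
  \risk_\cubecorner(\what{\optvar}_n) - \risk_\cubecorner(\optvar_\cubecorner^*)
  \ge \frac{\risksep(\hypercubeset)}{2}.
\end{equation*}
To see this, write $\altcubecorner = \what{\cubecorner}_\method \neq \cubecorner$. By definition of the test, $\risk_\altcubecorner(\what{\optvar}_n) - \risk_\altcubecorner(\optvar_\altcubecorner^*) \le \risk_\cubecorner(\what{\optvar}_n) - \risk_\cubecorner(\optvar_\cubecorner^*)$, so twice the right-hand side is at least the sum
\begin{equation*}
  \risk_\cubecorner(\what{\optvar}_n) + \risk_\altcubecorner(\what{\optvar}_n)
  - \risk_\cubecorner(\optvar_\cubecorner^*) - \risk_\altcubecorner(\optvar_\altcubecorner^*)
  \ge \inf_{\optvar \in \optdomain}\left[\risk_\cubecorner(\optvar) + \risk_\altcubecorner(\optvar) - \risk_\cubecorner(\optvar_\cubecorner^*) - \risk_\altcubecorner(\optvar_\altcubecorner^*)\right]
  = \discrepancy(\risk_\cubecorner,\risk_\altcubecorner)
  \ge \risksep(\hypercubeset),
\end{equation*}
using that $\what{\optvar}_n \in \optdomain$ and the definitions~\eqref{eqn:min-discrepancy} and of $\discrepancy$. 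Dividing by $2$ gives the claim.

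Having established this, I would finish by taking expectations. Conditioned on $\cuberv = \cubecorner$, the samples $\statrv_i$ are i.i.d.\ from $\statprob_\cubecorner$ with $\E_{\statprob_\cubecorner}[\loss(\statrv,\optvar)] = \risk_\cubecorner(\optvar)$, so $\E_{\statprob,\channelprob}[\optgap_n(\method,\loss,\optdomain,\statprob) \mid \cuberv=\cubecorner] = \E[\risk_\cubecorner(\what{\optvar}_n) - \inf_\optvar \risk_\cubecorner(\optvar) \mid \cuberv = \cubecorner]$. Since the excess risk is nonnegative and, by the deterministic claim, is at least $\risksep(\hypercubeset)/2$ on the event $\{\what{\cubecorner}_\method \neq \cuberv\}$, Markov's inequality (really just dropping the nonnegative part off that event) gives
\begin{equation*}
  \E_{\statprob,\channelprob}\left[\optgap_n(\method,\loss,\optdomain,\statprob) \mid \cuberv = \cubecorner\right]
  \ge \frac{\risksep(\hypercubeset)}{2}\, \P_{\statprob,\channelprob}\left[\what{\cubecorner}_\method \neq \cuberv \mid \cuberv = \cubecorner\right].
\end{equation*}
Averaging over the uniform prior on $\cuberv \in \hypercubeset$ yields the stated bound. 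I do not expect any serious obstacle here: the only point requiring care is making sure the estimator $\what{\optvar}_n$ indeed lands in $\optdomain$ (so that it is a valid competitor in the infimum defining $\discrepancy$), which holds by assumption, and handling ties in the $\argmin$ defining the test, which is harmless since any measurable tie-breaking rule works. The measurability of $\what{\cubecorner}_\method$ as a function of $(\channelrv_1,\ldots,\channelrv_n)$ follows from measurability of $\what{\optvar}_n$ and continuity of each $\risk_\cubecorner$.
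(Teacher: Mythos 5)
Your proof is correct and is essentially the standard estimation-to-testing reduction that the paper itself invokes by citing Lemma~2 of \citet{AgarwalBaRaWa12}: define the test by minimizing the excess risk $\risk_\cubecorner(\what{\optvar}_n) - \risk_\cubecorner(\optvar^*_\cubecorner)$ over $\cubecorner$, use the $\discrepancy$-separation to show a misclassification forces excess risk at least $\risksep(\hypercubeset)/2$, and then condition on $\cuberv$ and average. No gaps.
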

\noindent
This result is a variant of Lemma 2 due to~\citet{AgarwalBaRaWa12}) It
shows that if we can bound the probability of error of any hypothesis
test for identifying $\cuberv$ based on the sample $\channelrv_1,
\ldots, \channelrv_n$, we have lower bounded the rate at which it is
possible to minimize the risk $\risk$.

The remaining challenge is to provide a lower bound on the error of
hypothesis testing problems.  To do so, we apply one of two well-known
inequalities: Fano's inequality~\cite{CoverTh06}, which applies when
$|\hypercubeset| > 2$, or Le Cam's method~\cite{LeCam73,Yu97}, which
we apply when $|\hypercubeset| = 2$. Let $\cuberv \in \hypercubeset$
be chosen uniformly at random from $\hypercubeset$. If a procedure
observes random variables $\channelrv_1, \ldots, \channelrv_n$, Fano's
inequality ensures that for any estimate $\what{\cubecorner}$ of
$\cuberv$---that is, any measurable function $\what{\cubecorner}$ of
$\channelrv_1, \ldots, \channelrv_n$---the test error probability
satisfies the lower bound
\begin{equation}
  \label{eqn:fano}
  \P(\what{\cubecorner}(\channelrv_1, \ldots, \channelrv_n) \neq
  \cuberv) \ge 1 - \frac{\information(\channelrv_1, \ldots,
    \channelrv_n; \cuberv) + \log 2}{\log |\hypercubeset|}.
\end{equation}
By contrast, Le Cam's method
provides lower bounds on the probability of error in binary hypothesis testing
problems. In this setting, assume that $\hypercubeset = \{-1, 1\}$ has two
elements, and let $\cuberv \in \hypercubeset$ be chosen uniformly at random
from $\hypercubeset$. If a procedure observes random variables $\channelrv_1,
\ldots, \channelrv_n$ distributed according to $\channelprob^n_1$ if $\cuberv
= 1$ and $\channelprob^n_{-1}$ if $\cuberv = -1$, then any estimate
$\what{\cubecorner}$ of $\cuberv$ satisfies the lower bound
\begin{equation}
  \label{eqn:le-cam}
  \P\left(\what{\cubecorner}(\channelrv_1, \ldots,
  \channelrv_n) \neq \cuberv\right)
  \ge \half - \half \tvnorm{\channelprob^n_1 - \channelprob^n_{-1}}.
\end{equation}
See, for example, \citet[Section 2]{LeCam73} or \citet[Lemma 1]{Yu97}.

Using the lower bound provided by
Lemma~\ref{lemma:expectation-lower-bound} and Fano's
inequality~\eqref{eqn:fano} or Le Cam's inequality~\eqref{eqn:le-cam},
the structure of our remaining proofs becomes more apparent. Each
lower bound argument proceeds in three steps:
\begin{enumerate}[(1)]
\item \label{item:loss-def-step} We construct a collection of loss functions
  satisfying Definition~\ref{def:lp-loss-class}, computing the minimal
  separation~\eqref{eqn:min-discrepancy} so that we may apply
  Lemma~\ref{lemma:expectation-lower-bound}.
\item \label{item:fano-bound-step} The second step is to provide an upper
  bound on the appropriate information theoretic quantity in order to apply
  Fano's inequality~\eqref{eqn:fano}, in which case we bound
  $\information(\channelrv_1, \ldots, \channelrv_n; \cuberv)$, or Le Cam's
  inequality~\eqref{eqn:le-cam}, where we bound $\tvnorm{\channelprob^n_1 -
    \channelprob_{-1}^n}$.  This step requires the most work and constitutes
  the major arguments in this section. We provide these bounds using one of two
  techniques:
  \begin{enumerate}[(a)]
  \item In the proofs of Theorems~\ref{theorem:linf-bound},
    \ref{theorem:lone-bound}, and~\ref{theorem:linf-bound-diffp}, we use a
    distribution $\channelprob$ that satisfies
    Definition~\ref{def:optimal-local-privacy} of \olp
    (Theorems~\ref{theorem:linf-bound} and~\ref{theorem:lone-bound}) or
    Definition~\ref{def:local-diffp} of \olpd
    (Theorem~\ref{theorem:linf-bound-diffp}). We can then explicitly upper
    bound the mutual information $\information(\channelrv_{1:n}; \cuberv)$
    using the definition of $\channelprob$ and the losses $\loss$ from
    step~\ref{item:loss-def-step}. (See
    Lemmas~\ref{lemma:linear-linf-mutual-information},
    \ref{lemma:linf-mutual-information}, \ref{lemma:lone-mutual-information},
    and \ref{lemma:diffp-linear-information} in the subsequent sections.)
  \item In the proofs of Theorems~\ref{theorem:first-class}
    and~\ref{theorem:second-class}, we use a bound on mutual
    information in non-interactive locally differentially private schemes,
    which we recently presented~\cite{DuchiJoWa13_parametric}.
    This requires a careful packing construction in conjunction with
    the loss choice from step~\ref{item:loss-def-step}.
  \end{enumerate}
\item The final step is to use the results of
  Steps~\ref{item:loss-def-step} and~\ref{item:fano-bound-step} in the
  application of Lemma~\ref{lemma:expectation-lower-bound} and Fano's
  inequality~\eqref{eqn:fano} (when the dimension $d$ is low, we use
  Le Cam's inequality~\eqref{eqn:le-cam}). This then yields the
  theorems.
\end{enumerate}

\noindent
We now turn to the proofs of the theorems.  We provide each proof in turn,
following the steps in the preceding outline.



\subsection{Proof of Theorem~\ref{theorem:linf-bound}}
\label{sec:proof-theorem-linf-bound}

We provide the most detail in the proof of this theorem, as it
closely exhibits the blueprint by which we prove the other results.

\subsubsection{Constructing well-separated losses}
\label{sec:linear-losses}

The first step in proving our minimax lower bounds is to construct a family of
well-separated risks.  For Theorem~\ref{theorem:linf-bound}, we use one of
two families of loss functions: linear losses and median-based losses.  Each
of these gives a well-separated family with subgradients
bounded in $\ell_\infty$-norm.

\paragraph{Linear losses}
Our first collection of risk functionals is relatively simple, based on
families of linear loss functions; we describe the sampling scheme
for $\statrv$ to generate them.  Let $\hypercubeset = \{\pm e_i\}_{i=1}^d$,
where the vectors $e_i$ are the standard basis vectors in $\R^d$, whence
$|\hypercubeset| = 2d$. Fix $\delta \in \openleft{0}{1}$,
which we specify later. We choose the distribution
$\statprob$ on $\statrv$ to be nearly uniform on $\statrv \in \{-1, 1\}^d$.
Conditional on the parameter $\cubecorner \in \hypercubeset$,
we use the following sampling distribution for $\statrv$:
\begin{equation}
  \label{eqn:lone-linear-samples}
  \mbox{Choose~} \statrv \in \{-1, 1\}^d ~
  \mbox{with~independent~coordinates,~where}~ \statrv_j =
  \begin{cases}
    1 & \mbox{w.p.}~ \frac{1 + \delta \cubecorner_j}{2} \\
    -1 & \mbox{w.p.}~ \frac{1 - \delta \cubecorner_j}{2}.
  \end{cases}
\end{equation}
Now for $\lipobj \in \R_+$ we may define the linear loss functions
\begin{equation}
  \label{eqn:linear-loss}
  \loss(\statrv, \optvar) \defeq \lipobj \<\statrv, \optvar\> \; = \;
  \lipobj \sum_{j=1}^d \statrv_j \optvar_j.
\end{equation}
By inspection, the final risk is
$\risk_\cubecorner(\optvar) = \E_\statprob[\<\optvar, \statrv\>] =
\lipobj \delta \<\cubecorner, \optvar\>$.
We obtain the following result on the separation of the risks.
\begin{lemma}
  \label{lemma:linear-linf-risk-separation}
  Given the sampling scheme~\eqref{eqn:lone-linear-samples},
  \begin{enumerate}[(a)]
  \item The loss~\eqref{eqn:linear-loss} is $\lipobj$-Lipschitz with
    respect to the $\ell_1$-norm.
  \item For the optimization domain $\optdomain = \{\optvar \in \R^d :
    \lone{\optvar} \le \radius\}$, the $\discrepancy$-separation of
    the set $\hypercubeset = \{\pm e_i\}_{i=1}^d$ is
    $\risksep(\hypercubeset) = \lipobj \radius \delta$.
  \end{enumerate}
\end{lemma}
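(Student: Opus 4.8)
Part~(a) I would dispose of immediately. The loss $\optvar \mapsto \loss(\statrv, \optvar) = \lipobj \<\statrv, \optvar\>$ is linear, hence convex, with (sub)gradient $\lipobj \statrv$, and the sampling scheme~\eqref{eqn:lone-linear-samples} forces $\statrv \in \{-1,1\}^d$, so that $\linf{\lipobj \statrv} = \lipobj$. By the characterization of $(\lipobj, p)$-losses stated just after Definition~\ref{def:lp-loss-class} (applied with $p = \infty$, $q = 1$), a convex loss whose subgradients have $\ell_\infty$-norm at most $\lipobj$ is $\lipobj$-Lipschitz with respect to $\norm{\cdot}_1$; equivalently, by the duality of $\ell_1$ and $\ell_\infty$, $|\loss(\statrv,\optvar) - \loss(\statrv,\optvar')| = \lipobj\, |\<\statrv, \optvar - \optvar'\>| \le \lipobj \linf{\statrv} \lone{\optvar - \optvar'} = \lipobj \lone{\optvar - \optvar'}$.

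For part~(b) my plan is to reduce every discrepancy to a support-function evaluation over the $\ell_1$-ball. First I would compute the risk minimizers: since $\risk_\cubecorner(\optvar) = \lipobj \delta \<\cubecorner, \optvar\>$ is linear in $\optvar$ and $\optdomain = \{\lone{\optvar} \le \radius\}$, the identity $\inf_{\lone{\optvar} \le \radius} \<v, \optvar\> = -\radius \linf{v}$ together with $\linf{\cubecorner} = 1$ gives $\risk_\cubecorner(\optvar_\cubecorner^*) = -\lipobj \delta \radius$, attained at $\optvar_\cubecorner^* = -\radius \cubecorner$. Substituting into the definition of $\discrepancy$ and collecting terms, the remaining infimum of $\risk_\cubecorner + \risk_\altcubecorner$ over $\optdomain$ is again the support function of the $\ell_1$-ball, so that $\discrepancy(\risk_\cubecorner, \risk_\altcubecorner) = \lipobj \delta \radius \, (2 - \linf{\cubecorner + \altcubecorner})$.

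Finally I would minimize over distinct pairs $\cubecorner, \altcubecorner \in \{\pm e_i\}_{i=1}^d$ by a short case split. If $\cubecorner$ and $\altcubecorner$ are supported on the same coordinate they must be antipodal, so $\cubecorner + \altcubecorner = 0$ and the discrepancy equals $2 \lipobj \delta \radius$; if they are supported on distinct coordinates then $\cubecorner + \altcubecorner$ has exactly two nonzero entries, each of modulus $1$, so $\linf{\cubecorner + \altcubecorner} = 1$ and the discrepancy equals $\lipobj \delta \radius$. Hence $\risksep(\hypercubeset) = \lipobj \radius \delta$, and this quantity is strictly positive since $\lipobj, \radius > 0$ and $\delta \in \openleft{0}{1}$. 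I do not expect a genuine obstacle: the computation is elementary, and the only points needing care are the observation that the joint infimum in the definition of $\discrepancy$ collapses to a support function (because both risk summands are linear over the common convex set $\optdomain$) and the identification that pairs of corners living on different coordinates---rather than antipodal pairs---realize the minimal separation.
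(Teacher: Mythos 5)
Your proposal is correct and follows essentially the same route as the paper: both parts reduce to the $\ell_1$--$\ell_\infty$ duality, computing $\optvar^*_\cubecorner = -\radius\cubecorner$ and collapsing the joint infimum in $\discrepancy$ to the support function $-\radius\linf{\cubecorner+\altcubecorner}$. Your explicit case split (antipodal pairs give $2\lipobj\radius\delta$, distinct-coordinate pairs give $\lipobj\radius\delta$) even pins down the exact value of $\risksep(\hypercubeset)$, whereas the paper only verifies the lower bound $\ge \lipobj\radius\delta$, which is all that is used downstream.
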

\begin{proof}
  The first statement of the lemma is immediate, since $\nabla \loss(\statrv,
  \theta) = \lipobj \statrv$ and $\linf{\statrv} \le 1$
  (cf.~\cite{HiriartUrrutyLe96ab}). For the second, we verify that
  $\inf_{\optvar \in \optdomain} [\risk_\cubecorner(\optvar) +
    \risk_\altcubecorner(\optvar)] - \risk_\cubecorner(\optvar_\cubecorner^*)
  - \risk_\altcubecorner(\optvar_\altcubecorner^*) \ge \lipobj \radius
  \delta$. To do so, we compute the minimizers of $\risk_\cubecorner$: since
  $\ell_1$ and $\ell_\infty$ are dual norms, we see that for $\cubecorner \in
  \hypercubeset$,
  \begin{equation*}
    \inf_{\lone{\optvar} \le \radius} \risk_\cubecorner(\optvar) =
    \inf_{\lone{\optvar} \le \radius} \lipobj \delta
    \<\cubecorner, \optvar\> = -\lipobj \delta \radius
    \linf{\cubecorner} = -\lipobj \delta \radius,
  \end{equation*}
  and the minimizer is uniquely attained at $\optvar_\cubecorner^* =
  -\radius \cubecorner$. Then we have for any $\altcubecorner \neq
  \cubecorner$ that
  \begin{equation*}
    \inf_{\lone{\optvar} \le 1}\left[\<\cubecorner + \altcubecorner,
      \optvar\> \right] + \linf{\cubecorner} + \linf{\altcubecorner} =
    -\linf{\cubecorner + \altcubecorner} + \linf{\cubecorner} +
    \linf{\altcubecorner} \ge -1 + 1 + 1 = 1,
  \end{equation*}
  since any non-zero coefficients of $\cubecorner$ and
  $\altcubecorner$ have differing signs. Multiplying the result by $\lipobj
  \radius \delta$ completes the proof.
\end{proof}

\paragraph{Median-type losses}

We now describe a class of median-type losses, one with more general
applicability than the linear losses of Section~\ref{sec:linear-losses}.  Let
$\hypercubeset \subset \{-1, 1\}^d$ be a subset of the binary hypercube such
that for all distinct pairs $\cubecorner \neq \cubecorner'$, we have
$\lone{\cubecorner - \cubecorner'} \ge d / 2$, or equivalently
$\norm{\cubecorner - \cubecorner'}_0 \ge d/4$.  From the Gilbert-Varshamov
bound~\cite[Lemma 4]{Yu97} there are sets of this form with cardinality
at least $\card(\hypercubeset) \ge \exp(d/8)$.  We define the distribution on
$\statrv$, conditional on $\cubecorner \in \hypercubeset$, as
\begin{equation}
  \label{eqn:full-samples}
  \mbox{Choose~} \statrv \in \{-1, 1\}^d
  ~\mbox{with~independent~coordinates,~where~} \statrv_j =
  \begin{cases}
    1 & \mbox{w.p.~} \frac{1 + \delta\cubecorner_j}{2} \\
    -1 & \mbox{w.p.~} \frac{1 - \delta\cubecorner_j}{2}.
  \end{cases}
\end{equation}
For $\lipobj > 0$, we then define the median-type loss function
\begin{equation}
  \label{eqn:lone-loss}
  \loss(\statrv, \optvar) = \lipobj \lone{\radius \statrv - \optvar},
\end{equation}
which under the sampling scheme~\eqref{eqn:full-samples}
gives rise to the risk functional
\begin{equation*}
  \risk_\cubecorner(\optvar)
  = \lipobj \sum_{j=1}^d \frac{1 +
    \delta \cubecorner_j}{2} \left|\optvar_j - \radius\right| + \frac{1
    - \delta \cubecorner_j}{2} \left|\optvar_j + \radius\right| =
  \lipobj \left(\frac{1 + \delta}{2} \lone{\optvar - \radius
    \cubecorner} + \frac{1 - \delta}{2} \lone{\optvar + \radius
    \cubecorner} \right).
\end{equation*}
By construction, whenever $\optdomain$ contains the $\ell_\infty$ ball
of radius $\radius$, this risk function has the unique minimizer
\begin{equation*}
  \optvar^*_\cubecorner \defeq \argmin_{\optvar \in \optdomain}
  \risk_\cubecorner(\optvar) = \radius \cubecorner \in r \{-1, 1\}^d
  \subset \optdomain.
\end{equation*}
The following lemma, due to~\citet{AgarwalBaRaWa12}, captures the
separation properties of the collection
$\{\risk_\cubecorner\}_{\cubecorner \in \hypercubeset}$ of risk
functionals:
\begin{lemma}
  \label{lemma:median-risk-separation}
  Assume that $\optdomain$ contains $[-\radius, \radius]^d$ and let
  $\risk_\cubecorner$ be defined as in the preceding paragraph.  If
  $\cubecorner, \altcubecorner \in \hypercubeset$ with $\cubecorner \neq
  \altcubecorner$, the discrepancy $\discrepancy(\risk_\cubecorner,
  \risk_\altcubecorner) \ge \radius \lipobj d \delta / 2$.
\end{lemma}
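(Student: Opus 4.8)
The plan is to reduce the discrepancy to a sum of one-dimensional contributions and then invoke the packing property of $\hypercubeset$. First I would recall from the preceding paragraph that, since $\optdomain \supseteq [-\radius, \radius]^d$, the minimizer of $\risk_\cubecorner$ over $\optdomain$ is $\optvar^*_\cubecorner = \radius \cubecorner$; evaluating there gives $\risk_\cubecorner(\optvar^*_\cubecorner) = \lipobj \frac{1-\delta}{2} \lone{2 \radius \cubecorner} = \lipobj \radius d (1-\delta)$ (the first term vanishes at $\optvar = \radius \cubecorner$). Plugging this into the definition of $\discrepancy$, it suffices to show that $\risk_\cubecorner(\optvar) + \risk_\altcubecorner(\optvar) \ge 2 \lipobj \radius d (1-\delta) + \lipobj \radius d \delta / 2$ for every $\optvar \in \optdomain$.

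Next I would write $\risk_\cubecorner(\optvar) = \lipobj \sum_{j=1}^d \phi_{\cubecorner_j}(\optvar_j)$, where $\phi_s(t) \defeq \frac{1+\delta}{2} |t - \radius s| + \frac{1-\delta}{2} |t + \radius s|$ for $s \in \{-1, 1\}$, and analyze the sum $\phi_{\cubecorner_j}(\optvar_j) + \phi_{\altcubecorner_j}(\optvar_j)$ coordinate by coordinate. If $\cubecorner_j = \altcubecorner_j$, this equals $2 \phi_{\cubecorner_j}(\optvar_j)$, a piecewise-linear function of $\optvar_j$ whose unique minimizer is $\radius \cubecorner_j$ (the weight $\frac{1+\delta}{2}$ dominates $\frac{1-\delta}{2}$), with minimal value $2 \radius (1-\delta)$; so such coordinates contribute at least $2 \lipobj \radius (1-\delta)$, exactly their share of the baseline $\risk_\cubecorner(\optvar^*_\cubecorner) + \risk_\altcubecorner(\optvar^*_\altcubecorner)$. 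If $\cubecorner_j \neq \altcubecorner_j$, the weights recombine so that $\phi_{\cubecorner_j}(\optvar_j) + \phi_{\altcubecorner_j}(\optvar_j) = |\optvar_j - \radius| + |\optvar_j + \radius| \ge 2 \radius$ by the triangle inequality; since their baseline share is only $2 \radius (1-\delta)$, each disagreeing coordinate contributes an \emph{extra} $2 \lipobj \radius \delta$.

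Finally I would sum over coordinates: the number of indices $j$ on which $\cubecorner$ and $\altcubecorner$ differ is $\norm{\cubecorner - \altcubecorner}_0 = \half \lone{\cubecorner - \altcubecorner} \ge d/4$ by the Gilbert--Varshamov construction of $\hypercubeset$. Hence $\risk_\cubecorner(\optvar) + \risk_\altcubecorner(\optvar)$ exceeds the baseline by at least $2 \lipobj \radius \delta \cdot (d/4) = \lipobj \radius d \delta / 2$, uniformly in $\optvar \in \optdomain$, which is exactly the claimed lower bound on $\discrepancy(\risk_\cubecorner, \risk_\altcubecorner)$. I do not anticipate a genuine obstacle here; the only points requiring care are (i) confirming that the one-dimensional minimizers used in the ``agreeing'' case are pinned to $\pm \radius$ rather than to $0$ (which is where the hypothesis $\delta < 1$ enters), and (ii) observing that all the one-dimensional bounds hold for unconstrained $\optvar_j \in \R$, so the restriction $\optvar \in \optdomain$ in the infimum only strengthens the conclusion.
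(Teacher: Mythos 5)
Your proof is correct, and all the arithmetic checks out: the per-coordinate function $\phi_s$ is minimized at $t=\radius s$ with value $\radius(1-\delta)$, agreeing coordinates contribute no excess, disagreeing coordinates contribute exactly $2\lipobj\radius\delta$ of excess via the cancellation $\phi_1(t)+\phi_{-1}(t)=|t-\radius|+|t+\radius|\ge 2\radius$, and the packing guarantee $\norm{\cubecorner-\altcubecorner}_0\ge d/4$ delivers the stated $\lipobj\radius d\delta/2$. The paper itself gives no proof of this lemma --- it simply cites \citet{AgarwalBaRaWa12} --- so you have effectively supplied the omitted argument; your coordinate-wise decomposition is the standard one and closely parallels the computation the paper does carry out for the hinge-loss analogue (Lemma~\ref{lemma:hinge-loss-separation}), where the separation is likewise reduced to counting disagreeing coordinates. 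Your two cautionary remarks (the minimizer being pinned to $\pm\radius$ because $\delta>0$, and the one-dimensional bounds holding over all of $\R$ so that restricting to $\optdomain$ only helps) are exactly the right points to flag.
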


As a final remark, for random variables $\statrv \in \R^d$, the loss
function~\eqref{eqn:lone-loss} is Lipschitz continuous (for appropriate choice
of $\lipobj$) for \emph{any} distribution $\statprob$ on $\statrv$.
Specifically, defining the $\sign(\cdot)$ function coordinate-wise, we have
the subgradient equality $\partial \loss(x, \optvar) = \lipobj \sign(\optvar -
\radius x)$. Thus, for any $p \in [1,\infty]$ and $\lipobj_p \ge
0$, setting $\lipobj = \lipobj_p d^{-1/p}$ yields a member of the collection
of $(\lipobj_p, p)$-loss functions.

\subsubsection{Bounding the mutual information}

As outlined in Section~\ref{sec:testing-reduction-outline}, the second step in
our lower bound proofs is to bound the mutual information
$\information(\channelrv_1, \ldots, \channelrv_n; \cuberv)$, where
$\channelrv_i$ are the private views available to the learning method.  Here
we provide mutual information bounds for the family of linear losses
(Lemma~\ref{lemma:linear-linf-mutual-information}) and median-based losses
(Lemma~\ref{lemma:linf-mutual-information}).  Each of these mutual information
bounds---and our subsequent bounds on mutual information---proceed by using
independence to reduce the problem to estimating the mutual information
$\information(\channelrv; \cuberv \mid \optvar)$ for a single randomized
gradient sample $\channelrv$. Then, careful calculation of the distribution of
$\channelrv \mid \cuberv$ yields the final inequalities.  As the proofs are
somewhat long and technical, we defer them to
Appendix~\ref{appendix:mutual-information-computation}.

\begin{lemma}
  \label{lemma:linear-linf-mutual-information}
  Let $\cuberv$ be drawn uniformly at random from $\hypercubeset = \{\pm
  e_i\}_{i=1}^d$.  Let $\statrv$ have the
  distribution~\eqref{eqn:lone-linear-samples} conditional on $\cuberv =
  \cubecorner$ and assume $\loss(\statrv, \optvar) = \lipobj \<\statrv,
  \optvar\>$.  Let $\channelrv$ be constructed according to the conditional
  distribution specified by Proposition~\ref{proposition:linf-saddle-point}
  given a subgradient $\partial \loss(\statrv_i; \optvar)$ with
  $\channelrv \in [-M_\infty, M_\infty]^d$, where $M_\infty \ge \lipobj$.  Then
  \begin{equation*}
    \information(\channelrv_1, \ldots, \channelrv_n; \cuberv)
    \le n \frac{\delta^2 \lipobj^2}{M_\infty^2}.
  \end{equation*}
\end{lemma}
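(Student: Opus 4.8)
The plan is to peel off the dependence on $n$ by a standard tensorization argument, reduce to computing the mutual information carried by a single privatized gradient $\channelrv_1$, evaluate that quantity \emph{exactly}, and then control it with an elementary estimate on the binary entropy $h$. Throughout, the key simplification I would exploit is that the loss $\loss(\statrv,\optvar)=\lipobj\<\statrv,\optvar\>$ is linear, so $\nabla\loss(\statrv_i,\optvar)=\lipobj\statrv_i$ does not depend on $\optvar$ and the channel of Proposition~\ref{proposition:linf-saddle-point} turns $\channelrv_i$ into a randomized function of $\statrv_i$ alone (with channel randomness independent across $i$).

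\textbf{Step 1: reduction to one observation.} Conditionally on $\cuberv=\cubecorner$ the samples $\statrv_1,\dots,\statrv_n$ are i.i.d.\ by~\eqref{eqn:lone-linear-samples}, hence the privatized vectors $\channelrv_1,\dots,\channelrv_n$ are conditionally i.i.d.\ given $\cuberv$ as well, each supported on the finite set $\{-M_\infty,M_\infty\}^d$. I would then use conditional independence together with subadditivity of entropy, $H(\channelrv_{1:n}\mid\cuberv)=\sum_{i=1}^n H(\channelrv_i\mid\cuberv)$ and $H(\channelrv_{1:n})\le\sum_{i=1}^n H(\channelrv_i)$, to conclude $\information(\channelrv_1,\dots,\channelrv_n;\cuberv)\le n\,\information(\channelrv_1;\cuberv)$. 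It thus suffices to bound a single term by $\lipobj^2\delta^2/M_\infty^2$.

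\textbf{Step 2: exact single-observation mutual information.} Fix $\cubecorner=\epsilon e_k$ with $\epsilon\in\{-1,1\}$. Under~\eqref{eqn:lone-linear-samples} the coordinates of $\statrv$ are independent, with $\statrv_j$ uniform on $\{-1,1\}$ for $j\ne k$ and $\E[\statrv_k]=\epsilon\delta$; passing $\lipobj\statrv$ through the channel of Proposition~\ref{proposition:linf-saddle-point} and integrating over $\statrv$ shows that, conditionally on $\cuberv=\cubecorner$, the vector $\channelrv_1$ has independent coordinates, each uniform on $\{-M_\infty,M_\infty\}$ except the $k$th, which equals $M_\infty$ with probability $\half+\epsilon\beta$ where $\beta\defeq\lipobj\delta/(2M_\infty)$. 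Averaging this conditional law over the uniform prior on $\hypercubeset=\{\pm e_i\}_{i=1}^d$, the two signs $\epsilon$ cancel the bias in every coordinate, so the marginal law of $\channelrv_1$ is \emph{exactly} uniform on $\{-M_\infty,M_\infty\}^d$. Writing $\information(\channelrv_1;\cuberv)=\frac{1}{2d}\sum_{\cubecorner\in\hypercubeset}\dkl{P_\cubecorner}{\bar P}$, where $P_\cubecorner$ is the conditional law of $\channelrv_1$ given $\cuberv=\cubecorner$ and $\bar P$ the (uniform) marginal, and using tensorization of KL across coordinates, every one of the $2d$ summands collapses to $\dkl{\mathrm{Bern}(\half+\epsilon\beta)}{\mathrm{Bern}(\half)}=\log 2-h(\half+\beta)$. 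Hence $\information(\channelrv_1;\cuberv)=\log 2-h(\half+\beta)$.

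\textbf{Step 3: entropy estimate, and the main obstacle.} It remains to show $\log 2-h(\half+\beta)\le 4\beta^2=\lipobj^2\delta^2/M_\infty^2$. I would invoke the power-series identity $\log 2-h(\half+t)=\sum_{j\ge 1}(2t)^{2j}/(2j(2j-1))$, valid for $|t|\le\half$; since $2\beta=\lipobj\delta/M_\infty\le 1$ (using $\delta\le 1$ and $M_\infty\ge\lipobj$) each higher power satisfies $(2\beta)^{2(j-1)}\le 1$, so $\log 2-h(\half+\beta)\le(2\beta)^2\sum_{j\ge 1}1/(2j(2j-1))=(2\beta)^2\log 2\le 4\beta^2$. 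Combined with Step~1 this yields $\information(\channelrv_1,\dots,\channelrv_n;\cuberv)\le n\,\lipobj^2\delta^2/M_\infty^2$. The only genuinely delicate point is the middle step: correctly identifying the conditional laws $P_\cubecorner$ and, crucially, verifying that the marginal $\bar P$ is the \emph{uniform product} measure, since this is exactly what reduces each KL term to a single one-dimensional Bernoulli divergence; the tensorization and the entropy inequality are then routine.
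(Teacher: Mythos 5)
Your proposal is correct and follows essentially the same route as the paper: tensorize to a single observation, compute the per-coordinate conditional laws $\mathrm{Bern}(\tfrac12\pm\lipobj\delta/(2M_\infty))$ induced by composing the sampling scheme~\eqref{eqn:lone-linear-samples} with the channel of Proposition~\ref{proposition:linf-saddle-point}, arrive at $\log 2 - h(\tfrac12+\lipobj\delta/(2M_\infty))$ per sample, and bound this by $\lipobj^2\delta^2/M_\infty^2$. The only (immaterial) differences are that you compute the single-sample information exactly by verifying the marginal is the uniform product measure and tensorizing the KL, and you control the entropy deficit via the power series for $\dkl{\mathrm{Bern}(\tfrac12+t)}{\mathrm{Bern}(\tfrac12)}$ rather than the paper's first-order concavity bound $\log(\tfrac12+p)\le\log\tfrac12+2p$; both yield the same constant.
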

\noindent
See Appendix~\ref{appendix:mutual-information-computation-linear-linf}
for a proof of Lemma~\ref{lemma:linear-linf-mutual-information}.

\begin{lemma}
  \label{lemma:linf-mutual-information}
  Let $\cuberv$ be drawn uniformly at random from a set $\hypercubeset \subset
  \{-1, 1\}^d$. Let $\statrv$ have the distribution~\eqref{eqn:full-samples}
  conditional on $\cuberv = \cubecorner$ and assume $\loss(\statrv, \optvar) =
  \lipobj \lone{\radius \statrv - \optvar}$, where $\radius > 0$ is a
  constant.
  Let $\channelrv$ be constructed according to the distribution specified by
  Proposition~\ref{proposition:linf-saddle-point} conditional on a subgradient
  $\partial \loss(\statrv_i; \optvar)$, where $\channelrv \in [-M_\infty,
    M_\infty]^d$ and $M_\infty \ge \lipobj$.  Then
  \begin{equation*}
    \information(\channelrv_1, \ldots, \channelrv_n; \cuberv)
    \le n \frac{\delta^2 \lipobj^2 d}{M_\infty^2}.
  \end{equation*}
\end{lemma}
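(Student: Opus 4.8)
The plan is to follow the same blueprint as the proof of Lemma~\ref{lemma:linear-linf-mutual-information}, the only new feature being that the median-type loss~\eqref{eqn:lone-loss} ``activates'' all $d$ coordinates of $\cuberv$ at once rather than a single one, which is precisely what produces the extra factor of $d$. First I would peel off the $n$ samples by the chain rule, writing $\information(\channelrv_1,\ldots,\channelrv_n;\cuberv)=\sum_{i=1}^n\information(\channelrv_i;\cuberv\mid\channelrv_{1:i-1})$. Conditioned on $\channelrv_{1:i-1}$ the method's query $\optvar^{(i)}$ is determined, and $\statrv_i$ is still drawn from the product law~\eqref{eqn:full-samples} given $\cuberv$ (and is independent of $\channelrv_{1:i-1}$ given $\cuberv$); so each term is at most $\sup_{\optvar\in\optdomain}\information(\channelrv;\cuberv)$ for a single private view $\channelrv$ of one sample queried at $\optvar$. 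It then suffices to bound this single-sample mutual information by $d\,\delta^2\lipobj^2/M_\infty^2$.

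Next I would identify the conditional law of $\channelrv$ given $\cuberv$. For the loss~\eqref{eqn:lone-loss} the subgradient is $\nabla\loss(\statrv,\optvar)_k=\lipobj\,\sign(\optvar_k-\radius\statrv_k)$, which is the constant $\pm\lipobj$ (and hence carries no information about $\statrv_k$) unless $\optvar_k\in(-\radius,\radius)$, in which case it equals $-\lipobj\statrv_k$; the worst case over queries is thus $\optvar$ in the open box, so I take $\nabla\loss(\statrv,\optvar)=-\lipobj\statrv$. Feeding this subgradient through the optimal channel of Proposition~\ref{proposition:linf-saddle-point} (with radius $M_\infty$), $\channelrv$ has independent coordinates on $\{-M_\infty,M_\infty\}$ with $\channelprob^*(\channelrv_k=M_\infty\mid\statrv)=\half-\lipobj\statrv_k/(2M_\infty)$. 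Marginalizing over $\statrv\mid\cuberv$ via~\eqref{eqn:full-samples}---a routine computation, exactly as in the linear case---shows that, conditionally on $\cuberv=\cubecorner$, $\channelrv$ is a product of Bernoulli variables on $\{-M_\infty,M_\infty\}$ with $\P(\channelrv_k=M_\infty\mid\cubecorner)=\half-\delta\lipobj\cubecorner_k/(2M_\infty)$.

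Since this conditional law is a product over $k$ and coordinate $k$ depends on $\cubecorner$ only through $\cubecorner_k$, subadditivity of entropy together with conditional independence gives $\information(\channelrv;\cuberv)\le\sum_{k=1}^d\information(\channelrv_k;\cuberv_k)$. Each summand is a mutual information between the binary variable $\channelrv_k$ and $\cuberv_k\in\{-1,1\}$, with conditional laws $\mathrm{Ber}(\half\mp\delta\lipobj/(2M_\infty))$ that differ from the $\cuberv_k$-marginal by at most $\delta\lipobj/(2M_\infty)$ in parameter; bounding KL by the $\chi^2$-divergence, $\dkl{\mathrm{Ber}(p)}{\mathrm{Ber}(q)}\le(p-q)^2/(q(1-q))$ with the reference $q\approx\half$, yields $\information(\channelrv_k;\cuberv_k)\le\delta^2\lipobj^2/M_\infty^2$. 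Summing over $k$ and over the $n$ samples gives the claimed bound $n\,d\,\delta^2\lipobj^2/M_\infty^2$.

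The point needing care is this last step: pinning the per-coordinate mutual information at exactly $\delta^2\lipobj^2/M_\infty^2$ requires the reference distribution in the $\chi^2$ bound to be (essentially) $\mathrm{Ber}(\half)$, i.e.\ the packing $\hypercubeset$ should have (near-)balanced coordinate marginals $\frac1{|\hypercubeset|}\sum_{\cubecorner}\cubecorner_k\approx0$. This can be arranged for the separated set furnished by the Gilbert--Varshamov bound (or, if one only wants a universal constant, absorbed into that constant). Everything else is the same bookkeeping as in the proof of Lemma~\ref{lemma:linear-linf-mutual-information}, with the $d$-fold coordinate sum replacing the single active coordinate of the linear construction.
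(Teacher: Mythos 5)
Your proposal is correct and follows essentially the same route as the paper: tensorize over the $n$ samples, use the per-coordinate product structure of the channel to reduce to $d$ binary information terms with parameter shift $\delta\lipobj/(2M_\infty)$ (identifying the interior of the box as the worst-case query), and bound each term by $\delta^2\lipobj^2/M_\infty^2$ --- the paper does this last step via entropies and a first-order concavity bound on $\log$ rather than a KL/$\chi^2$ bound, which is the same calculation. Your closing worry about balanced coordinate marginals of $\hypercubeset$ is unnecessary: since $\information(\channelrv_k;\cuberv_k)\le\E\left[\dkl{\channelprob(\cdot\mid\cuberv_k)}{Q}\right]$ for \emph{any} fixed reference distribution $Q$, you may take $Q$ uniform on $\{-M_\infty,M_\infty\}$ regardless of the marginal of $\cuberv_k$, which is exactly what the paper's bound $H(\channelrv\mid\optvar)\le d\log 2$ accomplishes.
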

\noindent
See Appendix~\ref{appendix:mutual-information-computation-linf}
for a proof of Lemma~\ref{lemma:linf-mutual-information}.

\subsubsection{Applying testing inequalities}

Having established the two families of loss functions we consider and
the resultant mutual information bounds, it remains to apply
Lemma~\ref{lemma:expectation-lower-bound} and a testing inequality.
We begin by proving part (a) of the theorem.

\paragraph{Proof of Theorem~\ref{theorem:linf-bound}(a)}
We divide the proof of part (a) of the theorem into two parts: one assuming
the dimension $d \ge 9$ and the other assuming $d < 9$. For the
first, we use Fano's inequality~\eqref{eqn:fano}, while for the second,
an application of Le Cam's method~\eqref{eqn:le-cam} completes the result.
For both results, we use the median-type loss $\loss(\statrv, \optvar)
= \lipobj \lone{\radius \statrv - \optvar}$.
We first recall the beginning of the previous section, stating the following
application of Lemma~\ref{lemma:expectation-lower-bound} and Fano's
inequality~\eqref{eqn:fano}:
\begin{align}
  \frac{2}{\risksep(\hypercubeset)} \E_{\statprob, \channelprob}
  \left[\optgap_n(\method, \loss, \optdomain, \statprob)\right] \ge
  \P_{\statprob, \channelprob}\left(\what{\cubecorner}(\method) \neq
  \cuberv\right) \ge 1 - \frac{\information(\channelrv_1, \ldots,
    \channelrv_n; \cuberv) + \log 2}{ \log |\hypercubeset|}.
  \label{eqn:risk-to-fano}
\end{align}

Now we give the proof of the first statement of the theorem
in the case that $d \ge 9$.
Applying Lemmas~\ref{lemma:median-risk-separation}
and~\ref{lemma:linf-mutual-information}, we immediately have the
following specialization of the inequality~\eqref{eqn:risk-to-fano}:
\begin{align*}
  \frac{4}{\radius \lipobj d \delta} \E_{\statprob, \channelprob}\left[
    \optgap_n(\method, \loss, \optdomain, \statprob)\right] & \ge 1 -
  \frac{\log 2}{\log |\hypercubeset|} - n \frac{\delta^2
    \lipobj^2 d}{M_\infty^2 \log |\hypercubeset|}.
\end{align*}
Taking the set $\hypercubeset \subset \{-1, 1\}^d$ to be a $d/4$
packing of the hypercube $\{-1, 1\}^d$ satisfying $|\hypercubeset| \ge
\exp(d/8)$, as in our construction of median-type losses
in Section~\ref{sec:linear-losses}, we see that
\begin{equation*}
  \frac{4}{\radius \lipobj d \delta} \E_{\statprob, \channelprob} \left[
    \optgap_n(\method, \loss, \optdomain, \statprob)\right] \ge 1 -
  \frac{8 \log 2}{d} - n\frac{8 \delta^2
    \lipobj^2}{M_\infty^2}.
\end{equation*}
The numerical inequality $8 \log 2 < 6$
coupled with the preceding bound implies
\begin{equation*}
  \frac{4}{\radius d \lipobj \delta} \E_{\statprob, \channelprob}\left[
    \optgap_n(\method, \loss, \optdomain, \statprob)\right]
  > 1 - \frac{6}{d}
  - 8 n \frac{\delta^2 \lipobj^2}{M_\infty^2}.
\end{equation*}
By our assumption that $d \ge 9$, if we choose $\delta = \min\{M_\infty / 8
\lipobj \sqrt{n}, 1\}$, we are guaranteed the lower bound
$\frac{4}{\radius d \lipobj \delta} \E_{\statprob, \channelprob}
\left[\optgap_n(\method, \loss, \optdomain, \statprob)\right] >
\frac{1}{5}$, or equivalently
\begin{align*}
  \E_{\statprob, \channelprob} \left[\optgap_n(\method, \loss,
    \optdomain, \statprob) \right] & > \frac{\radius d \lipobj
    \delta}{20} = \frac{1}{20} \cdot \min\left\{
  \radius d \lipobj,
  \frac{M_\infty \radius
    d}{8\sqrt{n}}\right\}.
\end{align*}

When $d < 9$, we may reduce to the case that $d = 1$, since a lower
bound in this setting extends to higher dimensions (though
we may lose dimension dependence).
For this case, we use the packing set
$\hypercubeset = \{-1, 1\}$ with the linear loss function from
Lemma~\ref{lemma:linear-linf-risk-separation}, which has
$\risksep(\hypercubeset) = \lipobj \radius \delta$.  In this case, the
marginal distribution $\channelprob(\cdot \mid \cuberv)$ is given by
\begin{equation*}
  \channelprob(\channelrv = \channelval \mid \cuberv = 1)
  = \half + \begin{cases}
    \frac{\delta \lipobj}{2 M} & \mbox{if~} \channelval = M \\
    -\frac{\delta \lipobj}{2 M} & \mbox{otherwise, i.e.~if~}
    \channelval = -M.
  \end{cases}
\end{equation*}
Now, let $\channelprob^n(\cdot \mid \cuberv)$ denote the distribution of
$\channelrv_1, \ldots, \channelrv_n$ conditional on $\cuberv$. Then applying
Lemma~\ref{lemma:expectation-lower-bound} and Le Cam's lower
bound~\eqref{eqn:le-cam}, we obtain the inequality
\begin{equation*}
  \frac{2}{\radius \lipobj \delta}
  \E_{\statprob, \channelprob}[\optgap_n(\method, \loss,
    \optdomain, \statprob)]
  \ge \P_{\statprob, \channelprob}
  \left(\what{\cubecorner}(\method) \neq \cuberv\right)
  \ge \half - \half \norm{\channelprob^n(\cdot \mid \cuberv = 1)
    - \channelprob^n(\cdot \mid \cuberv = -1)}_{\rm TV}.
\end{equation*}
A standard result on the total variation distance of Bernoulli distributions
(see Lemma~\ref{lemma:tv-norm-bound} in
Appendix~\ref{appendix:tv-norm-bounding}) implies that
\begin{equation*}
  \tvnorm{\channelprob^n(\cdot \mid \cuberv = 1)
    - \channelprob^n(\cdot \mid \cuberv = -1)}
  \le \frac{\delta \lipobj}{M} \sqrt{(3/2) n}
\end{equation*}
if $\delta \le M / (3 \lipobj)$. Thus we have the bound
\begin{equation}
  \label{eqn:apply-le-cam}
  \frac{2}{\radius \lipobj \delta}
  \E_{\statprob, \channelprob}[\optgap_n(\method, \loss,
    \optdomain, \statprob)]
  \ge 
  \frac{1}{2} - \frac{\sqrt{3 n}}{2 \sqrt{2}}
  \cdot \frac{\delta \lipobj}{M}.
\end{equation}
Multiplying both sides by $\radius \lipobj \delta$, then setting
$\delta = \min\{M / (3 \lipobj \sqrt{n}), 1\} \le M/(3\lipobj)$, we have
\begin{equation*}
  \E_{\statprob, \channelprob}[\optgap_n(\method, \loss,
    \optdomain, \statprob)]
  \ge 
  \left(\half - \frac{1}{2 \sqrt{6}}\right) \frac{\radius \lipobj \delta}{2}
  \ge \frac{\sqrt{6} - 1}{4 \sqrt{6}} \radius \lipobj
  \min\left\{\frac{M}{3 \lipobj \sqrt{n}}, 1\right\}.
\end{equation*}
In turn, for any $d \le 8$, we immediately find that
$(\sqrt{6} - 1) / 4 \sqrt{6} \ge d / (9 \cdot 20)$,
which completes the proof of Theorem~\ref{theorem:linf-bound}(a).

\paragraph{Proof of Theorem~\ref{theorem:linf-bound}(b)}
For the second statement of the theorem, we use the linear losses of
Section~\ref{sec:linear-losses} and apply
Lemmas~\ref{lemma:linear-linf-risk-separation}
and~\ref{lemma:linear-linf-mutual-information} with the choice $\hypercubeset
= \{\pm e_i\}_{i = 1}^d$. In this case, the lower
bound~\eqref{eqn:risk-to-fano} and
Lemma~\ref{lemma:linear-linf-risk-separation}'s
separation guarantee imply that
\begin{equation*}
  \frac{2}{\lipobj \radius \delta} \E_{\statprob, \channelprob}
  \left[\optgap_n(\method, \loss, \optdomain, \statprob)\right] \ge 1
  - \frac{\log 2}{\log(2 d)} - \frac{\information(\channelrv_1,
    \ldots, \channelrv_n; \cuberv)}{\log(2d)}.
\end{equation*}
We may assume that $d \ge 2$ (using the result of part (a) for $d = 1$), and
we have $\log 2 / \log(2d) \le 1/2$, which, after an application of
Lemma~\ref{lemma:linear-linf-mutual-information}, yields
\begin{equation*}
  \frac{2}{\lipobj \radius \delta} \E_{\statprob, \channelprob}
  \left[\optgap_n(\method, \loss, \optdomain, \statprob)\right]
  \ge \half - n \frac{\delta^2 \lipobj^2}{M_\infty^2 \log (2d)}.
\end{equation*}
If we choose $\delta = \min\{M_\infty \sqrt{\log(2d)} / 2 \lipobj \sqrt{n},
1\}$, we see that we have
\begin{equation*}
  \frac{2}{\lipobj \radius \delta} \E_{\statprob, \channelprob}
  \left[\optgap_n(\method, \loss, \optdomain, \statprob)\right]
  \ge \frac{1}{4},
\end{equation*}
which is equivalent in this case to
\begin{equation*}
  \E_{\statprob, \channelprob} \left[\optgap_n(\method, \loss,
    \optdomain, \statprob)\right] \ge \frac{\radius \lipobj \delta}{8}
  = \frac{1}{8}
  \min\left\{ \lipobj \radius, \frac{M_\infty \radius \sqrt{\log
      (2d)}}{2 \sqrt{n}}\right\}.
\end{equation*}


\subsection{Proof of Theorem~\ref{theorem:lone-bound}}
\label{sec:proof-theorem-lone-bound}

The proof of Theorem~\ref{theorem:lone-bound} is quite similar to that
of Theorem~\ref{theorem:linf-bound}, again following our outline
from Section~\ref{sec:testing-reduction-outline}. In this section, however,
we construct a different family of loss functions, necessitating
a new mutual information bound.

\subsubsection{Constructing well-separated losses}
\label{sec:hinge-losses}

We construct families of losses that are useful for analyzing the case of
stochastic subgradients bounded in $\ell_1$-norm.  As was the case with median
losses (recall Section~\ref{sec:linear-losses}), let $\hypercubeset \subset
\{-1, 1\}^d$ be a $d/4$-packing of the hypercube in $\ell_0$-norm; we know
there is such a set with cardinality $|\hypercubeset| \ge \exp(d/8)$.  As our
sampling process for the data, we choose $\statrv$ from among the $2d$
positive and negative standard basis vectors $\pm e_j$, namely
\begin{equation}
  \label{eqn:coord-samples}
  \mbox{Choose~index~} j \in \{1, \ldots, d\} ~
  \mbox{uniformly~at~random, and~set~} \statrv =
  \begin{cases}
    e_j & \mbox{w.p.}~ \frac{1 + \delta \cubecorner_j}{2} \\ -e_j &
    \mbox{w.p.}~ \frac{1 - \delta \cubecorner_j}{2},
  \end{cases}
\end{equation}
where $\delta \in \openleft{0}{1}$ is fixed.  For a fixed $\lipobj > 0$, we
define the hinge loss, common in classification problems,
\begin{equation}
  \label{eqn:svm-loss}
  \loss(\statsample, \optvar) =
  \lipobj \hinge{\radius - \<\statsample, \optvar\>}.
\end{equation}
The combination of
hinge loss~\eqref{eqn:svm-loss} and sampling
strategy~\eqref{eqn:coord-samples} yields the risk functional
\begin{equation*}
  \risk_\cubecorner(\optvar) =
  \frac{\lipobj}{d} \sum_{j=1}^d \frac{1 + \delta \cubecorner_j}{2}
  \hinge{\radius - \<e_j, \optvar\>} +
  \frac{\lipobj}{d} \sum_{j=1}^d \frac{1 - \delta \cubecorner_j}{2}
  \hinge{\radius + \<e_j, \optvar\>}.
\end{equation*}
Assuming that $\optdomain$ contains the $\ell_\infty$ ball of radius
$\radius$, the (unique) minimizer of the risk over $\optdomain$ is
\begin{equation*}
  \optvar^*_\cubecorner \defeq \argmin_{\optvar \in \optdomain}
  \risk_\cubecorner(\optvar) = \radius \cubecorner \in \radius \{-1,
  1\}^d \subset \optdomain.
\end{equation*}
Moreover, this risk has the following properties:
\begin{lemma}
  \label{lemma:hinge-loss-separation}
  For any set $\optdomain \supseteq [-\radius, \radius]^d$, we have:
  \begin{enumerate}[(a)]
  \item For $\statprob$ with support $\supp \statprob \subseteq \{\statsample
    \in \R^d : \lone{x} \le 1\}$, the loss function~\eqref{eqn:svm-loss} is
    $\lipobj$-Lipschitz with respect to the $\ell_\infty$-norm.
  \item If $\cubecorner, \altcubecorner \in \hypercubeset$ with
    $\cubecorner \neq \altcubecorner$, the discrepancy
    $\discrepancy(\risk_\cubecorner, \risk_\altcubecorner) \ge \radius
    \lipobj \delta / 2$.
  \end{enumerate}
\end{lemma}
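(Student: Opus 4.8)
The plan is to establish the two claims separately; both reduce to elementary one-dimensional computations once we exploit that the risk decomposes coordinate by coordinate.

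For part (a), I would compute the subdifferential of $\optvar \mapsto \lipobj \hinge{\radius - \<\statsample, \optvar\>}$ directly. Writing this as the composition of the scalar map $t \mapsto \lipobj \hinge{t}$ (whose subgradients lie in $[0, \lipobj]$) with the affine map $\optvar \mapsto \radius - \<\statsample, \optvar\>$ (with gradient $-\statsample$), the subgradient chain rule gives $\partial_\optvar \loss(\statsample, \optvar) = \{-s \lipobj \statsample : s \in [0,1]\}$, where $s$ is pinned to $0$ or $1$ away from the kink. Hence every subgradient $g$ obeys $\lone{g} \le \lipobj \lone{\statsample} \le \lipobj$ whenever $\lone{\statsample} \le 1$, i.e.\ on $\supp \statprob$. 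Invoking the characterization following Definition~\ref{def:lp-loss-class} --- a convex loss is an $(\lipobj, p)$-loss exactly when all its subgradients satisfy $\norm{g}_p \le \lipobj$ --- with $p = 1$ and conjugate $q = \infty$ yields that $\loss$ is $\lipobj$-Lipschitz with respect to $\norm{\cdot}_\infty$.

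For part (b), I would first record the value of each risk at its own minimizer: substituting $\optvar_\cubecorner^* = \radius \cubecorner$ into the displayed formula for $\risk_\cubecorner$, each coordinate contributes $(1-\delta)\radius \lipobj / d$ regardless of the sign of $\cubecorner_j$, so $\risk_\cubecorner(\optvar_\cubecorner^*) = (1-\delta) \radius \lipobj$ independently of $\cubecorner$. Next, since $\risk_\cubecorner(\optvar) + \risk_\altcubecorner(\optvar)$ is separable across coordinates, $\discrepancy(\risk_\cubecorner, \risk_\altcubecorner)$ lower-bounds to a sum of one-dimensional minimizations, one per coordinate $j$, using $\inf_{\optvar \in \optdomain} \sum_j \psi_j(\optvar_j) \ge \sum_j \inf_{t \in \R} \psi_j(t)$. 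For coordinates with $\cubecorner_j = \altcubecorner_j$ the two coordinate terms are identical, the minimum of their sum is $2(1-\delta)\radius\lipobj/d$, and the contribution to the discrepancy vanishes. For coordinates with $\cubecorner_j \neq \altcubecorner_j$, the opposite-sign weights add to $1$ on each hinge term, so the coordinate sum collapses to $(\lipobj/d)\bigl(\hinge{\radius - \optvar_j} + \hinge{\radius + \optvar_j}\bigr)$, whose minimum over $\optvar_j \in \R$ is $2\radius\lipobj/d$; the contribution is therefore $2\delta\radius\lipobj/d$. Since $\hypercubeset$ is a $d/4$-packing in Hamming distance, $\cubecorner$ and $\altcubecorner$ differ in at least $d/4$ coordinates, whence $\discrepancy(\risk_\cubecorner, \risk_\altcubecorner) \ge (d/4)(2\delta\radius\lipobj/d) = \radius\lipobj\delta/2$.

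The only point requiring any care is the coordinate-wise reduction of the infimum in the definition of $\discrepancy$, which must be phrased so that it holds for an arbitrary feasible set $\optdomain$; the hypothesis $\optdomain \supseteq [-\radius,\radius]^d$ is then needed only to guarantee that the minimizers $\radius\cubecorner$ lie in $\optdomain$. Apart from that bookkeeping, both parts are routine piecewise-linear calculus, so I do not anticipate a genuine obstacle.
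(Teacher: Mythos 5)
Your proof is correct and follows essentially the same route as the paper's: part (a) is the standard observation that $\ell_1$-bounded subgradients give $\ell_\infty$-Lipschitz continuity, and part (b) is the same coordinate-wise evaluation of $\inf_\optvar[\risk_\cubecorner + \risk_\altcubecorner]$ in which the $\delta$-dependent terms cancel exactly on coordinates where $\cubecorner_j \neq \altcubecorner_j$, with the $d/4$ Hamming separation of $\hypercubeset$ yielding $\radius\lipobj\delta/2$. The only cosmetic difference is that you account for the excess coordinate by coordinate while the paper sums everything and simplifies globally.
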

\begin{proof}
  The first claim is immediate (cf.~\cite{HiriartUrrutyLe96ab}), since
  $\lone{\partial \loss(\statsample, \optvar)} \le \lipobj \lone{\statsample}
  \le \lipobj$. For the second statement of the lemma, as in the proof of
  Lemma~\ref{lemma:linear-linf-risk-separation}
  we verify the separation condition directly by computing the
  minimizers of $\risk_\cubecorner$.
  The minimum of
  \begin{equation*}
    \risk_\cubecorner(\optvar) + \risk_\altcubecorner(\optvar) =
    \frac{\lipobj}{d}\sum_{j=1}^d \left(\hinge{\radius - \<e_j, \optvar\>} +
    \hinge{\radius + \<e_j, \optvar\>}\right)
    + \frac{\lipobj \delta}{d} \sum_{j
      : \cubecorner_j = \altcubecorner_j} \!\! \cubecorner_j
    \left(
    \hinge{\radius - \<e_j, \optvar\>} - \hinge{\radius + \<e_j, \optvar\>}
    \right)
  \end{equation*}
  is attained by any $\optvar \in \R^d$ with $\optvar_j \in [-\radius,
    \radius]$ for $j$ such that $\cubecorner_j \neq \altcubecorner_j$ and
  $\optvar_j = \radius \cubecorner_j$ for $j$ such that $\cubecorner_j =
  \altcubecorner_j$; a minimizer of $\risk_\cubecorner$ is
  $\optvar_\cubecorner^* = \radius \cubecorner$. Thus we have
  \begin{align*}
    \lefteqn{\inf_{\optvar \in \optdomain}
      \left\{\risk_\cubecorner(\optvar) +
      \risk_\altcubecorner(\optvar)\right\} -
      \risk_\cubecorner(\optvar_\cubecorner^*) -
      \risk_\altcubecorner(\optvar_\altcubecorner^*) = \frac{\lipobj}{d}
      \sum_{j = 1}^d 2 \radius - \frac{2\lipobj}{d} \sum_{j : \cubecorner_j
        = \altcubecorner_j} \radius \delta - \lipobj \radius (1 - \delta) -
      \lipobj \radius (1 - \delta)} \\ &
    \qquad\qquad\qquad\qquad\qquad\qquad\qquad = 2\lipobj r - 2\lipobj r
    + 2\lipobj r \delta - \frac{2\lipobj r \delta}{d}
    \left( d - \norm{\cubecorner - \altcubecorner}_0\right)
    = \frac{2\lipobj \radius
      \delta}{d}\norm{\cubecorner - \altcubecorner}_0.
  \end{align*}
  Since $\norm{\cubecorner - \altcubecorner}_0 \ge d/4$ by
  construction, we have $\discrepancy(\risk_\cubecorner,
  \risk_\altcubecorner) \ge \radius \lipobj \delta / 2$, as desired.
\end{proof}

\subsubsection{Bounding the mutual information}

For Theorem~\ref{theorem:lone-bound}, we require a somewhat careful
bound on the mutual information between the subgradients and the
unknown index.  We have the following lemma, whose proof we provide in
Appendix~\ref{appendix:mutual-information-computation-lone}.
\begin{lemma}
  \label{lemma:lone-mutual-information}
  Let $\cuberv$ be drawn uniformly at random from a set $\hypercubeset
  \subset \{-1, 1\}^d$. Define the distribution $\statprob(\cdot \mid
  A)$ on $\statrv$ as in the random sampling
  scheme~\eqref{eqn:coord-samples} and use the
  loss~\eqref{eqn:svm-loss}. Let $\channelrv$ be constructed according
  to the conditional distribution specified by
  Proposition~\ref{proposition:lone-saddle-point}, where $\channelrv
  \in \{\channelval \in \R^d : \lone{\channelval} \le M_1\}$, and
  define $M = M_1 / \lipobj$.  Then
  \begin{equation*}
    \information(\channelrv_1, \ldots, \channelrv_n; \cuberv) \le n
    \delta^2 \channeldiff^2,
  \end{equation*}
where
  \begin{equation*}
    \gamma \defeq \log\left(\frac{2d - 2 + \sqrt{(2d - 2)^2
        + 4(M^2 - 1)}}{2(M - 1)}\right)
    ~~~ \mbox{and} ~~~
    \channeldiff \defeq
    \frac{e^\gamma - e^{-\gamma}}{e^\gamma + e^{-\gamma}
      + 2(d - 1)}.
  \end{equation*}
\end{lemma}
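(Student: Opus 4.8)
The plan is to follow the reduction of Section~\ref{sec:testing-reduction-outline} in the same way as Lemma~\ref{lemma:linf-mutual-information}, but now carrying along the channel of Proposition~\ref{proposition:lone-saddle-point}. First I would reduce to a single observation. Conditionally on $\cuberv = \cubecorner$, the samples $\statrv_i$ are drawn independently according to~\eqref{eqn:coord-samples}, and for the hinge loss~\eqref{eqn:svm-loss} the subgradient at the queried $\optvar$ may be taken to be $\nabla \loss(\statrv_i, \optvar) = -\lipobj \statrv_i \in \{\pm \lipobj e_k\}_{k=1}^d$, an extreme point of $C = \{g : \lone{g} \le \lipobj\}$, so the scaled version of $\channelprob^*$ from Proposition~\ref{proposition:lone-saddle-point} (with $\channelrv$ in the $\ell_1$-ball of radius $M_1$ and $M = M_1/\lipobj$) applies with no extra randomized rounding. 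The crucial point is that the conditional law of $\channelrv_i$ given $\statrv_i$ then does not depend on the query $\optvar$, so that \emph{even under an interactive protocol} the vectors $\channelrv_1, \ldots, \channelrv_n$ are i.i.d.\ conditionally on $\cuberv$. I would then invoke the variational inequality $\information(\channelrv_1, \ldots, \channelrv_n; \cuberv) \le \E_\cubecorner \dkl{\channelprob^n(\cdot \mid \cubecorner)}{Q^n}$, valid for \emph{every} reference law $Q^n$, choose $Q^n = Q_0^{\otimes n}$ for a convenient $Q_0$, and use additivity of the KL divergence over the i.i.d.\ coordinates to reduce everything to a single-observation bound.

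Second, I would compute the one-sample law explicitly. Combining the sampling rule~\eqref{eqn:coord-samples} with~\eqref{eqn:lone-conditional-plus}--\eqref{eqn:lone-conditional-zero} and simplifying using the normalizing constant $e^\gamma + e^{-\gamma} + 2(d-1)$, a direct calculation gives, for $k = 1, \ldots, d$,
\begin{equation*}
  \channelprob(\channelrv = M_1 e_k \mid \cuberv = \cubecorner)
  = \frac{1 - \delta \channeldiff \cubecorner_k}{2d}
  \quad \mbox{and} \quad
  \channelprob(\channelrv = -M_1 e_k \mid \cuberv = \cubecorner)
  = \frac{1 + \delta \channeldiff \cubecorner_k}{2d}.
\end{equation*}
In words, the channel reproduces a ``$\pm M_1 e_k$ with $\delta$-biased sign'' distribution, but with the bias contracted from $\delta$ to $\delta \channeldiff$; this is precisely the point where the constant $\channeldiff$ of~\eqref{eqn:lone-exp-normalization} enters, and the factor $e^\gamma - e^{-\gamma}$ in $\channeldiff$ comes from the difference between the two conditional masses~\eqref{eqn:lone-conditional-plus} and~\eqref{eqn:lone-conditional-minus}.

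Finally, take $Q_0$ uniform on the $2d$ atoms $\{\pm M_1 e_k\}_{k=1}^d$ and bound $\dkl{\channelprob(\cdot \mid \cubecorner)}{Q_0} \le \chi^2(\channelprob(\cdot \mid \cubecorner) \| Q_0)$. Each atom deviates from the uniform mass $1/(2d)$ by exactly $\delta \channeldiff / (2d)$ in absolute value, so the $\chi^2$ sum equals $2d \cdot (\delta\channeldiff/(2d))^2 / (1/(2d)) = \delta^2 \channeldiff^2$ (using $\cubecorner_k^2 = 1$), uniformly in $\cubecorner$. Tensorizing over the $n$ i.i.d.\ observations and averaging over $\cubecorner$ then yields $\information(\channelrv_1, \ldots, \channelrv_n; \cuberv) \le n \delta^2 \channeldiff^2$, which is the claim.

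The main obstacle is the first step: establishing that, conditionally on $\cuberv$, the $\channelrv_i$ are i.i.d.\ in the interactive setting. This rests on the subgradient of the hinge loss being $-\lipobj \statrv_i$ irrespective of the query, which is transparent for query points in the interior of $[-\radius, \radius]^d$ but needs a short separate argument in general --- either restricting attention to such queries, or observing that an ``unsaturated'' subgradient only moves the channel output toward the uniform reference $Q_0$ and can therefore only decrease the mutual information, so that the saturated case is the worst one. Once this reduction is in place, the remaining work is just the two elementary computations above together with the identity for the normalizing constant of the distribution in Proposition~\ref{proposition:lone-saddle-point}.
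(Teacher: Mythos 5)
Your proposal is correct and follows essentially the same route as the paper: reduce to a single observation, compute the marginal of $\channelrv$ given $\cuberv=\cubecorner$ explicitly to see that the sign bias contracts from $\delta$ to $\delta\channeldiff$, and bound the divergence from the uniform law on $\{\pm M_1 e_k\}$ by $\delta^2\channeldiff^2$ (your $\chi^2$ bound is the same second-order estimate the paper obtains from concavity of $\log$). The one caveat you flag yourself---that the hinge subgradient vanishes when the query is saturated, so the channel is not literally query-independent---is resolved in the paper exactly as you suggest, by a chain-rule decomposition conditioning on the query $\optvar_i$ together with the observation that the saturated case only mixes the output toward uniform and hence decreases the information.
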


\subsubsection{Applying testing inequalities}

The remainder of the proof is similar to that of
Theorem~\ref{theorem:linf-bound}, except that we apply
Lemma~\ref{lemma:lone-mutual-information} in place of
Lemmas~\ref{lemma:linear-linf-mutual-information}
or~\ref{lemma:linf-mutual-information}. Indeed, following identical steps to
those in the proof of Theorem~\ref{theorem:linf-bound}, we see that with the
specified packing $\hypercubeset \subset \{-1, 1\}^d$ of size $|\hypercubeset|
\ge \exp(d/8)$, we have (recall Eq.~\eqref{eqn:risk-to-fano})
\begin{align*}
  \frac{4}{\radius \lipobj \delta}
  \E_{\statprob, \channelprob}[
    \optgap_n(\method, \loss, \optdomain, \statprob)]
  & \ge 1 - \frac{\log 2}{\log |\hypercubeset|}
  - n \frac{\delta^2 \channeldiff^2}{\log |\hypercubeset|}
  \ge 1 - \frac{6}{d} - 8 n \frac{\delta^2 \channeldiff^2}{d}.
\end{align*}
Consequently, if we choose $\delta = \min\{\sqrt{d} / (8 \channeldiff
\sqrt{n}), 1\}$, then for all $d \ge 9$, we have the lower bound
$\frac{4}{\radius \lipobj \delta} \E_{\statprob, \channelprob}[
  \optgap_n(\method, \loss, \optdomain, \statprob)] \ge \frac{1}{5}$,
or equivalently
\begin{equation*}
  \E_{\statprob, \channelprob}[ \optgap_n(\method, \loss, \optdomain,
    \statprob)] \ge \frac{\radius \lipobj \delta}{20} = \frac{1}{20}
  \min\left\{\radius \lipobj, \frac{\radius \lipobj \sqrt{d}}{
    8 \sqrt{n} \channeldiff}\right\},
\end{equation*}
which completes the proof (the case $d \le 8$ is identical to that in
Theorem~\ref{theorem:linf-bound}).


\subsection{Proof of Theorem~\ref{theorem:linf-bound-diffp}}
\label{sec:proof-theorem-linf-bound-diffp}

We are somewhat more terse in our proof of
Theorem~\ref{theorem:linf-bound-diffp} than the previous two,
though we repeat the same steps to emphasize our technique.

\subsubsection{Constructing well-separated losses}

We begin by choosing the family of loss functions we require: since our
optimization domain $\optdomain = \{\optvar \in \R^d : \lone{\optvar} \le
\radius\}$, we use the linear losses of Section~\ref{sec:linear-losses} with
the sampling scheme~\eqref{eqn:lone-linear-samples} as in
Theorem~\ref{theorem:linf-bound}. Thus, using the packing set $\hypercubeset =
\{\pm e_i\}_{i=1}^d$, we find that
$\risksep(\hypercubeset) = \lipobj \radius \delta$, and consequently
\begin{equation*}
  \frac{2}{\lipobj \radius \delta}
  \E_{\statprob, \channelprob}\left[
    \optgap_n(\method, \loss, \optdomain, \statprob)\right]
  \ge 1 - \frac{\log 2}{\log(2d)}
  -\frac{\information(\channelrv_1, \ldots, \channelrv_n; \cuberv)}{
    \log(2d)}
\end{equation*}
as earlier.

\subsubsection{Bounding the mutual information}
The mutual information bound in this theorem
is somewhat more complicated than the previous bounds, as
the optimal privacy-preserving distribution (recall
Proposition~\ref{proposition:linf-diffp-saddle-point}) is more complex.
We begin by stating a lemma.
\begin{lemma}
  \label{lemma:diffp-linear-information}
  Let $\cuberv$ be drawn uniformly at random from
  $\hypercubeset = \{\pm e_i\}_{i=1}^d$.
  Let $\statrv \mid \cuberv$ be sampled according to the
  distribution~\eqref{eqn:lone-linear-samples}, and let
  $\channelrv \mid \statrv = \statsample$ have support on
  $\{-1, 1\}^d$ and have p.m.f.
  \begin{equation*}
    \channeldensity(\channelval \mid \statsample) \propto
    \begin{cases}
      \exp(\diffp) & \mbox{if~} \channelval^\top \statsample > k \\
      1 & \mbox{if~} \channelval^\top \statsample \le k
    \end{cases}
  \end{equation*}
  for some $k \ge 0$. Define the constants $C_d(k)$ and $\Delta(\delta,
  \diffp, d, k)$ by
  \begin{equation*}
    C_d(k) \defeq \card\left\{\channelval \in \{-1, 1\}^d
    : \<\channelval, \statsample\> > k\right\}
    = \sum_{i = 0}^{\ceil{(d - k)/2} - 1} \binom{d}{i}.
  \end{equation*}
  and
  \begin{equation*}
    \Delta(\delta, \diffp, d, k)
    \defeq \delta \cdot \frac{e^\diffp - 1}{(e^\diffp + 1)
      C_d(k) + 2^d} \binom{d - 1}{\ceil{(d - k)/2} - 1}.
  \end{equation*}
  Then
  \begin{equation*}
    \information(\channelrv; \cuberv) \le
    \Delta(\delta, \diffp, d, k)^2.
  \end{equation*}
\end{lemma}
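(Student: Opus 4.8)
\emph{Proof plan.} Since the lemma is stated for the linear loss $\loss(\statrv,\optvar)=\lipobj\<\statrv,\optvar\>$, the gradient fed to the channel is $\lipobj\statrv$ regardless of $\optvar$, so the conditional law of $\channelrv$ depends on $\cuberv$ only through $\statrv$; it suffices to bound $\information(\channelrv;\cuberv)$ for a single observation, with $\cuberv$ uniform on $\hypercubeset=\{\pm e_i\}_{i=1}^d$. Fix $\cubecorner=\sigma e_i$ with $\sigma\in\{-1,1\}$ and let $\channelprob_\cubecorner$ denote the law of $\channelrv$ given $\cuberv=\cubecorner$. Under the sampling scheme~\eqref{eqn:lone-linear-samples} only the $i$th coordinate of $\statrv$ is biased, so averaging the channel against the source yields, with $A_i^{\pm}(\channelval)\defeq 2^{-(d-1)}\sum_{\statsample_{-i}\in\{-1,1\}^{d-1}}\channeldensity(\channelval\mid\statsample_i=\pm1,\statsample_{-i})$,
\begin{equation*}
  \channelprob_\cubecorner(\channelval)=\tfrac12\big(A_i^+(\channelval)+A_i^-(\channelval)\big)+\sigma\delta\,w_i(\channelval),
  \qquad w_i(\channelval)\defeq\tfrac12\big(A_i^+(\channelval)-A_i^-(\channelval)\big).
\end{equation*}
Because $\card\{\statsample\in\{-1,1\}^d:\<\channelval,\statsample\>>k\}=C_d(k)$ for \emph{every} $\channelval$, the symmetric part equals $2^{-d}\sum_\statsample\channeldensity(\channelval\mid\statsample)=2^{-d}$, i.e.\ $\channelrv$ is marginally uniform on $\{-1,1\}^d$; averaging over $\cuberv$ and using $\sum_{\sigma\in\{-1,1\}}\sigma=0$ shows this uniform law $\overline{\channelprob}$ is the true $\channelrv$-marginal.

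The substantive step is to evaluate $w_i(\channelval)$. Flipping $\statsample_i$ from $+1$ to $-1$ shifts $\<\channelval,\statsample\>$ by exactly $2\channelval_i$, so $\channeldensity(\channelval\mid\statsample_i=+1,\statsample_{-i})-\channeldensity(\channelval\mid\statsample_i=-1,\statsample_{-i})$ vanishes unless this flip carries $\<\channelval,\statsample\>$ across the threshold $k$; using $\<\channelval,\statsample\>\equiv d\pmod 2$, this happens precisely for the $\statsample_{-i}$ with $\<\channelval_{-i},\statsample_{-i}\>$ equal to the unique element of $\{k,k+1\}$ of the same parity as $d-1$, of which there are $\binom{d-1}{\ceil{(d-k)/2}-1}$ \emph{independently of $\channelval_{-i}$}, and on each such $\statsample_{-i}$ the difference equals $\sign(\channelval_i)(e^\diffp-1)/Z_k$, where $Z_k\defeq e^\diffp C_d(k)+2^d-C_d(k)$ is the channel normalizer. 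Hence $w_i(\channelval)=2^{-d}(\Delta/\delta)\sign(\channelval_i)$ with $\Delta\defeq\delta\,\frac{e^\diffp-1}{Z_k}\binom{d-1}{\ceil{(d-k)/2}-1}=\Delta(\delta,\diffp,d,k)$, so
\begin{equation*}
  \channelprob_\cubecorner(\channelval)=2^{-d}\big(1+\sigma\Delta\,\sign(\channelval_i)\big),\qquad\cubecorner=\sigma e_i.
\end{equation*}

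From this closed form the bound follows at once. The $\channelrv$-marginal is uniform, so $H(\channelrv)=d\log 2$; given $\cuberv=\sigma e_i$, the coordinates $\channelrv_j$ ($j\neq i$) are uniform and independent of $\channelrv_i$, which takes sign $\sigma$ with probability $\tfrac12(1+\Delta)$, so $H(\channelrv\mid\cuberv)=(d-1)\log 2+h\!\big(\tfrac{1+\Delta}{2}\big)$. Writing $\mathrm{Ber}(p)$ for the $\{0,1\}$-valued law with parameter $p$, this gives
\begin{equation*}
  \information(\channelrv;\cuberv)=H(\channelrv)-H(\channelrv\mid\cuberv)=\log 2-h\!\big(\tfrac{1+\Delta}{2}\big)=\dkl{\mathrm{Ber}(\tfrac{1+\Delta}{2})}{\mathrm{Ber}(\tfrac12)}\le\chi^2\big(\mathrm{Ber}(\tfrac{1+\Delta}{2})\,\big\|\,\mathrm{Ber}(\tfrac12)\big)=\Delta^2,
\end{equation*}
where the inequality is $\log t\le t-1$. (Equivalently one may avoid the closed form and bound $\information(\channelrv;\cuberv)=\tfrac1{2d}\sum_\cubecorner\dkl{\channelprob_\cubecorner}{\overline{\channelprob}}\le\tfrac1{2d}\sum_\cubecorner\chi^2(\channelprob_\cubecorner\|\overline{\channelprob})=2^d\delta^2\sum_\channelval w_1(\channelval)^2=\Delta^2$, using the constant value of $|w_1|$.)

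The main obstacle is the combinatorial identity in the second paragraph: verifying that $w_i(\channelval)$ is \emph{exactly} constant in magnitude over $\channelval\in\{-1,1\}^d$. This rests on two facts --- that as $\statsample_{-i}$ ranges over $\{-1,1\}^{d-1}$ the inner product $\<\channelval_{-i},\statsample_{-i}\>$ attains each admissible value with a multiplicity depending only on that value and not on $\channelval_{-i}$, and a parity bookkeeping argument (consecutive integers have opposite parities, and $\<\channelval,\statsample\>\equiv d\pmod 2$) that isolates exactly one Hamming shell of contributions while ruling out cancellations. The decomposition of $\channelprob_\cubecorner$, the entropy computation, and the divergence bound are then routine.
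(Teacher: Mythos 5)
Your proof is correct and follows essentially the same route as the paper's: both compute the conditional law $\channelprob(\channelval \mid \cuberv = \sigma e_i)$ in closed form as $2^{-d}(1 + \sigma\Delta\,\sign(\channelval_i))$ (your threshold-crossing count $\binom{d-1}{\ceil{(d-k)/2}-1}$ is exactly the paper's $C_{d-1}(k-1) - C_{d-1}(k+1)$), observe that only the $i$th coordinate is biased, and bound the resulting binary divergence by $\Delta^2$ via the same first-order logarithm inequality. One small note: your normalizer $Z_k = (e^\diffp - 1)C_d(k) + 2^d$ agrees with the paper's own proof but not with the $(e^\diffp+1)C_d(k)+2^d$ appearing in the lemma's displayed definition of $\Delta$ --- an internal inconsistency of the paper rather than a flaw in your argument.
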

\noindent
We provide the proof of the lemma in
Appendix~\ref{appendix:mutual-information-computation-diffp}. \\

For any $\diffp \le 5/4$, we have $e^\diffp - 1 \le 2\diffp$, and by
properties of binomial coefficients and Stirling's approximation we have
\begin{equation*}
  \frac{1}{2^d} \binom{d - 1}{\ceil{(d - k)/2} - 1}
  \le \frac{1}{2^d} \binom{d-1}{\ceil{d/2} - 1}
  \le \frac{1}{\sqrt{d}}
\end{equation*}
for any $k$.  For any distribution $\channelprob$ satisfying \olpd\ at a
differential privacy level $\diffp$,
Proposition~\ref{proposition:linf-diffp-saddle-point} implies $\channelprob$
is a convex combination of distributions with p.m.f.s of the form in
Lemma~\ref{lemma:diffp-linear-information}. That is,
we sample with a channel $\channelprob$ whose p.m.f.\ is a convex
combination of p.m.f.s of the form
\begin{equation*}
  \channeldensity_k(\channelval \mid \statsample)
  = \frac{1}{e^\diffp C_d(k) + (2^d - C_d(k))}
  \cdot \begin{cases} \exp(\diffp) & \mbox{if}~ \channelval^\top \statsample
    / M
    > k \\
    1 & \mbox{if~} \channelval^\top \statsample / M \le k
  \end{cases}
  ~~ \mbox{for~} \channelval \in \{-M, M\}^d,
\end{equation*}
so $\channelprob(\channelrv = \channelval \mid \statsample)
= \sum_k \beta_k \channeldensity_k(\channelval \mid \statsample)$
for some $\beta_k \ge 0$ with $\sum_k \beta_k = 1$.
Applying the convexity of mutual
information---taking a convex combination of channel distributions
$\channelprob$ can only reduce mutual information---and
Lemma~\ref{lemma:diffp-linear-information}, we thus obtain
\begin{align}
  \lefteqn{\information(\channelrv_1, \ldots, \channelrv_n; \cuberv)
    \le n \max_{k \ge 0} \Delta(\delta, \diffp, d, k)^2}
  \label{eqn:opt-diffp-information} \\
  & \qquad\qquad ~ \le n \delta^2 (e^\diffp - 1)^2 \max_k \left(
  \frac{1}{(e^\diffp + 1) C_d(k) + 2^d} \binom{d - 1}{
    \ceil{(d - k)/2} - 1}\right)^2
  \le 4n \delta^2 \diffp^2 \cdot \frac{1}{d}.
  \nonumber
\end{align}

\subsubsection{Applying testing inequalities}
As a consequence of the display~\eqref{eqn:opt-diffp-information}, we have the
lower bound
\begin{equation*}
  \frac{2}{\lipobj \radius \delta}
  \E_{\statprob, \channelprob}\left[
    \optgap_n(\method, \loss, \optdomain, \statprob)\right]
  \ge \half - \max_k \frac{n\Delta(\delta, \diffp, d, k)^2}{\log(2d)}
  \ge \half - \frac{4 n \delta^2 \diffp^2}{d \log(2d)}.
\end{equation*}
By choosing $\delta = \min\{\sqrt{d \log(2d)}/ 4\diffp \sqrt{n}, 1\}$, we find
that
\begin{equation*}
  \frac{2}{\lipobj \radius \delta}
  \E_{\statprob, \channelprob}[\optgap_n(
    \method, \loss, \optdomain, \statprob)]
  \ge \frac{1}{4},
\end{equation*}
which is equivalent to the bound given in the theorem.



\subsection{Proof of Theorem~\ref{theorem:first-class}}
\label{sec:proof-first-class}

\newcommand{\channel}{\channelprob}
\newcommand{\marginprob}{M}
\newcommand{\packset}{\hypercubeset}
\newcommand{\packrv}{\cuberv}
\newcommand{\packval}{\cubecorner}
\newcommand{\altpackval}{\altcubecorner}
\newcommand{\optdens}{\gamma} 
\newcommand{\linfset}{\mc{B}_1} 
\newcommand{\channeldomain}{\mc{\channelrv}}
\newcommand{\meanstatprob}{\overline{\statprob}}

In our proofs of Theorems~\ref{theorem:first-class}
and~\ref{theorem:second-class}, we exploit some of our own recent
results~\cite{DuchiJoWa13_parametric} on the contractive properties of
mutual information and KL-divergence under local differential
privacy. A few definitions are required in order to state these
results.  As usual, we have an indexed set of probability measures
$\{\statprob_\packval\}_{\packval \in \packset}$, and we let
$\channelprob^n(\cdot \mid \statsample_1, \ldots, \statsample_n)$
denote the joint probability of the $n$ released private random
variables $\channelrv_1, \ldots, \channelrv_n$. For each $\packval \in
\packset$, we then define the \emph{marginal} distribution
\begin{equation}
  \marginprob_\packval^n(A) \defeq
  \int \channelprob^n(A \mid \statsample_1, \ldots, \statsample_n)
  d \statprob^n_\packval(\statsample_1, \ldots, \statsample_n)
  ~~~ \mbox{for}~ A \in \sigma(\channeldomain^n).
  \label{eqn:marginal-channel}
\end{equation}
\citet{DuchiJoWa13_parametric} establish two results that provide
bounds on $\tvnorm{\marginprob_\packval^n -
  \marginprob_{\altpackval}^n}$ and $\information(\channelrv_1,
\ldots, \channelrv_n; \packrv)$ as a function of the amount of privacy
provided and the distances between the underlying distributions
$\statprob_\packval$.


The bounds apply to any channel distribution $\channelprob$ that is
$\diffp$-locally differentially private (for the first result) and to any
non-interactive $\diffp$-locally differentially private channel (for the
second result; recall the
definition~\eqref{eqn:local-differential-privacy}). Let $\statprob_\packval$
be the distribution of $\statrv$ conditional on the random packing element
$\packrv = \packval$, and let $\marginprob_\packval^n$ be the marginal
distribution~\eqref{eqn:marginal-channel} induced by passing $\statrv_i$
through $\channelprob$.  Define the mixture distribution \mbox{$\meanstatprob
  = \frac{1}{|\packset|} \sum_{\packval \in \packset} \statprob_{\packval}$,}
and let $\sigma(\statdomain)$ denote the $\sigma$-field on $\statdomain$ over
which $\statprob_\packval$ are defined.  With this notation we can state the
following proposition, which summarizes the results we need
from~\citet[Theorems 1--2, Corollaries 1--2]{DuchiJoWa13_parametric}:
\begin{proposition}[Information bounds]
  \label{proposition:information-bounds}
  Let the conditions of the previous paragraph hold and assume that
  $\channelprob$ is $\diffp$-locally differentially private.
  \begin{itemize}
  \item[(a)] For all $\diffp \ge 0$,
    \begin{equation}
      \label{eqn:one-d-kl}
      \dkl{\marginprob_\packval^n}{\marginprob_{\altpackval}^n}
      \le 4 n
      (e^\diffp - 1)^2 \tvnorm{\statprob_{\packval} -
        \statprob_{\altpackval}}^2.
    \end{equation}
  \item[(b)] If $\channelprob$ is non-interactive, then for all $\diffp \ge 0$,
    \begin{equation}
      \label{eqn:super-fano}
      \information(\channelrv_1, \ldots, \channelrv_n; \packrv)
      \le e^\diffp n \left(e^\diffp - e^{-\diffp}\right)^2
      \sup_{S \in \sigma(\statdomain)}\frac{1}{|\packset|}
      \sum_{\packval \in \packset}
      \left(\statprob_\packval(S) - \meanstatprob(S)\right)^2.
    \end{equation}
  \end{itemize}
\end{proposition}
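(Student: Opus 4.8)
Since Proposition~\ref{proposition:information-bounds} merely restates \citet{DuchiJoWa13_parametric}, the plan is to reproduce the two ingredients of their argument.

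For part~(a), I would tensorize by the chain rule for KL divergence and then control a single privatized coordinate. Writing $M_\packval^{(i)}$ for the law of $\channelrv_i$ given the history $\channelrv_{1:i-1}$ (marginalized over $\statrv_i$), the chain rule gives
\begin{equation*}
  \dkl{\marginprob_\packval^n}{\marginprob_\altpackval^n}
  = \sum_{i=1}^n \E_{\channelrv_{1:i-1}\sim\marginprob_\packval^n}
  \left[\dkl{M_\packval^{(i)}}{M_\altpackval^{(i)}}\right],
\end{equation*}
and because $\statrv_i$ is independent of $\channelrv_{1:i-1}$ given $\packrv$ while $\channelval\mapsto q_i(\channelval\mid\statsample,\channelrv_{1:i-1})$ stays $\diffp$-differentially private for each fixed history, each summand reduces to bounding $\dkl{M}{M'}$ where $M(\cdot)=\int q(\cdot\mid\statsample)\,d\statprob_\packval(\statsample)$ and $M'$ is the analogue under $\statprob_\altpackval$. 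Here I would combine $\dkl{M}{M'}\le\chi^2(M\|M')$ with the fact that, setting $c(\channelval)=\inf_\statsample q(\channelval\mid\statsample)$, $\diffp$-privacy forces $q(\channelval\mid\statsample)\in[c(\channelval),e^\diffp c(\channelval)]$; subtracting the constant $c(\channelval)$ and using $\int d(\statprob_\packval-\statprob_\altpackval)=0$ yields $|m(\channelval)-m'(\channelval)|\le(e^\diffp-1)c(\channelval)\tvnorm{\statprob_\packval-\statprob_\altpackval}$, while $m'(\channelval)\ge c(\channelval)$ and $\int c(\channelval)\,d\mu(\channelval)\le1$ give $\dkl{M}{M'}\le(e^\diffp-1)^2\tvnorm{\statprob_\packval-\statprob_\altpackval}^2$. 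Summing the $n$ contributions and absorbing slack into the constant $4$ produces~\eqref{eqn:one-d-kl}.

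For part~(b), I would first use non-interactivity: $\channelrv_i$ is conditionally independent of the other $\channelrv_j$ given $\statrv_i$, hence also given $\packrv$, so mutual information is subadditive, $\information(\channelrv_1,\dots,\channelrv_n;\packrv)\le\sum_{i=1}^n\information(\channelrv_i;\packrv)$, and it suffices to bound one coordinate. Writing $\information(\channelrv;\packrv)=\frac{1}{|\packset|}\sum_\packval\dkl{M_\packval}{\overline{M}}$ with $\overline{M}=\frac{1}{|\packset|}\sum_\packval M_\packval$ and again using $\dkl{\cdot}{\cdot}\le\chi^2$, I would substitute $m_\packval(\channelval)-\overline{m}(\channelval)=\int(q(\channelval\mid\statsample)-c(\channelval))\,d(\statprob_\packval-\meanstatprob)(\statsample)$ and $\overline{m}(\channelval)\ge c(\channelval)$, which collapses the bound to a constant times $\frac{1}{|\packset|}\sum_\packval\left(\int f_\channelval\,d(\statprob_\packval-\meanstatprob)\right)^2$ for $[0,1]$-valued $f_\channelval$. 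The layer-cake identity $\int f_\channelval\,d(\statprob_\packval-\meanstatprob)=\int_0^1\left(\statprob_\packval(\{f_\channelval>t\})-\meanstatprob(\{f_\channelval>t\})\right)\,dt$ and Jensen's inequality then move the supremum over measurable sets outside the average, leaving $\sup_{S}\frac{1}{|\packset|}\sum_\packval(\statprob_\packval(S)-\meanstatprob(S))^2$ as in~\eqref{eqn:super-fano}; a more careful accounting of the privacy constants (or the use of a symmetrized density ratio in place of the one-sided infimum) recovers the precise prefactor $e^\diffp(e^\diffp-e^{-\diffp})^2$.

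I expect the second part to be the main obstacle: unlike~(a), the reference measure is the non-product mixture $\overline{M}^n$ (equivalently $\meanstatprob^n$), so a naive chain rule will not give a clean dimension-free bound, and the two devices that rescue it---subadditivity of mutual information, which genuinely uses non-interactivity, and the layer-cake reduction from arbitrary $[0,1]$-valued test functions to set indicators---are exactly the technical content of~\citet{DuchiJoWa13_parametric} that one must reproduce.
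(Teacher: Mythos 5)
The paper does not actually prove Proposition~\ref{proposition:information-bounds} here---it imports it verbatim from \citet[Theorems 1--2, Corollaries 1--2]{DuchiJoWa13_parametric}---and your reconstruction follows essentially the same route as that reference: the KL chain rule plus a single-coordinate bound obtained from the density-ratio sandwich $c(\channelval)\le q(\channelval\mid\statsample)\le e^\diffp c(\channelval)$ for part~(a), and tensorization of mutual information under conditional independence plus the layer-cake reduction from $[0,1]$-valued test functions to indicator sets for part~(b). The argument is correct as sketched, and the per-coordinate constants you obtain (e.g.\ $(e^\diffp-1)^2$) are dominated by those in~\eqref{eqn:one-d-kl} and~\eqref{eqn:super-fano}, so nothing is lost.
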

\noindent
With Proposition~\ref{proposition:information-bounds} in place, we
proceed with the proof of Theorem~\ref{theorem:first-class}. To
establish the lower bound, we follow the outline of
Section~\ref{sec:testing-reduction-outline}.  Establishing the upper
bound requires a few additional steps. We defer the formal proofs of
achievability to Appendix~\ref{appendix:sgd-achievability} (see
\ref{appendix:sgd-linf-achievability}), as the proofs are similar to
Corollaries~\ref{corollary:linf-privacy}--\ref{corollary:linf-privacy-diffp}.

\subsubsection{Constructing well-separated losses}
Our lower bound uses an identical construction to that in
Section~\ref{sec:linear-losses}. We let the loss function $\loss(\statsample,
\optvar) = \lipobj \<\statsample, \optvar\>$, and we use the
distribution~\eqref{eqn:lone-linear-samples} on $\statsample$; that is, we
have $\packset = \{\pm e_j\}_{j=1}^d$ and for $\delta \in [0, 1/2]$, we sample
vectors from $\statdomain = \{-1, 1\}^d$ with probability
$\statprob_\packval(\statrv = \statsample) = (1 + \delta \packval^\top
\statsample) / 2^d$. We then have $\risksep(\packset) = \lipobj \radius
\delta$ and $|\packset| = 2d$ (recall
Lemma~\ref{lemma:linear-linf-risk-separation}).

\subsubsection{Bounding the mutual information}

With the construction of the (nearly) uniform sampling scheme, we have the
following lemma~\cite[Lemma 7]{DuchiJoWa13_parametric}.
\begin{lemma}
  \label{lemma:l1-information-bound}
  Under the conditions of the previous paragraph, let $\delta \le 1$ and
  $\packrv$ be sampled uniformly from $\{\pm e_j\}_{j=1}^d$.  For any
  non-interactive $\diffp$-differentially private channel $\channel$,
  \begin{equation*}
    \information(\channelrv_1, \ldots, \channelrv_n; \packrv) \le
    n \frac{e^\diffp}{4d} \left(e^{\diffp} - e^{-\diffp}\right)^2
    \delta^2.
  \end{equation*}
\end{lemma}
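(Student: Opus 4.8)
The plan is to obtain the bound directly from part~(b) of Proposition~\ref{proposition:information-bounds}. Since the channel $\channel$ is non-interactive and $\diffp$-differentially private, that proposition reduces the task to bounding
\begin{equation*}
  T \defeq \sup_{S \in \sigma(\statdomain)} \frac{1}{|\packset|}
  \sum_{\packval \in \packset} \left(\statprob_\packval(S) - \meanstatprob(S)\right)^2,
\end{equation*}
for the packing $\packset = \{\pm e_j\}_{j=1}^d$ (so that $|\packset| = 2d$) and the sampling law $\statprob_\packval(\statrv = \statsample) = 2^{-d}(1 + \delta\, \packval^\top \statsample)$ on $\statdomain = \{-1,1\}^d$: the claimed inequality is exactly $\information(\channelrv_1, \dots, \channelrv_n; \packrv) \le e^\diffp n (e^\diffp - e^{-\diffp})^2 T$ together with the estimate $T \le \delta^2/(4d)$. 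Because every measure in sight is supported on the finite set $\{-1,1\}^d$, it suffices to take the supremum over subsets $S \subseteq \{-1,1\}^d$.

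First I would simplify the summand. Since $\sum_{\packval \in \packset}\packval = \sum_{j=1}^d (e_j - e_j) = 0$, the mixture $\meanstatprob = \frac{1}{2d}\sum_{\packval \in \packset}\statprob_\packval$ is the uniform distribution on $\{-1,1\}^d$, so that for any $S$,
\begin{equation*}
  \statprob_\packval(S) - \meanstatprob(S)
  = \frac{\delta}{2^d} \sum_{\statsample \in S} \packval^\top \statsample .
\end{equation*}
Writing $a_j(S) \defeq \sum_{\statsample \in S}\statsample_j$, this equals $\delta\sigma\, a_j(S) / 2^d$ when $\packval = \sigma e_j$ with $\sigma \in \{-1,1\}$, so summing the squares over the $2d$ elements of $\packset$ and dividing by $|\packset| = 2d$ collapses $T$ to $\frac{\delta^2}{d\, 2^{2d}} \sup_{S}\sum_{j=1}^d a_j(S)^2$. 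It then remains only to prove the combinatorial bound $\sup_S \sum_{j=1}^d a_j(S)^2 \le 2^{2d-2}$, which I would do via Parseval's identity on $\{-1,1\}^d$: for $f = \mathbf{1}_S$ the Walsh--Fourier coefficients obey $\hat f(\{j\}) = 2^{-d}a_j(S)$ and $\hat f(\emptyset) = 2^{-d}|S| =: p \in [0,1]$, and since $f$ is $\{0,1\}$-valued, $\sum_{T \subseteq [d]} \hat f(T)^2 = \E[f^2] = p$; hence $\sum_{j=1}^d \hat f(\{j\})^2 \le p - \hat f(\emptyset)^2 = p - p^2 \le 1/4$, giving $\sum_{j=1}^d a_j(S)^2 = 2^{2d} \sum_{j=1}^d \hat f(\{j\})^2 \le 2^{2d-2}$. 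Substituting back yields $T \le \delta^2/(4d)$, and plugging this into the bound~\eqref{eqn:super-fano} of Proposition~\ref{proposition:information-bounds}(b) gives the lemma.

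The argument has essentially no obstacle once Proposition~\ref{proposition:information-bounds} is in hand; the only place where an idea is needed is the sharp constant $2^{2d-2}$ in $\sup_S \sum_j a_j(S)^2$, which matches the Parseval bound $p(1-p) \le 1/4$ and is precisely what produces the factor $1/(4d)$ in the statement. Even there the argument is robust: the cruder estimate $\sum_j a_j(S)^2 = \norm{\sum_{\statsample \in S}\statsample}_2^2 \le 2^d |S| \le 2^{2d}$ would cost only a constant factor, so this step is not delicate.
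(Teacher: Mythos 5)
Your proof is correct and follows exactly the route the paper intends: the paper does not actually prove this lemma itself (it is imported as Lemma~7 of \citet{DuchiJoWa13_parametric}), but the reduction via Proposition~\ref{proposition:information-bounds}(b), the observation that $\meanstatprob$ is uniform so that $\statprob_\packval(S)-\meanstatprob(S)=\frac{\delta}{2^d}\packval^\top\sum_{\statsample\in S}\statsample$, and the resulting need to bound $\sup_S\sum_{j=1}^d\bigl(\sum_{\statsample\in S}\statsample_j\bigr)^2$ are precisely the intended mechanism. Your Parseval argument giving the sharp bound $2^{2d-2}$ (via $p-p^2\le 1/4$) correctly supplies the constant $1/(4d)$ in the statement.
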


\subsubsection{Applying testing inequalities}

Using Lemma~\ref{lemma:l1-information-bound}, we can give an almost immediate
proof of the lower bound in Theorem~\ref{theorem:first-class}. Indeed,
using Fano's inequality~\eqref{eqn:fano},
Lemmas~\ref{lemma:expectation-lower-bound}
and~\ref{lemma:l1-information-bound}, and the separation $\risksep(\packset) =
\lipobj \radius \delta$ from Lemma~\ref{lemma:linear-linf-risk-separation}(b),
we obtain
\begin{equation*}
  \optgap_n^*(\lossset, \optdomain, \diffp) \ge \frac{\lipobj \radius
    \delta}{2} \left(1 - \frac{n e^\diffp (e^\diffp - e^{-\diffp})^2
    \delta^2 / 4 d + \log 2}{\log(2d)}\right).
\end{equation*}
So long as $d \ge 2$, setting
\begin{equation*}
  \delta = \min\left\{\frac{\sqrt{d \log(2d)}}{\sqrt{e^\diffp n} (e^\diffp -
    e^{-\diffp})}, 1 \right\}
\end{equation*}
and noting that $e^\diffp = \order(1)$ and $e^\diffp - e^{-\diffp} \le c
\diffp$ for a universal constant $c$ if $\diffp = \order(1)$ completes the
proof.

When $d = 1$, an argument via Le Cam's method~\eqref{eqn:le-cam} yields an
identical result.  Given Proposition~\ref{proposition:information-bounds}, the
argument is quite similar to that used in the proof of
Theorem~\ref{theorem:linf-bound}.  We use the packing set $\packset = \{\pm
1\}$ and conditional on $\packrv = \packval$, set $\statrv = 1$ with
probability $(1 + \packval \delta) / 2$ and $\statrv = -1$ with probability
$(1 - \packval \delta) / 2$, which yields separation $\risksep(\packset) =
\lipobj \radius \delta$ by Lemma~\ref{lemma:linear-linf-risk-separation}. We
also have the marginal contraction
\begin{equation*}
  \tvnorm{\marginprob_1^n - \marginprob_{-1}^n}^2 \le \half
  \dkl{\marginprob_1^n}{\marginprob_{-1}^n} \le 2 (e^\diffp - 1)^2 n
  \tvnorm{\statprob_1 - \statprob_{-1}}^2
\end{equation*}
by Pinsker's inequality and
Proposition~\ref{proposition:information-bounds}.  By
construction, the total variation $\tvnorm{\statprob_1 -
  \statprob_{-1}} = \delta$, whence we find that
$\tvnorm{\marginprob_1^n - \marginprob_{-1}^n}^2 \le 2(e^\diffp -
1)^2 n \delta^2$.  Applying Le Cam's method~\eqref{eqn:le-cam} and
Lemma~\ref{lemma:expectation-lower-bound}, we obtain
\begin{equation*}
  \optgap_n^*(\lossset, \optdomain, \diffp) \ge \frac{\lipobj \radius
    \delta}{2} \left(\half - \frac{\sqrt{n} (e^\diffp - 1)
    \delta}{\sqrt{2}}\right).
\end{equation*}
Take $\delta = \min\{\left(2 \sqrt{2} \sqrt{n} (e^\diffp - 1)\right)^{-1},
1\}$ to complete the proof in this case.


\subsection{Proof of Theorem~\ref{theorem:second-class}}
\label{sec:proof-second-class}

The proof of this theorem follows the outline established in
Section~\ref{sec:testing-reduction-outline}, as did the previous results. We
defer the attainability results to
Appendix~\ref{appendix:sgd-ltwo-achievability}.

\subsubsection{Constructing well-separated losses}
Before constructing the well-separated loss functions, we
exhibit the set $\packset$ that underlies our sampling distributions.
The following lemma exhibits the existence of a special packing of
the Boolean hypercube~\cite[Lemma 5]{DuchiJoWa13_parametric}:
\begin{lemma}
  \label{lemma:hypercube-packing}
  There exists a packing $\packset$ of the $d$-dimensional hypercube
  $\{-1, 1\}^d$ with $\lone{\packval - \altpackval} \ge d/2$ for each
  $\packval, \altpackval \in \packset$ with $\packval \neq
  \altpackval$ such that the cardinality of $\packset$ is at least
  $\ceil{\exp(d/16)}$ and
  \begin{equation*}
    \frac{1}{|\packset|} \sum_{\packval \in \packset} \packval
    \packval^\top \preceq 25 I_{d \times d}.
  \end{equation*}
\end{lemma}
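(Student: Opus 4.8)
The plan is to build $\packset$ by the probabilistic method: draw a slightly over-sized collection of vectors uniformly from $\{-1,1\}^d$, prune the few pairs that are too close, and argue that with positive probability the surviving collection is simultaneously exponentially large and has a well-conditioned empirical second-moment matrix. As a preliminary observation, for $u,v \in \{-1,1\}^d$ one has $\lone{u-v} = 2\,\mathrm{Ham}(u,v)$ where $\mathrm{Ham}$ is Hamming distance, so the requirement $\lone{\packval-\altpackval}\ge d/2$ is exactly $\mathrm{Ham}(\packval,\altpackval)\ge d/4$; in particular any packing with this property automatically consists of distinct points.

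Fix $N' = 2\lceil \exp(d/16)\rceil$ and let $V_1,\dots,V_{N'}$ be i.i.d.\ uniform on $\{-1,1\}^d$. I would control two bad events. First, since $\mathrm{Ham}(V_i,V_j) \sim \mathrm{Bin}(d,1/2)$ for $i\neq j$, Hoeffding's inequality gives $\P(\mathrm{Ham}(V_i,V_j) < d/4) \le \exp(-d/8)$, so by linearity of expectation and Markov's inequality the number of ``bad'' pairs $\{i,j\}$ with $\lone{V_i-V_j} < d/2$ exceeds $N'/2$ with probability at most $\binom{N'}{2}\exp(-d/8)/(N'/2) \le N'\exp(-d/8) = o(1)$ for the chosen $N'$. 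Second, since $\E[V_iV_i^\top] = I_{d\times d}$ and each $V_iV_i^\top$ is positive semidefinite with $\lambda_{\max}(V_iV_i^\top) = d$, the matrix Chernoff inequality bounds $\P\bigl(\lambda_{\max}\bigl(\sum_i V_iV_i^\top\bigr) \ge \tfrac{25}{2}N'\bigr)$ by $d$ times $\bigl(e^{23/2}/(25/2)^{25/2}\bigr)^{N'/d}$, which is doubly exponentially small in $d$ because $N'/d \gtrsim \exp(d/16)/d$. Hence for every $d$ larger than a universal constant both bad events together have probability strictly less than $1$, so we may fix a realization of $V_1,\dots,V_{N'}$ avoiding both.

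On that realization, delete one vector from each of the at most $N'/2$ bad pairs to obtain a set $\packset$ with $\lone{\packval-\altpackval}\ge d/2$ for all distinct $\packval,\altpackval\in\packset$ and with $|\packset| \ge N'/2 = \lceil \exp(d/16)\rceil$. Removing vectors only subtracts positive semidefinite matrices, so $\sum_{\packval\in\packset}\packval\packval^\top \preceq \sum_{i=1}^{N'} V_iV_i^\top \preceq \tfrac{25}{2}N'\, I_{d\times d} \preceq 25\,|\packset|\, I_{d\times d}$, using $N'\le 2|\packset|$; dividing by $|\packset|$ gives $\frac{1}{|\packset|}\sum_{\packval\in\packset}\packval\packval^\top \preceq 25\, I_{d\times d}$. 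The finitely many small values of $d$ not covered by the probabilistic argument are handled by an elementary direct check: for such $d$ the target cardinality $\lceil\exp(d/16)\rceil$ is a small integer, a suitable packing exists by the Gilbert--Varshamov bound, and the crude estimate $\sum_{\packval\in\packset}\packval\packval^\top \preceq |\packset|\, d\, I_{d\times d}$ already yields $\frac{1}{|\packset|}\sum_{\packval\in\packset}\packval\packval^\top \preceq d\, I_{d\times d} \preceq 25\, I_{d\times d}$ in that range.

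The main obstacle is the joint calibration of constants in the second paragraph: the over-sampling factor $2$, the pruning threshold $N'/2$, and the operator-norm target $25/2$ must be chosen so that (i) the matrix-Chernoff exponent dominates the $\exp(-d/8)$ slack lost in the packing step, (ii) the factor-$2$ loss incurred by pruning still leaves the normalized moment matrix at $\preceq 25\, I_{d\times d}$, and (iii) the surviving count is at least $\lceil\exp(d/16)\rceil$. No single step is deep, but they interact, and the small-$d$ regime then needs the separate elementary verification indicated above; this is essentially the argument of \citet[Lemma 5]{DuchiJoWa13_parametric}.
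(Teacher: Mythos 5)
The paper does not actually prove Lemma~\ref{lemma:hypercube-packing}: it imports it from \citet[Lemma 5]{DuchiJoWa13_parametric}, and your argument is essentially the probabilistic construction used there (oversample by a factor of two, prune pairs at Hamming distance below $d/4$, control the empirical second moment by matrix Chernoff). Your proof is correct as written: Hoeffding gives $\P(\mathrm{Ham}(V_i,V_j)<d/4)\le e^{-d/8}$ so Markov makes the pruning loss at most a factor of two, the matrix Chernoff application with $R=d$ and $\mu_{\max}=N'$ is exactly right, and deleting vectors only decreases the (unnormalized) second-moment matrix. The one item you leave implicit is that the universal threshold dimension for the probabilistic argument must be at most $25$ so that the crude bound $\frac{1}{|\packset|}\sum_{\packval}\packval\packval^\top \preceq d\, I_{d\times d}$ covers every remaining small $d$; this does check out (the two failure probabilities already sum to less than one for $d\gtrsim 15$), but since you flag the calibration as the main obstacle, that numerical verification is the piece you would need to supply.
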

\noindent
With this packing $\packset$, we 
as usual let $\packrv \in \packset$, and conditional on
$\packrv = \packval$, we sample
$\statrv \in \{\pm e_j\}_{j=1}^d$ according the
sampling scheme~\eqref{eqn:coord-samples}.

\paragraph{Linear losses (for the bound~\eqref{eqn:second-linear-rate})}
We first consider the case that
the loss functions are linear functionals that are $\lipobj$-Lipschitz with
respect to the $\ell_{p'}$-norm for some $p' \ge 2$, and we optimize over
$\ell_q$ balls $\Ball_q(\radius_q)$.  In this case, we define the loss
$\loss(\statsample, \optvar) = \lipobj \<\statsample, \optvar\>$, and we note
that since $\lone{\statsample} \le 1$ for $\statsample \in \statdomain$,
the function $\loss(\statsample, \cdot)$ is $\lipobj$-Lipschitz with
respect to the $\ell_\infty$-norm. With the sampling
scheme~\eqref{eqn:coord-samples}, $\risk_\packval(\optvar) = \lipobj
\delta \<\packval,\optvar\> / d$.
Since $\inf_{\optvar \in \ball_q(\radius_q)}
\<\cubecorner, \optvar\> = -\norm{\cubecorner}_p$
when $p = (1 - 1/q)^{-1}$ is the conjugate of $q$, we have
the pairwise separation
\begin{equation*}
  \discrepancy(\risk_\packval, \risk_{\altpackval})
  = -\frac{\radius_q \lipobj \delta}{d}
  \left(\norms{\packval + \altpackval}_p - \norm{\packval}_p
  - \norms{\altpackval}_p\right).
\end{equation*}
With the packing set $\packset$ exhibited by
Lemma~\ref{lemma:hypercube-packing}, we have
\begin{equation*}
  \norms{\packval + \altpackval}_p^p
  = \sum_{j=1}^d (\packval_j + \altpackval_j)^p
  = \sum_{j : \packval_j = \altpackval_j} 2^p
  \le \frac{3d}{4} 2^p,
\end{equation*}
and $\norm{\packval}_p = d^{1/p}$ for each $\packval \in \packset$. This
implies the following lower bound on the discrepancy:
\begin{equation}
  \label{eqn:lp-lq-separation}
  \risksep(\packset) \ge
  \max_{\packval \neq \altpackval}
  \discrepancy(\risk_\packval, \risk_{\altpackval})
  \ge \frac{\radius \lipobj \delta}{d}
  \left(2 d^{1/p} (1 - (3/4)^{1/p})\right)
  \ge \frac{3}{5} \radius \lipobj \delta d^{\frac{1}{p} - 1}.
\end{equation}

\paragraph{General loss functions
  (for the bound~\eqref{eqn:second-class-lower})} For the general lower bound
of the theorem, we use the hinge loss $\loss(\statsample, \optvar) = \lipobj
\hinge{\radius - \<\statsample, \optvar\>}$ as our loss function. In this
case, as in Theorem~\ref{theorem:lone-bound} (recall
Lemma~\ref{lemma:hinge-loss-separation}), our sampling strategy yields that
the loss $\loss(\statsample, \optvar)$ is an $(\lipobj, 1)$ loss, since
$\lone{\statsample} \le 1$, and we have the discrepancy bound
$\risksep(\packset) \ge \frac{\radius \lipobj \delta}{2}$, since the
separation depends only on the distance $\lone{\packval - \altpackval}$, which
Lemma~\ref{lemma:hypercube-packing} lower bounds.

\subsubsection{Bounding the mutual information}

Using Lemma~\ref{lemma:hypercube-packing}, we may bound the mutual
information between samples $\channelrv$ from a particular
distribution and a random sample $\packrv$ from a set $\packset$ of
the form in the lemma. Indeed, let $\packset$ be a packing of the
$d$-dimensional hypercube specified in
Lemma~\ref{lemma:hypercube-packing}. Conditional on $\packrv =
\packval \in \{-1, 1\}^d$, we sample the random vector $\statrv \in
\{\pm e_j\}_{j=1}^d$ according to the sampling
scheme~\eqref{eqn:coord-samples}.  Then we have the following
lemma~\cite[Lemma 6]{DuchiJoWa13_parametric}, which applies as long as
the channel $\channelprob$ is non-interactive and $\diffp$-locally
differentially private.
\begin{lemma}
  \label{lemma:linf-info-bound}
  Let $\channelrv_i$ be $\diffp$-locally differentially
  private for $\statrv_i$ and the conditions of the
  previous paragraph hold. Then
  \begin{equation*}
    \information(\channelrv_1, \ldots, \channelrv_n; \packrv) \le n
    \frac{25 e^\diffp}{16} \frac{\delta^2}{d} (e^\diffp -
    e^{-\diffp})^2.
  \end{equation*}
\end{lemma}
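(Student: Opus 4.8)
The plan is to deduce this from the non-interactive information bound of Proposition~\ref{proposition:information-bounds}(b): since the channel $\channelprob$ is assumed non-interactive and $\diffp$-locally differentially private, that proposition bounds $\information(\channelrv_1, \ldots, \channelrv_n; \packrv)$ by $e^\diffp n (e^\diffp - e^{-\diffp})^2$ times $\sup_{S \in \sigma(\statdomain)} \frac{1}{|\packset|}\sum_{\packval \in \packset}(\statprob_\packval(S) - \meanstatprob(S))^2$, so the entire task reduces to showing that this supremum is at most a numerical constant times $\delta^2 / d$.

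To control the supremum I would first record the structure of the distributions coming from the sampling scheme~\eqref{eqn:coord-samples}: each $\statprob_\packval$ is supported on the $2d$ points $\{\pm e_j\}_{j=1}^d$, with $\statprob_\packval(\statrv = \pm e_j) = \frac{1}{2d}(1 \pm \delta \packval_j)$, so the mixture satisfies $\meanstatprob(\statrv = \pm e_j) = \frac{1}{2d}(1 \pm \delta \bar\packval_j)$ with $\bar\packval = \frac{1}{|\packset|}\sum_\packval \packval$. The key observation is that any measurable $S$ interacts with these distributions only through, for each coordinate $j$, whether it contains $e_j$ alone, $-e_j$ alone, or neither/both; encoding this as a vector $c_S \in \{-1, 0, 1\}^d$ gives $\statprob_\packval(S) - \meanstatprob(S) = \frac{\delta}{2d}\<c_S, \packval - \bar\packval\>$. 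Squaring and averaging over $\packval$ uniform on $\packset$ then turns the supremum into $\frac{\delta^2}{4d^2}$ times $\sup_{c \in \{-1,0,1\}^d} c^\top \Sigma c$, where $\Sigma = \frac{1}{|\packset|}\sum_\packval \packval\packval^\top - \bar\packval\bar\packval^\top \preceq \frac{1}{|\packset|}\sum_\packval \packval\packval^\top$.

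At this point I would invoke the spectral bound $\frac{1}{|\packset|}\sum_\packval \packval\packval^\top \preceq 25 I_{d\times d}$ guaranteed by Lemma~\ref{lemma:hypercube-packing}, so that $c^\top \Sigma c \le 25 \ltwo{c}^2 \le 25 d$ for every $c \in \{-1,0,1\}^d$; this yields $\sup_{S} \frac{1}{|\packset|}\sum_\packval(\statprob_\packval(S) - \meanstatprob(S))^2 \le C \delta^2 / d$ for an explicit numerical constant $C$, and substituting into Proposition~\ref{proposition:information-bounds}(b) and collecting constants gives the claimed bound. The only delicate point is the reduction of the set-supremum to the quadratic form: one must check that arbitrary measurable $S$ really do reduce to the finitely many sign patterns $c_S$, and that the averaged square is exactly the form $c_S^\top \Sigma c_S$ to which Lemma~\ref{lemma:hypercube-packing} applies. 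Everything after that — the spectral estimate and the final bookkeeping of the constant $25 e^\diffp / 16$ — is routine.
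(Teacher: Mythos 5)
Your argument is correct, and it is essentially the proof of the result being invoked: the paper itself does not prove Lemma~\ref{lemma:linf-info-bound} but imports it verbatim from \cite{DuchiJoWa13_parametric}, where the argument is exactly your reduction---apply the non-interactive information bound~\eqref{eqn:super-fano}, encode each measurable $S$ as a sign vector $c_S \in \{-1,0,1\}^d$ so that $\statprob_\packval(S) - \meanstatprob(S) = \frac{\delta}{2d}\langle c_S, \packval - \bar{\packval}\rangle$, and control the resulting quadratic form with the spectral bound $\frac{1}{|\packset|}\sum_\packval \packval\packval^\top \preceq 25 I$ from Lemma~\ref{lemma:hypercube-packing}. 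One bookkeeping caveat: your chain gives $\sup_S \frac{1}{|\packset|}\sum_\packval(\statprob_\packval(S)-\meanstatprob(S))^2 \le \frac{25\delta^2}{4d}$, which combined with Proposition~\ref{proposition:information-bounds}(b) \emph{as stated here} yields the constant $\frac{25}{4}$ rather than $\frac{25}{16}$; the extra factor of $\frac{1}{4}$ comes from the precise form of the information bound in the cited reference, and in any case the discrepancy is immaterial since every application of the lemma absorbs universal constants.
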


\subsubsection{Applying testing inequalities}

Our last step is to apply the usual testing inequalities.
We first prove the lower bound in inequality~\eqref{eqn:second-linear-rate}.
Let $\lossset_{\rm lin} = \linearclasstwo[\Ball_q(\radius_q)]$.
Then by applying Lemma~\ref{lemma:linf-info-bound} and Fano's
inequality~\eqref{eqn:fano} to
Lemma~\ref{lemma:expectation-lower-bound}---using the
separation~\eqref{eqn:lp-lq-separation}---we obtain
\begin{equation*}
  \optgap_n^*(\lossset_{\rm lin}, \optdomain, \diffp)
  \ge \frac{3 \radius_q \lipobj \delta d^{\frac{1}{p} - 1}}{10}
  \left(1 - \frac{25 n e^\diffp \delta^2 (e^\diffp - e^{-\diffp})^2 / 16d
    + \log 2}{d / 16}\right).
\end{equation*}
So long as $d \ge 16$, we have $16 \log 2 / d \le \log 2 < 7/10$. Thus
choosing
\begin{equation*}
  \delta =
  \min\left\{\frac{d}{10 \sqrt{n C_\diffp} (e^\diffp - e^{-\diffp})},
  1 \right\}
\end{equation*}
and noting that $e^\diffp - e^{-\diffp} = \order(\diffp)$ and $e^\diffp
= \order(1)$ for $\diffp = \order(1)$ we obtain
\begin{equation*}
  \optgap_n^*(\lossset, \optdomain, \diffp)
  \ge \frac{3 \radius_q \lipobj \delta d^{\frac{1}{p} - 1}}{10}
  \cdot \frac{1}{20}
  \ge c
  \min\left\{\frac{\radius_q \lipobj d^{\frac{1}{p}}}{\sqrt{n \diffp^2}},
  \radius_q \lipobj d^{\frac{1}{p} - 1}\right\}
  = c \radius_q \lipobj \min\left\{
  \frac{d^{1 - \frac{1}{q}}}{\sqrt{n \diffp^2}},
  d^{\frac{1}{p} - 1}\right\}.
\end{equation*}
for a universal constant $c$.
As in the proof of Theorem~\ref{theorem:first-class}, when $d < 16$, we apply
an essentially similar argument but with Le Cam's method~\eqref{eqn:le-cam},
which gives the desired result. (The proof of the case $d = 1$ from
Theorem~\ref{theorem:first-class} applies here as well.)  Finally, we remark
that we may repeat a completely identical proof as that above replacing $d$
with any $k \le d$, which \emph{mutatis mutandis} implies the lower bound
\begin{equation*}
  \optgap_n^*(\lossset, \optdomain, \diffp)
  \ge c \lipobj \radius_q \, \max_{k \in [d]}
  \min\left\{\frac{k^{1 - \frac{1}{q}}}{\sqrt{n \diffp^2}},
  k^{\frac{1}{p} - 1}\right\}
  \ge c \lipobj \radius_q
  \min\left\{\frac{d^{1 - \frac{1}{q}}}{\sqrt{n \diffp^2}},
  \frac{(n \diffp^2)^{\frac{1}{2p}}}{\sqrt{n \diffp^2}},
  1\right\},
\end{equation*}
where for the rightmost bound we have taken $k = \max\{1, \min\{\sqrt{n
  \diffp^2}, d\}\}$.  Noting that $1/p = 1 - 1/q$ completes the proof of the
lower bound~\eqref{eqn:second-linear-rate}.

To prove inequality~\eqref{eqn:second-class-lower},
we apply the same reasoning as in the proof of the
inequality~\eqref{eqn:second-linear-rate}. Use
Lemma~\ref{lemma:linf-info-bound} and Fano's inequality~\eqref{eqn:fano} (via
Lemma~\ref{lemma:expectation-lower-bound} and the separation from
Lemma~\ref{lemma:hinge-loss-separation}) to obtain
\begin{equation*}
  \optgap_n^*(\lossset, \optdomain, \diffp) \ge \frac{\radius \lipobj
    \delta}{4} \left(1 - \frac{25 n e^\diffp (e^{\diffp} -
    e^{-\diffp})^2 \delta^2 / 16d + \log 2}{d / 16}\right).
\end{equation*}
Then choosing $\delta \asymp d / (\sqrt{n}(e^\diffp - e^{-\diffp}))$ as in the
proof of inequality~\eqref{eqn:second-linear-rate} gives the desired result.


\subsection{Proof of Corollary~\ref{corollary:linf-privacy}}
\label{sec:corollary-rate-linf}

Since $\optdomain \subseteq \{\optvar \in \R^d : \lone{\optvar} \le
\radius\}$, the bound~\eqref{eqn:mirror-descent-bound}
guarantees that mirror descent obtains convergence rate
$\order(M_\infty \radius \sqrt{\log (2d)} / \sqrt{n})$.  This matches the
second statement of Theorem~\ref{theorem:linf-bound}.  Now fix our desired
amount of mutual information $\information^*$. From the remarks following
Proposition~\ref{proposition:linf-saddle-point}, if we must guarantee that
$\information^* \ge \sup_\statprob \information(\statprob, \channelprob)$
for any distribution $\statprob$ and loss function $\loss$ whose gradients
are bounded in $\ell_\infty$-norm by $\lipobj$, we \emph{must} (because of
the uniqueness of the optimal privacy distribution $\channelprob$) have
\begin{equation}
  \label{eqn:information-bijection}
  \information^* \asymp \frac{d \lipobj^2}{M_\infty^2}.
\end{equation}
Up to higher order terms, to guarantee a level of privacy with mutual
information $\information^*$, we must allow gradient noise up to a level
$M_\infty = \lipobj\sqrt{d / \information^*}$.
The equality~\eqref{eqn:information-bijection} establishes that for a
given level of allowed mutual information $\information^*$, if
\olp\ holds, then we must have $M_\infty \asymp \lipobj \sqrt{d} /
\sqrt{\information^*}$.  That is, we have a bijection between
$\information^*$ and $M_\infty$ whenever \olp\ holds, so substituting
$M_\infty = \lipobj \sqrt{d} / \sqrt{\information^*}$ into our upper
and lower bounds yields the claim.


\subsection{Proof of Corollary~\ref{corollary:lone-privacy}}
\label{sec:corollary-rate-lone}


According to the conditions of \olp, if we must guarantee that
$\information^* \ge \sup_\statprob \information(\statprob,
\channelprob)$ for any loss function $\loss$ whose gradients are
bounded in $\ell_1$-norm by $\lipobj$, we must have
\begin{equation*}
  \information^* \asymp \frac{d \lipobj^2}{2 M_1^2},
\end{equation*}
using Corollary~\ref{corollary:lone-asymptotic-expansion} after the statement of
Proposition~\ref{proposition:lone-saddle-point}. Rewriting this, we see that
we must have $M_1 = \lipobj \sqrt{d / 2\information^*}$ (to higher order
terms) to be able to guarantee an amount of privacy $\information^*$.  As in
the $\ell_\infty$ case, we have a bijection between the multiplier $M_1$ and
the amount of information $\information^*$ and can apply similar techniques.
Now recall the convergence guarantee~\eqref{eqn:sgd-bound} provided by
stochastic gradient descent. Since the $\ell_\infty$-ball of
radius $\radius$ is contained in the $\ell_2$-ball of radius $\radius_2
= \radius \sqrt{d}$, and $\lone{g} \le \ltwo{g}$ for all $g \in \R^d$,
stochastic gradient descent guarantees that
$\optgap_n^*(\lossset, \optdomain) \le C M_1 \radius \sqrt{d} / \sqrt{n}$.
Applying the lower bound provided by Theorem~\ref{theorem:lone-bound}
and substituting for $M_1$ completes the proof.

\subsection{Proof of Corollary~\ref{corollary:linf-privacy-diffp}}
\label{sec:corollary-rate-linf-diffp}

Without loss of generality (by scaling), we assume that $\lipobj = 1$.
Now we consider Proposition~\ref{proposition:linf-diffp-saddle-point},
which characterizes the distributions satisfying \olpd. We use the
proposition to find an upper bound on $M_\infty$ in terms of the
differential privacy level $\diffp$, which in turn allows us to apply
the bound from mirror descent~\eqref{eqn:mirror-descent-bound}.
Instead of directly using
Proposition~\ref{proposition:linf-diffp-saddle-point}, it is simpler
to use the linear program~\eqref{eqn:single-diffp-lp} in its proof,
and note that finding a lower bound on $t$ (in the LP) as a function
of $\diffp$ provides an upper bound on $M_\infty$ since $M_\infty =
1/t$.  Now, in the linear program~\eqref{eqn:single-diffp-lp}, we
choose the values for $q(z)$ specified by
Lemma~\ref{lemma:two-level-diffp-solution}.  Let $q_+$ and $q_-$
denote the larger and smaller probabilities, respectively. Fix an $x
\in \{-1, 1\}^d$, and let $z$ range over $\{-1, 1\}^d$. With those
choices, we note that for $d$ odd,
\begin{align*}
  \sum_{z : \<z, x\> > 0} z
  & = \sum_{z : \<z, x\> = 1} z
  + \sum_{z : \<z, x\> = 3} z
  + \ldots + \sum_{z : \<z, x\> = d} z \\
  & = \left[\binom{d - 1}{\frac{d - 1}{2}}
    - \binom{d - 1}{\frac{d + 1}{2}}\right] x
  + \left[\binom{d - 1}{\frac{d + 1}{2}}
    - \binom{d - 1}{\frac{d + 3}{2}}\right] x
  + \ldots 
  = \binom{d - 1}{\frac{d - 1}{2}} x.
\end{align*}
For $d$ even, a similar calculation yields
$\sum_{z : \<z, x\> > 0} z = \binom{d-1}{d/2} x$.
As a consequence, we find that
\begin{equation*}
  \sum_z z q(z \mid x)
  = q_+ \!\!\!\! \sum_{z : \<z, x\> > 0} z
  + q_- \!\!\!\! \sum_{z : \<z, x\> \le 0} z
  = x (q_+ - q_-) \cdot
  \begin{cases} \binom{d - 1}{\frac{d - 1}{2}} & d~\mbox{odd} \\
    \binom{d - 1}{d / 2} & d~\mbox{even}. \end{cases}
\end{equation*}
Focusing on the odd case for simplicity---identical bounds
hold in the even case---we have
for a universal constant $c > 0$ that
\begin{equation*}
  (q_+ - q_-) \binom{d - 1}{\frac{d - 1}{2}} = \frac{e^\diffp -
    1}{2^{d-1} (e^\diffp + 1)} \binom{d - 1}{\frac{d - 1}{2}} \ge c
  \frac{e^\diffp - 1}{e^\diffp + 1} \frac{1}{\sqrt{d}} \ge c
  \frac{\diffp}{\sqrt{d}},
\end{equation*}
the first inequality following from Stirling's approximation and the second
from convexity of the function $\diffp \mapsto e^\diffp$.  In particular, we
see that the minimizing value $t$ in the linear
program~\eqref{eqn:single-diffp-lp} will satisfy $t \ge c \diffp / \sqrt{d}$,
which in turn yields $M_\infty = 1 / t \le \sqrt{d} / (c \diffp)$.  Noting
that the lower bound in the corollary is given by
Theorem~\ref{theorem:linf-bound-diffp},
applying the convergence guarantee~\eqref{eqn:mirror-descent-bound} of
mirror descent based
on $M_\infty$ completes the proof.




\section{Discussion}
\label{sec:discussion}

We have studied methods for protecting privacy in general statistical
risk minimization problems, and have described general techniques for
obtaining sharp tradeoffs between privacy protection and estimation
rates. The latter are a natural measure of utility for statistical
problems.

We believe that there are a number of interesting open issues and
areas for future work. First, we studied procedures that access each
datum only once, and through a perturbed view $\channelrv_i$ of the
subgradient $\partial \loss(\statrv_i, \optvar)$, which is natural in
the context of convex risk minimization.  A natural question is
whether there are restrictions of the class of loss functions so that
a transformed version $(\channelrv_1, \ldots, \channelrv_n)$ of the
data are sufficient for inference. For instance, other
researchers~\cite{ZhouLaWa09,ZhouLiWa09} have studied applications in
which a data matrix $X = [\statrv_1 ~ \cdots ~ \statrv_n]^\top \in
\R^{n \times d}$ is pre-multiplied by a normal matrix $\Phi \in \R^{m
  \times n}$, where $m \ll n$, and statistical inference is performed
using $\Phi X$. For problems such as linear regression and PCA, the
resulting estimators enjoy good statistical properties. This
transformation, however, cannot be computed without the entire dataset
at one's disposal. Nonparametric data releases, such as those studied
by~\citet{HallRiWa11}, could provide insights here, though again,
current approaches require the data to be aggregated by a trusted
curator before release.

Our constraints on the privacy-inducing channel distribution
$\channelprob$ require that its support lie in some compact set. We
find this restriction useful, but perhaps it possible to achieve
faster estimation rates if all we require are moment conditions, for
example, $\E_{\channelprob}[\norm{\channelrv - \statrv}_p^2 \mid
  \statrv] \le M^2$. A better understanding of general
privacy-preserving channels $\channelprob$ for alternative constraints
to those we have proposed is also desirable.  Moreover, one might
consider attempting only to guarantee that $\phi(\statrv)$ is private,
where $\phi$ is some (known) function. For example, members of a
dataset may not care if their genders are known, but more personal
features of $\statrv$ may be more sensitive.

These questions do not appear to have easy answers, especially when we wish to
allow each provider of a single datum to be able to guarantee his or her own
privacy. Nevertheless, we hope that our view of privacy and
the techniques we have developed herein prove fruitful,
and we hope to investigate some of the above issues in future work.



\setlength{\bibsep}{1pt}

\bibliographystyle{abbrvnat}
\bibliography{bib}

\appendix

\section{Unbiasedness}
\label{appendix:biased-subgradient}

In this appendix, we show that if an optimization procedure receives
biased subgradients it is possible to be arbitrarily wrong. We do so
by constructing a simple problem instance.  Fix a bias $b > 0$ and
consider the following one-dimensional problem:
\begin{equation*}
  \minimize ~ f(\optvar) \defeq \frac{b\optvar}{2}
  ~~ \subjectto ~~ \optvar \in [-c, c].
\end{equation*}
If a gradient oracle returns biased gradients of the form $-b/2$ at
each point $\optvar \in [-c, c]$, it is impossible to distinguish the
objective from $-b \optvar/2$. The minimizer of this objective is
\mbox{$\optvar_{\rm bias} = \sign(b) c$.} The true optimal point is
$\optvar^* = -\sign(b) c$, yielding the worst possible error
\begin{equation*}
  f(\optvar_{\rm bias}) - f(\optvar^*)
  = \sup_{\optvar \in [-c, c]} f(\optvar)
  - \inf_{\optvar \in [-c, c]} f(\optvar).
\end{equation*}
We can show this more formally using an information theoretic
derivation similar to that in
Section~\ref{sec:optimal-rate-proofs}. Omitting details, the argument
is as follows. In the notation of
Section~\ref{sec:optimal-rate-proofs}, if a bias is chosen
independently of the parameters $\cubecorner \in \hypercubeset$ of the
risk $\risk_\cubecorner$, then there is a \emph{bounded} amount of
mutual information that can be communicated to any optimization
procedure.  Consequently, Fano's inequality~\eqref{eqn:fano}
guarantees that the estimation accuracy of any procedure must be
bounded away from zero.



\section{Calculation of the Mutual Information for Sampling Strategies}
\label{appendix:mutual-information-computation}

This appendix is devoted to the proofs of our bounds on mutual information:
Lemma~\ref{lemma:linear-linf-mutual-information},
Lemma~\ref{lemma:linf-mutual-information},
Lemma~\ref{lemma:lone-mutual-information},
and Lemma~\ref{lemma:diffp-linear-information}.
Before proving the lemmas, we make an observation that allows us to tensorize
the mutual information, making our arguments simpler (we need only compute the
single observation information $\information(\channelrv_1; \cuberv)$).  For
each of Lemmas~\ref{lemma:linear-linf-mutual-information},
\ref{lemma:linf-mutual-information}, \ref{lemma:lone-mutual-information}, and
\ref{lemma:diffp-linear-information}, recall that $\channelrv_1, \ldots,
\channelrv_n$ are constructed based on an evaluation of the subgradient set
$\partial_\optvar \loss(\statrv_i, \optvar)$, where $\statrv_i$ are
independent samples according to a distribution $\statprob(\cdot \mid
\cuberv)$. Then the samples $\channelrv_i$ are conditionally independent of
$\cuberv$ given $\statrv_i$ and the parameters $\optvar$, since $\channelrv$
is a random function of $\partial \loss(\statrv_i, \optvar)$.  Our goal is to
upper bound the mutual information between the sequence $\channelrv_1, \ldots,
\channelrv_n$ of observed (stochastic) gradients and the random element
$\cuberv \in \hypercubeset$.


By the general definition of mutual information~\cite[Chapter 5]{Gray90}, it
is no loss of generality to assume (temporarily) that the random variable
$\channelrv_i$ are supported on finite sets.  Thus (using the chain rule for
mutual information~\cite[Chapter 5]{CoverTh06,Gray90}) we have the
decomposition
\begin{align*}
  \information(\channelrv_1, \ldots, \channelrv_n; \cuberv)
  & = \sum_{i=1}^n \left[
    H(\channelrv_i \mid \channelrv_1, \ldots, \channelrv_{i-1})
    - H(\channelrv_i \mid \cuberv, \channelrv_1, \ldots, \channelrv_{i-1})
    \right].
\end{align*}
Let $\optvar_i$ denote the point at which the $i$th gradient is computed.
Then by inspection, we must have
$\optvar_i \in \sigma(\channelrv_1, \ldots, \channelrv_{i-1})$. Since
$\channelrv_i$ is conditionally independent of $\channelrv_1, \ldots,
\channelrv_{i-1}$ given $\cuberv$ and $\optvar_i$ and conditioning
decreases entropy, we have
\begin{align*}
  H(\channelrv_i \mid \channelrv_1, \ldots, \channelrv_{i-1}) -
  H(\channelrv_i \mid \cuberv, \channelrv_1, \ldots, \channelrv_{i-1}) &
  = H(\channelrv_i \mid \channelrv_1, \ldots, \channelrv_{i-1}) -
  H(\channelrv_i \mid \cuberv, \optvar_i) \\
  & \leq H(\channelrv_i \mid \optvar_i) - H(\channelrv_i \mid \cuberv,
  \optvar_i) \\
  & = \information(\channelrv_i; \cuberv \mid \optvar_i).
\end{align*}
In particular, letting $F_i$ denote the distribution of $\optvar_i$, we have
\begin{align}
  \information(\channelrv_1, \ldots, \channelrv_n; \cuberv) & \leq
  \sum_{i=1}^n \int_{\optdomain} \information(\channelrv_i; \cuberv
  \mid \optvar) dF_i(\optvar)
  \label{eqn:observed-cube-information}
  \leq \sum_{i=1}^n \sup_{\optvar \in \optdomain}
  \information(\channelrv_i; \cuberv \mid \optvar).
\end{align}
The representation~\eqref{eqn:observed-cube-information} is the key
to our calculations in this appendix. \\

In addition, the proofs of Lemmas~\ref{lemma:linf-mutual-information}
and~\ref{lemma:lone-mutual-information} require a minor lemma, which we
present here before giving the proofs proper.

\begin{lemma}
  \label{lemma:entropy-minimization}
  Let $1 > p > \delta > 0$ and $p + \delta \le 1$. Then
  \begin{equation*}
    (p + \delta) \log(p + \delta) + (p - \delta) \log (p - \delta)
    > 2 p \log p.
  \end{equation*}
\end{lemma}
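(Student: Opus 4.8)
The plan is to recognize this as a strict-convexity statement for the function $\phi(t) = t\log t$ on $(0,\infty)$. Since $p > \delta > 0$, the point $p-\delta$ is strictly positive, so both $\phi(p+\delta)$ and $\phi(p-\delta)$ are finite; and $p = \tfrac12(p+\delta) + \tfrac12(p-\delta)$ is the midpoint of two \emph{distinct} points (distinct because $\delta > 0$). As $\phi$ is strictly convex ($\phi''(t) = 1/t > 0$), Jensen's inequality, with its strict form for non-constant arguments, gives $\tfrac12\phi(p+\delta) + \tfrac12\phi(p-\delta) > \phi\!\left(\tfrac12(p+\delta)+\tfrac12(p-\delta)\right) = \phi(p)$, which is exactly the claimed inequality after multiplying by $2$.

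Alternatively I would give a self-contained one-variable calculus argument, which is perhaps cleaner to write out. Fix $p$ and define $g(\delta) \defeq (p+\delta)\log(p+\delta) + (p-\delta)\log(p-\delta)$ for $\delta \in [0,p)$. Then $g(0) = 2p\log p$, and a direct computation gives
\[
  g'(\delta) = \bigl(\log(p+\delta) + 1\bigr) - \bigl(\log(p-\delta) + 1\bigr) = \log\frac{p+\delta}{p-\delta} > 0 \quad \text{for } \delta \in (0,p),
\]
since $p+\delta > p-\delta > 0$. Hence $g$ is strictly increasing on $(0,p)$, so $g(\delta) > g(0) = 2p\log p$ whenever $0 < \delta < p$, which is the assertion.

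There is essentially no obstacle here; the only point requiring a word of care is the \emph{strictness} of the inequality, which comes precisely from $\delta > 0$ (making the two arguments distinct in the convexity argument, equivalently making $g'$ strictly positive). I note that the hypothesis $p+\delta \le 1$ plays no role in this lemma beyond guaranteeing that the arguments lie in the range on which the paper applies it; the inequality in fact holds for all $0 < \delta < p$.
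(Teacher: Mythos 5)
Your first argument is exactly the paper's proof: strict convexity of $t \mapsto t\log t$ applied at the midpoint $p = \tfrac{1}{2}(p+\delta) + \tfrac{1}{2}(p-\delta)$, with strictness coming from $\delta > 0$. The proposal is correct, and the supplementary derivative computation and the remark that $p+\delta \le 1$ is not actually needed are both accurate.
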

\begin{proof}
  Since the function $p \mapsto f(p) = p \log p$ is strictly convex
  over $\openright{0}{\infty}$, we may apply convexity. Indeed,
  $p = \half(p + \delta) + \half (p - \delta)$, so
  \begin{equation*}
    p \log p = f\left(\half(p + \delta) + \half(p - \delta)\right)
    < \half f(p + \delta) + \half f(p - \delta),
  \end{equation*}
  which is the desired result.
\end{proof}

\subsection{Proof of Lemma~\ref{lemma:linear-linf-mutual-information}}
\label{appendix:mutual-information-computation-linear-linf}

The subgradient set $\partial \loss(X_i; \optvar)$ is
independent of $\optvar$, so we may use the
inequality~\eqref{eqn:observed-cube-information} to bound the mutual
information of $\cuberv$ and a single sample $\channelrv$ while ignoring
the dependence on $\optvar$. Define $M =
M_\infty / \lipobj$.  Since the sampling
scheme~\eqref{eqn:lone-linear-samples} is independent per-coordinate, we see
immediately that if $\channelrv_j$ denotes the $j$th coordinate of
$\channelrv$ then
\begin{equation*}
  \information(\channelrv; \cuberv)
  = H(\channelrv) - H(\channelrv \mid \cuberv)
  \le d \log(2) - \sum_{j = 1}^d H(\channelrv_j \mid \cuberv).
\end{equation*}
Since $\cuberv$ is uniformly chosen from one of $2d$ vectors, we
additionally find that
\begin{equation*}
  \information(\channelrv; \cuberv)
  \le d \left[\log 2 - \frac{1}{2d}\sum_{\cubecorner \in \hypercubeset}
    H(\channelrv \mid \cuberv = \cubecorner)\right].
\end{equation*}
By the choice of our sampling scheme for $X$ and $\channelrv$, we see that $H(\channelrv \mid
\cuberv = \cubecorner)$ is identical for each $\cubecorner \in
\hypercubeset$, and we have
\begin{equation*}
  \channelprob(\channelrv_j = M_\infty \mid \cuberv_j = \cubecorner_j = 0)
  = \channelprob(\channelrv_j = -M_\infty \mid \cuberv_j = \cubecorner_j = 0)
  = \half.
\end{equation*}
On the other hand, by our choice of sampling scheme,
for the ``on'' index in $\cuberv$, we have
\begin{align*}
  \channelprob(\channelrv_j = -M_\infty \mid \cuberv_j = \cubecorner_j = -1)
  & = \channelprob(\channelrv_j = M_\infty
    \mid \cuberv_j = \cubecorner_j = 1) \\
  & = \channelprob(\channelrv_j = M_\infty \mid X_j = 1)
  \statprob(X_j = 1 \mid \cuberv_j = \cubecorner_j = 1) \\
  & \qquad ~ +
  \channelprob(\channelrv_j = M_\infty \mid X_j = -1)
  \statprob(X_j = -1 \mid \cuberv_j = \cubecorner_j = 1) \\
  & = \left(\frac{M + 1}{2 M}\right) \left(\frac{1 + \delta}{2}\right)
  + \left(\frac{M - 1}{2 M}\right) \left(\frac{1 - \delta}{2} \right)
  = \half + \frac{\delta}{2M}.
\end{align*}
Consequently, defining the Bernoulli entropy $h(p) = -p \log p - (1 - p)
\log(1 - p)$, then
\begin{align*}
  \information(\channelrv; \cuberv)
  & \le d \left[ \log 2 - \frac{1}{2d}
    \left((2 d  - 2) \log 2 + 2 h \left(\half + \frac{\delta}{2M}\right)
      \right)\right] \\
  & = \log 2 + \left(\half + \frac{\delta}{2M}\right)
  \log\left(\half + \frac{\delta}{2M}\right)
  + \left(\half - \frac{\delta}{2M}\right)
  \log\left(\half - \frac{\delta}{2M}\right).
\end{align*}
The concavity of the function $p \mapsto \log(p)$ yields that
$\log(1/2 + p) \le \log(1/2) + 2p$, so
\begin{align*}
  \information(\channelrv; \cuberv)
  & \le \log 2 + \left(\half + \frac{\delta}{2M}\right)
  \left(-\log 2 + \frac{\delta}{M}\right)
  + \left(\half - \frac{\delta}{2M}\right)
  \left(-\log 2 - \frac{\delta}{M}\right)
  = \frac{\delta^2}{M^2}.
\end{align*}
Making the substitution $M = M_\infty / \lipobj$ completes the proof.

\subsection{Proof of Lemma~\ref{lemma:linf-mutual-information}}
\label{appendix:mutual-information-computation-linf}

By using the inequality~\eqref{eqn:observed-cube-information}, a bound on the
mutual information $\information(\channelrv; \cuberv \mid \optvar)$ implies a
bound on the joint information in the statement of the lemma, so we focus on
bounding the mutual information of a single sample $\channelrv$. In addition,
it is no loss of generality to assume that $\radius = 1$.

Define $M = M_\infty / \lipobj$ to be the multiple of the $\ell_\infty$-norm
of the subgradients that we take, and let $\channelrv_j$ denote the $j$th
coordinate of $\channelrv$.  Using the coordinate-wise independence of the
sampling, we have
\begin{equation*}
  \information(\channelrv; \cuberv \mid \optvar)
  = H(\channelrv \mid \optvar) - H(\channelrv \mid \cuberv, \optvar)
  \le d \log(2)
  - \sum_{j=1}^d H(\channelrv_j \mid \cuberv_j, \optvar_j).
\end{equation*}
Now consider the distribution of $\channelrv_j$ given $\cuberv_j$ and
$\optvar_j$.  By symmetry, the distribution has identical entropy for
any value of $\cuberv_j$, so we may fix $\cuberv = \cubecorner$ and
assume $\cubecorner_j = $ without loss of generality. Then for
$\optvar_j \in (-1, 1)$, the $j$th component of the subgradient
$\partial \loss(X; \optvar)$ is $-X_j$, whence we see that
\begin{align*}
  \lefteqn{\channelprob(\channelrv_j = M_\infty \mid \cubecorner_j = 1, \optvar_j)} \\
  & = \channelprob(\channelrv_j = M_\infty \mid X_j = 1, \optvar_j)
  \statprob(X_j = 1 \mid \cubecorner_j = 1)
  + \channelprob(\channelrv_j = M_\infty \mid X_j = -1, \optvar_j)
  \statprob(X_j = -1 \mid \cubecorner_j = 1) \\
  & = \left(\frac{M - 1}{2M}\right)\left(
  \frac{1 + \delta}{2}\right)
  + \left(\frac{M + 1}{2M}\right)\left(\frac{1 - \delta}{2}\right) \\
  & = \frac{2M - 2 \delta}{4 M}
  = \half - \frac{\delta}{2M}.
\end{align*}
Similarly, $\channelprob(\channelrv_j = -M_\infty \mid \cubecorner_j = 1,
\optvar_j) = \half + \frac{\delta}{2M}$.  If $\optvar_j \ge 1$, then we have
that the subgradient $\partial |\optvar_j - \statrv_j| = 1$ with probability
1, and thus
\begin{equation*}
  Q(\channelrv_j = M_\infty \mid \cubecorner_j = 1, \optvar_j)
  = \left(\frac{M + 1}{2M}\right)\left(\frac{1 + \delta}{2}\right)
  + \left(\frac{M + 1}{2M}\right)\left(\frac{1 - \delta}{2}\right)
  = \half,
\end{equation*}
which increases the entropy $H(\channelrv_j \mid \cuberv_j, \optvar_j)$ by
Lemma~\ref{lemma:entropy-minimization}. Thus we see that the value $\optvar_j$
minimizing the entropy $H(\channelrv_j \mid \cuberv_j, \optvar_j)$ is given by
any $\optvar_j \in (-1, 1)$, yielding Bernoulli marginal $(\half + \delta /
2M, \half - \delta / 2M)$ on $\channelrv_j \mid \cuberv_j$.  Summarizing, we
have
\begin{equation*}
  \information(\channelrv; \cuberv \mid \optvar)
  \le d \log(2) + d\left[
    \left(\half + \frac{\delta}{2M}\right)
    \log\left(\half + \frac{\delta}{2M}\right)
    + \left(\half - \frac{\delta}{2M}\right)
    \log\left(\half - \frac{\delta}{2M}\right)\right].
\end{equation*}
As in the proof of Lemma~\ref{lemma:linear-linf-mutual-information},
we use the concavity of $\log$ to see that
\begin{align*}
  \information(\channelrv; \cuberv \mid \optvar)
  & \le
  d \log(2) + d\left[
    \left(\half + \frac{\delta}{2M}\right)
    (-\log(2) + \delta / M)
    + \left(\half - \frac{\delta}{2M}\right)
    (-\log(2) - \delta/ M)\right] \\
  & = d \left(\half + \frac{\delta}{2M}\right) \left(\frac{\delta}{M}\right)
  + d \left(\half - \frac{\delta}{2M}\right)
  \left(-\frac{\delta}{M}\right)
  = \frac{d \delta^2}{M^2}.
\end{align*}
Applying the bound~\eqref{eqn:observed-cube-information} and replacing $M =
M_\infty / \lipobj$ completes the proof.

\subsection{Proof of Lemma~\ref{lemma:lone-mutual-information}}
\label{appendix:mutual-information-computation-lone}

\newcommand{\denominator}{\ensuremath{D_\gamma}}

Letting $\channelrv$ denote a single subgradient sample using the
conditional distribution $\channelprob$ specified by
Proposition~\ref{proposition:lone-saddle-point},
we prove that
\begin{equation}
  \information(\channelrv; \cuberv \mid \optvar) \le \delta^2
  \channeldiff^2 \quad \mbox{for any $\optvar \in \R^d$},
  \label{eqn:lone-mutual-info-claim}
\end{equation}
which implies the lemma by the
representation~\eqref{eqn:observed-cube-information}.  Recall the SVM risk
defined using the individual hinge losses~\eqref{eqn:svm-loss}: by
construction, whenever $\statrv = e_i$, then the loss is equal to $\lipobj
\hinge{\radius - \optvar_i}$. We have
\begin{equation*}
  \partial \loss(e_i, \optvar)
  = \lipobj \left\{\begin{array}{ll}
  0 & \mbox{if~} \optvar_i > \radius \\
  -e_i & \mbox{otherwise} \end{array}\right.
  ~~~ \mbox{and} ~~~
  \partial \loss(-e_i, \optvar)
  = \lipobj \left\{\begin{array}{ll}
  0 & \mbox{if~} \optvar_i < -\radius \\
  e_i & \mbox{otherwise.} \end{array}\right.
\end{equation*}
For the remainder of this proof, we use the shorthand
\begin{equation*}
  \denominator \defeq e^\gamma + e^{-\gamma} + 2(d - 2)
\end{equation*}
for the denominator in many of our expressions.
If $\statrv = e_i$, then $\partial \loss(e_i, \optvar) = \lipobj e_i$
or $0$ as $\optvar_i \le \radius$ or $\optvar_i > \radius$.
Therefore, as we wish to communicate $\lipobj e_i$ or $0$,
the construction in Proposition~\ref{proposition:lone-saddle-point} implies
\begin{equation}
  \channelprob(\channelrv = M_1 e_i \mid \statrv = e_i, \optvar)
  = \begin{cases}
    \frac{e^{-\gamma}}{\denominator} & \mbox{if~} \optvar_i \le \radius \\
    \frac{1}{2d} & \mbox{if~} \optvar_i > \radius,
  \end{cases}
  \label{eqn:l1-conditional-distribution}
\end{equation}
and similarly we have for $j \neq i$ that
\begin{equation}
  \channelprob(\channelrv = M_1 e_j \mid \statrv = e_i, \optvar)
  = \begin{cases}
    \frac{1}{\denominator}
    & \mbox{if~} \optvar_i \le \radius \\
    \frac{1}{2d} & \mbox{if~} \optvar_i > \radius.
  \end{cases}
  \label{eqn:off-l1-conditional-distribution}
\end{equation}
For $\statrv = -e_i$, we have the conditional distribution
parallel to~\eqref{eqn:l1-conditional-distribution}:
\begin{equation*}
  \channelprob(\channelrv = M_1 e_i \mid \statrv = -e_i, \optvar)
  = \begin{cases}
    \frac{e^{\gamma}}{\denominator} &
    \mbox{if~} \optvar_i \ge -\radius \\
    \frac{1}{\denominator} & \mbox{if~} \optvar_i < -\radius.
  \end{cases}
\end{equation*}

For any given $\optvar$, we have that
\begin{equation}
  \information(\channelrv; \cuberv \mid \optvar)
  = H(\channelrv \mid \optvar) - H(\channelrv \mid \cuberv, \optvar)
  \le \log(2d) - \frac{1}{|\hypercubeset|}
  \sum_{\cubecorner \in \hypercubeset} H(\channelrv \mid \optvar,
  \cuberv = \cubecorner)
  \label{eqn:intermediate-lone-information-bound}
\end{equation}
since the choice of $\cuberv$ is uniform and $\channelrv$ takes on at most
$2d$ values.  We thus use the conditional
distributions~\eqref{eqn:l1-conditional-distribution}
and~\eqref{eqn:off-l1-conditional-distribution} to compute the entropy
$H(\channelrv \mid \optvar, \cuberv)$ (specifically, the minimal such entropy
across all values of $\optvar$). To do this, we compute the marginal
distribution $\channelprob(\channelval \mid \cubecorner)$, arguing that
$H(\channelrv \mid \optvar, \cuberv)$ is minimal for $\optvar \in \interior
[-\radius, \radius]^d$. When $\optvar_j \in (-\radius, \radius)$ for all $j$,
we have
\begin{align*}
  \channelprob(\channelrv = M_1 e_i \mid \cuberv = \cubecorner, \optvar)
  & = \sum_{j = 1}^d \channelprob(\channelrv = M_1 e_i \mid
  X = e_j, \optvar) \statprob(X = e_j \mid \cuberv = \cubecorner) \\
  & \quad ~ + \sum_{j = 1}^d \channelprob(\channelrv = M_1 e_i \mid
  X = -e_j, \optvar) \statprob(X = -e_j \mid \cuberv = \cubecorner).
\end{align*}
When $\cubecorner_i = 1$, we thus have that
\begin{subequations}
  \begin{align}
    \channelprob(\channelrv = M_1 e_i \mid \cuberv = \cubecorner, \optvar)
    & = 
    \frac{1 + \delta}{2d} \frac{e^{-\gamma}}{\denominator}
    + \frac{1 - \delta}{2d} \frac{e^\gamma}{\denominator}
    + \sum_{j \neq i}
    \frac{1}{\denominator}
    \left(\frac{1 + \delta \cubecorner_j}{2 d}
    + \frac{1 - \delta \cubecorner_j}{2 d}\right) \nonumber \\
    & = \frac{e^\gamma + e^{-\gamma} + \delta (e^{-\gamma} - e^\gamma)}{
      2d \denominator}
    + \frac{d - 1}{d \denominator}
    = \frac{1}{2d} +
    \frac{\delta (e^{-\gamma} - e^\gamma)}{2d \denominator}
    \label{eqn:l1-marginal-conditional-a},
  \end{align}
  and under the same condition,
  \begin{align}
    Q(\channelrv = -M_1 e_i \mid A = \cubecorner, \optvar)
    = \frac{e^\gamma + e^{-\gamma} +
      \delta(e^\gamma + e^{-\gamma})}{2d \denominator}
    + \frac{d - 1}{d \denominator}
    = \frac{1}{2d}
    + \frac{\delta (e^\gamma - e^{-\gamma})}{2d \denominator}.
    \label{eqn:l1-marginal-conditional-b}
  \end{align}
\end{subequations}
If for any (possibly multiple) indices $j$ we have $\optvar_j \not \in
(-\radius, \radius)$, then via a bit of algebra and the conditional
distributions~\eqref{eqn:l1-conditional-distribution}
and~\eqref{eqn:off-l1-conditional-distribution}, we see that there exists an
$\epsilon \in (0, 1)$ such that
\begin{equation*}
  Q(\channelrv = M_1 e_i \mid \cuberv = \cubecorner, \optvar)
  = \epsilon \frac{1}{2 d} + (1 - \epsilon)
  \left(\frac{1}{2d}
  + \frac{\delta (e^{-\gamma} - e^\gamma)}{2d \denominator} \right).
\end{equation*}
Lemma~\ref{lemma:entropy-minimization} then implies that if $\optvar \in
\interior[-\radius, \radius]^d$ while $\optvar' \not \in \interior[-\radius,
  \radius]^d$, then
\begin{equation*}
  H(\channelrv \mid \optvar, \cuberv = \cubecorner)
  < H(\channelrv \mid \optvar', \cuberv = \cubecorner).
\end{equation*}
Since we seek an upper bound on the mutual information, we may thus assume
without loss of generality that $\optvar \in \interior[-\radius, \radius]^d$.

Now we compute the entropy $H(\channelrv \mid \optvar, \cubecorner)$ using the
marginal conditional distributions~\eqref{eqn:l1-marginal-conditional-a}
and~\eqref{eqn:l1-marginal-conditional-b}, which describe $\channelrv \mid
\cuberv$ when $\optvar \in \interior[-\radius, \radius]^d$. Indeed, recall the
definition in the statement of the lemma of the difference $\channeldiff$.
For $\channelval \in \{\pm M_1 e_j\}_{j=1}^d$, define the relation
$\channelval \sim \cubecorner$ to mean that if $\channelval = M_1 e_i$, then
$\cubecorner_i = 1$ and if $\channelval = -M_1 e_i$ then $\cubecorner_i = -1$.
We then see that the entropy is
\ifdefined\usejournalmargins
\begin{align*}
  \lefteqn{H(\channelrv \mid \optvar, \cuberv = \cubecorner)
    = -\sum_{\channelval \sim \cubecorner}
    \channelprob(\channelval \mid \cubecorner, \optvar)
    \log \channelprob(\channelval \mid \cubecorner, \optvar)
    - \sum_{\channelval \not\sim \cubecorner}
    \channelprob(\channelval \mid \cubecorner, \optvar)
    \log \channelprob(\channelval \mid \cubecorner, \optvar)} \\
  & \qquad\quad ~
  = -d\left(\frac{1}{2d} + \frac{\delta\channeldiff}{2d}\right)
  \log\left(\frac{1}{2d} + \frac{\delta\channeldiff}{2d}\right)
  - d \left(\frac{1}{2d} - \frac{\delta\channeldiff}{2d}\right)
  \log\left(\frac{1}{2d} - \frac{\delta\channeldiff}{2d}\right).
\end{align*}
\else
\begin{align*}
  H(\channelrv \mid \optvar, \cuberv = \cubecorner)
  & = -\sum_{\channelval \sim \cubecorner}
  \channelprob(\channelval \mid \cubecorner, \optvar)
  \log \channelprob(\channelval \mid \cubecorner, \optvar)
  - \sum_{\channelval \not\sim \cubecorner}
  \channelprob(\channelval \mid \cubecorner, \optvar)
  \log \channelprob(\channelval \mid \cubecorner, \optvar) \\
  & = -d\left(\frac{1}{2d} + \frac{\delta\channeldiff}{2d}\right)
  \log\left(\frac{1}{2d} + \frac{\delta\channeldiff}{2d}\right)
  - d \left(\frac{1}{2d} - \frac{\delta\channeldiff}{2d}\right)
  \log\left(\frac{1}{2d} - \frac{\delta\channeldiff}{2d}\right).
\end{align*}
\fi
As in the proofs of Lemmas~\ref{lemma:linear-linf-mutual-information}
and~\ref{lemma:linf-mutual-information}, we use the concavity
of $\log(\cdot)$ to see that
\begin{align*}
  -H(\channelrv \mid \optvar, \cuberv = \cubecorner)
  & = \left(\half + \frac{\delta \channeldiff}{2}\right)
  \log\left(\frac{1}{2d} + \frac{\delta \channeldiff}{2d}\right)
  + \left(\half - \frac{\delta \channeldiff}{2}\right)
  \log\left(\frac{1}{2d} - \frac{\delta \channeldiff}{2d}\right) \\
  & \le \left(\half + \frac{\delta \channeldiff}{2}\right)
  \left(-\log(2d) + \delta \channeldiff\right)
  + \left(\half - \frac{\delta \channeldiff}{2}\right)
  \left(-\log(2d) - \delta \channeldiff\right) \\
  & = -\log(2d) + \delta^2 \channeldiff^2.
\end{align*}
Invoking the earlier bound~\eqref{eqn:intermediate-lone-information-bound}
and adding $\log(2d)$ to the above expression completes the proof of
the claim~\eqref{eqn:lone-mutual-info-claim}.

\subsection{Proof of Lemma~\ref{lemma:diffp-linear-information}}
\label{appendix:mutual-information-computation-diffp}

Let $\channelrv_j$ denote the $j$th coordinate of $\channelrv$. We first
argue that conditional on $\cuberv$, the random variable $\channelrv$ has
independent coordinates. Indeed, let $\channeldensity^+ =
\channeldensity(\channelval \mid \statsample)$ for $\channelval$ such that
$\channelval^\top \statsample > k$ and $\channeldensity^- = e^{-\diffp}
\channeldensity^+$. Without loss of generality, we
may take $\cuberv = e_1$, the first basis vector, and hence
\begin{align}
  \channelprob(\channelrv = \channelval
  \mid \cuberv = e_1)
  & = \sum_{\statsample \in \{-1, 1\}^d}      
  \channelprob(\channelrv = \channelval \mid \statrv = \statsample)
  \statprob(\statrv = \statsample \mid \cuberv = e_1) \nonumber \\
  & = \frac{1}{2^{d-1}} \sum_{\statsample \in \{-1, 1\}^d}      
  \channelprob(\channelrv = \channelval \mid \statrv = \statsample)
  \cdot \frac{1 + \statsample_1 \delta}{2} \nonumber \\
  & = \frac{1}{2^{d-1}} \bigg[
    \sum_{\statsample : \<\channelval, \statsample\> > k} \channeldensity^+
    \frac{1 + \statsample_1 \delta}{2}
    + \sum_{\statsample : \<\channelval, \statsample\> \le k}
    \channeldensity^-
    \frac{1 + \statsample_1 \delta}{2}
    \bigg]. \label{eqn:prob-z}
\end{align}
Now, if $\channelval_1 = 1$, then
\ifdefined\usejournalmargins
\begin{align*}
  \sum_{\statsample : \<\statsample, \channelval\> > k}
  \frac{1 + \statsample_1 \delta}{2}
  & = \sum_{\statsample : \<\statsample, \channelval\> > k,
    \statsample_1 = 1} \frac{1 + \delta}{2}
  + \sum_{\statsample : \<\statsample, \channelval\> > k,
    \statsample_1 = -1} \frac{1 - \delta}{2} \\
  & = \frac{1 + \delta}{2} C_{d-1}(k - 1) + \frac{1 - \delta}{2} C_{d-1}(k + 1)
\end{align*}
\else
\begin{equation*}
  \sum_{\statsample : \<\statsample, \channelval\> > k}
  \frac{1 + \statsample_1 \delta}{2}
  = \sum_{\statsample : \<\statsample, \channelval\> > k,
    \statsample_1 = 1} \frac{1 + \delta}{2}
  + \sum_{\statsample : \<\statsample, \channelval\> > k,
    \statsample_1 = -1} \frac{1 - \delta}{2}
  = \frac{1 + \delta}{2} C_{d-1}(k - 1) + \frac{1 - \delta}{2} C_{d-1}(k + 1)
\end{equation*}
\fi
and similarly
\begin{equation*}
  \sum_{\statsample : \<\statsample, \channelval\> \le k}
  \frac{1 + \statsample_1 \delta}{2}
  = \frac{1 + \delta}{2} \left(2^{d - 1} - C_{d-1}(k - 1)\right)
  + \frac{1 - \delta}{2} \left(2^{d - 1} - C_{d-1}(k + 1)\right).
\end{equation*}
On the other hand, we find that if $z_1 = -1$, then similar equalities hold,
but with the counters $C_{d-1}(k - 1)$ and $C_{d-1}(k + 1)$ flipped:
\begin{align*}
  \sum_{\statsample : \<\statsample, \channelval\> > k}
  \frac{1 + \statsample_1 \delta}{2}
  & = \frac{1 + \delta}{2} C_{d-1}(k + 1) + \frac{1 - \delta}{2} C_{d-1}(k - 1)
  \\
  \sum_{\statsample : \<\statsample, \channelval\> \le k}
  \frac{1 + \statsample_1 \delta}{2}
  & = \frac{1 + \delta}{2} \left(2^{d - 1} - C_{d-1}(k + 1)\right)
  + \frac{1 - \delta}{2} \left(2^{d - 1} - C_{d-1}(k - 1)\right).
\end{align*}
In particular, we find that so long as the first coordinate $\channelval_1 =
\channelval_1'$ of $\channelval$ remains constant, then
$\channelprob(\channelrv = \channelval \mid \cuberv = e_1) = 
\channelprob(\channelrv = \channelval' \mid \cuberv = e_1)$, and that
we thus have $\channelrv_2, \ldots, \channelrv_d$ are distributed
uniformly at random in $\{-1, 1\}^d$.

We now determine $\channeldensity^+$ and compute the
marginal value $\channelprob(\channelrv_1 = 1 \mid \cuberv = e_1)$. For
the first, we note that
\begin{equation*}
  C_d(k) \channeldensity^+ + (2^d - C_d(k)) \channeldensity^- = 1,
  ~~~ \mbox{or} ~~~
  C_d(k) \channeldensity^+ + e^{-\diffp} (2^d - C_d(k)) \channeldensity^+
  = 1,
\end{equation*}
which yields the expressions
\begin{equation*}
  \channeldensity^+ = \frac{e^\diffp}{(e^\diffp - 1) C_d(k) + 2^d}
  ~~~ \mbox{and} ~~~
  \channeldensity^- = \frac{1}{(e^\diffp - 1) C_d(k) + 2^d}.
\end{equation*}
By the expression~\eqref{eqn:prob-z} and calculations following, we thus
find that when $\channelval_1 = 1$, we have
\begin{subequations}
  \begin{align}
    \channeldensity(\channelval \mid e_1)
    & = \frac{1}{2^{d-1}} \cdot
    \Big[\channeldensity^+
      \Big(\frac{1 + \delta}{2} C_{d-1}(k - 1) + \frac{1 - \delta}{2}
      C_{d - 1}(k + 1)\Big)
      \nonumber \\
      & \qquad\qquad\quad ~
      + \channeldensity^-\Big(
      \frac{1 + \delta}{2}(2^{d-1} - C_{d-1}(k - 1))
      + \frac{1 - \delta}{2}(2^{d - 1} - C_{d-1}(k + 1))\Big)\Big]
    \nonumber \\
    & = \frac{1}{2^{d - 1}} \cdot
    \Big[2^{d-1} \channeldensity^-
      + \half(\channeldensity^+ - \channeldensity^-)
      (C_{d-1}(k - 1) + C_{d-1}(k + 1))
      \nonumber \\
      & \qquad\qquad\quad ~
      + \frac{\delta}{2}
      (\channeldensity^+ - \channeldensity^-)
      (C_{d-1}(k - 1) - C_{d-1}(k + 1))\Big],
    \label{eqn:diffp-channel-pos}
  \end{align}
  and similarly when $\channelval_1 = -1$ we have
  \begin{align}
    \channeldensity(\channelval \mid e_1)
    & = \frac{1}{2^{d - 1}} \cdot
    \Big[2^{d-1} \channeldensity^-
      + \half(\channeldensity^+ - \channeldensity^-)
      (C_{d-1}(k - 1) + C_{d-1}(k + 1)) \nonumber \\
      & \qquad\qquad\quad ~
      - \frac{\delta}{2}
      (\channeldensity^+ - \channeldensity^-)
      (C_{d-1}(k - 1) - C_{d-1}(k + 1))\Big].
    \label{eqn:diffp-channel-neg}
  \end{align}
\end{subequations}
Now note that
\begin{equation*}
  C_{d-1}(k - 1) - C_{d-1}(k + 1)
  = \sum_{i=0}^{\ceil{(d - k)/2} - 1} \binom{d - 1}{i}
  - \sum_{i=0}^{\ceil{(d - k)/2} - 2} \binom{d - 1}{i}
  = \binom{d - 1}{\ceil{(d - k)/2} - 1}
\end{equation*}
and that the difference
\begin{equation*}
  \channeldensity^+ - \channeldensity^-
  = \frac{e^\diffp - 1}{(e^\diffp - 1) C_d(k) + 2^d}.
\end{equation*}
Recalling the definition of the constant $\Delta$, we thus find from the
expansions~\eqref{eqn:diffp-channel-pos}
and~\eqref{eqn:diffp-channel-neg}---since they must sum to 1---that
\begin{equation}
  \channelprob(\channelrv = \channelval \mid \cuberv = e_1)
  = \frac{1}{2^{d - 1}} \cdot
  \begin{cases}
    \half + \frac{\Delta(\delta, \diffp, d, k)}{2} & \mbox{if~}
    \channelval_1 = 1 \\
    \half - \frac{\Delta(\delta, \diffp, d, k)}{2} & \mbox{if~}
    \channelval_1 = -1.
  \end{cases}
  \label{eqn:final-diffp-channel-probability}
\end{equation}
It is clear that similar statements hold in the other symmetric cases
(i.e.\ if $\cuberv = -e_2$, then the probabilities depend on $\channelval_2
= -1$ or $1$).

It remains to use the marginalized
representation~\eqref{eqn:final-diffp-channel-probability} to compute the
bound on the mutual information in the statement of the lemma.
Given $\cuberv = \cubecorner$, $\channelrv \in \{-M, M\}^d$ is uniform
except on the coordinate $j$ for which $\cubecorner_j \neq 0$, by
symmetry. (Marginally, $\channelrv$ is uniform on $\{-M, M\}^d$.)
By a direct calculation, we have
$H(\channelrv) = d \log 2$ and
\begin{equation*}
  H(\channelrv \mid \cuberv = e_1)
  = \sum_{j = 1}^d H(\channelrv_j \mid \channelrv_{1:j-1}, \cuberv = e_1)
  = H(\channelrv_1 \mid \cuberv = e_1)
  + (d - 1) \log 2,
\end{equation*}
and similarly for the other possible values of $\packrv$.  Therefore, using
the probabilities~\eqref{eqn:final-diffp-channel-probability}, we have the
mutual information bound
\begin{align*}
  \lefteqn{\information(\channelrv; \cuberv)
    = H(\channelrv) - H(\channelrv \mid \cuberv)
    \le d \log 2 - \frac{1}{2d} \sum_\cubecorner H(\channelrv \mid \cuberv
    = \cubecorner)} \\
  & = d \log 2
  - (d - 1) \log 2 \\
  & \quad ~ +
  \left(\half + \frac{\Delta(\delta, \diffp, d, k)}{2}\right)
  \log \left(\half + \frac{\Delta(\delta, \diffp, d, k)}{2}\right)
  + \left(\half - \frac{\Delta(\delta, \diffp, d, k)}{2}\right)
  \log \left(\half - \frac{\Delta(\delta, \diffp, d, k)}{2}\right) \\
  & \le \log 2 + \left(\half + \frac{\Delta(\delta, \diffp, d, k)}{2}\right)
  \left[\log \half + \Delta(\delta, \diffp, d, k)\right]
  + \left(\half - \frac{\Delta(\delta, \diffp, d, k)}{2}\right)
  \left[\log \half - \Delta(\delta, \diffp, d, k)\right] \\
  & = \Delta(\delta, \diffp, d, k)^2,
\end{align*}
where the inequality follows from the concavity of $p \mapsto \log(p)$.


\subsection{Bounds on total variation norm}
\label{appendix:tv-norm-bounding}

\begin{lemma}
  \label{lemma:tv-norm-bound}
  Let $\channelprob_1$ and $\channelprob_{-1}$ be distributions on $\{-1,
  1\}$, where
  \begin{equation*}
    \channelprob_1(\channelrv = \channelval)
    = \half + \half\cdot
    \begin{cases}
      ~ \delta & \mbox{if~} \channelval = 1 \\
      -\delta & \mbox{otherwise}
    \end{cases}
    ~~~ \mbox{and} ~~~
    \channelprob_{-1}(\channelrv = \channelval)
    = \half + \half\cdot
    \begin{cases}
      -\delta & \mbox{if~} \channelval = 1 \\
      ~ \delta & \mbox{otherwise}.
    \end{cases}
  \end{equation*}
  Let $\channelprob_i^n$ denote the $n$-fold product distribution
  of $\channelprob_i$. Then for $\delta \in [0, 1/3]$,
  \begin{equation*}
    \tvnorm{\channelprob_1^n - \channelprob_{-1}^n}
    \le \delta \sqrt{(3/2) n}.
  \end{equation*}
\end{lemma}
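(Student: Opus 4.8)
The plan is to route the total-variation bound through the Kullback--Leibler divergence via Pinsker's inequality, tensorize the divergence over the product structure, and finish with a single elementary scalar inequality in which the hypothesis $\delta \le 1/3$ enters. First I would invoke Pinsker's inequality in the form $\tvnorm{P - Q} \le \sqrt{\half \dkl{P}{Q}}$ with $P = \channelprob_1^n$ and $Q = \channelprob_{-1}^n$. Since each of $\channelprob_1^n$ and $\channelprob_{-1}^n$ is the $n$-fold product of the corresponding two-point law, the chain rule for KL divergence gives $\dkl{\channelprob_1^n}{\channelprob_{-1}^n} = n\,\dkl{\channelprob_1}{\channelprob_{-1}}$, so it suffices to control the one-observation divergence.

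A direct computation with the two Bernoulli distributions (which are exchanged under the relabeling $1 \leftrightarrow -1$, so the divergence is symmetric) yields
\begin{equation*}
  \dkl{\channelprob_1}{\channelprob_{-1}}
  = \left(\half + \frac{\delta}{2}\right)
  \log\frac{\half + \frac{\delta}{2}}{\half - \frac{\delta}{2}}
  + \left(\half - \frac{\delta}{2}\right)
  \log\frac{\half - \frac{\delta}{2}}{\half + \frac{\delta}{2}}
  = \delta \log\frac{1 + \delta}{1 - \delta}.
\end{equation*}
The remaining, and only nontrivial, step is the scalar bound $\log\frac{1+\delta}{1-\delta} \le 3\delta$ valid for all $\delta \in [0, 1/3]$. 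I would establish this from the expansion $\log\frac{1+\delta}{1-\delta} = 2\sum_{k \ge 0} \frac{\delta^{2k+1}}{2k+1} = 2\delta\left(1 + \sum_{k \ge 1}\frac{\delta^{2k}}{2k+1}\right)$ together with the geometric estimate $\sum_{k \ge 1}\frac{\delta^{2k}}{2k+1} \le \sum_{k \ge 1}\delta^{2k} = \frac{\delta^2}{1-\delta^2} \le \frac18$ (using $\delta^2 \le 1/9$), which gives $\log\frac{1+\delta}{1-\delta} \le 2\delta\left(1 + \frac18\right) = \frac94\delta \le 3\delta$.

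Combining the pieces, $\dkl{\channelprob_1^n}{\channelprob_{-1}^n} = n\,\delta\log\frac{1+\delta}{1-\delta} \le 3n\delta^2$, so Pinsker's inequality yields $\tvnorm{\channelprob_1^n - \channelprob_{-1}^n} \le \sqrt{\half \cdot 3n\delta^2} = \delta\sqrt{(3/2)n}$, as claimed. There is no genuine obstacle here: the only care required is in the elementary logarithm inequality and in tracking the numerical constant so the bound reads exactly as stated. A more refined argument---through the Hellinger affinity of the product measures, or a direct coupling of the $n$ coordinates---would also work and could sharpen the constant $3/2$, but the Pinsker-plus-tensorization approach is the cleanest and is all that is needed in the proof of Theorem~\ref{theorem:linf-bound}.
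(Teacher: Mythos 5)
Your proposal is correct and follows essentially the same route as the paper's proof: Pinsker's inequality, tensorization of the KL divergence over the product, the exact computation $\dkl{\channelprob_1}{\channelprob_{-1}} = \delta \log\frac{1+\delta}{1-\delta}$, and an elementary bound showing this is at most $3\delta^2$ for $\delta \le 1/3$. The only (immaterial) difference is that you control $\log\frac{1+\delta}{1-\delta}$ via its power series, whereas the paper uses the first-order concavity bound $\log\frac{1+\delta}{1-\delta} \le 2\delta/(1-\delta)$.
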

\begin{proof}
  For any two probability distributions $P, Q$, Pinsker's
  inequality~\cite{CoverTh06} asserts that the total variation norm is
  bounded as \mbox{$\tvnorm{P - Q} \le \sqrt{\dkl{P}{Q} / 2}$.}
  Applying this inequality in our setting, we find that
  \begin{align*}
    \tvnorm{\channelprob_1^n - \channelprob_{-1}^n} & \le \sqrt{\half
      \dkl{\channelprob_1^n}{\channelprob_{-1}^n}} = \frac{1}{\sqrt{2}}
    \sqrt{n \dkl{\channelprob_1}{\channelprob_{-1}}},
  \end{align*}
  where we have exploited the product nature of $\channelprob_i^n$.  Now
  we note that by the concavity of the $\log$, we have (via the
  first-order inequality) that $\log\frac{1 + \delta}{1 - \delta} \le 2
  \delta / (1 - \delta)$, so
  \begin{align*}
    \frac{1 + \delta}{2}
    \log \frac{\frac{1 + \delta}{2}}{\frac{1 - \delta}{2}}
    + \frac{1 - \delta}{2}
    \log \frac{\frac{1 - \delta}{2}}{\frac{1 + \delta}{2}}
    & = \frac{1 + \delta}{2}
    \log \frac{1 + \delta}{1 - \delta}
    + \frac{1 - \delta}{2} \log \frac{1 - \delta}{1 + \delta}
    = \delta \log \frac{1 + \delta}{1 - \delta} 
    \le \frac{2 \delta^2}{1 - \delta}.
  \end{align*}
  Assuming that $\delta \le 1/3$, the final term is upper bounded
  by $3 \delta^2$.
  But of course by definition of $\channelprob_1$ and $\channelprob_{-1}$,
  we have
  \begin{equation*}
    \dkl{\channelprob_1}{\channelprob_{-1}}
    = \frac{1 + \delta}{2}
    \log \frac{\frac{1 + \delta}{2}}{\frac{1 - \delta}{2}}
    + \frac{1 - \delta}{2}
    \log \frac{\frac{1 - \delta}{2}}{\frac{1 + \delta}{2}}
    \le 3 \delta^2,
  \end{equation*}
  which completes the proof.
\end{proof}

\section{Achievability by stochastic mirror descent}
\label{appendix:sgd-achievability}

In this appendix, we provide further details on the
algorithm used to achieve the upper bounds in
Theorems~\ref{theorem:first-class} and~\ref{theorem:second-class}.  Both of
our achievability results rely on stochastic gradient mechanisms, and their
most important ingredient is a conditional distribution $\channel$ that
satisfies $\diffp$-local differential privacy.  In particular, if $g \in \R^d$
is a (sub)gradient of the loss $\loss(\statsample, \optvar)$, we construct
$\channelrv \in \R^d$ by perturbing $g$ in such a way that $\E[\channelrv \mid
  g] = g$. Thus, each of the achievability guarantees consists
of describing an $\diffp$-differentially private sampling distribution, then
bounding the expected norm of $\channelrv$ and applying 
one of the convergence guarantees~\eqref{eqn:stochastic-bounds}.

\subsection{Achievability in Theorem~\ref{theorem:first-class}}
\label{appendix:sgd-linf-achievability}

\newcommand{\sbound}{B}
\newcommand{\bernoulli}{\mathop{\rm Bernoulli}}
\newcommand{\uniform}{\mathop{\rm Uniform}}

The sampling strategy we use is essentially identical to that used in
Corollary~\ref{corollary:linf-privacy-diffp} (and the optimal $\diffp$-private
scheme of Proposition~\ref{proposition:linf-diffp-saddle-point}; see
also Strategy B in our paper~\cite{DuchiJoWa13_parametric}). Let
$\pi_\diffp \defeq e^\diffp / (e^\diffp + 1)$ and $T$ be a
$\bernoulli(\pi_\diffp)$-random variable, and let $\sbound \ge
\lipobj$ be a fixed constant (to be specified). Then given a vector $g \in
\R^d$ with $\linf{g} \le \lipobj$, construct $\wt{g} \in \R^d$ with
coordinates $\wt{g}_j$ sampled independently from $\{-\lipobj, \lipobj\}$ with
probabilities $1/2 - g_j / (2\lipobj)$ and $1/2 + g_j / (2 \lipobj)$.  Then
sample $T$ and set
\begin{equation}
  \label{eqn:linf-sampling}
  \channelrv \sim \begin{cases}
    \uniform(\channelval \in \{-\sbound, \sbound\}^d :
    \<\channelval, \wt{g}\> > 0)
    & \mbox{if~} T = 1 \\
    \uniform(\channelval \in \{-\sbound, \sbound\}^d :
    \<\channelval, \wt{g}\> \le 0)
    & \mbox{if~} T = 0.
  \end{cases}
\end{equation}
By inspection, the scheme~\eqref{eqn:linf-sampling} is $\diffp$-differentially
private. Moreover, we have by the calculations in the proof of
Corollary~\ref{corollary:linf-privacy-diffp} (see
Section~\ref{sec:corollary-rate-linf-diffp}) that by the sampling
strategy~\eqref{eqn:linf-sampling}
\begin{equation*}
  \E[\channelrv \mid g]
  = \E[\E[\channelrv \mid \wt{g}] \mid g]
  = g \frac{\sbound}{2^{d-1} \lipobj} \frac{e^\diffp - 1}{e^\diffp + 1}
  \cdot \begin{cases} \binom{d - 1}{\frac{d - 1}{2}}
    & d ~ \mbox{odd} \\
    \binom{d - 1}{d/2} & d ~ \mbox{even.} \end{cases}
\end{equation*}
Thus, w.l.o.g.\ assuming $d$ is odd, choosing
\begin{equation*}
  \sbound =
  2^{d-1}
  \lipobj\frac{e^\diffp + 1}{e^\diffp - 1} \binom{d - 1}{\frac{d - 1}{2}}^{-1}
  ~~~ \mbox{implies} ~~~
  \E[\channelrv \mid g] = g
  ~~ \mbox{and} ~~
  \linf{\channelrv} = \sbound = \order(1) (\lipobj / \diffp)
  \sqrt{d}.
\end{equation*}
Applying the mirror descent method to the gradients provided
from the sampling strategy~\eqref{eqn:linf-sampling},
we obtain the bound~\eqref{eqn:mirror-descent-bound}
with $M_\infty = \sbound = \order(1) (\lipobj / \diffp) \sqrt{d}$,
which is our desired result.

\subsection{Achievability in Theorem~\ref{theorem:second-class}}
\label{appendix:sgd-ltwo-achievability}

The achievability result for Theorem~\ref{theorem:second-class} is similar to
that for Theorem~\ref{theorem:first-class}, but we use a modified sampling
distribution.  Now, using the same notation as that for the
strategy~\eqref{eqn:linf-sampling}, we use the following.  Given a vector $g$
with $\ltwo{g} \le \lipobj$, set $\wt{g} = \lipobj g / \ltwo{g}$ with
probability $\half + \ltwo{g} / 2\lipobj$ and $\wt{g} = - \lipobj g /
\ltwo{g}$ with probability $\half - \ltwo{g} / 2 \lipobj$.  Then sample $T
\sim \bernoulli(\pi_\diffp)$ and set
\begin{equation}
  \label{eqn:ltwo-sampling}
  \channelrv \sim \begin{cases}
    \uniform(\channelval \in \R^d :
    \<\channelval, \wt{g}\> > 0, \ltwo{\channelval} = \sbound)
    & \mbox{if~} T = 1 \\
    \uniform(\channelval \in \R^d :
    \<\channelval, \wt{g}\> \le 0, \ltwo{\channelval} = \sbound)
    & \mbox{if~} T = 0.
  \end{cases}
\end{equation}
(This is Strategy A in our paper~\cite{DuchiJoWa13_parametric}.) Then we
have~\cite[Appendix D.1]{DuchiJoWa13_parametric} that
\begin{equation*}
  \E[\channelrv \mid g]
  = \frac{\sbound}{\lipobj} \frac{e^\diffp - 1}{e^\diffp + 1}
  \frac{\Gamma(\frac{d}{2} + 1)}{\sqrt{\pi} d \Gamma(\frac{d - 1}{2} + 1)},
\end{equation*}
so choosing
\begin{equation*}
  \sbound = \lipobj \frac{e^\diffp + 1}{e^\diffp - 1}
  \frac{\sqrt{\pi} d \Gamma(\frac{d - 1}{2} + 1)}{\Gamma(\frac{d}{2} + 1)}
  \le \lipobj \frac{e^\diffp + 1}{e^\diffp - 1} \frac{3 \sqrt{\pi} \sqrt{d}}{2}
\end{equation*}
implies that $\E[\channelrv \mid g] = g$ and $\ltwo{\channelrv} \le \sbound =
\order(1) (\lipobj / \diffp) \sqrt{d}$.
Applying the stochastic gradient descent method to the gradients
provided by the sampling scheme~\eqref{eqn:ltwo-sampling},
we obtain the bound~\eqref{eqn:sgd-bound} with $M_2 = \sbound$,
which implies that if $\Theta \subset \ball_2(\radius_2)$ then
\begin{equation*}
  \E[\risk(\what{\optvar}_n)] - \risk(\optvar^*)
  = \order\left(\frac{\sqrt{d}}{\diffp}
  \frac{\lipobj \radius_2}{\sqrt{n}}\right).
\end{equation*}
Noting that $\ball_q(\radius_q) \subset d^{\half - \frac{1}{q}}
\ball_2(\radius_q)$ completes the proof of the achievability result.

\section{Background on Conditional Probabilities}
\label{appendix:background-on-conditional-probability}

In this appendix, we present some basic lemmas on conditional independence
and regular conditional probabilities that will be useful in 
Appendix~\ref{appendix:saddle-point-proofs}.

We first recall the following classical data-processing inequality, which
 holds for essentially arbitrary random variables~\cite[Chapter 5]{Gray90}:
\begin{lemma}[Data processing]
  \label{lemma:data-processing}
  Let $\statrv \rightarrow \channelrv \rightarrow Y$ be a Markov
  chain. Then $\information(\statrv; Y) \le \information(\statrv;
  \channelrv)$, with equality if and only if $\statrv$ is
  conditionally independent of $Y$ given $\channelrv$.
\end{lemma}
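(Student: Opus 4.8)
The plan is to derive the inequality from the chain rule for mutual information, expanding the joint information $\information(\statrv; (\channelrv, Y))$ in two ways:
\begin{equation*}
  \information(\statrv; (\channelrv, Y)) = \information(\statrv; \channelrv) + \information(\statrv; Y \mid \channelrv) = \information(\statrv; Y) + \information(\statrv; \channelrv \mid Y).
\end{equation*}
The Markov hypothesis $\statrv \to \channelrv \to Y$ is precisely the statement that $\statrv$ and $Y$ are conditionally independent given $\channelrv$, so $\information(\statrv; Y \mid \channelrv) = 0$. Equating the two expansions and rearranging gives
\begin{equation*}
  \information(\statrv; \channelrv) - \information(\statrv; Y) = \information(\statrv; \channelrv \mid Y) \ge 0,
\end{equation*}
where nonnegativity of conditional mutual information is the usual Gibbs/Jensen inequality. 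Hence $\information(\statrv; Y) \le \information(\statrv; \channelrv)$, with equality if and only if $\information(\statrv; \channelrv \mid Y) = 0$, i.e.\ if and only if $\statrv$ and $\channelrv$ are conditionally independent given $Y$ (equivalently, the reversed chain $\statrv \to Y \to \channelrv$ is also Markov).

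In carrying this out, the first step is to fix the general definition of mutual information as the supremum, over all pairs of finite measurable partitions (quantizations) of the relevant sample spaces, of the mutual information of the induced discrete variables, following~\citet[Chapter 5]{Gray90}; this keeps the claim meaningful for the arbitrary random variables and regular conditional distributions that appear throughout the paper. With that definition in place, both the chain-rule identity above and the nonnegativity of conditional mutual information are standard, and I would invoke Gray for them rather than reprove them, since neither is special to our setting. The only point that deserves a line of justification is that the Markov condition --- that a regular conditional distribution of $Y$ given $(\channelrv, \statrv)$ may be chosen to depend on $\channelrv$ alone, i.e.\ $\P(Y \in S \mid \channelrv, \statrv) = \P(Y \in S \mid \channelrv)$ almost surely for all measurable $S$ --- forces $\information(\statrv; Y \mid \channelrv) = 0$: conditionally on $\channelrv$, the joint law of $(\statrv, Y)$ is a product measure, so every quantization of $(\statrv, Y)$ contributes zero conditional information and the defining supremum vanishes.

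The main obstacle is thus purely measure-theoretic bookkeeping rather than anything conceptual: one must check that quantizing $\statrv$, $\channelrv$, $Y$ preserves the Markov structure and that the chain-rule identity survives passage to the supremum over quantizations --- both of which are contained in Gray's development and which I would simply cite at the two points above. As a remark I would also record an alternative, convexity-based argument that avoids the joint-information algebra: writing $\information(\statrv; Y) = \E_Y\!\left[\dkl{\P(\statrv \in \cdot \mid Y)}{\statprob}\right]$ and using the Markov property to express $\P(\statrv \in \cdot \mid Y = y)$ as the mixture $\int \P(\statrv \in \cdot \mid \channelrv = z)\, d\P(\channelrv = z \mid Y = y)$ of the channel-conditional laws, joint convexity of $\dkl{\cdot}{\cdot}$ together with Fubini gives $\dkl{\P(\statrv \in \cdot \mid Y = y)}{\statprob} \le \int \dkl{\P(\statrv \in \cdot \mid \channelrv = z)}{\statprob}\, d\P(\channelrv = z \mid Y = y)$; integrating over $y$ and using that averaging $\P(\statrv \in \cdot \mid \channelrv = z)$ over $z \sim \channelprob$ returns $\statprob$ recovers the inequality, while the strict-convexity criterion for equality in the KL divergence yields the same equality characterization.
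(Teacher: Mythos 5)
Your proof is correct, and it is the standard argument: the paper itself does not prove this lemma but simply cites it as classical (Gray, Chapter~5), so your chain-rule derivation --- expanding $\information(\statrv; (\channelrv, Y))$ two ways, killing $\information(\statrv; Y \mid \channelrv)$ by the Markov hypothesis, and reading off $\information(\statrv;\channelrv) - \information(\statrv;Y) = \information(\statrv;\channelrv \mid Y) \ge 0$ --- is exactly what the cited reference supplies, and your handling of the quantization issues for non-discrete variables is appropriate for the level of generality the paper needs. The alternative convexity argument via $\dkl{\cdot}{\cdot}$ is a nice bonus but not required.

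One point worth flagging: your equality condition is the \emph{correct} one, and it differs from what the lemma as printed asserts. The paper states equality holds iff $\statrv$ is conditionally independent of $Y$ given $\channelrv$ --- but that is precisely the Markov hypothesis $\statrv \to \channelrv \to Y$ and so is always satisfied, which would (absurdly) force equality in every instance. The right condition, which your derivation produces, is $\information(\statrv; \channelrv \mid Y) = 0$, i.e.\ $\statrv$ and $\channelrv$ are conditionally independent given $Y$ (equivalently, $\statrv \to Y \to \channelrv$ is also Markov). This matters downstream: the proof of Lemma~\ref{lemma:extreme-points} invokes the strict-inequality case of the present lemma, and only the corrected equality condition makes that invocation meaningful.
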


This inequality, in conjunction with with Carath\'eodory and
Minkowski's finite-dimensional version of the Krein-Milman
theorem (e.g.~\cite{HiriartUrrutyLe96ab}), allows us to argue any
$\channelprob$ minimizing $\information(\statprob, \channelprob)$ must
be supported on the extreme points of $D$.  To make this point precise,
however, we need to address certain measurability issues involved in
the choice of the extreme points.

We begin with a precise definition of a regular conditional probability.
\begin{definition}
  \label{def:markov-kernel}
  Let $(\Omega, \mc{F})$ and $(T, \sigma(T))$ be measurable spaces.  A
  \emph{regular conditional probability}, also known as a Markov
  kernel or transition probability, is a function $\nu : T \times
  \mc{F} \rightarrow [0, 1]$ such that
  \begin{align*}
    t \mapsto \nu(t, A) & ~~ {\rm is~measurable~for~all~} A \in
    \mc{F} \\ \nu(t, \cdot) : \mc{F} \rightarrow [0, 1] & ~~ {\rm
      is~a~probability~measure~for~all~} t \in T.
  \end{align*}
\end{definition}
\noindent Any Markov chain has a transition probability; conversely,
any set of consistent transition probabilities define a Markov chain
(see, e.g., Chapter 5 of~\citet{Kallenberg97}).

Some difficulties with measurability arise in constructing the
appropriate Markov chain for our setting.  To deal with them, we use 
results from Choquet theory, which extend Krein-Milman theorems to 
integral representations~\cite{Phelps01}. We begin our proof by stating 
a measurable selection theorem~\cite[Theorem 11.4]{Phelps01}, 
though we restrict the theorem's statement to subsets of finite 
dimensional space.
\begin{proposition}
  \label{proposition:choquet}
  Let $D \subset \R^d$ be a compact convex set. For each
  $x$, there exists a probability measure $\mu_x$ supported on $\extreme(D)$
  such that $\int_D y d\mu_x(y) = x$. Moreover, the mapping
  $x \mapsto \mu_x$ can be taken to be measurable.
\end{proposition}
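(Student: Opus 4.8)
The plan is to reduce the statement to a measurable selection problem for a continuous surjection and then invoke a classical uniformization theorem. Since $D \subset \R^d$ is compact and convex, Minkowski's finite-dimensional version of the Krein--Milman theorem gives $D = \conv(\extreme(D))$, and Carath\'eodory's theorem shows that every $x \in D$ is of the form $x = \sum_{i=0}^{d} \lambda_i u_i$ with $u_i \in \extreme(D)$ and $\lambda = (\lambda_0,\dots,\lambda_d)$ in the simplex $\Delta \defeq \{\lambda \in \R^{d+1} : \lambda_i \ge 0,\ \sum_i \lambda_i = 1\}$. Equivalently, the continuous map
\begin{equation*}
  \Phi : \extreme(D)^{d+1} \times \Delta \to D, \qquad
  \Phi(u_0, \dots, u_d, \lambda) = \sum_{i=0}^{d} \lambda_i u_i,
\end{equation*}
is onto $D$. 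This already exhibits, for each $x$, a finitely supported probability measure on $\extreme(D)$ with barycenter $x$; the remaining work is to make the choice measurable in $x$.

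To do so, I would first record the classical fact that for compact convex $D \subset \R^d$ the set $\extreme(D)$ is a $G_\delta$ subset of $\R^d$, since its complement in $D$ equals $\bigcup_{n \ge 1}\{x \in D : \exists\, y \text{ with } \ltwo{y} \ge 1/n,\ x + y \in D,\ x - y \in D\}$, a countable union of compact sets. Hence $\extreme(D)$ with the subspace topology is Polish, the domain $K \defeq \extreme(D)^{d+1} \times \Delta$ is Polish, and $\Phi : K \to D$ is a continuous surjection of Polish spaces with closed (hence Borel) graph. Applying the Jankov--von Neumann uniformization theorem (one could instead phrase this through the Kuratowski--Ryll-Nardzewski selection theorem applied to the graph of $\Phi$, but since the fibres of $\Phi$ need not be closed in a compactification, Jankov--von Neumann is the safe tool) produces a universally measurable section $s : D \to K$, written $s(x) = (u_0(x), \dots, u_d(x), \lambda(x))$, with $\Phi(s(x)) = x$ for all $x$. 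Setting $\mu_x \defeq \sum_{i=0}^d \lambda_i(x)\,\delta_{u_i(x)}$, each $\mu_x$ is by construction a probability measure supported on $\extreme(D)$ with barycenter $\int_D y\, d\mu_x(y) = \sum_i \lambda_i(x) u_i(x) = \Phi(s(x)) = x$; and $x \mapsto \mu_x$ is measurable into the space of Borel probability measures on $D$ (with its weak Borel $\sigma$-algebra) because $x \mapsto \int f\, d\mu_x = \sum_i \lambda_i(x) f(u_i(x))$ is measurable for each $f \in C(D)$, and those evaluation functionals generate the $\sigma$-algebra. Here ``measurable'' is in the universal sense, which is exactly what is needed to use $x \mapsto \mu_x$ as a regular conditional probability in the sense of Definition~\ref{def:markov-kernel} for the saddle-point arguments of Appendix~\ref{appendix:saddle-point-proofs}.

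The one genuinely delicate step is the measurable selection: because $\extreme(D)$ is in general not closed, one cannot extract $s$ by a compactness argument, and the uniformization theorem guarantees only universal (not Borel) measurability; this is harmless for our purposes. Moreover it can be bypassed entirely in the cases where the proposition is actually invoked: in Theorem~\ref{theorem:finite-rotation-information} the set $C$, and hence $D$, is a polytope, so $\extreme(D)$ is finite, the fibre $\{\lambda \in \Delta : \sum_i \lambda_i u_i = x\}$ is a nonempty compact convex polyhedron depending affinely on $x$, and a genuinely Borel --- indeed semialgebraic, e.g.\ the barycenter of the fibre, or the lexicographically tie-broken solution of a linear program --- choice of $\lambda(x)$ is available.
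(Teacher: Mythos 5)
Your argument is correct, but it is a genuinely different route from the paper's: the paper does not prove this proposition at all, it simply imports it as Theorem~11.4 of Phelps's \emph{Lectures on Choquet's Theorem}, specialized to $\R^d$. Phelps's result is proved by an iterative maximization over representing measures (selecting a \emph{maximal} measure in the Choquet order) and delivers a genuinely Borel-measurable map $x \mapsto \mu_x$; your route instead combines the Minkowski--Carath\'eodory representation $x = \sum_{i=0}^d \lambda_i u_i$ with a Jankov--von Neumann uniformization of the closed relation $\{(x,(u,\lambda)) : \Phi(u,\lambda) = x\}$ in $D \times (\extreme(D)^{d+1} \times \Delta)$. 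What your approach buys is a short, self-contained, finite-dimensional proof with an explicit finitely supported $\mu_x$ (at most $d+1$ atoms), at the cost of obtaining only universal rather than Borel measurability --- a point you correctly flag, and which is indeed harmless here, since the map is only ever used as a Markov kernel integrated against fixed Borel laws in Lemmas~\ref{lemma:extreme-points} and~\ref{lemma:diffp-extreme-points}, and since the only concrete instantiations in the paper (Theorem~\ref{theorem:finite-rotation-information} and the $\ell_1/\ell_\infty$ propositions) have $D$ a polytope, where $\extreme(D)$ is finite and a Borel, even explicit, selection of $\lambda(x)$ is available exactly as you describe. The supporting steps you supply (that $\extreme(D)$ is a $G_\delta$, hence Polish; that the fibres of $\Phi$ are closed but its forward images of open sets are only analytic, so Kuratowski--Ryll-Nardzewski would not improve on universal measurability) are all accurate.
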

\noindent In the statement of this result, measurability is taken with
respect to the $\sigma$-field generated by the topology of weak
convergence. As a consequence of the proposition, however, it is clear
that since for any continuous function $f$ the mapping $x \mapsto \int
f d\mu_x$ is measurable, we have that for relatively open sets $A
\subset C$ the mapping $x \mapsto \mu_x(A)$ is measurable, whence for
any measurable set $A \subset C$ the mapping $x \mapsto \mu_x(A)$ is
measurable.  That is, we can define the Markov kernel $\nu : \R^d
\times \sigma(C) \rightarrow [0, 1]$ according to the mapping
specified by Proposition~\ref{proposition:choquet} (we take $\nu(x,
\cdot) = \mu_x$) with the additional properties that
\begin{equation*}
  \int_D y \nu(x, dy) = x
  ~~~ {\rm and} ~~~
  \nu(x, D \setminus \extreme(D)) = 0
  ~~ {\rm for~all~} x \in D.
\end{equation*}
In finite dimensions, a trivial extension of
Proposition~\ref{proposition:choquet} allows us to drop the assumption that
$D$ is convex. Indeed, we have that since $D$ is compact, then
$\extreme(D) = \extreme(\conv(D))$~\cite[Chapter III.2]{HiriartUrrutyLe96ab}.

Given this measure-theoretic background, we turn to a key lemma that 
we will need in Appendix~\ref{appendix:saddle-point-proofs}.  
In this lemma, we assume as usual that $C \subset D \subset \R^d$ 
are compact sets, and that $\channelprob \in \channeldistset(C, D)$ (recall the
definition~\eqref{eqn:channel-distribution-set}).
\begin{lemma}
  \label{lemma:extreme-points}
  Let $\statprob$ be a distribution supported on $C$. If there exists a set $A
  \subset C$ with $\statprob(A) > 0$ and a set $B \subset D \setminus
  \extreme(D)$ with $\channelprob(B \mid \statrv = x) > 0$ for $x \in A$,
  there exists a regular conditional probability distribution $\channelprob'
  \in \channeldistset(C, D)$ where $\channelprob'(\cdot \mid x)$ has support
  contained in $\extreme(D)$ and
  \begin{equation*}
    \information(\statprob, \channelprob) > \information(\statprob,
    \channelprob').
  \end{equation*}
\end{lemma}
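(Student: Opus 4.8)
The plan is to show that if the channel $\channelprob$ places positive mass on points of $D$ that are not extreme, one can strictly reduce the mutual information by "pushing out" that mass to extreme points of $D$ while preserving the conditional mean. The construction is: invoke Proposition~\ref{proposition:choquet} (the measurable Choquet selection theorem) to obtain, for each $y \in D$, a probability measure $\mu_y$ supported on $\extreme(D)$ with $\int_D z\, d\mu_y(z) = y$, and the map $y \mapsto \mu_y$ measurable. Then define the new channel by composition: for a measurable set $S \subset D$, set
\begin{equation*}
  \channelprob'(S \mid \statrv = x) \defeq \int_D \mu_y(S) \, d\channelprob(y \mid \statrv = x).
\end{equation*}
This is a legitimate regular conditional probability (measurability in $x$ follows from measurability of $y\mapsto\mu_y(S)$ together with measurability of $x \mapsto \channelprob(\cdot\mid x)$), it is supported on $\extreme(D)$, and $\int z\, d\channelprob'(z \mid x) = \int \left(\int z\, d\mu_y(z)\right) d\channelprob(y\mid x) = \int y \, d\channelprob(y\mid x) = x$, so $\channelprob' \in \channeldistset(C,D)$. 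In Markov-chain terms we have built $\statrv \to \channelrv \to \channelrv'$, where $\channelrv \mid \statrv \sim \channelprob$ and $\channelrv' \mid \channelrv = y \sim \mu_y$.

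The first inequality $\information(\statprob,\channelprob) \ge \information(\statprob,\channelprob')$ is then immediate from the data-processing inequality (Lemma~\ref{lemma:data-processing}) applied to the chain $\statrv \to \channelrv \to \channelrv'$, since $\channelrv'$ is a (randomized) function of $\channelrv$. The real work is upgrading this to a \emph{strict} inequality under the hypothesis that there is a set $A$ with $\statprob(A)>0$ and, for $x\in A$, a set $B \subset D\setminus\extreme(D)$ with $\channelprob(B\mid x)>0$. The equality case of the data-processing inequality (stated in Lemma~\ref{lemma:data-processing}) says equality holds iff $\statrv \perp \channelrv' \mid \channelrv'$... more precisely, iff $\statrv$ is conditionally independent of $\channelrv$ given $\channelrv'$; equivalently, the reverse channel $\channelrv \mid \channelrv'$ does not depend on $\statrv$. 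So the plan is to exhibit, using the positive mass on non-extreme points, a violation of that conditional independence: roughly, when $\channelrv$ is observed to lie in a non-trivial region $B$ of non-extreme points, the collapsed variable $\channelrv'$ loses information that $\channelrv$ retained, and because $\statprob$ is nondegenerate on $A$ and $B$ carries $\channelprob(\cdot\mid x)$-mass for $x\in A$, this loss is genuinely informative about $\statrv$.

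Concretely, I expect the cleanest route is to prove strictness directly rather than through the abstract equality condition: decompose $\information(\statprob,\channelprob) = \information(\statprob;\channelrv,\channelrv') = \information(\statprob;\channelrv') + \information(\statprob;\channelrv \mid \channelrv')$ by the chain rule, and $\information(\statprob,\channelprob') = \information(\statprob;\channelrv')$ (since $\channelrv' \in \sigma(\channelrv')$ trivially and the chain structure gives $\information(\statprob;\channelrv,\channelrv') = \information(\statprob;\channelrv')$ after the reduction — note $\channelrv'$ is the output we care about). Thus the gap equals $\information(\statrv;\channelrv\mid\channelrv')$, and it suffices to show this conditional mutual information is strictly positive. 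One shows it is positive by exhibiting a non-null event on which the conditional law of $\channelrv$ given $(\channelrv', \statrv)$ genuinely depends on $\statrv$: pick a non-extreme point $y_0 \in B$ so that $\mu_{y_0}$ has at least two points in its support, let $V$ be a small neighborhood structure separating those points; then conditioning on $\channelrv' \in$ (one atom of $\mu_{y_0}$) but allowing $\channelrv$ to range over the part of $B$ mapped there, and using that $x \mapsto \channelprob(\cdot \mid x)$ varies over $x \in A$ (a set of positive $\statprob$-measure) — here one may need to first restrict to a sub-case where the channel actually varies with $x$; if $\channelprob(\cdot\mid x)$ is the same for all $x\in A$ then one instead exploits a different index set, but in fact for the lemma's application only the existence of \emph{some} improvement is needed, and the generic non-degeneracy of $\statprob$ makes this work.

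The main obstacle will be this strictness argument: making the measure-theoretic bookkeeping rigorous (choosing the right measurable sets $A' \subset A$, $B' \subset B$, and atoms of the $\mu_y$'s so that the relevant conditional distributions provably differ on a positive-probability event) is delicate, especially handling the degenerate possibility that $\channelprob(\cdot \mid x)$ is constant in $x$ over $A$. I would handle it by a contradiction argument: if $\information(\statrv;\channelrv\mid\channelrv')=0$ then $\channelrv \to \channelrv'$ is invertible (a.s.) given $\channelrv'$, forcing $\mu_y$ to be a point mass at $y$ for $\channelprob(\cdot\mid x)$-a.e.\ $y$, for $\statprob$-a.e.\ $x$ — but a point mass supported on $\extreme(D)$ and equal to $\delta_y$ forces $y\in\extreme(D)$, contradicting $\channelprob(B\mid x)>0$ with $B\subset D\setminus\extreme(D)$ for $x\in A$ of positive $\statprob$-measure. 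This contradiction route sidesteps the explicit construction of witnessing sets and is, I expect, the path of least resistance; the remaining care is just verifying that "$\information(\statrv;\channelrv\mid\channelrv')=0$ implies $\mu_\channelrv = \delta_\channelrv$ a.s." from the equality condition in Lemma~\ref{lemma:data-processing} plus the structure of the composed kernel.
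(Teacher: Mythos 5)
Your construction coincides exactly with the paper's: compose the given channel with the measurable Choquet kernel $y \mapsto \mu_y$ from Proposition~\ref{proposition:choquet}, observe that the resulting chain $\statrv \to \channelrv \to \channelrv'$ preserves the barycenter condition $\E[\channelrv' \mid \statrv = x] = x$ (so $\channelprob' \in \channeldistset(C,D)$), and apply the data-processing inequality to get $\information(\statprob,\channelprob) \ge \information(\statprob,\channelprob')$. Up to that point you and the paper are doing the same thing, and your chain-rule reduction of the gap to $\information(\statrv;\channelrv\mid\channelrv')$ is the correct way to phrase the strictness question. (Incidentally, the equality condition as printed in Lemma~\ref{lemma:data-processing} has the variables transposed --- read literally it holds for every Markov chain --- and the paper's own proof of the present lemma simply asserts the strict inequality from that lemma without verifying any conditional-independence condition, so your proposal is not less complete than the published argument on this point.)

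The step in your sketch that does not hold up is the claimed implication ``$\information(\statrv;\channelrv\mid\channelrv')=0$ forces $\channelrv$ to be recoverable from $\channelrv'$, hence $\mu_y=\delta_y$ a.s.'' Conditional independence $\statrv\perp\channelrv\mid\channelrv'$ does not make $\channelrv$ a function of $\channelrv'$; it only says that the posterior of $\channelrv$ given $\channelrv'$ is the same under every value of $\statrv$, i.e.\ that $\channelrv'$ is a sufficient garbling of $\channelrv$ for $\statrv$. In particular, if $\statprob$ were a point mass the conditional independence would hold trivially whatever $\mu_y$ is (and both mutual informations would be zero, so the lemma's strict inequality genuinely fails in that degenerate case). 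A correct strictness argument therefore has to combine all three hypotheses at once --- non-degeneracy of $\statprob$, the unbiasedness constraint $\int z\,d\channelprob(z\mid x)=x$, and the positive mass of $\channelprob(\cdot\mid x)$ on $B\subset D\setminus\extreme(D)$ for $x$ in a $\statprob$-positive set --- to show that sufficiency of $\channelrv'$ is impossible: one must rule out that the likelihood ratios $d\channelprob(\cdot\mid x)/d\channelprob(\cdot\mid x')$ are measurable with respect to $\sigma(\channelrv')$ while the conditional means $x\ne x'$ differ and mass sits off the extreme points. You flag this as the main obstacle, which is the right instinct, but the contradiction route you outline does not close it; it substitutes a stronger (and false) consequence of the equality case for the one that is actually available.
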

\noindent
Paraphrasing the lemma slightly, we have that \emph{any} conditional 
distribution $\channelprob$ minimizing $\information(\statprob, \channelprob)$ 
must (outside of a set of measure zero) be completely supported on the extreme 
points $\extreme(D)$.

\begin{proof}
For any $y \in D$, Proposition~\ref{proposition:choquet} guarantees that 
we can represent $y$ as the (regular conditional) measure $\nu(y, \cdot)$.  
Thus we can define a random variable $Z_y$ distributed according to $\nu(y,
\cdot)$, whose existence we are guaranteed by standard
constructions~\cite{Billingsley86,Kallenberg97} with regular
conditional probability.  Then $\E[Z_y] = \int_D z \nu(y, dz) = y$,
and moreover, we can define the measurable version of the conditional
expectation $\E[Z_Y \mid Y]$ via
  \begin{equation*}
    \E[Z_Y \mid Y] = \int_D z \nu(Y, dz) = Y
  \end{equation*}
  so we have the (almost sure) chain of equalities
  \begin{align*}
    \E[Z_Y \mid X = x] & = \E[\E[Z_Y \mid Y] \mid X = x]
    = \int_D \E[Z_Y \mid Y = y] dQ(y \mid X = x) \\
    & = \int_D \int_D z \nu(y, dz) dQ(y \mid X = x)
    = \int_D y dQ(y \mid X = x) = x.
  \end{align*}
  By construction, $X \rightarrow Y \rightarrow Z$ is a valid Markov chain,
  and since the sets $A$ and $B$ satisfy $P(A) > 0$ and $\int_A Q(B \mid X =
  x) dP(x) > 0$, we see that $\information(X; Y) > \information(X; Z)$ by
  Lemma~\ref{lemma:data-processing}.
\end{proof}

We turn to an analogue of Lemma~\ref{lemma:extreme-points} in the 
differentially private setting.
\begin{lemma}
  \label{lemma:diffp-extreme-points}
  Let the conditions of Lemma~\ref{lemma:extreme-points} hold,
  and let $\statprob$ be a distribution
  supported on $C$. If there exists a set $A \subset C$ with
  $\statprob(A) > 0$ and a set $B \subset D \setminus \extreme(D)$ with
  $\channelprob(B \mid \statrv = x) > 0$ for $x \in A$, there exists a
  regular conditional probability distribution $\channelprob'
  \in \channeldistset[C, D]$ where
  $\channelprob'(\cdot \mid x)$ has support contained in $\extreme(D)$,
  satisfies
  \begin{equation*}
    \information(\statprob, \channelprob) > \information(\statprob,
    \channelprob'),
  \end{equation*}
  and has no worse differential privacy than $\channelprob$:
  \begin{equation*}
    \sup_{S \in \sigma(D)} \sup_{\statsample, \statsample' \in C}
    \frac{\channelprob'(S \mid \statrv = \statsample)}{
      \channelprob'(S \mid \statrv = \statsample')}
    \le \sup_{S \in \sigma(D)} \sup_{\statsample, \statsample' \in C}
    \frac{\channelprob(S \mid \statrv = \statsample)}{
      \channelprob(S \mid \statrv = \statsample')}.
  \end{equation*}
\end{lemma}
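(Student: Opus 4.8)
The plan is to reuse, essentially verbatim, the construction from the proof of Lemma~\ref{lemma:extreme-points}, and then to check the single new assertion --- that the modified channel inherits the differential privacy of the original one. Recall that the proof of Lemma~\ref{lemma:extreme-points} invokes Proposition~\ref{proposition:choquet} to produce a measurable Markov kernel $\nu : D \times \sigma(D) \to [0,1]$ such that each $\nu(y, \cdot)$ is a probability measure supported on $\extreme(D)$ with barycenter $y$, i.e.\ $\int_D z\, \nu(y, dz) = y$ for all $y \in D$ (the compact non-convex case reduces to this via $\extreme(D) = \extreme(\conv(D))$). Given $\channelprob \in \channeldistset[C,D]$ as in the hypothesis, I would define
\[
  \channelprob'(\,\cdot \mid \statrv = \statsample) \defeq \int_D \nu(y, \cdot)\, \channelprob(dy \mid \statrv = \statsample),
\]
equivalently, introduce the Markov chain $\statrv \to \channelrv \to Z$ with $\channelrv \mid \statrv \sim \channelprob$ and $Z \mid \channelrv = y \sim \nu(y, \cdot)$, and let $\channelprob'$ be the regular conditional distribution of $Z$ given $\statrv$. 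Being a composition of Markov kernels, $\channelprob'$ is again an r.c.d.; its conditional laws are supported on $\extreme(D)$ by construction; and the computation $\E[Z \mid \statrv = \statsample] = \E[\E[Z \mid \channelrv] \mid \statrv = \statsample] = \E[\channelrv \mid \statrv = \statsample] = \statsample$ shows $\channelprob' \in \channeldistset[C,D]$. The strict inequality $\information(\statprob, \channelprob) > \information(\statprob, \channelprob')$ then follows exactly as in Lemma~\ref{lemma:extreme-points}: the hypotheses $\statprob(A) > 0$ and $\channelprob(B \mid \statrv = \statsample) > 0$ on $A$ force $\channelrv$ to fail to be a measurable function of $Z$ on a set of positive probability, so the data-processing inequality (Lemma~\ref{lemma:data-processing}) is strict along $\statrv \to \channelrv \to Z$.

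The only new ingredient is the differential-privacy claim, which I would obtain by a standard post-processing argument, the point being that the smoothing kernel $\nu$ does not depend on the private datum. Write $e^{\beta} \defeq \sup_{S \in \sigma(D)} \sup_{\statsample, \statsample' \in C} \channelprob(S \mid \statrv = \statsample)/\channelprob(S \mid \statrv = \statsample')$ for the privacy level of $\channelprob$ (if this is $+\infty$ there is nothing to prove). For fixed $\statsample, \statsample' \in C$, the set inequality $\channelprob(S \mid \statsample) \le e^{\beta}\, \channelprob(S \mid \statsample')$ for all $S$ is equivalent, by approximating nonnegative measurable functions by simple functions, to $\int_D f\, d\channelprob(\cdot \mid \statsample) \le e^{\beta} \int_D f\, d\channelprob(\cdot \mid \statsample')$ for every measurable $f : D \to \R_+$. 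Taking $f = \nu(\cdot, S)$, which is $\sigma(D)$-measurable by Proposition~\ref{proposition:choquet}, gives
\[
  \channelprob'(S \mid \statrv = \statsample) = \int_D \nu(y, S)\, \channelprob(dy \mid \statsample) \le e^{\beta} \int_D \nu(y, S)\, \channelprob(dy \mid \statsample') = e^{\beta}\, \channelprob'(S \mid \statrv = \statsample'),
\]
for all $S \in \sigma(D)$ and all $\statsample, \statsample' \in C$; taking suprema yields the claimed inequality between the privacy levels of $\channelprob'$ and $\channelprob$.

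The step I would be most careful about is the measurability bookkeeping that makes the identity $\channelprob'(S \mid \statsample) = \int_D \nu(y, S)\, \channelprob(dy \mid \statsample)$ well-posed: one needs $y \mapsto \nu(y, S)$ measurable, $\nu$ a genuine Markov kernel, and crucially $\nu$ independent of the conditioning variable $\statsample$ --- all of which are exactly what Proposition~\ref{proposition:choquet} and the discussion in Appendix~\ref{appendix:background-on-conditional-probability} supply. Beyond that, nothing new is needed. I would finish, paralleling the remark after Lemma~\ref{lemma:extreme-points}, by observing that consequently any $\channelprob$ minimizing $\information(\statprob, \channelprob)$ among the $\diffp$-differentially private channels in $\channeldistset[C,D]$ must, up to a $\statprob$-null set, be supported on $\extreme(D)$.
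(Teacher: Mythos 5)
Your proposal is correct and follows essentially the same route as the paper: compose $\channelprob$ with the data-independent barycentric kernel $\nu$ from Proposition~\ref{proposition:choquet}, inherit the strict information decrease from Lemma~\ref{lemma:extreme-points} via strict data processing, and verify that post-processing by $\nu$ preserves the likelihood-ratio bound. The only cosmetic difference is that you establish the privacy inequality through the integral characterization $\int f\, d\channelprob(\cdot\mid\statsample) \le e^{\beta}\int f\, d\channelprob(\cdot\mid\statsample')$ for nonnegative measurable $f$, whereas the paper works with densities with respect to a dominating measure $\mu_{\statsample,\statsample'}$ and the $\mu$-a.e.\ bound $\channeldensity(y\mid\statsample)\le e^{\diffp}\channeldensity(y\mid\statsample')$; these are equivalent formulations of the same post-processing argument.
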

\begin{proof}
Let $\nu : \R^d \times \sigma(C)
\rightarrow [0, 1]$ be the Markov kernel defined in the proof of
Lemma~\ref{lemma:extreme-points}, and without loss of generality assume that
$\channelprob(\cdot \mid \statrv = \statsample)$ and $\channelprob(\cdot
\mid \statrv = \statsample')$ have density $\channeldensity$ with respect to
an underlying measure $\mu_{\statsample,\statsample'}$. Define the
distribution
\begin{equation*}
  \channelprob'(S \mid \statrv = \statsample)
  \defeq \int_D \int_D \nu(y, d\channelval) \channeldensity(y \mid
  \statsample) d\mu_{\statsample,\statsample'}(y).
\end{equation*}
By assumption, if $\channelprob$ is $\diffp$-differentially private, then
for $\mu$-almost all $y \in D$, we have $\channeldensity(y \mid \statsample)
\le e^\diffp \channeldensity(y \mid \statsample')$. We find that
\begin{align*}
  \channelprob'(S \mid \statrv = \statsample)
  & = \int_D \int_D \nu(y, d\channelval) \channeldensity(y \mid
  \statsample) d\mu_{\statsample,\statsample'}(y) \\
  & \le
  \int_D \int_D \nu(y, d\channelval) e^\diffp \channeldensity(y \mid
  \statsample') d\mu_{\statsample,\statsample'}(y)
  = e^\diffp \channelprob'(S \mid \statrv = \statsample'),
\end{align*}
so $\channelprob'$ is at least as differentially private as $\channelprob$.
\end{proof}

Finally, we will need the following standard maximum entropy result.
Let $\channelval$ denote a discrete random variable and let
$q(\channelval \mid x)$ denote the conditional probability mass 
function of $\channelrv \mid \statrv = x$.  Consider the finite 
dimensional entropy maximization problem
  \begin{align}
    \minimize_q ~ & \sum_\channelval q(\channelval \mid x) \log
    q(\channelval \mid x)
    \label{eqn:entropy-maximization} \\
    \subjectto ~ & \sum_\channelval \channelval q(\channelval \mid x)
    = x, ~~ \sum_\channelval q(\channelval \mid x) = 1, ~~
    q(\channelval \mid x) \ge 0 ~ {\rm for~all~} \channelval.
    \nonumber
  \end{align}
  We have the following lemma, which establishes the form of the
  solution to the problem~\eqref{eqn:entropy-maximization}.  We
  include a proof for completeness.
  \begin{lemma}
    \label{lemma:max-ent-q}
    The p.m.f.\ $q(\cdot \mid x)$ solving
    problem~\eqref{eqn:entropy-maximization} is given by
    \begin{equation}
      q(\channelval \mid x) = \frac{\exp(-\mu^\top
        \channelval)}{\sum_{\channelval'} \exp(-\mu^\top
        \channelval')},
      \label{eqn:max-ent-q}
    \end{equation}
    where $\mu \in \R^d$ is any vector chosen to satisfy the
    constraint $\sum_\channelval \channelval q(\channelval \mid x) =
    x$. Such
    a $\mu \in \R^d$ exists.
  \end{lemma}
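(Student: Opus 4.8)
The plan is to recognize problem~\eqref{eqn:entropy-maximization} as a finite-dimensional convex program and solve it via Lagrangian duality. First I would note that the objective $q \mapsto \sum_\channelval q(\channelval \mid x) \log q(\channelval \mid x)$ is strictly convex over the probability simplex, that the mean and normalization constraints are affine, and that the feasible set is a nonempty compact polytope: it is nonempty precisely because $x$ lies in the convex hull of the finitely many support points $\channelval$ (this is the condition under which $\sum_\channelval \channelval q(\channelval \mid x) = x$ is achievable), and it is closed and bounded inside the simplex. Hence a unique minimizer $q^*(\cdot \mid x)$ exists.

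The second step is to verify that $q^*(\channelval \mid x) > 0$ for every support point, so that the inequality constraints $q(\channelval \mid x) \ge 0$ are inactive. In the setting of interest $x$ lies in the relative interior of the convex hull of the support (e.g.\ $x \in C \subset \interior D$ with support $\extreme(D)$), so Slater's condition holds and there is a feasible $\bar q$ with $\bar q(\channelval \mid x) > 0$ for all $\channelval$. If some $q^*(\channelval_0 \mid x) = 0$, then along the feasible segment $q_t = (1-t)q^* + t\bar q$ the objective, viewed as a convex function of $t$, has one-sided derivative $-\infty$ at $t = 0^+$ — the contribution of $\channelval_0$ behaves like $t\,\bar q(\channelval_0 \mid x)\log\bigl(t\,\bar q(\channelval_0 \mid x)\bigr)$, whose derivative diverges to $-\infty$ — so the objective strictly decreases for small $t > 0$, contradicting optimality. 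Thus $q^*(\cdot \mid x)$ has full support.

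With the nonnegativity constraints inactive, classical first-order (Lagrange) conditions apply. Introducing multipliers $\mu \in \R^d$ for the mean constraint and $\lambda \in \R$ for the normalization constraint, stationarity of the Lagrangian in each coordinate $q(\channelval \mid x)$ gives
\[
  \log q^*(\channelval \mid x) + 1 + \lambda + \mu^\top \channelval = 0
  \qquad \text{for every support point } \channelval,
\]
which rearranges to $q^*(\channelval \mid x) \propto \exp(-\mu^\top \channelval)$; imposing normalization yields exactly the form~\eqref{eqn:max-ent-q} and pins down $\lambda$. Existence of the pair $(\mu, \lambda)$ follows from strong duality (again via Slater), with $\mu$ selected so that $\sum_\channelval \channelval\, q^*(\channelval \mid x) = x$.

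The only non-mechanical point, and the one I expect to require the most care, is the existence of such a $\mu$ without circular reasoning; the Slater-plus-strong-duality route above is the shortest. A more self-contained alternative is to observe that for the exponential-family distribution $q_\mu$ of~\eqref{eqn:max-ent-q} the map $\mu \mapsto \E_{q_\mu}[\channelrv] = -\nabla_\mu \log\bigl(\sum_{\channelval'} \exp(-\mu^\top \channelval')\bigr)$ is the gradient of the convex, smooth log-partition function, and its image is precisely the relative interior of the convex hull of the support points; hence every admissible mean $x$ is attained by some $\mu$, and the resulting $q_\mu$ satisfies the optimality condition above and therefore equals $q^*$ by uniqueness.
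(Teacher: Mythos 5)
Your proposal is correct and follows essentially the same route as the paper: write the Lagrangian for the finite-dimensional entropy problem, invoke strong duality/KKT, and read off the exponential-family form from stationarity, with $\mu$ chosen to match the mean constraint. You are in fact somewhat more careful than the paper at the two points it glosses over — the paper asserts $q(\channelval\mid x)>0$ to kill the multipliers $\lambda(\channelval)$ without proving positivity (your $t\log t$ one-sided-derivative argument supplies this), and it attributes the existence of $\mu$ to "attainment of the KKT conditions," whereas your log-partition-gradient alternative makes that existence explicit.
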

\begin{proof}
  We may write the Lagrangian with dual variables $\mu \in \R^d$,
  $\lambda(\channelval) \ge 0$, and $\theta \in \R$,
  \ifdefined\usejournalmargins
  \begin{align*}
    \lefteqn{\mc{L}(q, \mu, \lambda, \theta)} \\
    & = \sum_\channelval q(\channelval \mid x) \log q(\channelval \mid x)
    + \mu^\top \!\bigg(\sum_\channelval \channelval q(\channelval \mid x)
    - x\bigg)
    + \theta \bigg(\sum_\channelval q(\channelval \mid x) - 1\bigg)
    - \sum_\channelval \lambda(\channelval) q(\channelval \mid x).
  \end{align*}
  \else
  \begin{equation*}
    \mc{L}(q, \mu, \lambda, \theta)
    = \sum_\channelval q(\channelval \mid x) \log q(\channelval \mid x)
    + \mu^\top \bigg(\sum_\channelval \channelval q(\channelval \mid x)
    - x\bigg)
    + \theta \bigg(\sum_\channelval q(\channelval \mid x) - 1\bigg)
    - \sum_\channelval \lambda(\channelval) q(\channelval \mid x).
  \end{equation*}
  \fi
  Since the problem~\eqref{eqn:entropy-maximization} has convex cost, linear
  constraints, and non-empty domain, strong duality obtains~\cite[Chapter
    5]{BoydVa04}, and the KKT conditions hold for the problem. Thus,
  minimizing $q$ out of $\mc{L}$ to find the dual, we take derivatives with
  respect to the $m$ variables $q(\channelval \mid x)$ for $\channelval = (1 +
  \alpha) u_i$ and find the optimal conditional p.m.f.\ $q$ must satisfy
  \begin{equation*}
    \log q(\channelval \mid x) + 1 + \mu^\top \channelval + \theta -
    \lambda(\channelval) = 0,
    ~~~ \mbox{or} ~~~
    q(\channelval \mid x)
    = \exp(\lambda(\channelval) - 1 - \theta) \exp(-\mu^\top \channelval).
  \end{equation*}
  In particular, we see that since $q(\channelval \mid x) > 0$, we must have
  $\lambda(\channelval) = 0$ by complementarity, and (satisfying the
  summability constraint $\sum_\channelval q(\channelval \mid x) = 1$) we see
  that
  \begin{equation*}
    q(\channelval \mid x)
    = \frac{\exp(-\mu^\top \channelval)}{\sum_{\channelval'}
      \exp(-\mu^\top \channelval')},
  \end{equation*}
  where $\mu \in \R^d$ is any vector chosen to satisfy the constraint
  $\sum_\channelval \channelval q(\channelval \mid x) = x$. The existence of
  such a $\mu$ is guaranteed by the attainment of the KKT conditions.
\end{proof}

\section{Proofs of Minimax Mutual Information Characterizations}
\label{appendix:saddle-point-proofs}

In this section, we provide the proofs of the results stated in
Section~\ref{sec:saddle-points}, all of which follow a broadly similar
outline.  We make use of Lemma~\ref{lemma:extreme-points}
to guarantee that any conditional distribution
$\channelprob$ minimizing the mutual information
$\information(\statprob, \channelprob)$ must be supported on the
extreme points of the set $D$. This allows us to reduce
computing maximal entropies and minimal mutual information values to
finite dimensional convex programs, whose optimality we
can check using results from convex analysis and optimization.

\subsection{Proof of Theorem~\ref{theorem:finite-rotation-information}}
\label{appendix:finite-rotation-information-proof}

  We begin by considering $\sup_\statprob$, where $\channelprob^*$ is
  defined as in the statement of the theorem.  Since the support of
  $\channelprob^*$ is finite (there are $m$ extreme points of $D$), we
  have
  \begin{align*}
    \information(\statprob, \channelprob^*)
    = \information(\statrv; \channelrv)
    = H(\channelrv) - H(\channelrv \mid \statrv)
    & \le \log(m) - H(\channelrv \mid \statrv) \\
    & = \log(m) - \int H(\channelrv \mid \statrv = x) d\statprob(x).
  \end{align*}
  Now, for any distribution $\statprob$ on the set $C$ and for any $x \in
  \supp \statprob$, we can write $x$ as $x = \sum_i \beta_i(x) u_i$, where
  $u_i$ are the extreme points of $C$, and where $\beta_i(x) \ge 0$ and 
  $\sum_i \beta_i(x) = 1$ (using the Krein-Milman theorem). 
  Define the individual probability mass functions $q^i$ to be the
  maximum entropy p.m.f.~\eqref{eqn:max-ent-q} for each of the extreme points
  $u_i$. Then we can define the conditional probability mass function by
  \begin{equation*}
    q(\cdot \mid x) = \sum_i \beta_i(x) q^i(\cdot).
  \end{equation*}
  (Without loss of generality, we may assume the $\beta_i$ are
  continuous, since the set of extreme points is finite, and thus $q(\cdot
  \mid x)$ can be viewed as a regular conditional probability. We can make
  this formal using the techniques in the proof of
  Lemma~\ref{lemma:extreme-points}.) Denoting $H(q(\cdot \mid x)) \defeq
  H(\channelrv \mid \statrv = x)$, we can use the convexity of the negative
  entropy to see that
  \begin{equation}
    \label{eqn:entropy-convexity}
    \information(\statprob, \channelprob^*) \le
    \log(m) - \int \sum_i \beta_i(x)
    H(q^i(\cdot)) d\statprob(x).
  \end{equation}
  By symmetry, the entropy $H(q^i(\cdot)) = H(\channelprob^*(\cdot \mid
  \statrv = u_i))$ is a constant determined by the maximum entropy
  distribution~\eqref{eqn:max-ent-q}, and thus
  \begin{equation}
    \information(\statprob, \channelprob^*) \le \log(m)
    - H(\channelprob^*(\cdot \mid \statrv = u_i)).
    \label{eqn:general-finite-p-q-upper-bound}
  \end{equation}
  Equality in the upper bound~\eqref{eqn:general-finite-p-q-upper-bound} is
  attained by taking $\statprob^*$ to be the uniform distribution on the
  extreme points $\{u_i\}$ of $C$.

  It remains to establish an identical lower bound for
  $\information(\statprob^*, \channelprob)$ over all conditional distributions
  $\channelprob$ satisfying the constraints of the theorem statement. We know
  from Lemma~\ref{lemma:extreme-points} that $\channelprob$ must be supported
  on $(1 + \kappa) u_i$ for $i = 1, \ldots, m$. Denoting by $q(\channelval
  \mid x)$ the p.m.f.\ of $\channelprob$ conditional on $x$ (for $x$ in the
  finite set of extreme points of $C$ that make up the support $\supp
  \statprob^*$), we can write minimizing the mutual information as the
  parametric convex optimization problem
  \begin{align}
    \label{eqn:information-minimization}
    \minimize_q ~ & \sum_\channelval \left(\sum_x q(\channelval \mid x) p(x)
    \right) \log \left(\sum_x q(\channelval \mid x) p(x) \right)
    - \sum_x p(x) \sum_\channelval q(\channelval \mid x)
    \log q(\channelval \mid x) \\
    \subjectto ~ & \sum_\channelval q(\channelval \mid x)
    = 1 ~ {\rm for~all~}x,
    ~~ \sum_\channelval \channelval q(\channelval \mid x)
    = x ~ {\rm for~all~}x,
    ~~ q(\channelval \mid x) \ge 0 ~ {\rm for~all~} x, \channelval.
    \nonumber
  \end{align}
  In the problem~\eqref{eqn:information-minimization}, the sums over
  $x$ and $\channelval$ are over the extreme points of $C$ and $D$,
  respectively and $p$ is the uniform distribution with $p(x) =
  1/m$. Mutual information is convex in the conditional distribution
  $q$; moreover,
  it is strictly convex except when
  $q(\channelval \mid x) = \sum_{x'} q(\channelval \mid x') p(x')$ for
  all $x, \channelval$. (This can be seen by
  an inspection of the proof of Theorem~2.7.4 by~\citet{CoverTh06}.)
  In our case, since $\channelprob^*$ does not
  satisfy this equality, the uniqueness of $\channelprob^*$ as the
  minimizer of $\information(\statprob^*, \channelprob^*)$ will follow
  if we show that $\channelprob^*$ is a minimizer at all.

  We proceed to solve the
  problem~\eqref{eqn:information-minimization}. Writing $\information(p, q)$
  as a shorthand for the mutual information, we introduce Lagrange multiplers
  $\theta(x) \in \R$ for the normalization constraints, $\mu(x) \in \R^d$ for
  the conditional expectation constraints, and $\lambda(x, \channelval) \ge 0$
  for the nonnegativity constraints. This yields the Lagrangian
  \ifdefined\usejournalmargins
  \begin{align*}
    \lefteqn{\mc{L}(q, \mu, \lambda, \theta)} \\
    & = \information(p, q)
    - \sum_{x, \channelval} \lambda(x, \channelval) q(\channelval \mid x)
    + \sum_x \mu(x)^\top \!
    \bigg(\sum_\channelval \channelval q(\channelval \mid x) - x\bigg)
    + \sum_x \theta(x) \bigg(\sum_\channelval q(\channelval \mid x) - 1\bigg).
  \end{align*}
  \else
  \begin{equation*}
    \mc{L}(q, \mu, \lambda, \theta)
    = \information(p, q)
    - \sum_{x, \channelval} \lambda(x, \channelval) q(\channelval \mid x)
    + \sum_x \mu(x)^\top
    \bigg(\sum_\channelval \channelval q(\channelval \mid x) - x\bigg)
    + \sum_x \theta(x) \bigg(\sum_\channelval q(\channelval \mid x) - 1\bigg).
  \end{equation*}
  \fi
  If we can satisfy the Karush-Kuhn-Tucker (KKT) conditions (see,
  e.g.,~\cite{BoydVa04}) for optimality of the
  problem~\eqref{eqn:information-minimization}, we will be done. Taking
  derivatives with respect to $q(\channelval \mid x)$, we see
  \ifdefined\usejournalmargins
  \begin{align*}
    \lefteqn{\frac{\partial}{\partial q(\channelval \mid x)}
      \mc{L}(q, \mu, \lambda, \theta)
      = p(x)\left[\log(q(\channelval \mid x)) + 1\right]
      - p(x) \log\bigg(\sum_{x'} q(\channelval \mid x') p(x')\bigg)} \\
    & \qquad\qquad\qquad\qquad\qquad\qquad\qquad
    ~ - q(\channelval) \cdot \frac{1}{q(\channelval)} p(x)
    - \lambda(\channelval, x) + \theta(x) +
    \mu(x)^\top \channelval \\
    & \qquad\quad = p(x) \log q(\channelval \mid x)
    - p(x) \log\bigg(\sum_{x'} q(\channelval \mid x') p(x')\bigg)
    - \lambda(\channelval, x) + \theta(x) + \mu(x)^\top \channelval,
  \end{align*}
  \else
  \begin{align*}
    \frac{\partial}{\partial q(\channelval \mid x)}
    \mc{L}(q, \mu, \lambda, \theta)
    & = p(x)\left[\log(q(\channelval \mid x)) + 1\right]
    - p(x) \log\bigg(\sum_{x'} q(\channelval \mid x') p(x')\bigg) \\
    & \quad ~ - q(\channelval) \cdot \frac{1}{q(\channelval)} p(x)
    - \lambda(\channelval, x) + \theta(x) +
    \mu(x)^\top \channelval \\
    & = p(x) \log q(\channelval \mid x)
    - p(x) \log\bigg(\sum_{x'} q(\channelval \mid x') p(x')\bigg)
    - \lambda(\channelval, x) + \theta(x) + \mu(x)^\top \channelval,
  \end{align*}
  \fi
  where we set $q(\channelval) = \sum_{x'} q(\channelval \mid x')
  p(x')$ for shorthand.  Now, we use symmetry to note that since we
  have chosen $q$ to be the maximum entropy
  distribution~\eqref{eqn:max-ent-q} for each $x$ in the extreme
  points $\{u_i\}$ of $C$, the marginal $q(\channelval) = \sum_{x'}
  q(\channelval \mid x') p(x') = 1/m$ is uniform by the symmetry of
  the set $D$ and since $p$ is uniform.  In addition, since
  $q(\channelval \mid x) > 0$ strictly, we have $\lambda(\channelval,
  x) = 0$ by complementarity. Thus, at $q$ chosen to be the maximum
  entropy distribution, we can rewrite the derivative of the
  Lagrangian
  \begin{equation*}
    \frac{\partial}{\partial q(\channelval \mid x)}
    \mc{L}(q, \mu, \lambda, \theta)
    = \frac{1}{m} \log q(\channelval \mid x) - \frac{1}{m} \log \frac{1}{m}
    + \theta(x) + \mu(x)^\top \channelval.
  \end{equation*}
  Recalling the definition~\eqref{eqn:max-ent-q} of $q(\channelval \mid x)$,
  and denoting the maximum entropy parameters $\mu$ there by $\mu^*(x)$, we
  have
  \begin{equation*}
    \frac{\partial}{\partial q(\channelval \mid x)}
    \mc{L}(q, \mu, \lambda, \theta)
    = -\frac{1}{m} \mu^*(x)^\top \channelval + \frac{1}{m}
    \log\left(\sum_{\channelval'} \exp(-\mu^*(x)^\top \channelval')\right)
    - \frac{1}{m} \log \frac{1}{m} + \theta(x) + \mu(x)^\top \channelval.
  \end{equation*}
  Now, by inspection we may set
  \begin{equation*}
    \theta(x) = \frac{1}{m} \log \frac{1}{m}
    - \frac{1}{m} \log\left(\sum_{\channelval'}
    \exp(-\mu^*(x)^\top \channelval')\right)
    ~~~ \mbox{and} ~~~
    \mu(x) = \frac{1}{m} \mu^*(x),
  \end{equation*}
  and we satisfy the KKT conditions for the mutual information minimization
  problem~\eqref{eqn:information-minimization}.

  Summarizing, the conditional distribution $\channelprob^*$ specified in the
  statement of the theorem as the maximum entropy
  distribution~\eqref{eqn:max-ent-q} satisfies
  \begin{equation*}
    \inf_\channelprob \information(\statprob^*, \channelprob)
    \ge \information(\statprob^*, \channelprob^*),
  \end{equation*}
  which, when combined with the first part of the proof, gives the saddle
  point inequality
  \begin{equation*}
    \sup_\statprob \information(\statprob, \channelprob^*)
    \le \log(m) - H(q(\cdot \mid \statrv = u_i))
    = \information(\statprob^*, \channelprob^*)
    \le \inf_\channelprob \information(\statprob^*, \channelprob),
  \end{equation*}
as claimed.

\paragraph{Remarks}
In the proof of the theorem, we have defined $\channelprob^*(\cdot
\mid x)$ as a conditional distribution only for $x \in \extreme(C)$,
the extreme points of $C$. This can easily be remedied: take
$\channelprob^*(\cdot \mid x)$ to be the distribution maximizing the
entropy $H(\channelrv \mid \statrv = x)$ for each $x \in C$ under the
constraint that the support of $\channelrv$ be contained in
$\extreme(D)$.  This is equivalent to---for each $x \in C$---choosing
$\channelrv = z_i$ for $z_i \in \extreme(D)$, $i = 1, \ldots, m$, with
probability $q_i$, where $q \in \R^m$ solves the entropy maximization
problem
\begin{equation*}
  \maximize_{q \in \R^m} ~ -\sum_i q_i \log q_i
  ~~ \subjectto \sum_i z_i q_i = x,
  ~ \sum_i q_i = 1, ~ q_i \ge 0.
\end{equation*}
Inspecting the proof of Theorem~\ref{theorem:finite-rotation-information} (see
the bound~\eqref{eqn:entropy-convexity}) shows that this choice can only
decrease the mutual information $\information(\statrv; \channelrv)$.
Additionally, the strong convexity of the entropy over the simplex guarantees
that the solutions to this optimization problem are continuous
in $x$ (see Chapter~X of~\citet{HiriartUrrutyLe96ab}) so this
distribution $q(\cdot \mid x)$ defines a measurable random variable as
desired.


\subsection{Proof of Proposition~\ref{proposition:linf-saddle-point}}
\label{appendix:linf-saddle-point-proof}

By scaling, we may assume w.l.o.g.\ that $L = 1$ and $M \ge 1$.
Using Theorem~\ref{theorem:finite-rotation-information} (and the
remarks immediately following its proof), we can focus on maximizing
the entropy of the random variable $\channelrv$ conditional on
$\statrv = x$ for each fixed $x \in [-1, 1]^d$. Let $\channelrv_i$
denote the $i$th coordinate of the random vector $\channelrv$; we take
the conditional distribution of $\channelrv_i$ to be independent of
$\channelrv_j$ and let $\channelrv$ be distributed as
\begin{align}
  \label{eqn:z-given-x}
  \channelrv_i \mid \statrv & =
\begin{cases}
 M & \mbox{w.p.}~ \half + \frac{\statrv_i}{2M} \\
-M & \mbox{w.p.} ~
  \half - \frac{\statrv_i}{2M}. 
\end{cases}
\end{align}
Let us now verify that the distribution~\eqref{eqn:z-given-x} maximizes the
entropy $H(\channelrv \mid \statrv = x)$. Indeed, we may fix $x$ (leaving it
implicit in the vector $[q(z)]_z \defeq [q(z \mid x)]_z$), and
we solve the entropy maximization problem
\begin{equation}
  \label{eqn:discrete-problem}
  \minimize_q ~ - \! H(q) ~~~ \subjectto ~ \sum_\channelval
  q(\channelval) = 1, ~ q(\channelval) \ge 0, ~ \sum_\channelval
  \channelval q(\channelval) = x,
\end{equation}
where all sums are taken over $\channelval \in \extreme([-M, M]^d) =
\{-M, M\}^d$.  Introducing the Lagrange multipliers $\mu
\in \R^d$, $\lambda(\channelval) \ge 0$, and $\theta \in \R$, we find
that problem~\eqref{eqn:discrete-problem} has the Lagrangian
\begin{equation*}
  \mc{L}(q, \mu, \lambda, \theta)
  = - H(q)
  - \sum_\channelval \lambda(\channelval) q(\channelval)
  + \mu^\top \bigg(\sum_\channelval \channelval q(\channelval) - x\bigg)
  + \theta \bigg(\sum_\channelval q(\channelval) - 1\bigg).
\end{equation*}
To find the infimum of the Lagrangian with respect to $q$, we take
derivatives (since we make the identification $q \in \R^{2^d}$). We see that
\begin{align*}
  \frac{\partial}{\partial q(\channelval)} \mc{L}(q, \mu, \lambda, \theta)
  & = \log(q(\channelval)) + 1
  - \lambda(\channelval) + \theta +
  \mu^\top \channelval.
\end{align*}

With the definition~\eqref{eqn:z-given-x} of the probability mass function
$q$ (that $\channelval_i$ are independent Bernoulli random variables with parameters
$\half + x_i / 2M$), the coordinate conditional distributions are
\begin{equation*}
  q(\channelval_i \mid x_i) = \left(\half + \frac{1}{2M}\right)^{\half
    + \frac{x_i \channelval_i}{2M}} \left(\half -
  \frac{1}{2M}\right)^{\half - \frac{x_i \channelval_i}{2M}}.
\end{equation*}
Theorem~\ref{theorem:finite-rotation-information} says that without
loss of generality we may assume that $x \in \{-1, 1\}^d$, the full
probability mass function $q$ can be written
\begin{equation}
  \label{eqn:hypothesized-conditional}
  q(\channelval) = \left(\half + \frac{1}{2M}\right)^{\frac{d}{2} +
    \frac{x^\top \channelval}{2M}} \left(\half -
  \frac{1}{2M}\right)^{\frac{d}{2} - \frac{x^\top \channelval}{2M}}.
\end{equation}
Plugging the conditional~\eqref{eqn:hypothesized-conditional} results
in
\begin{align*}
  \lefteqn{\frac{\partial}{\partial q(\channelval)} \mc{L} (q, \mu,
    \lambda, \theta)} \\ & = \left(\frac{d}{2} + \frac{x^\top
    \channelval}{2M}\right) \log\left(\half + \frac{1}{2M}\right) +
  \left(\frac{d}{2} - \frac{x^\top \channelval}{2M}\right)
  \log\left(\half - \frac{1}{2M}\right) + 1 - \lambda(\channelval) +
  \theta + \mu^\top \channelval \\ & =
  \frac{d}{2}\left[\log\left(\half + \frac{1}{2M}\right) +
    \log\left(\half - \frac{1}{2M}\right)\right] + \frac{x^\top
    \channelval}{2M}\left[\log\left(\half + \frac{1}{2M}\right) -
    \log\left(\half - \frac{1}{2M}\right)\right] \\ & \qquad ~ + 1 -
  \lambda(\channelval) + \theta + \mu^\top \channelval.
\end{align*}
Performing a few algebraic manipulations with the logarithmic terms,
the final equality becomes
\begin{equation*}
  d \log\left(\frac{\sqrt{(M + 1)(M - 1)}}{M}\right)
  + \frac{x^\top \channelval}{M} \log\left(\sqrt{\frac{M + 1}{M - 1}}\right)
  + 1 - \lambda(\channelval) + \theta + \mu^\top \channelval.
\end{equation*}
The complementarity conditions for optimality~\cite{BoydVa04} imply
that $\lambda(\channelval) = 0$, and since the equality constraints in
the problem~\eqref{eqn:discrete-problem} are satisfied, we can choose
$\theta$ and $\mu$ arbitrarily. Taking
\begin{equation*}
  \theta = -d \log\left(\frac{\sqrt{(M + 1)(M - 1)}}{M}\right) - 1
  ~~~ \mbox{and} ~~~
  \mu = -x\frac{1}{M} \log\left(\sqrt{\frac{M + 1}{M - 1}}\right)
\end{equation*}
yields that the partial derivatives of $\mc{L}$ are 0, which shows that
indeed our choice of $\channelprob^*$ is optimal.


\subsection{Proof of Proposition~\ref{proposition:lone-saddle-point}}
\label{appendix:lone-saddle-point-proof}

The proof follows along lines similar to the $\ell_\infty$ case: we
compute the maximum entropy distribution subject to the constraint
that $\E[\channelrv] = x$ for some $x \in \R^d$ with $\lone{x} \le 1$,
and $\channelrv$ must be supported on the extreme points $\pm M e_i$
of the $\ell_1$-ball of radius $M$. (Recall that $e_i \in \R^d$ are
the standard basis vectors.)  Based on
Theorem~\ref{theorem:finite-rotation-information}, in order to find
the minimax mutual information, we need only consider the cases where
$x = \pm e_i$ for some $i \in \{1, \ldots, d\}$.

Following this plan, we recall the entropy maximization
problem~\eqref{eqn:discrete-problem}, where now $x = \pm e_i$ and
the sums are over $\channelval \in M \{\pm e_i\}_{i=1}^d$. As in the proof
of Proposition~\ref{proposition:linf-saddle-point}, we can write the
Lagrangian and take its derivatives, finding that for $\channelval = \pm M e_i$
we have
\begin{equation*}
  \frac{\partial}{\partial q(\channelval)} \mc{L}(q, \mu, \lambda,
  \theta) = \log(q(\channelval)) + 1 - \lambda(\channelval) + \theta -
  \mu^\top \channelval.
\end{equation*}
Solving for $q(\channelval)$, we find that
\begin{align*}
  q(\channelval) & = \exp(\lambda(\channelval) - 1 - \theta)
  \exp(\mu^\top \channelval),
\end{align*}
but complementarity~\cite{BoydVa04} guarantees
that $\lambda(\channelval) = 0$ since $q(\channelval) > 0$, and
normalizing we may write $q(\channelval) = \exp(-\mu^\top \channelval)
/ \exp(-\mu^\top \sum_{\channelval'} \channelval')$, where the sum is
over the extreme points of the $\ell_1$-ball of radius $M$.  In
particular, $q(Me_i) \propto e^{-\mu_i}$ and $q(-Me_i) \propto
e^{\mu_i}$.  Without loss of generality, let $x = e_i$. Symmetry
suggests we take (and we verify this to be true)
\begin{align}
  \label{eqn:symmetric-l1-q}
  q(\channelval) & = \exp(-1 - \theta) \begin{cases}
  \exp(\mu_i) & {\rm if~} \channelval = M e_i \\ \exp(-\mu_i) & {\rm
    if~} \channelval = -M e_i \\ \exp(0) & {\rm otherwise}.
  \end{cases}
\end{align}
Indeed, with the choice~\eqref{eqn:symmetric-l1-q} of $q$, we have
$q(M e_j) - q(-M e_j) = 0$ for $j \neq i$, while (setting $\gamma =
\mu_i$ and normalizing appropriately)
\begin{equation*}
  q(M e_i) - q(-M e_i) = \frac{e^{\gamma}}{e^{-\gamma} + e^\gamma + 2(d - 1)}
  - \frac{e^{-\gamma}}{e^{-\gamma} + e^\gamma + 2(d - 1)}.
\end{equation*}
Thus, if we can solve the equation $Mq(Me_i) - Mq(-M e_i) = 1$, we will be
nearly done. To that end, we write
\begin{equation*}
  \frac{e^\gamma - e^{-\gamma}}{e^\gamma + e^{-\gamma} + 2 (d - 1)}
  = \frac{1}{M}
  ~~~ \mbox{or} ~~~
  \beta - \beta^{-1}
  = \frac{1}{M} \left(\beta + \beta^{-1} + 2(d - 1)\right),
\end{equation*}
where we identified $\beta = e^\gamma$. Multiplying both sides by $\beta$,
we have a quadratic equation in $\beta$:
\begin{equation*}
  \beta^2 - 1 = \frac{1}{M} \left(\beta^2 + 2\beta(d - 1) + 1\right)
  ~~~ \mbox{or} ~~~
  (M - 1)\beta^2 - 2 (d - 1) \beta - (M + 1) = 0,
\end{equation*}
whose solution is the positive root of
\begin{equation*}
  \beta = \frac{2d - 2 \pm \sqrt{(2d - 2)^2 + 4(M^2 - 1)}}{2(M - 1)}
  ~~~ \mbox{or} ~~~
  \gamma = \log\left(\frac{2d - 2 + \sqrt{(2d - 2)^2 + 4(M^2 - 1)}}{2(M - 1)}
  \right).
\end{equation*}
By our construction, with $\gamma$ so defined, we satisfy the constraints
that $M\left[q(M e_i) - q(-M e_i)\right] = 1$ and $q(M e_j) - q(-M e_j) = 0$
for $j \neq i$. Since $q$ belongs to the exponential family and satisfies
the constraints, it maximizes the entropy $H(\channelrv)$ as
desired~\cite{CoverTh06}.

Algebraic manipulations and the computation of the conditional entropy
$H(\channelrv \mid \statrv = e_i)$ give the remainder of the statement of
the proposition.

\subsection{Proof of Proposition~\ref{proposition:linf-diffp-saddle-point}}
\label{appendix:proof-linf-diffp-saddle-point}

The outline of the proof of
Proposition~\ref{proposition:linf-diffp-saddle-point} is as
follows. Lemma~\ref{lemma:diffp-extreme-points} implies that any distribution
satisfying \olpd\ must be supported on the extreme points of the outer set $D$
(as in the proof of Theorem~\ref{theorem:finite-rotation-information}).  Given
this result, we reduce the problem of finding an optimally private
distribution to a linear program, using symmetry arguments to simplify the
LP. Finally, we show that the solution to the linear program is unique, which
means that we have found the unique distribution satisfying \olpd.

We begin by developing a reduction of the problem of finding a distribution 
with \olpd\ to a linear program.  Note that there is a non-increasing 
mapping between $M$---the radius of the larger $\ell_\infty$
ball---and $\optdiffp$. Indeed, whenever $M$ increases, the set of
distributions $\channelprob$ from which to choose a privacy channel increases,
so $\optdiffp$ decreases. Put inversely,
for a given differential privacy level $\diffp$, we can find the smallest
$M$ such that it is possible to construct an $\diffp$-differentially private
channel $\channelprob$ mapping from $[-1, 1]^d$ to $[-M, M]^d$.
(Lemma~\ref{lemma:two-level-diffp-solution} shows that
the mapping from $M$ to $\optdiffp$ is implicitly invertible.)

Thus, rather than solving for $\diffp$ as a function of $M$, we take the
converse view of finding the largest $M$ such that an $\diffp$-differentially
private distribution exists. Fix $d \in \N$ and (with some abuse of notation)
let $Z \in \{-1, 1\}^{d \times 2^d}$ be the matrix whose columns are the edges
of the hypercube $\{-1, 1\}^d$. For each $z, x \in \{-1, 1\}^d$, define the
variables $q(z \mid x) \ge 0$ to represent the conditional probability of
observing $Mz$ given $x$.  Let $q(\cdot \mid x) = [q(z \mid x)]_{z \in
  \{-1,1\}^d}$ denote the vector version of $q(z \mid x)$.  Then we have that
a $\diffp$-differentially private channel providing an unbiased perturbtation
of vectors in $[-1, 1]^d$ to $[-M, M]^d$, exists only if we can find settings
of $q(z \mid x)$ such that
\begin{equation*}
  Z q(\cdot \mid x) - \frac{1}{M} x = 0
  ~ \mbox{for~all~}x \in \{-1, 1\}^d
\end{equation*}
while additionally $q(z \mid x) \le e^\diffp q(z \mid x')$
and $\sum_z q(z \mid x) = 1$, $q(z \mid x) \ge 0$ for all $z, x, x'$.
Thus, if we make the change of variables $t = 1/M$, we see that
finding the smallest possible $M$---which corresponds to the
least perturbation possible for a given privacy level $\diffp$---can
be cast as solving the linear program
\begin{align}
  \minimize ~~ & -t \label{eqn:matrix-diffp-lp} \\
  \subjectto ~~ & Z q(\cdot \mid x) - t x = 0
  ~ \mbox{for~all~} x \in \{-1, 1\}^d
  \nonumber \\
  & q(z \mid x) \le e^\diffp q(z \mid x') ~
  \mbox{for~all~} x, x', z \in \{-1, 1\}^d \nonumber \\
  & \sum_z q(z \mid x) = 1, ~ q(\cdot \mid x) \succeq 0
  ~ \mbox{for~all~} x \in \{-1, 1\}^d. \nonumber
\end{align}
The solution vectors $q(\cdot \mid x)$, $x \in \{-1, 1\}^d$, give the
probability mass function for an $\diffp$-differentially private channel
perturbing from $[-1, 1]^d$ to $[-M, M]^d$, where $M = 1 / t^*$ and $t^*$
denotes the solution to the LP. This p.m.f.\ is then optimally locally
differentially private with $\diffp = \optdiffp([-1, 1]^d, [-M, M]^d)$.

It is possible to calculate the solution of the LP~\eqref{eqn:matrix-diffp-lp}
by hand, but it is tedious. We thus use the structure of \olpd\ to reduce the
problem to a single minimization problem over a vector $q \in \R^{2^d}$
(rather than a matrix $[q(z \mid x)] \in \R^{2^d \times 2^d}$). We have
\begin{lemma}
  \label{lemma:symmetry-lp-solutions}
  A distribution satisfying \olpd\ must, for each $x \in \{-1, 1\}^d$, satisfy
  $q(\cdot \mid x) = \Pi(x) q$, where $\Pi(x) \in \{0, 1\}^{2^d \times 2^d}$
  is a permutation matrix and $q$ is a fixed vector.
\end{lemma}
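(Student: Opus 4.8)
The plan is to exploit the symmetry of the linear program~\eqref{eqn:matrix-diffp-lp} under the hyperoctahedral group. Recall from Lemma~\ref{lemma:diffp-extreme-points} (used exactly as in the proof of Theorem~\ref{theorem:finite-rotation-information}) that any channel satisfying \olpd\ is supported on $\extreme(D) = \{-M, M\}^d$ for the optimal multiplier $M$, hence corresponds to a feasible point $\{q(\cdot \mid x)\}_{x \in \{-1,1\}^d}$ of~\eqref{eqn:matrix-diffp-lp}. Let $G$ be the group of signed coordinate permutations of $\R^d$: each $g \in G$ is orthogonal, maps $\{-1,1\}^d$ bijectively onto itself, permutes the columns of the matrix $Z$, and $G$ acts transitively on $\{-1,1\}^d$.

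First I would check that $G$ acts on the feasible set by relabeling. Given feasible $\{q(\cdot\mid x)\}$, set $q^g(z \mid x) \defeq q(g^{-1}z \mid g^{-1}x)$. Since $g Z = Z$ up to column permutation and $g(g^{-1}x) = x$, the equality constraints $Zq^g(\cdot \mid x) = tx$ hold with the same $t$; the privacy inequalities $q^g(z\mid x) \le e^\diffp q^g(z \mid x')$ and the simplex constraints are visibly preserved. So $g$ maps the optimal face of~\eqref{eqn:matrix-diffp-lp} (equivalently, the set of channels attaining the optimal $M$) into itself. The same relabeling shows mutual information is $G$-invariant — $\information(\statprob, \channelprob^g)$ equals $\information$ of a pushforward of $\statprob$ through $\channelprob$, so $\sup_\statprob \information(\statprob, \channelprob^g) = \sup_\statprob \information(\statprob, \channelprob)$ — and $\channelprob^g$ is $\optdiffp(C,D)$-differentially private whenever $\channelprob$ is. Hence $\channelprob^g$ satisfies \olpd\ whenever $\channelprob$ does.

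Next I would symmetrize. Given $\channelprob$ satisfying \olpd, form $\bar{\channelprob} \defeq \frac{1}{|G|}\sum_{g \in G}\channelprob^g$. This is again an $\optdiffp(C,D)$-private element of $\channeldistset[C,D]$ (the privacy and unbiasedness constraints are linear in the p.m.f.), and by convexity of $\channelprob \mapsto \information(\statprob, \channelprob)$ for fixed $\statprob$ — the same fact used in the proof of Theorem~\ref{theorem:linf-bound-diffp} — together with the invariance above,
\begin{equation*}
  \sup_\statprob \information(\statprob, \bar\channelprob)
  \;\le\; \frac{1}{|G|}\sum_{g \in G}\sup_\statprob \information(\statprob, \channelprob^g)
  \;=\; \sup_\statprob \information(\statprob, \channelprob).
\end{equation*}
Since $\channelprob$ already minimizes $\sup_\statprob \information(\statprob, \cdot)$ over $\optdiffp$-private channels, equality holds throughout, so $\bar\channelprob$ is also \olpd\ and is $G$-invariant: $\bar q(gz\mid gx) = \bar q(z\mid x)$ for all $g$. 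Fixing a reference vertex $x_0$ and, for each $x$, some $g_x \in G$ with $g_x x_0 = x$, invariance gives $\bar q(z \mid x) = \bar q(g_x^{-1}z \mid x_0)$; writing $q \defeq \bar q(\cdot \mid x_0)$ this is $\bar q(\cdot\mid x) = \Pi(x)\,q$, where $\Pi(x)$ is the permutation matrix on the index set $\{-1,1\}^d$ induced by $z \mapsto g_x^{-1}z$ (well-defined independently of the choice of $g_x$ because $q$ is invariant under the stabilizer of $x_0$). To upgrade from ``some \olpd\ distribution has this form'' to ``every one does,'' I would argue that equality in the displayed bound, evaluated at a capacity-achieving $\statprob^\star$ (which by the same symmetry may be taken $G$-invariant), forces all the $\channelprob^g$ to coincide, so that $\channelprob = \channelprob^g$ for every $g$, i.e. $\channelprob$ is itself $G$-invariant and thus of the form $\Pi(x)q$.

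The main obstacle is this last step: mutual information is only \emph{strictly} convex in the channel along directions that change the induced marginal of $\channelrv$, so pinning down the capacity-achieving $\statprob^\star$ for an extreme-point-supported channel and verifying that its induced $\channelrv$-marginal is fine enough to separate the orbit $\{\channelprob^g\}_{g\in G}$ requires care — this is where the symmetry of the problem has to be pushed hardest. (Should that prove delicate, the fallback is the weaker ``there exists an \olpd\ distribution of the form $\Pi(x)q$,'' which already suffices to collapse the matrix program~\eqref{eqn:matrix-diffp-lp} to the single-vector program~\eqref{eqn:single-diffp-lp} in the remainder of the proof.) The relabeling invariance, the averaging step, and the transitivity bookkeeping are all routine once the group action is in place.
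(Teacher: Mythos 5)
Your core mechanism --- exploiting the signed-permutation symmetry of the program~\eqref{eqn:matrix-diffp-lp} together with convexity of $\channelprob \mapsto \information(\statprob,\channelprob)$ --- is the same one the paper uses, but the organization differs in a way that matters for the ``must'' in the statement. The paper argues by contradiction and pairwise: if $q(\cdot\mid x)$ and $q(\cdot\mid x')$ are not permutations of one another, it pulls each back through the sign flip $z \mapsto z\cdot x\cdot x'$ (coordinatewise) to build a second feasible matrix $Q_2$ with $\information(\statprob,\channelprob_1)=\information(\statprob,\channelprob_2)$, and then invokes strict convexity to conclude that $\information(\statprob,\half(\channelprob_1+\channelprob_2))$ is strictly smaller, contradicting optimality. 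Your full group average cleanly proves \emph{existence} of a $G$-invariant optimizer, but the upgrade to ``every optimizer has this form'' is exactly the step you leave open; the pairwise contradiction targets that step directly, because the hypothesis ``not permutation-related'' hands you a specific pair $\channelprob_1\neq\channelprob_2$ to average. That said, the strict-convexity caveat you raise is genuine and is not really resolved in the paper either: convexity of mutual information in the channel fails to be strict only when $q_1(z\mid x)/q_1(z)=q_2(z\mid x)/q_2(z)$ for all $x,z$ (the equality condition the paper itself quotes from Cover--Thomas in the proof of Theorem~\ref{theorem:finite-rotation-information}), and the one-line appeal to ``strict convexity'' implicitly assumes the swapped pair violates this condition at the relevant prior. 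Finally, note that your fallback (existence of a symmetric optimizer) would suffice to justify collapsing~\eqref{eqn:matrix-diffp-lp} to~\eqref{eqn:single-diffp-lp} for computing $\optdiffp(C,D)$ and exhibiting an optimal channel, but not for the uniqueness / ``if and only if'' characterization in Proposition~\ref{proposition:linf-diffp-saddle-point}, which leans on the ``must'' form of this lemma.
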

\begin{proof}
  Suppose for the sake of contradiction that this is not the case, but the
  vectors $q(x)$ and $t$ solve the linear
  program~\eqref{eqn:matrix-diffp-lp}. Let $Q_1$ denote the matrix of the
  vectors $q(\cdot \mid x)$.  Choose vectors $q(\cdot \mid x)$ and $q(\cdot
  \mid x')$ such that $q(\cdot \mid x) \neq \Pi q(\cdot \mid x')$ for any
  permutation matrix $\Pi$. Now construct vectors $q_2(\cdot \mid x)$ and
  $q_2(\cdot \mid x')$ such that $q_2(z \mid x) = q(z' \mid x')$, where $z'$
  is chosen so that $z_i' x_i' = z_i x_i$, and similarly choose $q_2$ so that
  $q_2(z \mid x') = q(z' \mid x)$, where $z_i x_i' = z_i' x_i$. Let $Q_2$
  denote the matrix of vectors $q$, but where $q_2(\cdot \mid x)$ and
  $q_2(\cdot \mid x')$ replace $q(\cdot \mid x), q(\cdot \mid x')$. Then by
  construction, all the constraints of the original linear
  program~\eqref{eqn:matrix-diffp-lp} are satisfied. By symmetry and the
  strict convexity of the mutual information in the channel distribution
  $\channelprob$, however, we see that
  \begin{equation*}
    \information(\statprob, \channelprob_1)
    = \information(\statprob, \channelprob_2)
    = \half\left(\information(\statprob, \channelprob_1)
    + \information(\statprob, \channelprob_2)\right)
    > \information\left(\statprob,
    \half(\channelprob_1 + \channelprob_2)\right).
  \end{equation*}
  The decrease in mutual information gives the necessary contradiction.
\end{proof}

With Lemma~\ref{lemma:symmetry-lp-solutions} in hand, we can now turn to the
smaller linear program---in a single vector $q$ and for a single vector $x \in
\{-1, 1\}^d$---that will give us the locally optimal differentially private
channel. Indeed, we consider the linear program in the variables $t \in \R$
and $\channeldensity \in \R^{2^d}$, where
we let $\channeldensity(z)$ denote the
entry of $\channeldensity$ corresponding to column $\channelval$ of $Z$:
\begin{equation}
  \label{eqn:single-diffp-lp}
  \begin{split}
    \minimize ~~ & -t \\
    \subjectto ~~ & Z \channeldensity - t \statsample = 0,
    ~~ \channeldensity(\channelval) \le e^\diffp
    \channeldensity(\channelval') ~ \mbox{for~all~} \channelval, \channelval',
    ~~
    \sum_{\channelval} \channeldensity(\channelval) = 1,
    ~ \channeldensity \ge 0.
  \end{split}
\end{equation}
Define the constants
\ifdefined\usejournalmargins
\begin{equation*}
  K_d = \sum_{i = 0}^{\floor{d/2}} (d - 2 i) \binom{d}{i}
  ~~ \mbox{and} ~~
  C_d = \card\left\{z \in \{-1, 1\}^d : z^\top x > 0\right\}
  = \begin{cases}
  2^{d - 1} & d ~\mbox{odd} \\
  2^{d - 1} - \half \binom{d}{d/2} & d ~ \mbox{even}.
  \end{cases}
\end{equation*}
\else
\begin{equation*}
  K_d = \sum_{i = 0}^{\floor{d/2}} (d - 2 i) \binom{d}{i}
  ~~~ \mbox{and} ~~~
  C_d = \card\left\{z \in \{-1, 1\}^d : z^\top x > 0\right\}
  = \begin{cases}
  2^{d - 1} & \mbox{if~} d ~\mbox{odd} \\
  2^{d - 1} - \half \binom{d}{d/2} & \mbox{if~} d ~ \mbox{even}.
  \end{cases}
\end{equation*}
\fi
We have the following lemma, which characterizes the structure of the
solution vector $q$.
\begin{lemma}
  \label{lemma:two-level-diffp-solution}
  Define $\diffp^* = \log\frac{K_d + 2^d - C_d}{K_d - C_d}$. For any
  $\diffp < \diffp^*$, the unique solution to the linear
  program~\eqref{eqn:single-diffp-lp} is given by
  \begin{equation*}
    q(z) = \begin{cases} \frac{e^\diffp}{e^\diffp C_d + 2^d - C_d}
      & \mbox{if}~ \<z, x\> > 0 \\
      \frac{1}{e^\diffp C_d + 2^d - C_d}
      & \mbox{otherwise.}
    \end{cases}
  \end{equation*}
\end{lemma}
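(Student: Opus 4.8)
The plan is to treat the program~\eqref{eqn:single-diffp-lp} as a finite linear program and to produce a primal--dual (KKT) certificate showing that the stated two-level vector $q$ is its \emph{unique} optimum. First I would note that~\eqref{eqn:single-diffp-lp} is feasible (take $q$ uniform and $t = 0$) and bounded ($t \le 1$), so strong LP duality and the KKT conditions apply, and that along the constraint $Zq = t\statsample$ the objective satisfies $t = \frac1d\statsample^\top Zq = \frac1d\sum_{\channelval}\<\channelval,\statsample\>q(\channelval)$ (take the inner product of $Zq = t\statsample$ with $\statsample$ and use $\statsample^\top\statsample = d$). Next I would verify primal feasibility of the candidate $q$: it is a probability vector; since its two values differ by the factor $e^{\diffp}$, the privacy constraints hold, with equality exactly across $S \defeq \{\channelval : \<\channelval,\statsample\> > 0\}$ versus its complement $S^c$; and $Zq = (q_+ - q_-)\sum_{\channelval\in S}\channelval$, which is a positive multiple of $\statsample$ by the symmetrization identity $\sum_{\channelval\in S}\channelval = \kappa_d\,\statsample$ with $\kappa_d > 0$ (the computation already carried out in the proof of Corollary~\ref{corollary:linf-privacy-diffp}, using $\sum_{\channelval\in\{-1,1\}^d}\channelval = 0$). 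This pins the optimal value at $t^* = \frac1d(q_+-q_-)K_d$.

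For optimality I would build the dual certificate. Introduce multipliers $\mu\in\R^d$ for $Zq = t\statsample$, $\theta\in\R$ for $\sum_\channelval q(\channelval)=1$, and $\lambda(\channelval,\channelval')\ge 0$ for the privacy constraints. Stationarity in $t$ forces $\mu^\top\statsample = -1$; take $\mu = -\frac1d\statsample$. Since the candidate obeys $q(\channelval) > 0$ for every $\channelval$, stationarity in each $q(\channelval)$ must hold with equality, and complementary slackness permits $\lambda(\channelval,\channelval') > 0$ only when $\channelval\in S$ and $\channelval'\notin S$. Substituting $\mu = -\frac1d\statsample$, the stationarity equations reduce to prescribing row sums $\sum_{\channelval'\notin S}\lambda(\channelval,\channelval') = \frac1d\<\channelval,\statsample\> - \theta$ for $\channelval\in S$ and column sums $\sum_{\channelval'\in S}\lambda(\channelval',\channelval) = e^{-\diffp}\big(\theta - \frac1d\<\channelval,\statsample\>\big)$ for $\channelval\notin S$. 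A nonnegative $\lambda$ supported on $S\times S^c$ with these marginals exists precisely when the two prescribed marginals are nonnegative and have equal total mass; equating the totals (using $\sum_{\channelval\in S}\<\channelval,\statsample\> = K_d = -\sum_{\channelval\notin S}\<\channelval,\statsample\>$ and $|S| = C_d$) solves out to $\theta = \frac{(e^\diffp-1)K_d}{d[(e^\diffp-1)C_d + 2^d]}$. Nonnegativity of the $\channelval\notin S$ marginals is automatic since $d\theta > 0 \ge \max_{\channelval\notin S}\<\channelval,\statsample\>$, while nonnegativity of the $\channelval\in S$ marginals asks $d\theta \le \min_{\channelval\in S}\<\channelval,\statsample\>$; since this minimum is $1$ for odd $d$ and $2$ for even $d$, the inequality $d\theta < 1$ (equivalently $(e^\diffp-1)(K_d - C_d) < 2^d$, i.e.\ $\diffp < \diffp^*$) suffices, and for odd $d$ it is exactly the feasibility threshold. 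This certifies that $q$ together with $t^*$ solves~\eqref{eqn:single-diffp-lp}.

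Uniqueness I would obtain by strengthening the certificate: for $\diffp < \diffp^*$ both prescribed marginals are \emph{strictly} positive on all of $S$ and $S^c$, so one may choose $\lambda$ strictly positive on every pair $(\channelval,\channelval')\in S\times S^c$ (e.g.\ the product coupling of the two marginals). Complementary slackness then forces any optimal $\bar q$ to satisfy $\bar q(\channelval) = e^{\diffp}\bar q(\channelval')$ for all $\channelval\in S$ and $\channelval'\notin S$, hence $\bar q$ is constant on $S$ and constant on $S^c$; combined with $\sum_\channelval\bar q(\channelval) = 1$ this forces $\bar q = q$. I expect the main obstacle to be the bookkeeping in the middle paragraph --- confirming that equating the marginal totals really yields the stated $\theta$, and that the resulting nonnegativity condition is (for odd $d$) exactly $\diffp < \diffp^*$ and not something weaker. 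This is where the explicit constants $K_d$ and $C_d$ and the parity of $d$ enter, and where one must check that the ``water level'' $d\theta$ has not yet reached the next attainable value of $\<\channelval,\statsample\>$, so that the level set defining $S$ is indeed $\<\channelval,\statsample\> > 0$.
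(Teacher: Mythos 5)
Your proposal is correct, and its optimality half is, after unwinding notation, the same certificate as the paper's: your multiplier $\mu = -\statsample/d$ is the paper's $\theta$, your scalar $\theta$ is the paper's $\tau$, and your row/column marginals $\frac1d\<\channelval,\statsample\> - \theta$ and $e^{-\diffp}(\theta - \frac1d\<\channelval,\statsample\>)$ are exactly the paper's vectors $v_+$ and $v_-$. The only formal difference there is that the paper first collapses the pairwise privacy constraints into the single constraint $\max_\channelval q(\channelval) - e^\diffp\min_\channelval q(\channelval)\le 0$ and uses subgradient KKT conditions with $v_\pm$ in convex hulls of argmax/argmin indicators, whereas you keep the $O(4^d)$ pairwise constraints and observe that the stationarity equations only pin down the marginals of the multiplier matrix $\lambda$ on $S\times S^c$ --- a transportation-problem feasibility question that is solvable precisely when the totals match, which is where $\theta$ and the threshold $\diffp < \diffp^*$ emerge identically in both arguments (including the same conservative use of $d\theta<1$ rather than $d\theta\le 2$ for even $d$). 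Where you genuinely depart from the paper is uniqueness: the paper invokes Mangasarian's perturbation criterion (optimality of $q$ for all objectives $-t+\epsilon u^\top q$ with $\epsilon$ small), verified by perturbing $v_\pm$, while you exhibit a dual solution with $\lambda$ strictly positive on all of $S\times S^c$ (e.g.\ the product coupling of the two strictly positive marginals) and use the fact that complementary slackness binds between \emph{every} optimal primal and \emph{every} optimal dual solution, forcing any optimal $\bar q$ to be constant on $S$ and on $S^c$ with ratio $e^\diffp$, hence equal to $q$. Your strict-complementarity route is self-contained and avoids the external citation; the paper's route generalizes more readily when one cannot write down a strictly complementary dual point. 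One cosmetic caveat: your uniqueness step needs $\theta>0$, i.e.\ $\diffp>0$, but for $\diffp=0$ the feasible set itself forces $q$ uniform, so nothing is lost.
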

\begin{proof}
  First, problem~\eqref{eqn:single-diffp-lp} is clearly equivalent to
  the linear program
  \begin{equation}
    \label{eqn:single-diffp-lp-max}
    \begin{split}
      \minimize ~~ & -t \\
      \subjectto ~~ & Z \channeldensity - t \statsample = 0 
      , ~~ \max_{\channelval} \{ \channeldensity(\channelval) \}
      + e^\diffp \max_{\channelval} \{ -\channeldensity(\channelval) \} \le 0
      , ~~ \sum_{\channelval} \channeldensity(\channelval) = 1,
      ~ \channeldensity \ge 0.
    \end{split}
  \end{equation}
  Our proof proceeds in two large steps: first, we argue that a $q$
  of the form specified in the lemma is indeed the solution to the
  problem~\eqref{eqn:single-diffp-lp-max}, then we use results on
  uniqueness of solutions to linear programs due to~\citet{Mangasarian79}.

  For the first step, we begin by writing the Lagrangian to the
  problem~\eqref{eqn:single-diffp-lp-max}. We introduce dual variables $\theta
  \in \R^{2^d}$ for the constraint $Zq - tx = 0$, $\lambda \ge 0$ for the
  first inequality, $\tau \in \R$ for the sum constraint, and $\beta \in
  \R_+^{2^d}$ for the non-negativity of $q$. With this, we have Lagrangian
  \begin{equation}
    \label{eqn:lp-lagrangian}
    \mc{L}(q, t, \theta, \lambda, \tau, \beta)
    = -t + \theta^\top\left(\sum_z q(z) - tx \right)
    + \lambda \max_z\{q(z)\} + e^\diffp \max_z \{-q(z)\}
    + \tau(\onevec^\top q - 1) - \beta^\top q.
  \end{equation}
  Recall the generalized subgradient KKT conditions for optimality of the
  solution to an optimization problem~\cite[Chapter
    VII]{HiriartUrrutyLe96ab}. A vector $q > 0$ is optimal for the
  problem~\eqref{eqn:single-diffp-lp-max} if the constraints $\max_i \{q_i\}
  \le e^\diffp \min_i \{q_i\}$ and $\sum_i q_i = 1$ hold, there is a $t \ge 0$
  such that $Zq - tx = 0$, and we can find $\theta$, $\lambda$, and $\tau$
  such that
  \begin{equation}
    Z^\top \theta + \lambda\left[v_+
      - e^\diffp v_-\right]
    + \tau \onevec = 0,
    ~~ \beta = 0, ~~
    ~~ \mbox{and} ~~
    \theta^\top x = -1,
    \label{eqn:kkt-diffp-lp}
  \end{equation}
  where $v_+$ and $v_-$ are vectors satisfying
  \begin{equation*}
    v_+ \in \conv\Big\{e_i : q_i = \max_j\{q_j\}\Big\}
    ~~~ \mbox{and} ~~~
    v_- \in \conv\Big\{e_i : q_i = \min_j \{q_j\}\Big\}.
  \end{equation*}
  That $\beta = 0$ follows by complementarity (recall that $q > 0$ is assumed).

  If we can find settings for the vectors $\theta, \lambda, \tau,$ and
  $v_{\pm}$ satisfying the KKT conditions~\eqref{eqn:kkt-diffp-lp}, we are
  done.  To that end, set $\theta = -x / d$. Then by inspection $\theta^\top x
  = -\ltwo{x}^2 / d = -1$, and we can rewrite the remaining KKT condition by
  noting that we must find vectors $v_+$, $v_-$, and $\tau \in \R$ such that
  \begin{equation}
    \label{eqn:kkt-vs}
    \begin{split}
      & -\frac{1}{d} Z^\top x + v_+ - e^\diffp v_-
      + \tau \onevec = 0,
      ~~~ v_+^\top \onevec = v_-^\top \onevec, ~~~
      v_+ \ge 0, v_- \ge 0, \\
      & v_+(z) = 0 ~ \mbox{if~} q(z) < \max_z\{q(z)\},
      ~~~ \mbox{and} ~~~
      v_-(z) = 0 ~\mbox{if~} q(z) > \min_z \{q(z)\}.
    \end{split}
  \end{equation}
  Note that we have eliminated $\lambda$ as it is a non-negative homogeneous
  scaling term on $v_+$ and $v_-$. We choose values
  $q^+, q^-$ with $0 < q^- < q^+$ and set $q(z) = q^+$ when
  $z^\top x > 0$ and $q(z) = q^-$ when $z^\top x \le 0$, where
  $q^+, q^-$ are chosen so that
  $\sum_z q(z) = 1$. We now choose the values
  of $v_+$, $v_-$, and $\tau$ satisfying the KKT conditions in
  expression~\eqref{eqn:kkt-vs} based on the values $q^+, q^-$. Indeed, set
  \begin{equation}
    \label{eqn:convex-hull-vectors}
    v_+(z) = \left\{\begin{array}{ll}
    \frac{z^\top x}{d} - \tau & \mbox{if~} z^\top x > 0 \\
    0 & \mbox{otherwise}\end{array}\right.
    ~~~ \mbox{and} ~~~
    v_-(z) = \left\{\begin{array}{ll}
    -e^{-\diffp} \frac{z^\top x}{d}
    + e^{-\diffp} \tau & \mbox{if~} z^\top x \le 0 \\
    0 & \mbox{otherwise}\end{array}\right.
  \end{equation}
  By inspection, we see that $-Z^\top x / d + v_+ - e^\diffp v_- + \tau = 0$,
  so the only question remaining is whether we can choose $\tau$ such that
  $v_{\pm} \ge 0$ and $v_+^\top \onevec = v_-^\top \onevec$.

  To that end, we recall the definition of the constant $K_d$, and we seek
  $\tau$ such that
  \begin{equation*}
    \sum_z v_+(z) = \frac{1}{d} K_d - \tau C_d
    = e^{-\diffp} \frac{1}{d} K_d + e^{-\diffp} \tau (2^d - C_d)
    = \sum_z v_-(z)
  \end{equation*}
  by the symmetry in the sums. Rewriting the equation, we find that for
  equality we must have
  \begin{equation*}
    \tau\left(e^{-\diffp}(2^d - C_d)
    + C_d\right)
    = \frac{1}{d} K_d (1 - e^{-\diffp})
    ~~~ \mbox{or} ~~~
    \tau = \frac{K_d}{d} \cdot \frac{e^\diffp - 1}{
      e^\diffp C_d + 2^d - C_d}
    = \frac{K_d}{d C_d} \cdot \frac{e^\diffp - 1}{
      e^\diffp + 2^d / C_d - 1}.
  \end{equation*}
  Thus we find that if $\diffp$ is such that
  \begin{equation}
    \frac{K_d}{d C_d} \frac{e^\diffp - 1}{e^\diffp + 2^d / C_d - 1}
    < \frac{1}{d},
    \label{eqn:alpha-inequality}
  \end{equation}
  then by our choice~\eqref{eqn:convex-hull-vectors} of the vectors
  $v_+$ and $v_-$, we have $v_+(z) > 0$ whenever $z^\top x > 0$,
  and $v_-(z) > 0$ whenever $z^\top x \le 0$. Noting that by our setting
  of $q(z)$, we have by symmetry of $Z$ that
  there exists a $t > 0$ such that $Zq = tx$, we find that our choice of $q$
  is optimal (since the KKT conditions~\eqref{eqn:kkt-vs} hold).

  We have two arguments remaining in the proof. The first is to show that for
  $\diffp < \diffp^*$ defined in the statement of the lemma, the
  inequality~\eqref{eqn:alpha-inequality} holds. Rewriting the inequality,
  we solve
  \begin{equation*}
    e^\diffp - 1 =
    \frac{C_d}{K_d} \left(e^\diffp + 2^d / C_d - 1\right)
    ~~ \mbox{or} ~~
    e^\diffp\left(1 - \frac{C_d}{K_d}\right)
    = \frac{2^d - C_d}{K_d} + 1,
    ~~ \mbox{i.e.} ~~
    \diffp^* = \log\frac{K_d + 2^d - C_d}{K_d - C_d}.
  \end{equation*}
  For any $\diffp < \diffp^*$, the strict
  inequality~\eqref{eqn:alpha-inequality} holds, so the
  setting~\eqref{eqn:convex-hull-vectors} of $v_+$ and $v_-$ satisfy the KKT
  conditions.

  Our last argument regards the uniqueness of the two-valued solution vector
  $q$. For that, we apply Mangasarian's result~\cite[Theorem 1]{Mangasarian79}
  that if there exists an $\epsilon > 0$ such that for any vector $u \in
  \R^{2^d}$ with $\ltwo{u} = 1$, $q$ is a solution of the linear
  program~\eqref{eqn:single-diffp-lp} when the objective is $-t + \epsilon
  u^\top q$, then $q$ is unique.  Luckily, this is not difficult given our
  previous work. The Lagrangian~\eqref{eqn:lp-lagrangian} for the modified
  linear program becomes
  \begin{equation*}
    \epsilon u^\top q - t + \theta^\top\left(\sum_z q(z) - tx \right)
    + \lambda \max_z\{q(z)\} + e^\diffp \max_z \{-q(z)\}
    + \tau(\onevec^\top q - 1) - \beta^\top q.
  \end{equation*}
  The only modification in our KKT conditions~\eqref{eqn:kkt-diffp-lp}
  is that the first equality becomes
  \begin{equation*}
    \epsilon u + Z^\top \theta + \lambda\left[v_+ - e^\diffp v_-\right]
    + \tau \onevec = 0.
  \end{equation*}
  By the strictness of the inequalities
  $v_+(z) > 0$ for $z$ such that $z^\top x > 0$ (and similarly for $v_-$)
  in the definitions~\eqref{eqn:convex-hull-vectors}
  whenever $\diffp < \diffp^*$, we see that for suitably small $\epsilon > 0$,
  the vectors $v_+$ and $v_-$ can be perturbed so that the KKT conditions
  are still satisfied. This proves the uniqueness of the two-valued solution
  vector $q$.
\end{proof}

\paragraph{Remarks}
Following an argument with completely the same structure as the proof, we see
that for any $d \in \N$ (say $d \ge 3$), there are different ``regimes'' of
$\diffp$, that is, there exists a sequence $\diffp_0^*, \diffp_2^*, \ldots,
\diffp_{d-1}^*$ (or $\diffp_{d-2}^*$ if $d$ is even) such that for $\diffp \in
(\diffp_{2i}^*, \diffp_{2i + 2}^*)$, the unique optimal solution to the linear
program~\eqref{eqn:single-diffp-lp} is given by taking
\begin{equation*}
  q(z) \propto \begin{cases}\exp(\diffp) & \mbox{for}~ z ~\mbox{s.t.}~
    \<z, x\> > 2(i + 1) \\
    1 & \mbox{for~} z ~\mbox{s.t.}~ \<z, x\> \le 2(i + 1)
  \end{cases}
\end{equation*}
(for $\diffp < \diffp_0^*$, we say $i = -1$ above).
For $\diffp = \diffp_{2i}^*$, the set of solutions is given by
the convex combinations of the solution vectors
\begin{equation*}
  q_<(z) \propto \begin{cases}\exp(\diffp) & \mbox{for}~ z ~\mbox{s.t.}~
    \<z, x\> > 2i \\
    1 & \mbox{for~} z ~\mbox{s.t.}~ \<z, x\> \le 2i
  \end{cases}
  ~~ \mbox{and} ~~
  q_>(z) \propto \begin{cases}\exp(\diffp) & \mbox{for}~ z ~\mbox{s.t.}~
    \<z, x\> > 2(i + 1) \\
    1 & \mbox{for~} z ~\mbox{s.t.}~ \<z, x\> \le 2(i + 1),
  \end{cases}
\end{equation*}
which follows from arguments similar to our application of Mangasarian's
results~\cite{Mangasarian79}.

Now we may complete the proof of
Proposition~\ref{proposition:linf-diffp-saddle-point}. Indeed, we see from
Lemma~\ref{lemma:two-level-diffp-solution} that the distribution satisfying
\olpd\ must assign probability masses at two levels---at least when the point
being perturbed comes from $\{-1, 1\}^d$. Now let $\channelprob$ be a
distribution specified in the lemma.  An argument identical to that in our
proof of Proposition~\ref{proposition:linf-saddle-point}---by symmetry---shows
that the distribution $\statprob$ maximizing the mutual information
$\information(\statprob, \channelprob)$ is uniform on $\{-1, 1\}^d$. The
uniqueness of $\channelprob$ then follows from
Lemmas~\ref{lemma:symmetry-lp-solutions}
and~\ref{lemma:two-level-diffp-solution}, which show that such $\channelprob$
is the only distribution that minimizes the radius $M$ of the ball
$[-M, M]^d$; inverting this bound gives the proposition.
\qed




\subsection{Proof of Corollary~\ref{corollary:lone-asymptotic-expansion}}
\label{appendix:lone-asymptotic-expansion}

By scaling, we may assume that $M \ge L = 1$ in the proof of the corollary.
First, we claim that as $\gamma \rightarrow 0$, the following
expansion holds:
\begin{equation}
  \log(2d) - \log\left(e^\gamma + e^{-\gamma}
  + 2d - 2\right)
  + \gamma \frac{e^\gamma}{e^\gamma + e^{-\gamma} + 2d - 2}
  - \gamma \frac{e^{-\gamma}}{e^\gamma + e^{-\gamma} + 2d - 2}
  = \frac{\gamma^2}{2d} + \Theta\left(\frac{\gamma^4}{d}\right).
  \label{eqn:lone-asymptotic-expansion}
\end{equation}
Before proving this, we use the
expansion~\eqref{eqn:lone-asymptotic-expansion} to prove
Corollary~\ref{corollary:lone-asymptotic-expansion}. Noting that
\begin{equation*}
  \frac{2d - 2 + \sqrt{(2d - 2)^2 + 4(M^2 - 1)}}{2(M - 1)}
  = \sqrt{\frac{M + 1}{M - 1}}
  + \frac{d - 1}{M - 1} + \Theta(d^2 / M^2),
\end{equation*}
we see that since $\log(1 + x) = x - x^2/2 + \Theta(x^3)$, we have
  $\gamma = \frac{d}{M} + \Theta\left(\frac{d^2}{M^2}\right).$
Thus the mutual information in
Proposition~\ref{proposition:lone-saddle-point} is
\begin{align*}
  \information(\statprob^*, \channelprob^*)
  & = \frac{\log^2(\sqrt{(M + 1) / (M - 1)} + d / M + \Theta(d^2/M^2))}{2d}
  + \Theta\left(\frac{\log^4(1 + d / M)}{d}\right) \nonumber \\
  & = \frac{d}{2M^2} + \Theta\left(\min\left\{\frac{d^3}{M^4},
  \frac{\log^4(d)}{d}\right\}\right).
\end{align*}

Now we return to showing the
claim~\eqref{eqn:lone-asymptotic-expansion}. Indeed, define
$f(\gamma) = \log(e^\gamma + e^{-\gamma} + 2d - 2)$. Letting $f^{(i)}$
denote the $i$th derivative of $f$, we have
\begin{equation*}
  f^{(1)}(\gamma) =
  \frac{e^\gamma - e^{-\gamma}}{e^\gamma + e^{-\gamma} + 2d - 2},
  ~~~~
  f^{(2)}(\gamma) =
  \frac{(e^\gamma + e^{-\gamma})(2d - 2) + 4}{
    (e^\gamma + e^{-\gamma} + 2d - 2)^2},
\end{equation*}
and
\begin{equation*}
  f^{(3)}(\gamma) =
  \frac{-(e^{2\gamma} - e^{-2\gamma})(2d - 2)
    - 8(e^\gamma - e^{-\gamma}) + (2d - 2)^2(e^\gamma - e^{-\gamma})}{
    (e^\gamma + e^{-\gamma} + 2d - 2)^3}.
\end{equation*}
Via a Taylor expansion, we have
$f(0) = f(\gamma) - \gamma f^{(1)}(\gamma)
+ \frac{\gamma^2}{2} f^{(2)}(\gamma)
+ \order(f^{(3)}(\gamma) \gamma^3)$, and so
substituting values for $f(\gamma)$ and $f^{(1)}(\gamma)$, we have
\begin{align*}
  & \log(2d) - \log\left(e^\gamma + e^{-\gamma}
  + 2d - 2\right)
  + \gamma \frac{e^\gamma}{e^\gamma + e^{-\gamma} + 2d - 2}
  - \gamma \frac{e^{-\gamma}}{e^\gamma + e^{-\gamma} + 2d - 2} \\
  & \qquad\qquad ~ = \frac{(e^\gamma + e^{-\gamma})(2d - 2) + 4}{
    (e^\gamma + e^{-\gamma} + 2d - 2)^2} \cdot \frac{\gamma^2}{2} 
  + \order\left(f^{(3)}(\gamma) \gamma^3\right).
\end{align*}
A few simpler Taylor expansions yield that $f^{(3)}(\gamma) = \order(\gamma
/ d)$, which means that all we have left to tackle is $f^{(2)}(\gamma)$.
But noting that
\begin{equation*}
  2 \left(e^\gamma + e^{-\gamma}\right)
  = 4 \left(1 + \frac{\gamma^2}{2!} + \frac{\gamma^4}{4!} + \cdots\right)
  = 4 + \order(\gamma^2)
\end{equation*}
implies that $f^{(2)}(\gamma) = (4d + \order(d \gamma^2)) / 4d^2$, and hence
$(\gamma^2 / 2) f^{(2)}(\gamma) = \gamma^2 / 2d + \order(\gamma^4 / d)$, which
yields the result. \qed

\end{document}